\documentclass{article}

\usepackage{microtype}
\usepackage{graphicx}
\usepackage{subcaption}
\usepackage{booktabs} % for professional tables

 \usepackage{longtable,array}

\usepackage{hyperref}

\usepackage[preprint]{icml2026}

\usepackage{amsmath}
\usepackage{amssymb}
\usepackage{mathtools}
\usepackage{amsthm}

\usepackage[utf8]{inputenc} % allow utf-8 input
\usepackage[T1]{fontenc}    % use 8-bit T1 fonts
\usepackage{url}            % simple URL typesetting
\usepackage{nicefrac}       % compact symbols for 1/2, etc.
\usepackage{xcolor}         % colors

\usepackage{array}
\usepackage{amsfonts}
\usepackage{thmtools, thm-restate}
\usepackage{algorithm, algorithmic}
\usepackage{mathalfa}

\usepackage[capitalize,noabbrev]{cleveref}

\theoremstyle{plain}
\newtheorem{theorem}{Theorem}[section]

\newtheorem{lemma}[theorem]{Lemma}
\newtheorem{corollary}[theorem]{Corollary}
\theoremstyle{definition}
\newtheorem{definition}[theorem]{Definition}
\newtheorem{assumption}[theorem]{Assumption}
\theoremstyle{remark}

\DeclareMathAlphabet{\dutchcal}{U}{dutchcal}{m}{n}

\newcommand\norm[1]{\lVert#1\rVert}

\newcommand\set[1]{\left\{ #1 \right\}}

\DeclareMathOperator{\E}{\mathbb{E}}

\usepackage[textsize=tiny]{todonotes}

\icmltitlerunning{Decoupled Continuous-Time Actor–Critic via Hamiltonian Value Flow}

\begin{document}

\twocolumn[
    \icmltitle{Decoupled Continuous-Time Reinforcement Learning via Hamiltonian Flow}

  % It is OKAY to include author information, even for blind submissions: the
  % style file will automatically remove it for you unless you've provided
  % the [accepted] option to the icml2026 package.

  % List of affiliations: The first argument should be a (short) identifier you
  % will use later to specify author affiliations Academic affiliations
  % should list Department, University, City, Region, Country Industry
  % affiliations should list Company, City, Region, Country

  % You can specify symbols, otherwise they are numbered in order. Ideally, you
  % should not use this facility. Affiliations will be numbered in order of
  % appearance and this is the preferred way.
  \icmlsetsymbol{equal}{*}

  \begin{icmlauthorlist}
    \icmlauthor{Minh Nguyen}{comp,sch}
  \end{icmlauthorlist}

  \icmlaffiliation{comp}{Google}
  \icmlaffiliation{sch}{University of Texas at Austin}

  \icmlcorrespondingauthor{Minh Nguyen}{minhpnguyen@utexas.edu}
  % You may provide any keywords that you find helpful for describing your
  % paper; these are used to populate the "keywords" metadata in the PDF but
  % will not be shown in the document
  \icmlkeywords{Machine Learning, ICML}

  \vskip 0.3in
]

% this must go after the closing bracket ] following \twocolumn[ ...

% This command actually creates the footnote in the first column listing the
% affiliations and the copyright notice. The command takes one argument, which
% is text to display at the start of the footnote. The \icmlEqualContribution
% command is standard text for equal contribution. Remove it (just {}) if you
% do not need this facility.

% Use ONE of the following lines. DO NOT remove the command.
% If you have no special notice, KEEP empty braces:
\printAffiliationsAndNotice{}  % no special notice (required even if empty)
% Or, if applicable, use the standard equal contribution text:
% \printAffiliationsAndNotice{\icmlEqualContribution}

\begin{abstract}
Many real-world control problems, ranging from finance to robotics, evolve in continuous time with non-uniform, event-driven decisions. Standard discrete-time reinforcement learning (RL), based on fixed-step Bellman updates, struggles in this setting: as time gaps shrink, the $Q$-function collapses to the value function $V$, eliminating action ranking. Existing continuous-time methods reintroduce action information via an advantage-rate function $q$. However, they enforce optimality through complicated martingale losses or orthogonality constraints, which are sensitive to the choice of test processes. These approaches entangle $V$ and $q$ into a large, complex optimization problem that is difficult to train reliably. To address these limitations, we propose a novel decoupled continuous-time actor-critic algorithm with alternating updates: $q$ is learned from diffusion generators on $V$, and $V$ is updated via a Hamiltonian-based value flow that remains informative under infinitesimal time steps, where standard max/softmax backups fail. Theoretically, we prove rigorous convergence via new probabilistic arguments, sidestepping the challenge that generator-based Hamiltonians lack Bellman-style contraction under the sup-norm. Empirically, our method outperforms prior continuous-time and leading discrete-time baselines across challenging continuous-control benchmarks and a real-world trading task, achieving 21\% profit over a single quarter$-$nearly doubling the second-best method.
\end{abstract}

\section{Introduction}\label{sec:intro}
Many learning and control problems in scientific computing, robotics, and finance evolve in continuous time with continuous state and action spaces. In markets, decisions are often made only when new signals arrive or positions need rebalancing, so holding times vary even when prices are observed per minute. Robots face the same pattern: often they can move with coarse updates, but when terrain becomes unstable or contacts change, they need small corrective steps to keep balance and recover. Similarly, autonomous vehicles take larger control steps on empty roads but update more frequently near intersections or in dense traffic. Across these domains, we need RL methods that learn reliably with both irregular decision times and small step sizes.

Standard reinforcement learning algorithms, however, are built on a discrete-time Markov decision process (MDP) abstraction with a fixed step size $\Delta t$. This mismatch creates practical and theoretical issues when decisions occur at irregular times and when the data mixes multiple time scales. With large $\Delta t$, the model can miss important dynamics between decisions; with very small $\Delta t$, Bellman targets change little from one step to the next, weakening learning signals and making training sensitive to noise. More fundamentally, the fixed-step Bellman recursion does not naturally account for non-uniform holding times, so the resulting updates can depend heavily on the choice of discretization rather than the underlying continuous-time control problem.

To address these issues, recent continuous-time RL work recasts learning as stochastic control. Building on the controlled-diffusion formulation of \citet{wang2020}, \citet{Jia2022-yg} introduce the advantage-rate function $q(x,a)$, which restores state--action discrimination lost when the discrete-time $Q$-function collapses to $V$ as $\Delta t\to 0$. This $q$-function underlies both value-based methods with martingale constraints \citep{Jia2022-yg} and policy-gradient variants such as entropy-regularized PPO (CPPO) \citep{CPPO}. These formulations provide a foundation for learning under irregular decision times and small timesteps, without relying on a fixed discretization.

However, a central technical challenge remains. Under diffusion dynamics, minimizing the usual squared temporal-difference (TD) error does not directly enforce the dynamic programming principle: in continuous time, such losses can primarily control variance terms induced by Brownian noise rather than true Bellman errors. Prior work addresses this by imposing martingale-based objectives or martingale orthogonality conditions that characterize the optimal solution \citep{Jia2022-yg}. While theoretically motivated, these modifications make optimization substantially harder in practice. Martingale objectives couple $q$ and $V$ into a single functional optimization problem, which is difficult to simplify into standard TD-style updates and can be challenging to optimize with modern function approximation and off-policy training. This motivates seeking continuous-time learning rules that keep the stochastic-control foundation, but admit a more modular and scalable optimization structure.

In this paper, we resolve these difficulties by \textbf{decoupling} the learning of $q$ and $V$, replacing a single coupled objective with a sequence of simple updates. With $V$ fixed, the correct $q$ is locally determined by the controlled generator. It can then be learned from trajectory data via standard regression, treating diffusion noise as estimation noise that averages out over minibatches. The remaining challenge is updating $V$ from an \textbf{instantaneous rate} $q$. Unlike discrete-time RL, where one can apply a backup such as $V(x)=\max_a Q(x,a)$ (or a soft analogue), this step becomes ill-posed in continuous time. We instead view the update equation as defining a \textbf{flow} over value functions. This leads to an explicit Picard-style update driven by the entropy-regularized Hamiltonian, with a step size that indexes \textbf{algorithmic progress} rather than an environment discretization. We establish convergence of this value-flow iteration under broad conditions. We further obtain a scalable actor--critic implementation through the simple critic decomposition $Q \approx V+q$, keeping the method practical while staying aligned with the continuous-time stochastic control formulation.

\textbf{Contribution.} Our contributions are threefold.

(1) We propose a continuous-time RL framework that separates learning the advantage-rate signal from updating the value function, replacing martingale-coupled objectives with a simpler iterative scheme. This leads to a practical implementation compatible with standard off-policy training.

(2) We prove rigorous convergence guarantees using a probabilistic analysis technique that avoids the heavy functional-analytic machinery typically required for generator-based operators. This approach also enables us to control the approximation error directly from sampled stochastic trajectories in continuous-time settings.

(3) We provide an empirical study beyond fixed-step simulated SDE benchmarks, evaluating continuous-time RL on high-dimensional control tasks with irregular decision times and a minute-based trading environment. Our method achieves consistently stronger performance than existing continuous-time approaches and discrete-time baselines.

\textbf{Organization.} In \cref{sec:preliminaries}, we introduce the continuous-time reinforcement learning formulation, together with key terminologies needed in our framework setup. In \cref{sec:formulation}, we discuss limitations of prior martingale-based approaches and present our novel decoupled framework together with the resulting actor--critic algorithms. In \cref{sec:theory}, we establish theoretical and convergence guarantees for the value function, the advantage-rate function, and the full algorithm. Finally, \cref{sec:experiments} evaluates our methods on continuous-control benchmarks and a real-world trading task under small and irregular time steps. Extended related work appears in Appendix~\ref{sec:appendix:related_works}. Background material is summarized in \ref{sec:appendix:preliminaries}. Full proofs are deferred to Appendices~\ref{sec:appendix:value_convergence}, \ref{sec:appendix:q_convergence}, \ref{sec:appendix:algorithm_convergence}, \ref{sec:appendix:random_time}, and \ref{sec:appendix:regret_bound}. Additional experimental details and reproducibility information are provided in Appendices~\ref{sec:appendix:experiment_details} and \ref{sec:appendix:further_experiment_details}.

\section{Preliminaries}\label{sec:preliminaries}

\subsection{Discrete-time MDPs to continuous-time control}\label{sec:prelim:mdp-to-sde}
The usual Markov decision process (MDP) discretized with a fixed time step $\Delta>0$ has the form:
\begin{equation}
  X_{t+\Delta}^{\Delta}
  =
  X_t^{\Delta}
  +
  f^{\Delta}(X_t^{\Delta}, a_t),
  \qquad
  a_t \sim \pi_{\alpha}(\cdot\mid X_t^{\Delta})
  \label{eq:discrete-transition}
\end{equation}
where $t\in\{0,\Delta,2\Delta,\dots\}$, $X_t^{\Delta}\in\mathbb{R}^d$, and $\pi_{\alpha}$ denotes an
exploration policy. Through classical limit arguments, \citet{munos06b} shows that if the increment is approximately linear in $\Delta$, i.e, $f^{\Delta}(x,a) = f(x,a)\,\Delta + o(\Delta)$, then, as $\Delta\to 0$, the discrete process converges to a deterministic trajectory $x_t$ governed by the ODE:
\begin{equation}
  \dot{x}_t = f_{\alpha}(x_t),
  \qquad
  f_{\alpha}(x) := \mathbb{E}_{a\sim\pi_{\alpha}(\cdot\mid x)}\big[f(x,a)\big]
  \label{eq:ode-limit}
\end{equation}

where action randomization averages into the drift. To model stochasticity beyond randomized actions, we augment the transition with a mean-zero perturbation,
\begin{equation}
  X_{t+\Delta}^{\Delta}
  =
  X_t^{\Delta}
  +
  f^{\Delta}(X_t^{\Delta}, a_t)
  +
  \xi_t^{\Delta}
  \label{eq:discrete-transition-noise}
\end{equation}
Under diffusion scaling (e.g., $\mathrm{Var}(\xi_t^{\Delta})=\mathcal{O}(\Delta)$), the accumulated fluctuations converge to Brownian motion (Donsker’s invariance principle; \citealp{Fleming2006-wa}), and the continuous-time limit becomes a controlled diffusion. This motivates the Itô SDE model used throughout the paper (formalized next; details in Appendix~\ref{sec:appendix:further_theoretical_details}).

\subsection{Stochastic control formulation}\label{sec:preliminaries:sde}
Motivated by the diffusion limit above, we model continuous-time RL as controlled stochastic dynamics: the state process $(X_t)_{t \ge 0} \subset \mathbb{R}^d$ evolves according to It\^o SDE:
\begin{equation}
  dX_t = b(X_t, a_t)\,dt + \sigma(X_t,a_t)\,dW_t,
  \qquad X_0 \sim \mu
  \label{eq:sde-physical}
\end{equation}
where $W$ is an $n$-dimensional Brownian motion, $a_t\in\mathcal{A}$ is the control (action), and
$b:\mathbb{R}^d\times\mathcal{A}\to\mathbb{R}^d, \,\sigma:\mathbb{R}^d\times\mathcal{A}\to\mathbb{R}^{d\times n}$ are measurable drift/diffusion functions. We restrict to Markov feedback policies $a_t \sim \pi(\cdot\mid X_t)$, so exploration is implemented through randomized control.

Given discount $\beta>0$, running reward $r:\mathbb{R}^d\times\mathcal{A}\to\mathbb{R}$, and temperature $\alpha \ge 0$, we consider the entropy-regularized objective:
\begin{equation}
\begin{aligned}
  V^{(\alpha)}(x; \pi) := &\mathbb{E}\Bigg[
  \int_0^\infty e^{-\beta t} \Big(r(X_t,a_t) \\
  &\quad - \alpha \log \pi(a_t\mid X_t) \Big)\,dt\,\Big|\, X_0 = x \Bigg]
\end{aligned}
  \label{eq:reg-objective-physical}
\end{equation}
with the optimal (entropy-regularized) value function $V^{(\alpha)}(x) := \sup_{\pi} V^{(\alpha)}(x;\pi)$.

\subsection{Generators and Hamiltonians}\label{sec:preliminaries:hamiltonian}
Two standard objects from stochastic control are central to our framework: the controlled generator and the (hard/soft) Hamiltonian. For a fixed action $a\in\mathcal{A}$, define the controlled (infinitesimal) generator $L^a$ acts on $\varphi\in C_b^2(\mathbb{R}^d)$ as:
\begin{equation}
\begin{aligned}
  (L^a \varphi)(x)
  := &\ b(x,a)\cdot\nabla \varphi(x) \\
   &+ \frac{1}{2}\,\mathrm{Tr}\!\bigl(
      \sigma(x,a)\sigma(x,a)^\top \nabla^2\varphi(x)
     \bigr)
\end{aligned}
  \label{eq:generator-hard}
\end{equation}
We view $L^a \varphi$ as continuous-time analogue of one-step Bellman drift. Using $L^a$, define the action-wise Hamiltonian:
\begin{equation}
  H_a\bigl(V\bigr)(x)
  :=
  (L^a V)(x)
  + r(x,a) - \beta V(x)
  \label{eq:Ha-soft}
\end{equation}

The corresponding hard Hamiltonian ($\alpha=0$) is:
\begin{equation}
  H^{(0)}(V)(x)
  :=
  \sup_{a\in\mathcal{A}} H_a\bigl(V\bigr)(x)
  \label{eq:hard-hamiltonian}
\end{equation}
and the soft (entropy-regularized) Hamiltonian for $\alpha>0$ is:
\begin{equation}
H^{(\alpha)}(V)(x)
:= \alpha \log \int_{\mathcal{A}} \exp\Big(
\frac{1}{\alpha} H_a\bigl(V\bigr)(x) \Big)\,da
\label{eq:soft-hamiltonian}
\end{equation}

As $\alpha\to 0$, $H^{(\alpha)}$ recovers $H^{(0)}$. We include the discount term $-\beta V(x)$ inside the Hamiltonian for a compact form. Under standard assumptions, the optimal value function $V^{(\alpha)}$ satisfies the HJB equation \cite{Fleming2006-wa, tang2021}:
\begin{equation}
  H^{(\alpha)}(V^{(\alpha)})(x) = 0,
  \qquad x\in\mathbb{R}^d
  \label{eq:hjb}
\end{equation}

\subsection{Instantaneous $q$-function in continuous time}\label{sec:preliminaries:q}
In discrete-time MDPs, the state--action value function $Q$ provides a direct action-selection signal, e.g.,
through $V(x)=\max_a Q(x,a)$. In continuous time, however, the standard $Q$-function collapses to $V$ as the time step shrinks to zero, i.e, $\lim_{\Delta \to 0} Q^{\Delta}(x, a) = V(x)$, regardless of action $a$. To restore action dependence, \citet{Jia2022-yg} introduce the \emph{advantage rate} (instantaneous) $q$-function. For a policy $\pi$ with value function $V^{(\alpha)}(\cdot;\pi)$, define:
\begin{equation}
  q^{(\alpha)}(x,a;\pi)
  :=
  \lim_{u\to 0}
  \frac{Q_{u}^{(\alpha)}(x,a;\pi) - V^{(\alpha)}(x;\pi)}{u} 
\end{equation}
where $Q_{u}^{(\alpha)}(x,a;\pi)$ is the return obtained by executing action $a$ for a short duration $u$ starting from state $x$, and then following $\pi$ afterwards. With $V = V^{(\alpha)}(:; \pi)$, standard small-time expansion yields:
\begin{equation}
Q_{u}^{(\alpha)}(x,a;\pi)= V(x) + H_a(V)(x)\,u + o(u)
\label{eq:small-time-Q-expansion}
\end{equation}
As a result, $q$-function can be also expressed as:
\begin{equation}
  q_V(x, a) = q^{(\alpha)}(x,a;\pi) = H_a(V)(x)
  \label{eq:q-def}
\end{equation}
Unlike the degenerate continuous-time limit of $Q$, the function $q(x,a)$ retains action dependence and provides the correct local improvement signal.

\textbf{Notations.} From here on, we write $V^{(\alpha)}$ for the \textbf{optimal} value function, and use $V$ for a generic value function (e.g., under a non-optimal policy or an approximation) in place of $V^{(\alpha)}(x; \pi)$. By \cref{eq:q-def}, the instantaneous $q$-function is determined by the corresponding $V$, so we write $q = q_V$ instead of $q^{(\alpha)}(x,a;\pi)$ so that $q_V(x, a) = H_a(V)(x)$. In particular, the optimal $q$-function is denoted as $q_{V^{(\alpha)}}$.

\section{Formulation}\label{sec:formulation}
\subsection{Challenges in continuous-time RL}\label{sec:formulation:challenges}
The instantaneous $q$-function introduced in \cref{sec:preliminaries:q} resolves the degeneracy of the
discrete-time $Q$-function in the infinitesimal limit. A second obstacle remains: under diffusion dynamics, the usual squared TD regression is no longer aligned with the continuous-time dynamic programming principle. Consider the TD error with step size $u>0$,
\begin{equation}
  \delta_u
  \approx
  e^{-\beta u} V(X_{t+u})
  + u\,r(X_t,a_t)
  - V(X_t)
  \label{eq:td-error-discrete}
\end{equation}
In a discrete-time MDP, minimizing $\E[\delta_u^2]$ enforces Bellman consistency. In continuous time,
however, the discounted value increment contains not only a drift term but also a stochastic martingale
term. In particular, with It\^o lemma, rewriting $\delta_u$ in differential form as $u\to 0$ yields:
\begin{equation}
\begin{aligned}
 & d\,\big(e^{-\beta t}V(X_t)\big) + e^{-\beta t} r(X_t, a_t)dt \\
 & =
  e^{-\beta t}\Big(
    (L^{a_t}V)(X_t) - \beta V(X_t) + r(X_t, a_t)
  \Big)\,dt \\
  & \qquad \qquad \qquad + e^{-\beta t}\nabla V(X_t)\,\sigma(X_t,a_t)\,dW_t
\end{aligned}
  \label{eq:ito-decomp}
\end{equation}
As $u\to 0$, these two terms scale differently: the drift is $O(u)$, while the stochastic increment is
$O(\sqrt{u})$. Hence, $\delta_u^2$ is dominated by martingale variation rather than the Bellman drift. To handle this mismatch, rather than simply reuse discrete-time TD updates, \citet{Jia2022-yg} characterize continuous-time RL through martingales.
More specifically, for $V$ and $q$ to be optimal, the process
\begin{equation}
M_t :=
e^{-\beta t}V(X_t)
+ \int_{t_0}^t e^{-\beta u}\big(r(X_u,a_u)-q(X_u,a_u)\big)\,du
\label{eq:martingale-characterization}
\end{equation}
must be a martingale for any starting time $t_0$.

A direct way to enforce this is a martingale-loss objective, which leads to a nested long-horizon path
functional. To avoid this complexity, \citet{Jia2022-yg} instead propose enforcing martingality through orthogonality tests, i.e., requiring $\E\!\left[\int_0^T \omega_t\, dM_t\right]=0$ for suitable filtration-adapted test processes $\omega_t$. In theory,
this condition holds for any $\omega_t$; in practice, \citet{Jia2022-yg} note that numerical performance
requires a careful choice of test family, and common gradient-based constructions are largely heuristic.
Both martingale-loss and orthogonality enforcement then couple $V$ and $q$ into a single optimization
problem, making them hard to reduce to standard TD-style updates. Explicit details of these approaches are given in Appendix~\ref{sec:appendix:benchmarks}.

Beyond these value-based martingale formulations, continuous-time policy-gradient methods such as CPPO \citep{CPPO} are among the few alternatives, but they also rely on the same martingale-enforcement formulas. This leaves the current continuous-time RL approaches constrained by a common bottleneck, which is reflected in our experiments: even in settings with irregular timesteps, where continuous-time methods should be more suitable, these approaches still underperform their
discrete-time counterparts (see \cref{sec:experiments}).

\subsection{Continuous-time actor--critic via Hamiltonian flow}\label{sec:formulation:decoupled}
Instead of enforcing martingale constraints through a single coupled optimization over $(V,q)$, we break learning into a sequence of smaller, tractable subproblems that iteratively update $q$ and $V$:
\begin{equation}
  V_0 \;\Rightarrow\; q_0 \;\Rightarrow\; V_1 \;\Rightarrow\; q_1 \;\Rightarrow\; \cdots
  \label{eq:decoupled-iteration}
\end{equation}
The guiding principle is simple: \emph{learning $q$ is easy if $V$ is fixed}, and the remaining difficulty is \emph{updating $V$ from an instantaneous rate} without degeneracy.

\paragraph{Step 1: Updating $q$ from a fixed $V$.}
For a fixed value function $V$, from \cref{eq:q-def}, the $q$-function can be determined locally by the controlled diffusion generator $L^a$:
\begin{equation}
  q_V(x,a) = r(x,a) + (L^aV)(x) - \beta V(x).
  \label{eq:q-control}
\end{equation}
This characterization is exact, but it is not directly model-free since $(L^aV)(x)$ depends on unknown dynamics.

\begin{algorithm}[t]
  \caption{Continuous-time soft actor-critic (CT-SAC)}
  \label{algo:ct_sac}
  \begin{algorithmic}[1]
    \STATE \textbf{Input:} discount $\beta$, temperature $\alpha$, Euler step $\tau>0$.
    \STATE Initialize parameters $\theta$ and $\phi$ for $Q_\theta$ and policy $\pi_{\phi}$.
    \FOR{$k = 0,1,2,\dots$}
      \STATE Collect transitions $(x, a, r, x', u)$, where $x = X_t$, $u$ is the time step increment, $x' = X_{t+u}$, $a = a_t$ is the action, and $r = r(X_t, a_t)$ from rollouts under current policy $\pi_\phi$ and store in an off-policy buffer. $u$ can vary across samples.
      \STATE \textbf{Update $Q_{\theta} = q_{\theta} + V_{\theta}$:} Fit $Q_{\theta}$ using mini-batch samples $(x, a, r, x', u)$ with targets being the left hand-side of \cref{eq:main_Q_update}.
      \STATE \textbf{Update $\pi_{\phi}$}: towards $\pi_\phi(a\mid x) \propto
        \exp\big(Q_{\theta}(x,a)/\alpha\big)$
    \ENDFOR
  \end{algorithmic}
\end{algorithm}

\begin{figure*}[ht]
    \centering
    \includegraphics[scale=0.45]{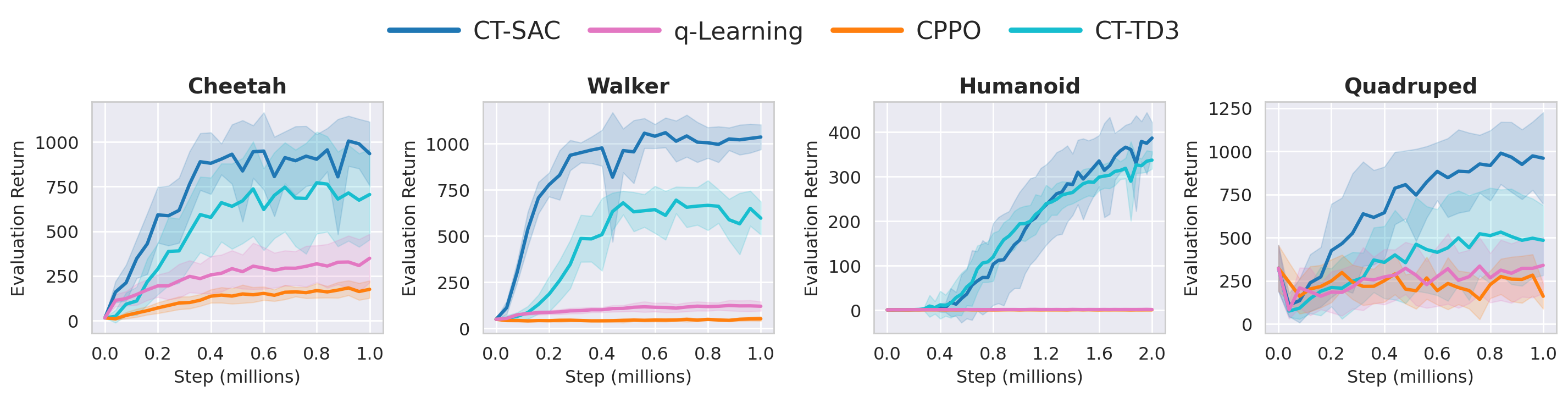}
    \caption{Evaluation returns for 4 continuous-time algorithms including our CT-SAC and CT-TD3 on control tasks over 12 seeds.}
    \label{fig:continuous_algorithms_control_tasks_plot}
\end{figure*}

\paragraph{Step 2: Updating $V$ from $q$ is not a discrete-time max step.}
In discrete-time RL, one typically recovers $V$ via a hard or soft maximization of $Q$:
\begin{equation}
\begin{aligned}
  V(x) &= \max_{a} Q(x,a), \text{ or} \\
  V(x) &= \alpha\log\int_{\mathcal{A}}\exp\big(Q(x,a)/\alpha\big)\,da
\end{aligned}
  \label{eq:dt-soft-backup}
\end{equation}
In continuous time, however, $q$ is an instantaneous \emph{rate}, not a value. If we relate a
small-step return by $Q_u(x,a)\approx V(x)+u\,q(x,a)$ as $u\to 0$, then the soft backup degenerates:
\begin{equation}
\begin{aligned}
V(x)
&=\alpha\log\int_{\mathcal{A}}\exp\big((V(x)+u\,q(x,a))/\alpha\big)\,da \\
&=V(x)+\alpha\log\int_{\mathcal{A}}\exp\big(u\,q(x,a)/\alpha\big)\,da
\end{aligned}
\label{eq:degenerate-backup}
\end{equation}
As $u\to 0$, this becomes a trivial update $V_{\mathrm{new}}(x)\to V(x)$, while enforcing equality
yields a meaningless normalization condition. This reflects a unit mismatch: $V$ behaves like a displacement (a value), while $q$ behaves like a velocity (a derivative), so direct max/softmax backups are ill-posed in the infinitesimal setting.

To extract a non-degenerate update from $q$ in the infinitesimal setting, we propose a novel approach that interprets the update not as a discrete backup but as a \emph{flow} indexed by an iteration parameter $\tau$. As a first step, consider the soft backup increment at small $u$:
\begin{equation}
  V_u(x)
  :=
  V(x)
  + \alpha\log\int_{\mathcal{A}}\exp\big(u\,q(x,a)/\alpha\big)\,da
  \label{eq:Vu-def}
\end{equation}
The increment vanishes as $u\to 0$. However, Jensen inequality implies an overestimation for $V_u$:
\begin{equation}
  V_u(x)-V(x)
  \;\le\;
  \alpha u\,\log\int_{\mathcal{A}}\exp\!\left(q(x,a)/\alpha\right)\,da
  \label{eq:Vu-bound}
\end{equation}
Crucially, this overestimation allows us to derive an update gradient flow by considering the ratio $[V_u(x) - V(x)] / u$:
\begin{equation}
  \frac{d}{d\tau}V_\tau(x)
  =
  \mathcal{Q}^{(\alpha)}(q_{V_\tau})(x).
  \label{eq:V-flow-q}
\end{equation}
where the aggregation operator $\mathcal{Q}$ acts on $q$ as:
\begin{equation}
\begin{aligned}
  \mathcal{Q}^{(\alpha)}(q)(x)
  &:=
  \alpha\log\int_{\mathcal{A}}\exp\big(q(x,a)/\alpha\big)\,da,\\
  \mathcal{Q}^{(0)}(q)(x)&:=\sup_{a\in\mathcal{A}}q(x,a).
\end{aligned}
  \label{eq:Q-operator}
\end{equation}

Under our decoupled scheme, $q_\tau = q_{V_{\tau}}$ is defined solely by $V_\tau$, and substituting this into
\cref{eq:Q-operator} yields exactly the Hamiltonian operator:
\begin{equation}
  \mathcal{Q}^{(\alpha)}(q_V)(x)
  =
  H^{(\alpha)}(V)(x).
  \label{eq:Q-equals-H}
\end{equation}
Therefore the flow becomes the Hamiltonian-driven ODE:
\begin{equation}
  \frac{d}{d\tau}V_\tau(x)=H^{(\alpha)}(V_\tau)(x),
  \label{eq:V-flow-H}
\end{equation}
and we have the following forward-Euler step with step size $\tau>0$ that is called the \textbf{Picard--Hamiltonian iteration}:
\begin{equation}
  V^{\mathrm{new}}(x)
  =
  V(x)+\tau\,H^{(\alpha)}(V)(x)
  =
  T_\tau^{(\alpha)}(V)(x).
  \label{eq:picard-update}
\end{equation}
In \cref{sec:theory}, we establish convergence of this \textit{ideal} iteration to $V^{(\alpha)}$ as $\tau\to 0$.

\textbf{Model-free $q$-estimation induces discretized Hamiltonian flows.} The remaining gap is that $(L^aV)(x)$ in \cref{eq:q-control} is not available in model-free RL. Instead, we estimate the instantaneous rate via its small-time representation: for a holding time $u>0$,
\begin{equation}
\begin{aligned}
  q_V(x,a) &= \lim_{u\to 0} q_V^u(x,a),\\
  q_V^u(x,a)
  &:= \frac{
    e^{-\beta u}\,\E\!\left[V(X_{t+u})\mid X_t=x,a_t=a\right] - V(x)
  }{u}\\
  &+ r(x,a).
\end{aligned}
  \label{eq:q-limit-estimator}
\end{equation}

This replaces the generator by a short-horizon value increment and converges to $q_V$ as $u\to 0$. To reduce discretization bias, we further apply Richardson extrapolation:
\begin{equation}\label{eq:def-qVutilde-main}
  \tilde q_V^u(x,a)
  := 2\,q_V^{u/2}(x,a) - q_V^u(x,a).
\end{equation}
These approximations induce ``discretized'' Hamiltonians:
\begin{equation}
  H_u^{(\alpha)}(V):=\mathcal{Q}^{(\alpha)}(q_V^u),
  \qquad
  \tilde H_u^{(\alpha)}(V):=\mathcal{Q}^{(\alpha)}(\tilde q_V^u),
  \label{eq:Hu-def}
\end{equation}
and the corresponding value iterations
\begin{equation}
  V_{k+1}=V_k+\tau\,H_u^{(\alpha)}(V_k),
  \qquad
  V_{k+1}=V_k+\tau\,\tilde H_u^{(\alpha)}(V_k).
  \label{eq:Hu-iterations}
\end{equation}
In \cref{sec:theory}, we prove convergence of these discretized flows (including the Richardson variant) and show that $\tilde q_{V_k}^u$ converges to the optimal $q_{V^{(\alpha)}}$ as $(\tau,u)\to 0$.

\subsection{Final algorithm: a single critic via $Q \approx V+q$}\label{sec:formulation:final}
The decoupled iteration above is conceptually clean: learn $q$ given $V$, then update $V$ via the Hamiltonian flow. However, maintaining two separate function approximators is inconvenient in practice. We therefore
represent both quantities with a single critic network:
\begin{equation}
  Q_k(x,a) := V_k(x) + q_k(x,a).
  \label{eq:Q-decomposition}
\end{equation}
This is a numerical parameterization: while $V$ and $q$ have different roles, the decomposition lets one network carry both state-value and advantage-rate information. Near optimality, $q_k(x,a)$ is typically
small for actions drawn from the improved policy, so $Q_k(x,a)\approx V_k(x)$ on-policy, making a single critic sufficient. With this representation, the decoupled update yields a practical sample-based critic target. By defining $\tilde{Q}_k(x,a):=Q_k(x,a)-\alpha\log\pi(a\mid x)$, the critic update takes the form:
\begin{equation}
\begin{aligned}
Q_{k+1}&(x, a)
=
(1-\tau)Q_k(x, a)
+ \tau \mathbb{E}_{a}\big[\tilde{Q}_k(x, a)\big] \\
&+ \tau r(x, a)
+ \tau \frac{\gamma\,\mathbb{E}_{a'}\big[\tilde{Q}_k(x', a')\big]
- \mathbb{E}_a\big[\tilde{Q}_k(x, a)\big]}{u}
\end{aligned}
\label{eq:main_Q_update}
\end{equation}
where discount $\gamma = e^{-\beta u}$, and $u$ is the (possibly irregular) transition duration in the sampled tuple $(x,a,r,x',u)$. By the variational form of the soft Hamiltonian (Appendix~\ref{sec:appendix:further_theoretical_details}), the maximizer over $\pi(\cdot\mid x)$ is a Boltzmann policy, $\pi_V^{(\alpha)}(a\mid x)\propto \exp\!\big(q_V(x,a)/\alpha\big)$. Since $Q=V+q$ differs from $q$ only by an $a$-independent shift, we implement the actor update equivalently as $\pi(a\mid x)\propto \exp\!\big(Q(x,a)/\alpha\big)$.

Combining these steps gives a continuous-time soft actor--critic method (CT-SAC; see \cref{algo:ct_sac}). A deterministic variant based on the hard operator gives a continuous-time analogue of TD3 (CT-TD3; see \cref{algo:ct_td3}).

\begin{figure*}[ht]
    \centering
    \includegraphics[scale=0.45]{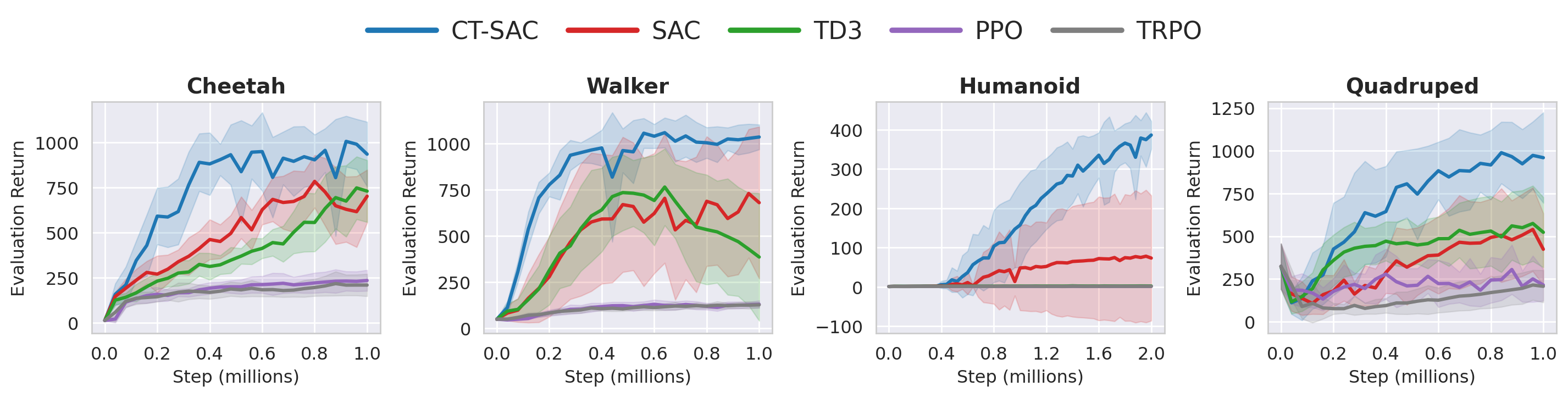}
    \caption{Evaluation returns for our CT-SAC against 4 discrete-time algorithms on control tasks over 12 seeds.}
    \label{fig:discrete_algorithms_control_tasks_plot}
\end{figure*}

\section{Theoretical analysis}\label{sec:theory}
This section provides rigorous guarantees for the decoupled continuous-time learning framework in \cref{sec:formulation}. Our analysis proceeds in three layers. We first study the \emph{ideal} Picard--Hamiltonian iteration $V_{k+1}=T_\tau^{(\alpha)}(V_k)=V_k+\tau H^{(\alpha)}(V_k)$, which corresponds to the analytic rate $q_V$. We then analyze the \emph{model-free discretized flows} obtained by replacing $q_V$ with its finite-horizon approximation $q_V^u$, and its Richardson-corrected version $\tilde q_V^u$, yielding updates based on $H_u^{(\alpha)}$ and $\tilde H_u^{(\alpha)}$. Finally, we lift these results to the single-critic implementation of CT-SAC/CT-TD3.

Recall that $H^{(\alpha)}$ contains the infinitesimal generator term $L^aV$. Thus, unlike discrete-time Bellman operators, $T_\tau^{(\alpha)}$ is not a \textbf{contraction} mapping on $(C_b,\|\cdot\|_\infty)$ and purely analytic approaches often require Sobolev-type regularity with heavy analysis machinery. Instead, we compare $T_\tau^{(\alpha)}$ to a probabilistic dynamic-programming operator that is a sup-norm contraction and treat $T_\tau^{(\alpha)}$ as a controlled local approximation. We define the entropy-regularized dynamic-programming semigroup $\{\Phi_\tau^{(\alpha)}\}_{\tau\ge0}$ by:
\begin{equation}
\begin{aligned}
  (\Phi_\tau^{(\alpha)}& V)(x)
  := \sup_{\pi}\,\mathbb{E}\Bigg[
    \int_0^\tau e^{-\beta t}\Big(r(X_t,a_t) \\
    &- \alpha \log \pi(a_t\mid X_t)\Big)\,dt
    + e^{-\beta \tau} V(X_\tau)
    \,\Big|\, X_0 = x
  \Bigg]
\end{aligned}
  \label{eq:semigroup}
\end{equation}

The semigroup $\Phi_\tau^{(\alpha)}$ is value-only and serves as our contraction reference; it does not by itself provide the action-dependent signal needed for actor--critic updates. We begin with convergence of the ideal Picard--Hamiltonian iteration.

\begin{restatable}[Value Function Convergence]{theorem}{ValueFunctionConvergence}
\label{thm:value_convergence}
Fix $\alpha\ge 0$ and $\tau>0$. Under Assumptions \ref{assumption:dynamics}, \ref{assumption:reward}, and \ref{assumption:test-functions}, let $V_k$ being the value function after $k$ iterations $V_{k+1} := T_\tau^{(\alpha)}(V_k)$. Then, $V_k$ converges to $V^{(\alpha)}$ as $\tau \to 0$, and for all $k\ge 0$,
\begin{equation}\label{eq:value_fct_bound}
\|V_k - V^{(\alpha)}\|_\infty
\le C_0 e^{-\beta\tau k}
   + \frac{C_1\tau^{3/2}}{1-e^{-\beta\tau}}.
\end{equation}
Here $C_0$ and $C_1$ are constants independent of $k$.
\end{restatable}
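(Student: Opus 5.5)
The plan is to compare the Picard--Hamiltonian step $T_\tau^{(\alpha)}$ with the dynamic-programming semigroup $\Phi_\tau^{(\alpha)}$ of \cref{eq:semigroup}. The latter is a sup-norm contraction with modulus $e^{-\beta\tau}$ and has $V^{(\alpha)}$ as a fixed point. We treat $T_\tau^{(\alpha)}$ as a local approximation of $\Phi_\tau^{(\alpha)}$ with one-step error $O(\tau^{3/2})$. The whole argument rests on three ingredients:
\begin{equation*}
\|\Phi_\tau^{(\alpha)} V - \Phi_\tau^{(\alpha)} W\|_\infty \le e^{-\beta\tau}\|V-W\|_\infty,
\qquad
\Phi_\tau^{(\alpha)} V^{(\alpha)} = V^{(\alpha)},
\qquad
\|T_\tau^{(\alpha)} V - \Phi_\tau^{(\alpha)} V\|_\infty \le C_1\tau^{3/2}.
\end{equation*}
The last bound must hold uniformly over a class $\mathcal{V}$ of functions that is preserved by the iteration.

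\textbf{Contraction and fixed point.} For any policy $\pi$, the entropy and reward integrals in $\Phi_\tau^{(\alpha)}V$ and $\Phi_\tau^{(\alpha)}W$ coincide. Their difference is therefore $e^{-\beta\tau}\E[V(X_\tau)-W(X_\tau)]$, and taking the supremum over $\pi$ gives the contraction. The fixed-point property $\Phi_\tau^{(\alpha)}V^{(\alpha)}=V^{(\alpha)}$ is the dynamic programming principle under the standing assumptions.

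\textbf{Local error bound.} Fix $V\in\mathcal{V}$, where $\mathcal{V}$ is the class of test functions in Assumption~\ref{assumption:test-functions}, say $C_b^2$ with bounded derivatives and Hölder-continuous second derivatives. Apply Itô's formula to $e^{-\beta t}V(X_t)$ under a fixed Markov policy $\pi$, using the relaxed or randomized generator $\int L^aV\,\pi(da\mid x)$. This yields
\begin{equation*}
\E\Big[\int_0^\tau e^{-\beta t}\Big(H_{a_t}(V)(X_t)-\alpha\log\pi(a_t\mid X_t)\Big)\,dt\Big].
\end{equation*}
We replace $X_t$ by $x$ inside the integrand. The error is controlled by $\E|X_t-x|\lesssim \sqrt t$ together with the Hölder and Lipschitz regularity of $x\mapsto H_a(V)(x)$, uniformly in $a$, supplied by Assumptions~\ref{assumption:dynamics}--\ref{assumption:test-functions}. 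Integrating $\sqrt t$ over $[0,\tau]$ gives $O(\tau^{3/2})$. With the integrand frozen at $x$, the supremum over $\pi(\cdot\mid x)$ equals $\tau H^{(\alpha)}(V)(x)+O(\tau^2)$, by the Gibbs variational formula for \cref{eq:soft-hamiltonian}, or by the plain supremum when $\alpha=0$. Together these give
\begin{equation*}
\Phi_\tau^{(\alpha)}V = V+\tau H^{(\alpha)}(V)+O(\tau^{3/2}) = T_\tau^{(\alpha)}V + O(\tau^{3/2}).
\end{equation*}
The upper bound comes from the supremum; the lower bound comes from plugging in the frozen Boltzmann policy $\pi_V^{(\alpha)}$.

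\textbf{Assembling the bound.} Let $e_k=\|V_k-V^{(\alpha)}\|_\infty$. Writing $V_{k+1}-V^{(\alpha)}=(T_\tau V_k-\Phi_\tau V_k)+(\Phi_\tau V_k-\Phi_\tau V^{(\alpha)})$ gives $e_{k+1}\le e^{-\beta\tau}e_k+C_1\tau^{3/2}$. Unrolling this recursion and summing the geometric series yields \cref{eq:value_fct_bound} with $C_0=e_0$. Since $1-e^{-\beta\tau}\sim\beta\tau$, the residual term is $O(\sqrt\tau)$, which gives the claimed convergence as $\tau\to0$.

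\textbf{Main obstacle.} The hard step is making $C_1$ uniform along the iterates, that is, showing that $V_k$ stays in a class with uniformly bounded $C^{2,\gamma}$-type norms. The update $V+\tau H(V)$ involves second derivatives of $V$, so the iteration can lose regularity. My first attempt would be to take the invariance of such a class under $T_\tau^{(\alpha)}$ as exactly the content of Assumption~\ref{assumption:test-functions}. Failing that, I would run the argument on a probabilistic representation: control the $\sqrt t$ error in the local bound by the Lipschitz constants of $V_k$ and $\nabla V_k$ only, propagated through the contraction. The secondary delicate point is uniformity in the action for the soft case. This requires $\mathcal{A}$ to have finite measure so that the entropy term is bounded.
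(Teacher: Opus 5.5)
Your proposal is correct and follows essentially the paper's route. Both arguments use the sup-norm contraction and fixed point of $\Phi_\tau^{(\alpha)}$, an $O(\tau^{3/2})$ one-step mismatch obtained from a Dynkin/It\^o small-time expansion with the $\sqrt{t}$ moment bound and the Gibbs variational formula, and a perturbed-contraction recursion. The paper runs that recursion against the auxiliary sequence $W_{k+1}=\Phi_\tau^{(\alpha)}(W_k)$, $W_0=V_0$, which is equivalent to your direct recursion against $V^{(\alpha)}$. The uniformity of the one-step constant along the iterates, which you flag, is not treated separately in the paper: it applies the mismatch lemma to every $V_k\in\mathcal{D}$ with a single $C_\alpha$, implicitly adopting your ``first attempt'' of building invariance and uniform bounds into Assumption~\ref{assumption:test-functions}.
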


\paragraph{Proof sketch.}
We compare the approximate update $V_{k+1}=T_\tau^{(\alpha)}(V_k)$ with the semigroup iteration
\begin{equation}
  W_{k+1}:=\Phi_\tau^{(\alpha)}(W_k),\qquad C_0:=\|W_0-V^{(\alpha)}\|_\infty .
\end{equation}
The proof proceeds in 2 steps. In \textbf{step 1}, through a short-time It\^{o} expansion, we prove the one-step operator error $\|\Phi_\tau^{(\alpha)}(V)-T_\tau^{(\alpha)}(V)\|_\infty
\le C_\alpha\,\tau^{3/2}$. Then, in \textbf{step 2}, with $E_k:=V_k-W_k$, we recursively prove $\|E_k\|_\infty \le \frac{C_\alpha\,\tau^{3/2}}{1-e^{-\beta\tau}}$ through the decomposition:
\begin{equation*}
\begin{aligned}
  E_{k+1}
  &= T_\tau^{(\alpha)}(V_k) - \Phi_\tau^{(\alpha)}(W_k)\\
  &= \bigl(T_\tau^{(\alpha)}(V_k) - \Phi_\tau^{(\alpha)}(V_k)\bigr)
    + \bigl(\Phi_\tau^{(\alpha)}(V_k) - \Phi_\tau^{(\alpha)}(W_k)\bigr)
\end{aligned}
\end{equation*}
Hence, $\norm{E_{k+1}}_{\infty} \le C_\alpha \tau^{3/2} + e^{-\beta\tau}\,\norm{E_k}_{\infty}$.

Finally, $\Phi_\tau$ is a contraction operator so that $W_k$ converges exponentially to $V^{(\alpha)}$: $\|W_k-V^{(\alpha)}\|_\infty\le e^{-\beta\tau k}C_0$. The triangle inequality then gives the desired bound \eqref{eq:value_fct_bound}. \hfill$\square$

We extend the result to the model-free estimator $q_V^u$ (and $\tilde q_V^u$). We state the theorem for fixed $u$, and defer the random-increment case $U$ to the Appendix~\ref{sec:appendix:random_time}.

\begin{restatable}[Convergence with $q$-function]{theorem}{qFunctionConvergence}\label{thm:q_convergence}
Let $V_k$ be the resulting function defined by Richardson Picard-Hamiltonian iterations $V_{k+1}:= V_k + \tau\,\tilde H_u^{(\alpha)}(V_k)$. Under  Assumption~\ref{assumption:dynamics-smooth} and \ref{assumption:value-C4}, for $u > 0$, there exist constants $C_1,C_2, L$, and $\rho>0$ such that for all $k\ge 0$:
\begin{equation}\label{eq:V-conv-rich-main}
  \|V_k - V^{(\alpha)}\|_\infty
  \;\le\;
  e^{-\rho\tau k}\,\|V_0 - V^{(\alpha)}\|_\infty
  + \frac{C_1(\tau^2 + \tau u^2)}{1-e^{-\rho\tau}}.
\end{equation}
And, for $q$-function estimation, we obtain the error:
\begin{equation}\label{eq:q-conv-rich-main}
  \|\tilde q_{V_k}^u - q_{V^{(\alpha)}}\|_\infty
  \;\le\;
  \frac{L}{u}\,\|V_k - V^{(\alpha)}\|_\infty
  + C_2\,u^2
\end{equation}
\end{restatable}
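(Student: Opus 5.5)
The plan mirrors the proof of \cref{thm:value_convergence}: compare the Richardson iteration with the contractive semigroup $\Phi_\tau^{(\alpha)}$ and accumulate the one-step errors geometrically. The new ingredient is a bias bound for the discretized Hamiltonian. For $V\in C^4$ (Assumption~\ref{assumption:value-C4}) with smooth coefficients (Assumption~\ref{assumption:dynamics-smooth}), we expand $e^{-\beta u}\E[V(X_u)\mid X_0=x,a_0=a]$ by the Dynkin/It\^o--Taylor formula to second order in $u$, holding the action fixed. Writing $G^a:=L^a-\beta$, this gives
\begin{equation*}
q_V^u(x,a)=H_a(V)(x)+\tfrac{u}{2}(G^a)^2V(x)+O(u^2).
\end{equation*}
The combination $2q_V^{u/2}-q_V^u$ then cancels the $O(u)$ term, so $\|\tilde q_V^u-q_V\|_\infty\le C u^2$, with $C$ depending on $\|V\|_{C^4}$. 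Since $\mathcal{Q}^{(\alpha)}$ is $1$-Lipschitz in sup-norm (true for both log-sum-exp and the supremum), this yields $\|\tilde H_u^{(\alpha)}(V)-H^{(\alpha)}(V)\|_\infty\le Cu^2$.

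\textbf{Step 1: one-step operator error.} Let $\tilde T(V):=V+\tau\tilde H_u^{(\alpha)}(V)$. We split
\begin{equation*}
\tilde T(V)-\Phi_\tau^{(\alpha)}(V)=\bigl(T_\tau^{(\alpha)}(V)-\Phi_\tau^{(\alpha)}(V)\bigr)+\tau\bigl(\tilde H_u^{(\alpha)}(V)-H^{(\alpha)}(V)\bigr).
\end{equation*}
The second term is $O(\tau u^2)$ by the bias bound. For the first term, the $C^4$ regularity upgrades the $\tau^{3/2}$ estimate of \cref{thm:value_convergence}, which came from merely $C^2$ functions, to $O(\tau^2)$. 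The reason is that the semigroup admits the expansion $\Phi_\tau V=V+\tau H^{(\alpha)}(V)+O(\tau^2)$ when the optimal feedback is regular. To see this, take the Boltzmann policy $\pi_V$ as a feedback control on $[0,\tau]$ to get a lower bound, and use the variational form of $H^{(\alpha)}$ together with It\^o's formula to get an upper bound. In both directions the remainder is controlled by $\tau^2$ times derivatives up to order four.

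\textbf{Step 2: recursion.} We set $E_k:=V_k-W_k$ with $W_{k+1}=\Phi_\tau^{(\alpha)}W_k$ and $W_0=V_0$. This gives
\begin{equation*}
\|E_{k+1}\|_\infty\le C_1(\tau^2+\tau u^2)+e^{-\rho\tau}\|E_k\|_\infty,
\end{equation*}
with $\rho=\beta$ from the discount contraction of $\Phi_\tau$. We then sum the geometric series. Combining with $\|W_k-V^{(\alpha)}\|_\infty\le e^{-\rho\tau k}\|V_0-V^{(\alpha)}\|_\infty$ and the triangle inequality gives \eqref{eq:V-conv-rich-main}.

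\textbf{Step 3: the $q$ bound.} We write
\begin{equation*}
\tilde q_{V_k}^u-q_{V^{(\alpha)}}=\bigl(\tilde q_{V_k}^u-\tilde q_{V^{(\alpha)}}^u\bigr)+\bigl(\tilde q_{V^{(\alpha)}}^u-q_{V^{(\alpha)}}\bigr).
\end{equation*}
The map $V\mapsto\tilde q_V^u$ is affine in $V$, with the reward cancelling in differences. Its linear part is a combination of $(e^{-\beta s}P_s^a-I)/s$ for $s=u,u/2$, where $P_s^a$ is the transition operator under the frozen action. Since $P_s^a$ is a sup-norm contraction, the linear part has operator norm at most $L/u$, with $L$ an absolute constant (e.g.\ $L=6$). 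The second term is at most $C_2u^2$ by the bias bound applied to $V^{(\alpha)}\in C^4$. This gives \eqref{eq:q-conv-rich-main}.

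\textbf{Main obstacle.} The main difficulty is that the bias bound and the $O(\tau^2)$ semigroup expansion require uniform $C^4$ bounds on the iterates $V_k$, not only on $V^{(\alpha)}$. I would handle this by reading Assumption~\ref{assumption:value-C4} as imposing a uniform $C^4$ bound along the iteration, or by propagating regularity: $\tilde T$ is a smoothing-type map for small $\tau$, so an a priori derivative bound is preserved. A fallback is to apply the Taylor estimates only to $V^{(\alpha)}$ and to control $V_k-V^{(\alpha)}$ by monotonicity of $\tilde T$. That route needs $\tau\le u/L$, which may account for the modified rate $\rho$ in place of $\beta$.
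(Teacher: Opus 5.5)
Your architecture is the paper's. It has the same four pieces: a Richardson bias bound transferred to $\tilde H_u^{(\alpha)}$ through the $1$-Lipschitz aggregation, a one-step mismatch of size $C(\tau^2+\tau u^2)$ against $\Phi_\tau^{(\alpha)}$, the perturbed contraction recursion with $\rho=\beta$, and the Lipschitz-plus-bias split for $\tilde q$. The paper compares $V_k$ with $V^{(\alpha)}$ directly rather than through $W_k$, which changes nothing.

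\textbf{Where your route differs.} The one step you argue differently is the $O(\tau^2)$ expansion of $\Phi_\tau^{(\alpha)}$. The paper (\cref{lemma:Phi-smooth-local-expansion}) expands $\Phi_\tau^{\pi,(\alpha)}V$ to second order for each admissible $\pi$ and then takes the supremum. This tacitly needs $(L^\pi)^2V$ and $L^\pi\tilde r^{(\alpha)}(\cdot;\pi)$ to be bounded uniformly over the admissible class, i.e.\ uniform second-order regularity of $x\mapsto\pi(\cdot\mid x)$. The assumptions only give Lipschitz $\log\pi$, so this is not provided.

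Your sandwich avoids the issue. It\^o gives the exact identity $\Phi_\tau^{\pi,(\alpha)}V(x)=V(x)+\E\int_0^\tau e^{-\beta t}G_\alpha(V)(\tilde X_t,\pi)\,dt$, with $G_\alpha$ as in \eqref{eq:appendix-Galpha-definition}. The Gibbs inequality bounds the integrand by $H^{(\alpha)}(V)(\tilde X_t)$ for every $\pi$, with equality at $\pi_V$. Both bounds therefore reduce to $\E\int_0^\tau e^{-\beta t}H^{(\alpha)}(V)(\tilde X_t)\,dt$. You then only need $H^{(\alpha)}(V)\in C_b^2$ with $\sup_a\|L^aH^{(\alpha)}(V)\|_\infty<\infty$, plus well-posedness under $\pi_V$. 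This is the more rigorous route.

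It is, however, intrinsically an $\alpha>0$ argument. At $\alpha=0$, $\sup_a q_V(\cdot,a)$ is only Lipschitz, and the mismatch can be of exact order $\tau^{3/2}$. For example, take $b(x,a)=a\in[-1,1]$, $\sigma=1$ and $V=x^2/2$ near $0$. The feedback $a_t=\mathrm{sign}(X_t)$ then gains $\E\int_0^\tau|X_t|\,dt\asymp\tau^{3/2}$ over $\tau H^{(0)}(V)(0)$. So the $\tau^2$ term, yours and the paper's, holds only for $\alpha>0$.

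\textbf{Two steps need repair.}

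First, the bias bound does not follow from $C^4$ regularity, so your constant cannot depend only on $\|V\|_{C^4}$. Iterating Dynkin gives
\[
q_V^u-q_V-\tfrac u2(G^a)^2V=\frac1u\int_0^u(u-s)\bigl(e^{-\beta s}P_s^a-I\bigr)(G^a)^2V\,ds .
\]
This is in general only $o(u)$ when $(G^a)^2V$ is merely bounded and continuous, which is all that Assumption~\ref{assumption:value-C4} gives. Richardson then removes the $u$-term but leaves $o(u)$.

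For example, in $d=1$ take $b=0$, $\sigma=1$, $r=0$ and $V''(x)=|x|^{2+\epsilon}$ near $0$, with $0<\epsilon<2$. Then $V\in C_b^4$, but $|\tilde q_V^u(0,a)-q_V(0,a)|\asymp u^{1+\epsilon/2}$. An $O(u^2)$ remainder needs $(G^a)^3V$ bounded, which means roughly $V\in C_b^6$ with $C^4$ coefficients. The paper's \cref{lemma:qV-u-Richardson-bias} simply asserts a bounded $c_2$ and has the same gap.

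Second, your fallbacks for keeping constants uniform along the iterates fail for the Richardson scheme. Because $\tilde q_V^u$ contains $-e^{-\beta u}P_u^aV/u$ with a negative weight, the map $V\mapsto V+\tau\tilde H_u^{(\alpha)}(V)$ is not monotone. Its sup-norm Lipschitz constant can exceed $1$; the crude bound is about $1+2\tau/u$. The same count gives only this growth factor per step for $C^4$ norms, so regularity is not obviously propagated either. Monotonicity, and a contraction for $\tau\le u$, hold only for the plain $q_V^u$ scheme. As the paper does implicitly, you must therefore assume uniform higher-order derivative bounds on all iterates $V_k$.

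Finally, take $L=8$, since the weights $4e^{-\beta u/2}$, $e^{-\beta u}$ and $3$ sum to at most $8$. The value $6$ is not valid. The paper's $6/u$ in \cref{lemma:qV-u-Lipschitz-in-V} is an arithmetic slip; its random-time analogue gets $10/u$.
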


In particular, if $\tau, u \to 0$ and $\tau/u \to 0$, then the value estimation $V_k$ and the Richardson-version of $q$-function estimates converges to the optimal values $V^{(\alpha)}$ and $q_{V^{(\alpha)}}$.

\paragraph{Proof sketch.} Now we need to find the error when $q$ is estimate by finite fraction $q^u$. We use a similar proof as in \cref{thm:value_convergence} to bound value function with a stronger expansion for $\Phi_{\tau}$. The main difference now is the new error between $H^{(\alpha)}$ and $\tilde H^{(\alpha)}_u$. Luckily, the estimate $\|\tilde q_V^u - q_V\|_\infty \le C u^2$ allows the Hamiltonian bound $\|\tilde H_u^{(\alpha)}(V) - H^{(\alpha)}(V)\|_\infty \le C u^2$ so that the one-step error $\|\Phi_\tau^{(\alpha)}(V) - (V+\tau \tilde H_u^{(\alpha)}(V))\|_\infty$ is now bounded by $C_1(\tau^2 + \tau u^2)$. 

For $q$-convergence, we use the Lipschitz bound $\|\tilde q_V^u-\tilde q_W^u\|_\infty \le \frac{L}{u}\|V-W\|_\infty$, and transfer the work to $V$. \hfill$\square$

Finally, we translate these guarantees to the single-critic update $Q_k=V_k+q_k$ underlying CT-SAC/CT-TD3.

\begin{restatable}[Theoretical Algorithm Convergence]{theorem}{TheoreticalAlgorithmConvergence}\label{thm:algorithm-convergence}
Consider the exact iterative update of $Q_k$ through \cref{eq:main_Q_update} with no optimization or statistical error. Then $Q_k$ converge to the optimal value $Q^{(\alpha)} = V^{(\alpha)} + q_{V^{(\alpha)}}$ as $k \to \infty$.
\end{restatable}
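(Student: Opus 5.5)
The plan is to reduce \cref{thm:algorithm-convergence} to \cref{thm:value_convergence} and \cref{thm:q_convergence} by showing that, in the exact (population, no optimization/statistical error) regime, the single-critic recursion \cref{eq:main_Q_update} is the Picard--Hamiltonian iteration on $V_k$, composed with the $q$-estimation step, written in the coordinates $Q_k=V_k+q_k$.

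\textbf{Step 1: identify the $V$-component.} Take $\pi_k(a\mid x)\propto\exp(Q_k(x,a)/\alpha)$, as prescribed by the actor step. Since $Q_k-q_k=V_k$ does not depend on $a$, this is the Boltzmann policy $\pi^{(\alpha)}_{V_k}$. Define $V_k(x):=\mathbb{E}_{a\sim\pi_k}[\tilde Q_k(x,a)]$. By the variational (Gibbs) form of the log-partition function,
\begin{equation*}
\mathbb{E}_{a\sim\pi_k}\bigl[Q_k(x,a)-\alpha\log\pi_k(a\mid x)\bigr]=\alpha\log\int_{\mathcal{A}}e^{Q_k(x,a)/\alpha}\,da .
\end{equation*}
This gives $V_k+\mathcal{Q}^{(\alpha)}(q_k)$. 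If $q_k$ is consistent with $V_k$ in the sense that $\mathcal{Q}^{(\alpha)}(q_k)$ equals the Hamiltonian-type residual, the decomposition closes. I would then take the conditional expectation over $x'$ in \cref{eq:main_Q_update} and integrate against $\pi_k$. This should show that the new soft value equals $V_k+\tau\,\mathcal{Q}^{(\alpha)}(q^u_{V_k})$ up to the mixing term $(1-\tau)Q_k+\tau\mathbb{E}_a\tilde Q_k$, which collapses to $V_k$ after soft aggregation. The conclusion to aim for is $V_{k+1}=V_k+\tau H_u^{(\alpha)}(V_k)$, i.e.\ \cref{eq:Hu-iterations}. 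The $\alpha=0$ case should follow the same way, with $\sup_a$ in place of the log-partition.

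\textbf{Step 2: identify the $q$-component.} Subtracting $V_{k+1}$ from $Q_{k+1}$, the remaining $a$-dependent part of \cref{eq:main_Q_update} is
\begin{equation*}
\tau\Bigl(r(x,a)+\frac{e^{-\beta u}\mathbb{E}[V_k(X_{t+u})\mid x,a]-V_k(x)}{u}\Bigr)=\tau\,q^u_{V_k}(x,a),
\end{equation*}
together with the retained $(1-\tau)q_k$. So $q_{k+1}=(1-\tau)q_k+\tau q^u_{V_k}$, which is a relaxed (Polyak-averaged) version of the estimator \cref{eq:q-limit-estimator}. Unrolling gives $q_k=\sum_j\tau(1-\tau)^{k-1-j}q^u_{V_j}+(1-\tau)^k q_0$. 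This is a convex average of $q^u_{V_j}$ whose weights concentrate on large $j$ as $k\to\infty$.

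\textbf{Step 3: pass to the limit.} By \cref{thm:value_convergence} and \cref{thm:q_convergence} (the non-Richardson version is analogous, with an $O(u)$ in place of the $O(u^2)$ bias), $V_k\to V^{(\alpha)}$ up to a vanishing bias as $k\to\infty$ and then $(\tau,u)\to0$ with $\tau/u\to0$. The Lipschitz bound $\|q^u_V-q^u_W\|_\infty\le \tfrac{L}{u}\|V-W\|_\infty$ gives $q^u_{V_j}\to q_{V^{(\alpha)}}$. A Toeplitz/Ces\`aro argument on the geometric weights then yields $q_k\to q_{V^{(\alpha)}}$, and hence $Q_k\to V^{(\alpha)}+q_{V^{(\alpha)}}$.

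\textbf{Main obstacle.} The hard part is Step 1. The $V$ recovered from $Q_k$ equals $V_k$ only if $\mathcal{Q}^{(\alpha)}(q_k)$ is exactly zero, which holds at the HJB fixed point but not along the iteration. The lagged averaging in $q_k$ also breaks the exact identity $q_k=q^u_{V_k}$. I would handle this by treating the discrepancy $\mathcal{Q}^{(\alpha)}(q_k)$ as an additional one-step perturbation. It is $O(\tau)$ per step, because it is itself driven by $H_u^{(\alpha)}(V_k)$, which is small near the fixed point. It can then be absorbed into the contraction recursion for $E_k$ from the proof of \cref{thm:value_convergence}, using the $e^{-\beta\tau}$ contraction of $\Phi_\tau^{(\alpha)}$ to sum the errors geometrically.
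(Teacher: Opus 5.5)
The gap is in Step 1, and the ``main obstacle'' comes from it. You identify the $V$-component of $Q_k$ with its soft value $S_k(x):=\E_{a\sim\pi_k}[\tilde Q_k(x,a)]=\alpha\log\int_{\mathcal{A}}e^{Q_k(x,a)/\alpha}\,da$. But if $Q_k=V_k+q_k$, then $S_k=V_k+\mathcal{Q}^{(\alpha)}(q_k)$, so $S_k$ is the wrong coordinate.

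The paper never tries to read $V_k$ off $Q_k$. It \emph{constructs} the pair by the decoupled iteration $V_{k+1}=V_k+\tau\,\mathcal{Q}^{(\alpha)}(q_k)$, $q_{k+1}=q^u_{V_{k+1}}$, started from $q_0=q^u_{V_0}$ and $Q_0=V_0+q_0$. It then verifies by induction (\cref{lemma:appendix-Q-from-Vq,lemma:appendix-induction-Q-decomp}) that $V_{k+1}+q_{k+1}$ is exactly the right-hand side of \cref{eq:main_Q_update}. Two identities make this exact:
\begin{itemize}
\item $\mathcal{Q}^{(\alpha)}$ commutes with $a$-independent shifts, so $\mathcal{Q}^{(\alpha)}(q_k)=\mathcal{Q}^{(\alpha)}(Q_k)-V_k=S_k-V_k$, which gives $V_{k+1}=(1-\tau)V_k+\tau S_k$.
\item $V\mapsto q^u_V$ is affine and reproduces the $r$-term under convex combinations, so $q^u_{V_{k+1}}=(1-\tau)q^u_{V_k}+\tau q^u_{S_k}$.
\end{itemize}

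Your Step 2 recursion $q_{k+1}=(1-\tau)q_k+\tau q^u_{S_k}$ is the right one. The $V$ inside your $q^u$ is $S_k$, because the target contains $\E_{a'}[\tilde Q_k(x',a')]=S_k(x')$. By the second identity, though, that recursion says precisely $q_{k+1}=q^u_{V_{k+1}}$. There is no lag, and no Toeplitz argument is needed: $V_k$ follows $V_{k+1}=V_k+\tau H_u^{(\alpha)}(V_k)$ exactly, $q_k=q^u_{V_k}$, and the triangle inequality finishes as in your Step 3. (For arbitrary $Q_0$, set $q_0:=Q_0-V_0$ and keep the recursions $V_{k+1}=(1-\tau)V_k+\tau S_k$, $q_{k+1}=(1-\tau)q_k+\tau q^u_{S_k}$. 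The defect $q_k-q^u_{V_k}$ is then multiplied by $1-\tau$ each step and perturbs the $V$-recursion by at most $\tau(1-\tau)^k\|q_0-q^u_{V_0}\|_\infty$; that is the only genuine lag.)

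Your proposed repair would not close.
\begin{itemize}
\item The offset $\mathcal{Q}^{(\alpha)}(q_k)=H_u^{(\alpha)}(V_k)$ is the HJB residual of the current iterate, not an $O(\tau)$ quantity. It is $O(1)$ away from the fixed point, and near it the available sup-norm bound is only $Cu+\tfrac{2}{u}\|V_k-V^{(\alpha)}\|_\infty$.
\item Even a genuine $O(\tau)$ per-step error would be too large. The recursion $E_{k+1}\le e^{-\beta\tau}E_k+\varepsilon$ accumulates to $\varepsilon/(1-e^{-\beta\tau})\approx\varepsilon/(\beta\tau)$, so the contraction argument needs $\varepsilon=o(\tau)$. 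This is why the paper's one-step errors are $O(\tau^{3/2})$ or $O(\tau^2+\tau u)$.
\item With $V_k:=S_k$, the Step 1 target $S_{k+1}=S_k+\tau H_u^{(\alpha)}(S_k)$ is false in general. The map $V\mapsto V+H_u^{(\alpha)}(V)$ commutes with the Euler map $V\mapsto V+\tau H_u^{(\alpha)}(V)$ only when $H_u^{(\alpha)}$ is affine, which fails for log-partition or $\sup$ aggregation.
\end{itemize}

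One caution on Step 3, which you share with the paper's own appeal to \cref{thm:q_convergence}. \cref{eq:main_Q_update} realizes the non-Richardson iteration, whose value bias is $C(\tau^2+\tau u)/(1-e^{-\beta\tau})\approx C(\tau+u)/\beta$. Multiplied by the $2/u$ Lipschitz constant of $V\mapsto q^u_V$, this leaves an $O(1)$ term. So the Lipschitz transfer gives $q$-convergence only at the Richardson rate, and the Richardson single-critic form is the different update \eqref{eq:appendix-richardson-Q-update-closed}.
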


\begin{restatable}[Algorithm Convergence]{corollary}
{AlgorithmConvergence}\label{corollary:algorithm_statistical-convergence}
Suppose at each iteration $k$, $n_k$ samples are used in the gradient update. Then, for any pair of $(\epsilon, \delta) > 0$, there exists an iteration $L$ together with the sequence $n_1, n_2, \cdots, n_L$ so that with the probability of at least $\delta$, $\E\left[|Q_L(X, A) - Q^{(\alpha)}(X, A)|^2\right] < \epsilon$ for random state-action pair $(X, A)$. Here $Q^{(\alpha)} = V^{(\alpha)} + q_{V^{(\alpha)}}$ is the optimal function, and $Q_L$ is the learned function from \cref{algo:ct_sac}.
\end{restatable}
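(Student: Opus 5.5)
The plan is to combine \cref{thm:algorithm-convergence}, which handles the exact iteration, with a perturbation argument for the sampled iteration, and then pick $L$ and the $n_k$ so that each error piece is below $\epsilon/3$. Let $\bar Q_k$ be the exact iterates from \cref{eq:main_Q_update}, and let $Q_k$ be the learned iterates from \cref{algo:ct_sac}. Set $e_k := Q_k - \bar Q_k$.

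\textbf{Step 1 (deterministic part).} Write $\bar Q_k=V_k+q_k$ with $V_{k+1}=V_k+\tau\tilde H_u^{(\alpha)}(V_k)$. By \cref{thm:q_convergence} we have
\[
\|\bar Q_k - Q^{(\alpha)}\|_\infty \le \Bigl(1+\tfrac{L}{u}\Bigr)\|V_k-V^{(\alpha)}\|_\infty + C_2u^2 .
\]
Choose $u$ small enough that $C_2u^2$ is small. Then choose $\tau$ with $\tau/u$ small, so that $\tfrac{L}{u}\cdot\tfrac{C_1(\tau^2+\tau u^2)}{1-e^{-\rho\tau}}\approx \tfrac{C_1L}{\rho}(\tau/u+u)$ is small. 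Finally choose $L$ large enough that $\tfrac{L}{u}e^{-\rho\tau L}\|V_0-V^{(\alpha)}\|_\infty$ is small. This gives $\|\bar Q_L-Q^{(\alpha)}\|_\infty^2<\epsilon/4$. Since sup-norm control dominates $L^2$ control under any sampling law of $(X,A)$, the deterministic part of $\E|Q_L-Q^{(\alpha)}|^2$ is handled.

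\textbf{Step 2 (error propagation).} I would show that the update map in \cref{eq:main_Q_update}, viewed as an operator $\mathcal{T}$ on $Q$, is Lipschitz in the $L^2(\mu_k)$ norm with constant $K=1+c\tau/u$, for some $c$ depending on $\alpha$ and $\gamma$. The target is $(1-\tau)Q$ plus $\tau/u$ times a $\gamma$-discounted soft expectation of $\tilde Q$. The log-sum-exp or soft-expectation term is 1-Lipschitz, and the transition kernel is a Markov operator, so it is nonexpansive. Therefore
\[
\|e_{k+1}\|\le K\|e_k\|+\eta_{k+1},
\]
where $\eta_{k+1}$ is the one-step regression error of fitting $\mathcal T(Q_k)$ from $n_{k+1}$ samples. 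This is fixed-horizon propagation, so the growth factor $K^L$ is a finite constant once $L$ is fixed. A distribution-shift constant between the buffer law and the law of $(X,A)$ has to be assumed as well; I would state it as a concentrability assumption.

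\textbf{Step 3 (statistical part).} Under standard regression guarantees for a bounded function class with finite complexity, such as least squares with a covering-number bound, we have $\eta_k\le C\sqrt{\log(L/(1-\delta))/n_k}$ plus an approximation term, with probability at least $1-(1-\delta)/L$ each. Note that the statement's ``at least $\delta$'' should be read as confidence level $\delta$. Union-bound over the $L$ iterations. Choose $n_k$ large enough that $\sum_{k\le L}K^{L-k}\eta_k<\sqrt{\epsilon}/2$. Combining this with Step 1 via $(a+b)^2\le 2a^2+2b^2$ gives $\E|Q_L-Q^{(\alpha)}|^2<\epsilon$.

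The main obstacle is Step 2. The factor $1/u$ in the update means errors are amplified by $1+O(\tau/u)$ per step, and the bound $L/u$ from \cref{thm:q_convergence} reappears. This is harmless only because $L$, $\tau$ and $u$ are frozen before the $n_k$ are chosen, so any finite amplification can be absorbed by more samples. A further assumption is needed: the function class must contain, or approximate to arbitrary precision, the targets $\mathcal T(Q_k)$. Otherwise an irreducible approximation term survives, and the corollary fails unless it is included in the hypotheses.
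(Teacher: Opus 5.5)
Your Steps 2--3 follow the paper's proof. The paper also propagates $\epsilon_{k+1}\le \mathrm{stat}_k+(1+\tau/u)\epsilon_k$ in $L^2(\nu)$ via \cref{lemma:appendix-Q-map-stability-L2}. It likewise freezes $L,\tau,u$ first, then picks $n_j$ so that $\mathrm{stat}_j<(1+\tau/u)^{-(L-1-j)}\epsilon/(2L)$, and finishes with a union bound. Your realizability and concentrability hypotheses are exactly what the paper leaves implicit in its $\mathrm{stat}_k$ rate and its frozen-policy Jensen step, so stating them is a plus. Concentrability is needed already inside Step~2, not only for the final change of measure. Neither the transition kernel nor the soft expectation over actions is automatically nonexpansive in $L^2(\mu_k)$. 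That holds only if $\mu_k$ dominates the next-state law and the Boltzmann action law with bounded density ratio.

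\textbf{The gap is in Step~1.} By \cref{lemma:appendix-Q-from-Vq} and \cref{lemma:appendix-induction-Q-decomp}, the exact iterates of \eqref{eq:main_Q_update} are $\bar Q_k=V_k+q^u_{V_k}$ with $V_{k+1}=V_k+\tau H^{(\alpha)}_u(V_k)$. This is the \emph{non}-Richardson pair, so \cref{thm:q_convergence} does not apply to them.
\begin{itemize}
\item The available bounds for this pair are a value bias $C(\tau^2+\tau u)/(1-e^{-\rho\tau})\approx C(\tau+u)/\rho$ and $\|q^u_{V_k}-q_{V^{(\alpha)}}\|_\infty\le \frac{2}{u}\|V_k-V^{(\alpha)}\|_\infty+C_q u$.
\item Combining them leaves a residual of order $\frac{2C}{\rho}\bigl(1+\frac{\tau}{u}\bigr)$, which never drops below $2C/\rho$ for any choice of $(\tau,u)$. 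The $O(u)$ value bias is exactly cancelled by the $1/u$ Lipschitz constant.
\item The $\tau/u+u$ rate you use needs the $O(u^2)$ Richardson bias. The Richardson single critic, however, obeys the different recursion \eqref{eq:appendix-richardson-Q-update-closed}. Its target involves the half-step state $X_{u/2}$, which the transitions $(x,a,r,x',u)$ of \cref{algo:ct_sac} do not contain.
\end{itemize}

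The paper makes the same identification inside \cref{thm:algorithm-convergence} (``or $q_k=\tilde q^u_{V_k}$ in the model-free setting'') and in \eqref{eq:appendix-population-bias}. Its proof of the corollary then cites that theorem as a black box. Citing it the same way puts you on equal footing with the paper. Your unpacked Step~1, though, does not by itself give $\|\bar Q_L-Q^{(\alpha)}\|_\infty^2<\epsilon/4$ for the algorithm named in the statement. To close the gap you would need one of two things:
\begin{itemize}
\item run and analyze the Richardson target, collecting half-step data; or
\item obtain a bound on $q^u_{V_k}-q^u_{V^{(\alpha)}}$ sharper than the crude $2/u$ sup-norm Lipschitz constant, for example through $C^2$ control of $V_k-V^{(\alpha)}$. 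The paper's sup-norm contraction argument does not supply this.
\end{itemize}
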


\paragraph{Proof sketch.}
We construct the corresponding sequences $(V_k,q_k)$ induced by the exact update and prove by induction that the decomposition \eqref{eq:Q-decomposition} is preserved, i.e., $Q_k = V_k + q_k$ for all $k$. The convergence then follows by \cref{thm:q_convergence}. For statistical part, let $\hat{Q}_k$ denote the learned critic with finite samples of the $k^{th}$ exact iteration of $Q_k$. Assume the update mapping is $\mathcal{F}$, then $Q_{k+1}=\mathcal{F}(Q_k)$. More importantly $\hat{Q}_{k+1}$ is obtained by regression against $\mathcal{F}(\hat{Q}_k)$ with statistical error $\mathrm{stat}_k:= \norm{\hat{Q}_{k+1} - \mathcal{F}(\hat Q_k)}$. For $\epsilon_k = \norm{\hat Q_k - Q_k}$, one can obtain $\epsilon_{k+1} \;\le\;\mathrm{stat}_k + \Big(1+\frac{\tau}{u}\Big)\epsilon_k$ through the Lipschitzness of update mapping $\mathcal{F}$. For large enough number of samples, $\mathrm{stat}_k \to 0$, yielding the converge of $\epsilon_k$, and therefore, of the error w.r.t. optimal function $Q^{(\alpha)}$. \hfill$\square$

More refined control of sample complexity is given in \cref{sec:appendix:regret_bound} through a norm-transfer argument. Finally, refer to Appendices \ref{sec:appendix:value_convergence}, \ref{sec:appendix:q_convergence}, and \ref{sec:appendix:algorithm_convergence} for the full detailed proofs of Theorems \ref{thm:value_convergence}, \ref{thm:q_convergence}, and \ref{thm:algorithm-convergence} respectively.

\section{Experiments}\label{sec:experiments}
Moving beyond fixed-step simulated SDE benchmarks, we test whether continuous-time RL provides gains with irregular decision times and very small time steps on standard control benchmarks and a real trading task.

\subsection{Tasks and baselines}\label{sec:exp_setup}

\textbf{Irregular-time setup.} Across all tasks, the agent acts at event times $t_0 < t_1 < \cdots$ with holding times $u_k := t_{k+1}-t_k$, where $u_k$ \textbf{mixes between small and large timesteps} within the same rollout, so trajectories contain many short ``micro-steps'' interleaved with occasional larger jumps. Concretely, we implement this by sampling $u_k$ around the nominal control step $u_{\mathrm{nom}}$. For example, for \texttt{Cheetah} we allow holding times as small as $\approx u_{\mathrm{nom}}/5$ and as large as $\approx 3u_{\mathrm{nom}}$. This setting is strictly harder than fixed-step training because trajectories are not time-aligned and learning must generalize beyond a single step size.

\begin{figure}[ht]
    \centering
    \includegraphics[scale=0.45]{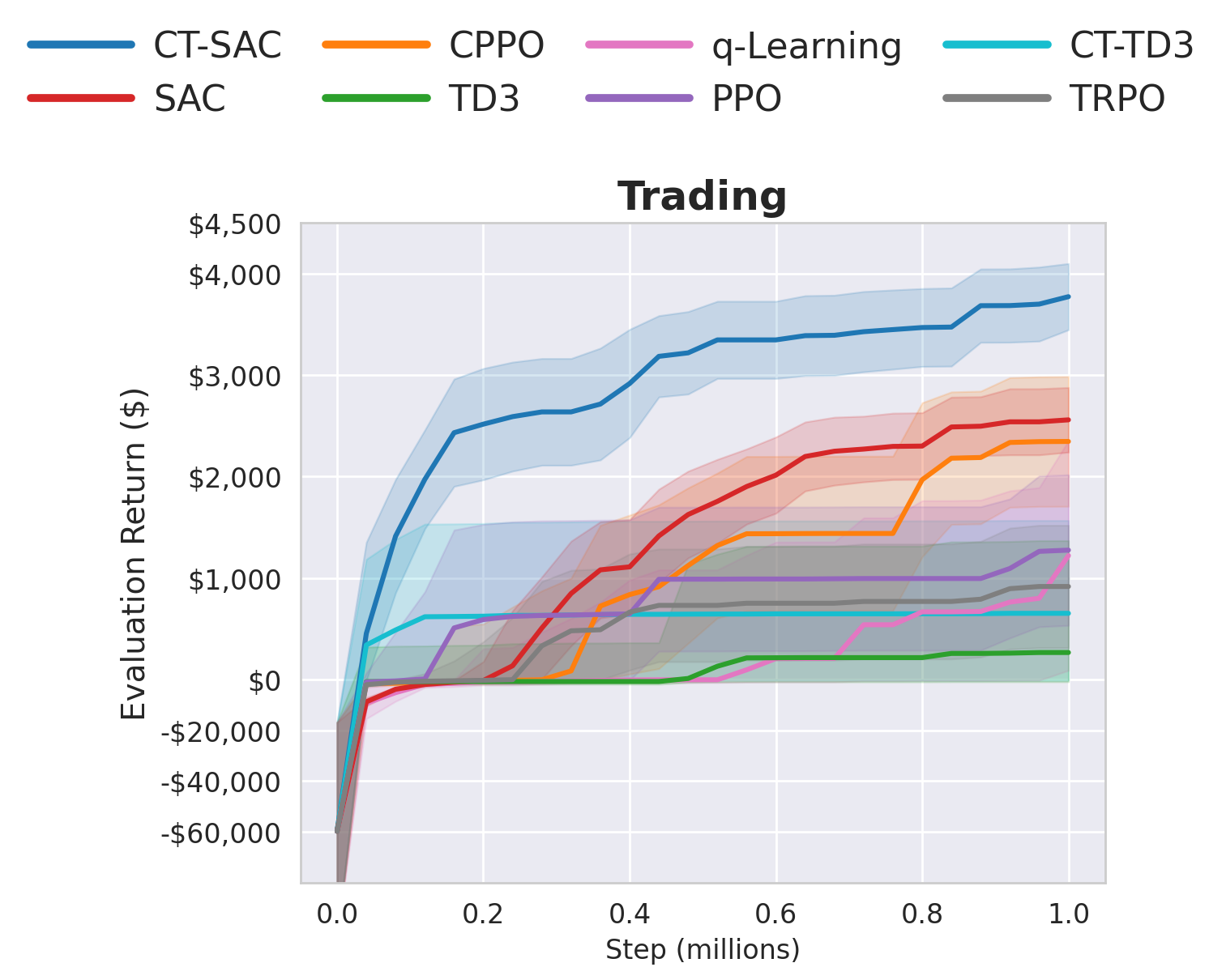}
    \caption{Two-week evaluation returns for CT-SAC and CT-TD3 (Ours) against 6 other algorithms on trading tasks.}
    \label{fig:trading_plot}
\end{figure}

\textbf{Control tasks.} We use DeepMind Control Suite \cite{dmcontrol} for high-dimensional continuous control. Irregular timing is particularly important for humanoid and quadruped locomotion, where stability relies on timely corrective actions.

\textbf{Real-world trading.} We additionally evaluate on a trading environment constructed from minute-resolution market data (Alpaca API). The universe contains four industry sectors with five large-cap tickers each, and each episode spans two weeks of trading (390 minutes/day, 5 days/week). The agent makes decisions at irregular intervals with holding times $u\in[1,20]$ minutes, using a state built from multi-scale historical price features (minutes to days). The action is to buy/sell a bounded quantity within an industry group, with position limits and transaction costs. The reward is the realized profit over the subsequent holding interval, so episode return equals total two-week P\&L. To avoid temporal leakage, we train on Q3 2023 -- Q2 2025 and evaluate on Q3 2025 using a fixed set of episodes so all methods face the same market paths. Details are provided in Appendix~\ref{sec:appendix:experiment_details}.

\textbf{Baselines.} We benchmark against two baseline groups: continuous-time and discrete-time. For continuous-time RL, we include representative advantage-rate methods with martingale enforcement ($q$-Learning) \citep{Jia2022-yg}, as well as a continuous-time policy-gradient baseline (CPPO) \citep{CPPO}. To our knowledge, most existing continuous-time deep RL algorithms fall into these two families. Because these methods can be optimization-sensitive, we additionally apply stabilization techniques such as gradient clipping to improve their performance. For discrete-time RL, we compare against widely used state-of-the-art algorithms, SAC, TD3, TRPO, and PPO, implemented in Stable-Baselines3 \cite{stable-baselines3}.

\textbf{Evaluation.} We train each method for 1M environment steps on control tasks (2M on \texttt{humanoid}). We tune hyperparameters over the search space (detailed in Appendix~\ref{sec:appendix:experiment_details}) and select the best configuration for each method. We additionally keep the top 3 configurations for ablations, and report the best one in the main figures. Evaluation is performed periodically using 10 episodes per checkpoint for control tasks and 30 episodes for trading. We report mean learning curves and aggregate metrics over 12 random seeds.

\textbf{Runtime \& complexity}. Details are given in Appendix~\ref{sec:appendix:experiment_details}.

\subsection{Results}\label{sec:exp_results}
\textbf{Continuous-time baselines.} \cref{fig:continuous_algorithms_control_tasks_plot} compares CT-SAC and CT-TD3 with prior continuous-time baselines across the control suite. Our methods achieve the strongest overall performance with consistently stable learning on both high-dimensional locomotion and trading tasks. These results highlight a limitation of existing continuous-time methods, where optimizing coupled martingale objectives for $V$ and $q$ is challenging. Our approach instead breaks learning into simpler subproblems, improving optimization reliability.

\textbf{Discrete-time baselines.} Our method CT-SAC also outperforms leading discrete-time algorithms (see Figure~\ref{fig:discrete_algorithms_control_tasks_plot} and ~\ref{fig:trading_plot}). When step sizes vary across samples and between training and evaluation, standard discrete-time updates degrade, while our continuous-time-derived update remains robust to non-uniform timestamps. Across both control and trading, our method improves early, with learning curves rise within the first fraction of steps and then stabilize, while discrete-time methods typically require many more samples to accelerate. A possible explanation is that under variable step sizes, discrete-time critics may only learn an order-preserving surrogate rather than the true value function across time scales, which delays robust improvement. CT-TD3 tends to underperform CT-SAC but still improves on its discrete-time counterpart, underscoring the value of entropy regularization and the need for theory improvement. 

\textbf{Performance on trading.} While on standard continuous-control benchmarks, CPPO is consistently weaker than its discrete-time counterparts, for trading task, CPPO becomes competitive on a similar scale to strong off-policy baselines such as SAC. This suggests that continuous-time modeling can matter more in trading under irregular decision times. Still, our method achieves the strongest overall performance. Despite using only price-derived features (no news or fundamentals data), our policy learns profitable strategies and attains the best evaluation returns among all baselines. Starting from \$100{,}000, CT-SAC achieves about \$3{,}500 profit every two weeks on average (see \cref{fig:trading_plot}). Across episodes of Q3 2025, it achieves a total profit of roughly \$21{,}000, nearly doubling the second-best method. This highlights that continuous-time RL can be effective for real-world trading when the learning objective and optimization are designed to remain stable under irregular decision intervals and noisy dynamics. 

Finally, we conduct statistical testing on 12 seeds and find that improvements are significant across tasks.

\textbf{Ablation study.} Using the top 3 hyperparameter configurations identified during tuning for each method, we run ablations across all tasks. The results show that our tuned baselines are competitive, and that our method remains top performer even under non-optimal hyperparameter settings, indicating robustness to hyperparameter variation (Table~\ref{table:ablation_table}).

\textbf{Increment modeling}. We also evaluate reward-shaping variants for discrete-time RL. These variants yield performance similar to their standard counterparts and consistently rank behind CT-SAC (Figure~\ref{fig:increment_modeling_control_tasks_plot} and \ref{fig:increment_modeling_trading_task_plot}).

\section{Conclusion}\label{sec:conclusion}
We introduce a continuous-time reinforcement learning framework for irregular decision times. The core idea is a simplified learning procedure: estimate an instantaneous advantage-rate signal from rollouts under a fixed value predictor, then update the value predictor via a Hamiltonian flow. This avoids the coupled martingale objectives used in prior continuous-time methods. A single-critic implementation makes the approach compatible with standard off-policy actor–critic training and stable under variable step sizes. On the theory side, we establish convergence guarantees despite the presence of infinitesimal generators, where discrete-time contraction arguments no longer apply. Our analysis uses a probabilistic technique that links the flow update to the dynamic-programming semigroup and extends naturally to the model-free setting with finite time increments. Empirically, we evaluate our method under an irregular-time setup with many short steps mixed with large jumps, on high-dimensional control benchmarks and a real-world trading task. Across these settings, our method consistently outperforms existing continuous-time baselines and strong discrete-time algorithms.

\section*{Reproducibility Statement}
We provide the complete code in the codebase Github link: \texttt{https://github.com/mpnguyen2/ct-rl}. Please see Appendices \ref{sec:appendix:experiment_details} and \ref{sec:appendix:further_experiment_details} for further details.

\section*{Impact Statement}
This paper presents work whose goal is to advance the field of Machine Learning by developing reinforcement learning methods for continuous-time settings with irregular decision intervals. The broader impacts are expected to be similar to those of many general advances in reinforcement learning: the methods may help improve robustness and reproducibility in time-irregular data and control problems, while we do not anticipate immediate or unusual ethical concerns beyond standard considerations for responsible use of learned decision-making systems.

\bibliography{citation}
\bibliographystyle{icml2026}

%%%%%%%%%%%%%%%%%%%%%%%%%%%%%%%%%%%%%%%%%%%%%%%%%%%%%%%%%%%%%%%%%%%%%%%%%%%%%%%
%%%%%%%%%%%%%%%%%%%%%%%%%%%%%%%%%%%%%%%%%%%%%%%%%%%%%%%%%%%%%%%%%%%%%%%%%%%%%%%
% APPENDIX
%%%%%%%%%%%%%%%%%%%%%%%%%%%%%%%%%%%%%%%%%%%%%%%%%%%%%%%%%%%%%%%%%%%%%%%%%%%%%%%
%%%%%%%%%%%%%%%%%%%%%%%%%%%%%%%%%%%%%%%%%%%%%%%%%%%%%%%%%%%%%%%%%%%%%%%%%%%%%%%
\appendix
\onecolumn
\clearpage

% --- re-enable ToC writing for appendix only (ICML disables it globally) ---
\makeatletter
\@ifpackageloaded{hyperref}{%
  \def\addcontentsline#1#2#3{%
    \addtocontents{#1}{\protect\contentsline{#2}{#3}{\thepage}{\@currentHref}}%
  }%
}{%
  \def\addcontentsline#1#2#3{%
    \addtocontents{#1}{\protect\contentsline{#2}{#3}{\thepage}}%
  }%
}
\makeatother
% -------------------------------------------------------------------------

\renewcommand{\contentsname}{Appendix Table of Contents}
\setcounter{tocdepth}{2} % sections only
\tableofcontents
\clearpage

\section{Related works}\label{sec:appendix:related_works}
\paragraph{Discrete-time reinforcement learning.}
Most effective deep reinforcement learning for continuous control can be broadly grouped into (i) off-policy actor--critic methods and (ii) on-policy policy-gradient methods \cite{Sutton2018-fn}, with widely used representatives including Soft Actor--Critic (SAC) \cite{SAC}, Twin Delayed Deep Deterministic Policy Gradient (TD3) \cite{TD3}, Trust Region Policy Optimization (TRPO) \cite{TRPO}, and Proximal Policy Optimization (PPO) \cite{PPO}, while many other practical algorithms can be viewed as variants, hybrids, or refinements of these cornerstones. SAC is particularly powerful due to its entropy-regularized objective, which encourages exploration while yielding stable policy improvement and strong empirical performance in high-dimensional continuous-action domains. In contrast to SAC's stochastic policy, TD3 is a deterministic actor--critic method that can be viewed as a strengthened successor of DDPG \cite{ddpg}, mitigating overestimation bias via clipped double critics and delayed policy updates. On the policy-gradient side, TRPO is notable for its principled trust-region formulation and theoretical guarantees, whereas PPO is widely adopted as an efficient and scalable approximation that performs robustly across a wide range of tasks. Despite their success, all of these methods are fundamentally formulated on fixed-step Markov decision processes (MDPs). When the decision interval $\Delta t$ becomes small or irregular, the Bellman recursion becomes increasingly sensitive to discretization \cite{Tallec, Jia2022-yg}. Additionally, learning signals can degenerate in the settings mentioned in \cref{sec:intro}, motivating new approaches that are more suitable for continuous-time problems.

\paragraph{Early continuous-time RL and ODE-based approaches.}
Continuous-time reinforcement learning has long been studied from a control-theoretic perspective, primarily through the Hamilton--Jacobi--Bellman (HJB) equation \cite{Fleming2006-wa}. Early work by \citet{Baird}, \citet{Doya2000-do}, and \citet{munos06b} explored continuous-time value learning, policy gradients, and advantage-based updates as alternatives to fixed-step MDP formulations. However, these early approaches are typically restricted to deterministic dynamical systems or require additional structure beyond standard model-free RL. For instance, they need access to the closed-form transition dynamics or their derivatives to construct path-wise or value-gradient updates. Subsequent developments \cite{sutton_cont_rl, Ainsworth2020-vw, Yildiz2021-wc} further refine continuous-time analogues of actor--critic or policy-gradient methods through closed-form formulae. Nonetheless, these methods also inherit similar limitations, remaining either model-based, tied to deterministic dynamics, or relying on assumptions that are difficult to meet in general stochastic environments under irregular decision times. A central limitation is that the continuous-time problem demands a stochastic-control formulation, which should be associated with stochastic differential equations (SDEs) rather than deterministic ODE, as discussed in \cref{sec:prelim:mdp-to-sde}. Without a framework that works well under SDEs, na\"ive temporal-difference (TD) learning can lead to minimizing the variance terms induced by Brownian noise rather than true dynamic-programming errors, a phenomenon highlighted in \cite{Jia_policy_eval, Jia2022-yg} and analyzed further in \cref{sec:formulation:challenges}. Moreover, as the timestep shrinks, the discrete-time $Q$-function collapses to the value function, eliminating meaningful action ranking. This collapse is empirically suggested by \citet{Tallec}, who demonstrate the sensitivity of $Q$-learning-based methods to time discretization. Nevertheless, that work still operates in a discrete setting with fixed time step, rather than a continuous-time setup under stochastic dynamics.

\paragraph{Stochastic-control formulations and recent continuous-time RL.} A more recent novel line of work \cite{wang2020, Jia_policy_eval, Jia_policy_gradient, Jia2022-yg, CPPO} formulates reinforcement learning directly as stochastic control problem under diffusion dynamics. \citet{wang2020} introduce an entropy-regularized continuous-time framework that models exploration through relaxed controls and establishes a stochastic HJB characterization. \citet{tang2021} further prove important convergence results for the associated stochastic HJB. Building on this foundation, \citet{Jia2022-yg} show that the conventional state--action value function does not survive the continuous-time limit and propose the advantage-rate (or ``little-$q$'') function as the correct infinitesimal analogue. This quantity is closely linked to the Hamiltonian and admits characterization via martingale conditions, enabling continuous-time analogues of policy evaluation and improvement. Crucially, such martingale conditions arise precisely to address the failure modes of naïve TD updates under diffusion noise, as discussed in \cref{sec:formulation:challenges}. Within the same stochastic-control framework, \cite{CPPO} further develops policy optimization methods, deriving continuous-time counterparts of TRPO and PPO based on occupation measures and performance-difference formulas. Complementary theoretical work, including regret analyses \cite{tang2024regret} and the study of distributional quantities such as the action-gap \cite{action_gap}, further enriches this emerging line of work and clarifies the theoretical behavior of continuous-time learning. In terms of new algorithmic contributions, recent works include \cite{cont_q_score_match}, which follows the martingale-based characterization in \cite{Jia2022-yg}, and \cite{score_as_action}, which extends policy-gradient methods \cite{CPPO} using diffusion-based parameterizations. These works, however, specialize the policy class through diffusion modeling or apply similar martingale objectives to other tasks, rather than proposing a fundamentally new learning approach to address the complexity of martingale characterization. In contrast, our work focuses on developing a new decoupled learning scheme that preserves the continuous-time theoretical foundation while yielding a more practical optimization structure. As shown in \cref{sec:theory} and \cref{sec:experiments}, this decoupling leads to an efficient algorithm with theoretical guarantees and strong empirical performance on advanced control benchmarks and a real-world trading task under small and irregular time steps.

\section{Experiment details}\label{sec:appendix:experiment_details}
\subsection{Task details}
\label{sec:appendix:task_details}

\subsubsection{Trading}
\textbf{Market data and episode construction.}
We consider a diversified equity universe consisting of four major industry sectors, each containing five liquid, large-cap tickers (20 total). We use minute-resolution OHLCV bars and define one episode as two weeks of trading, i.e., $390$ minutes per day over $5$ trading days per week. Decisions are made at \emph{irregular} times: after each action, the environment samples a holding time $u \in [1,20]$ minutes and advances the market by $u$ minutes before the next decision.

\textbf{State, action, and reward.}
The state summarizes recent market behavior through multi-scale historical features (e.g., rolling windows spanning minutes, hours, and days) computed from the minute bars. At each decision time, the agent chooses a bounded buy/sell quantity within a sector-specific group, subject to position limits and proportional transaction costs. Rewards are defined as realized profit over the subsequent holding interval; consequently, the episode return equals the total P\&L accumulated over the two-week horizon.

\textbf{Train--test split.}
To avoid temporal leakage, we train on a contiguous historical window from Q3 2023 through Q2 2025 and evaluate on a disjoint quarter (Q3 2025). Evaluation uses a \emph{fixed set} of episodes spanning the full quarter, so all methods are tested on the same market-path distribution and performance differences cannot be attributed to favorable evaluation windows. Minute-level data are obtained from the Alpaca API~\cite{alpaca_api}. For reproducibility, we provide preprocessed datasets (\texttt{train.npz} and \texttt{eval.npz}) under \texttt{data/trading/processed\_data/} along with scripts and instructions to regenerate the minute-resolution features from raw data.

\subsubsection{Control tasks}
We evaluate on four standard continuous-control benchmarks from the DeepMind Control Suite (DMC)~\cite{dmcontrol}: \texttt{cheetah-run}, \texttt{walker-run}, \texttt{humanoid-walk}, and \texttt{quadruped-run}. DMC provides a physics-based simulator with direct control over integration and control rates, which makes it well-suited for our irregular-time setting. In DMC, rewards are normalized to $[0,1]$; thus the maximum achievable return is approximately proportional to the episode duration.

\subsubsection{Irregular-time configuration}
A central goal of this work is to evaluate learning under \emph{intertwined} time scales, where trajectories contain a mixture of very small and very large time steps within the same episode.
For each task, we report the finest time scale $\Delta t_{\text{physics}}$, the minimum and maximum decision increments $(\Delta t_{\min}, \Delta t_{\max})$, and the maximum episode duration $T_{\max}$, along with the fractions of \emph{small}, \emph{large}, and \emph{average} steps and the range of number of decision steps per episode. Control-task time is measured in seconds, while trading time is measured in minutes (See \cref{table:irregular_dt_config}).

\begin{table}[h]
\centering
\caption{Irregular-time sampling configuration per task: finest time scale $\Delta t_{\text{physics}}$, decision-step bounds $(\Delta t_{\min},\Delta t_{\max})$, maximum episode duration $T_{\max}$, step-type proportions (small/large/average), and the step range per episode.}

\label{table:irregular_dt_config}
\setlength{\tabcolsep}{4.5pt}
\renewcommand{\arraystretch}{1.12}
\begin{tabular}{l c c c c c c c c}
\toprule
\textbf{Task} &
$\boldsymbol{\Delta t_{\text{physics}}}$ &
$\boldsymbol{\Delta t_{\min}}$ &
$\boldsymbol{\Delta t_{\max}}$ &
\textbf{\% small} &
\textbf{\% large} &
\textbf{\% avg} &
$\boldsymbol{T_{\max}}$ &
\textbf{\# of steps range} \\
\midrule
\textbf{Cheetah}   & 0.0020 & 0.002 & 0.030 & 89.1\% & 9.9\% & 1.0\%  & 10   & 1200-2000 \\
\textbf{Walker}    & 0.0025 & 0.005 & 0.075 & 89.1\% & 9.9\% & 1.0\%  & 25   & 1200-2000 \\
\textbf{Humanoid} & 0.0050 & 0.010 & 0.040 & 40.0\% & 40.0\% & 20.0\% & 25   & 800-1000 \\
\textbf{Quadruped} & 0.0050 & 0.005 & 0.050 & 89.1\% & 9.9\% & 1.0\%  & 20   & 1200-2000 \\
\textbf{Trading}       & 1 & 1     & 11    & 89.1\% & 9.9\% & 1.0\%  & 4000 & 1600-2100 \\
\bottomrule
\end{tabular}
\end{table}

\subsection{Ablation study tables}
We report hyperparameter ablations in \cref{table:ablation_table}. Across all environments, \textbf{CT-SAC is consistently the best-performing method among the tested configurations}, and the same trend holds for \textbf{CT-TD3 vs.\ TD3}. To keep the main ablation table compact, we show the \textbf{top/second/third} configurations per algorithm and task; the exact hyperparameter settings corresponding to these entries are provided in the hyperparameter dictionary in \cref{sec:appendix:further_experiment_details}.

\begin{table}[H]
\centering
\caption{Hyperparameter ablations across all tasks. For each algorithm and task, we report the returns of the \textbf{top}, \textbf{second}, and \textbf{third} configurations from the hyperparameter tuning process.}
\label{table:ablation_table}
\small
\setlength{\tabcolsep}{3.5pt}
\renewcommand{\arraystretch}{1.15}

\begin{subtable}[t]{\textwidth}
\centering
\caption{CT-SAC versus discrete-time algorithms SAC and TD3}
{\scriptsize
\setlength{\tabcolsep}{4pt}
\renewcommand{\arraystretch}{1.06}
\resizebox{0.85\textwidth}{!}{%
\begin{tabular}{l ccc ccc ccc}
\toprule
 & \multicolumn{3}{c}{\textbf{CT-SAC}} & \multicolumn{3}{c}{\textbf{SAC}} & \multicolumn{3}{c}{\textbf{TD3}} \\
\cmidrule(lr){2-4}\cmidrule(lr){5-7}\cmidrule(lr){8-10}
\textbf{Task} & \textbf{top} & \textbf{second} & \textbf{third} & \textbf{top} & \textbf{second} & \textbf{third} & \textbf{top} & \textbf{second} & \textbf{third} \\
\midrule
\textbf{Cheetah} & \textcolor{red}{\textbf{934.76}} & \textcolor{blue}{\textbf{863.45}} & \textcolor{green!50!black}{\textbf{807.19}} & 701.88 & 680.30 & 605.63 & 730.12 & 694.15 & 692.81 \\
\textbf{Walker} & \textcolor{red}{\textbf{1035.52}} & \textcolor{blue}{\textbf{928.26}} & \textcolor{green!50!black}{\textbf{817.28}} & 680.08 & 560.98 & 522.85 & 385.73 & 286.09 & 191.74 \\
\textbf{Humanoid} & \textcolor{red}{\textbf{386.75}} & \textcolor{blue}{\textbf{379.75}} & \textcolor{green!50!black}{\textbf{371.75}} & 73.39 & 39.28 & 2.12 & 2.28 & 2.24 & 2.11 \\
\textbf{Quadruped} & \textcolor{red}{\textbf{959.75}} & \textcolor{blue}{\textbf{958.44}} & \textcolor{green!50!black}{\textbf{829.41}} & 423.33 & 314.77 & 284.00 & 522.63 & 518.37 & 464.95 \\
\textbf{Trading} & \textcolor{red}{\textbf{37.72}} & \textcolor{blue}{\textbf{33.09}} & \textcolor{green!50!black}{\textbf{31.98}} & 25.59 & 21.04 & 13.77 & 2.67 & -5.46 & -9.46 \\
\bottomrule
\end{tabular}}}

\end{subtable}

\vspace{7pt}

\begin{subtable}[t]{\textwidth}
\centering
\caption{CT-SAC versus discrete-time algorithms TRPO and PPO}
{\scriptsize
\setlength{\tabcolsep}{4pt}
\renewcommand{\arraystretch}{1.06}
\resizebox{0.85\textwidth}{!}{%
\begin{tabular}{l ccc ccc ccc}
\toprule
 & \multicolumn{3}{c}{\textbf{CT-SAC}} & \multicolumn{3}{c}{\textbf{TRPO}} & \multicolumn{3}{c}{\textbf{PPO}} \\
\cmidrule(lr){2-4}\cmidrule(lr){5-7}\cmidrule(lr){8-10}
\textbf{Task} & \textbf{top} & \textbf{second} & \textbf{third} & \textbf{top} & \textbf{second} & \textbf{third} & \textbf{top} & \textbf{second} & \textbf{third} \\
\midrule
\textbf{Cheetah} & \textcolor{red}{\textbf{934.76}} & \textcolor{blue}{\textbf{863.45}} & \textcolor{green!50!black}{\textbf{807.19}} & 210.10 & 190.51 & 170.73 & 235.61 & 208.42 & 206.70 \\
\textbf{Walker} & \textcolor{red}{\textbf{1035.52}} & \textcolor{blue}{\textbf{928.26}} & \textcolor{green!50!black}{\textbf{817.28}} & 126.73 & 119.95 & 114.90 & 129.94 & 120.28 & 118.08 \\
\textbf{Humanoid} & \textcolor{red}{\textbf{386.75}} & \textcolor{blue}{\textbf{379.75}} & \textcolor{green!50!black}{\textbf{371.75}} & 1.39 & 1.33 & 1.30 & 1.18 & 1.17 & 1.11 \\
\textbf{Quadruped} & \textcolor{red}{\textbf{959.75}} & \textcolor{blue}{\textbf{958.44}} & \textcolor{green!50!black}{\textbf{829.41}} & 206.24 & 178.01 & 135.29 & 214.13 & 208.47 & 202.72 \\
\textbf{Trading} & \textcolor{red}{\textbf{37.72}} & \textcolor{blue}{\textbf{33.09}} & \textcolor{green!50!black}{\textbf{31.98}} & 9.17 & -3.36 & -4.77 & 12.76 & 11.94 & 4.19 \\
\bottomrule
\end{tabular}}}

\end{subtable}

\vspace{7pt}

\begin{subtable}[t]{\textwidth}
\centering
\caption{CT-SAC versus continuous-time algorithms CPPO, q-Learning, and CT-TD3}
\resizebox{0.93\textwidth}{!}{%
\begin{tabular}{l ccc ccc ccc ccc}
\toprule
 & \multicolumn{3}{c}{\textbf{CT-SAC}} & \multicolumn{3}{c}{\textbf{CPPO}} & \multicolumn{3}{c}{\textbf{q-Learning}} & \multicolumn{3}{c}{\textbf{CT-TD3}} \\
\cmidrule(lr){2-4}\cmidrule(lr){5-7}\cmidrule(lr){8-10}\cmidrule(lr){11-13}
\textbf{Task} & \textbf{top} & \textbf{second} & \textbf{third} & \textbf{top} & \textbf{second} & \textbf{third} & \textbf{top} & \textbf{second} & \textbf{third} & \textbf{top} & \textbf{second} & \textbf{third} \\
\midrule
\textbf{Cheetah} & \textcolor{red}{\textbf{934.76}} & \textcolor{blue}{\textbf{863.45}} & \textcolor{green!50!black}{\textbf{807.19}} & 174.50 & 168.64 & 145.92 & 348.61 & 328.69 & 272.00 & 705.86 & 457.24 & 268.61 \\
\textbf{Walker} & \textcolor{red}{\textbf{1035.52}} & \textcolor{blue}{\textbf{928.26}} & \textcolor{green!50!black}{\textbf{817.28}} & 51.77 & 44.24 & 42.04 & 119.78 & 94.79 & 74.67 & 596.50 & 578.19 & 567.22 \\
\textbf{Humanoid} & \textcolor{red}{\textbf{386.75}} & \textcolor{blue}{\textbf{379.75}} & \textcolor{green!50!black}{\textbf{371.75}} & 1.16 & 1.08 & 1.04 & 1.81 & 1.59 & 1.28 & 337.23 & 326.61 & 295.94 \\
\textbf{Quadruped} & \textcolor{red}{\textbf{959.75}} & \textcolor{blue}{\textbf{958.44}} & \textcolor{green!50!black}{\textbf{829.41}} & 160.19 & 159.80 & 159.05 & 339.49 & 299.32 & 216.92 & 484.25 & 455.86 & 345.87 \\
\textbf{Trading} & \textcolor{red}{\textbf{37.72}} & \textcolor{blue}{\textbf{33.09}} & \textcolor{green!50!black}{\textbf{31.98}} & 23.46 & 23.37 & 22.32 & 12.22 & 2.37 & -39.94 & 6.53 & -0.11 & -6.22 \\
\bottomrule
\end{tabular}}
\end{subtable}
\end{table}

\subsection{Reward shaping with increment modeling}
We also evaluate a reward-shaping variant (\emph{increment modeling}) defined as $r_t^{\text{new}} = \Delta_t\, r_t$, to test whether scaling rewards by the observed time increment improves learning under non-uniform time steps. Across control tasks and the trading task, this modification does not improve performance. In most cases, it is worse or statistically indistinguishable from the unshaped baseline, suggesting that naive time-scaling can distort the learning signal and increase variance rather than stabilizing training (See \cref{fig:increment_modeling_control_tasks_plot,fig:increment_modeling_trading_task_plot}).

\begin{figure*}[ht]
    \centering
    \includegraphics[scale=0.45]{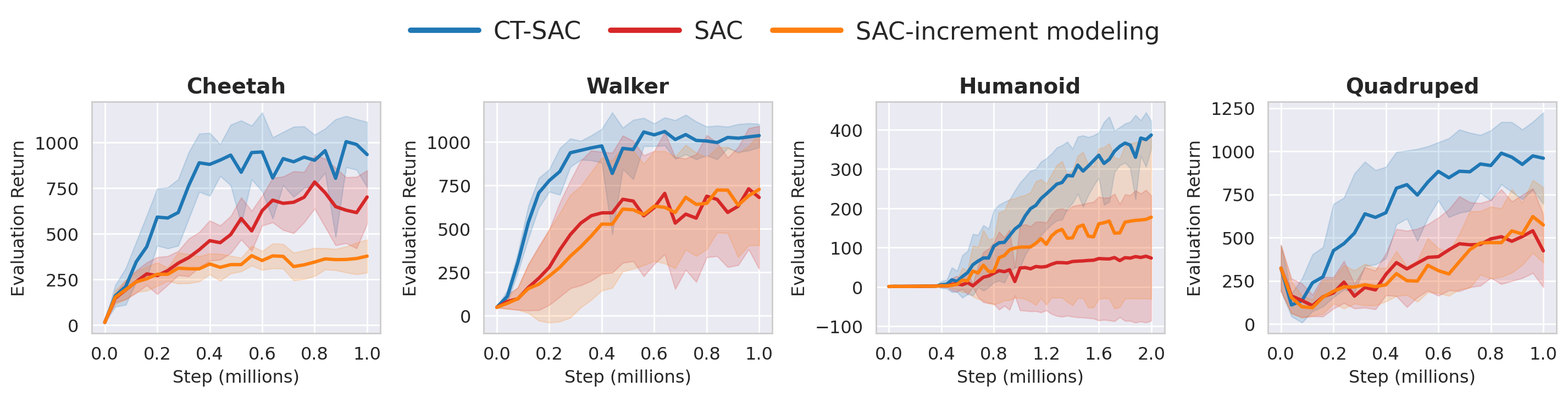}
    \caption{Evaluation returns for CT-SAC against SAC and its reward-shaping version on control tasks over 12 seeds.}
    \label{fig:increment_modeling_control_tasks_plot}
\end{figure*}

\begin{figure*}[ht]
    \centering
    \includegraphics[scale=0.45]{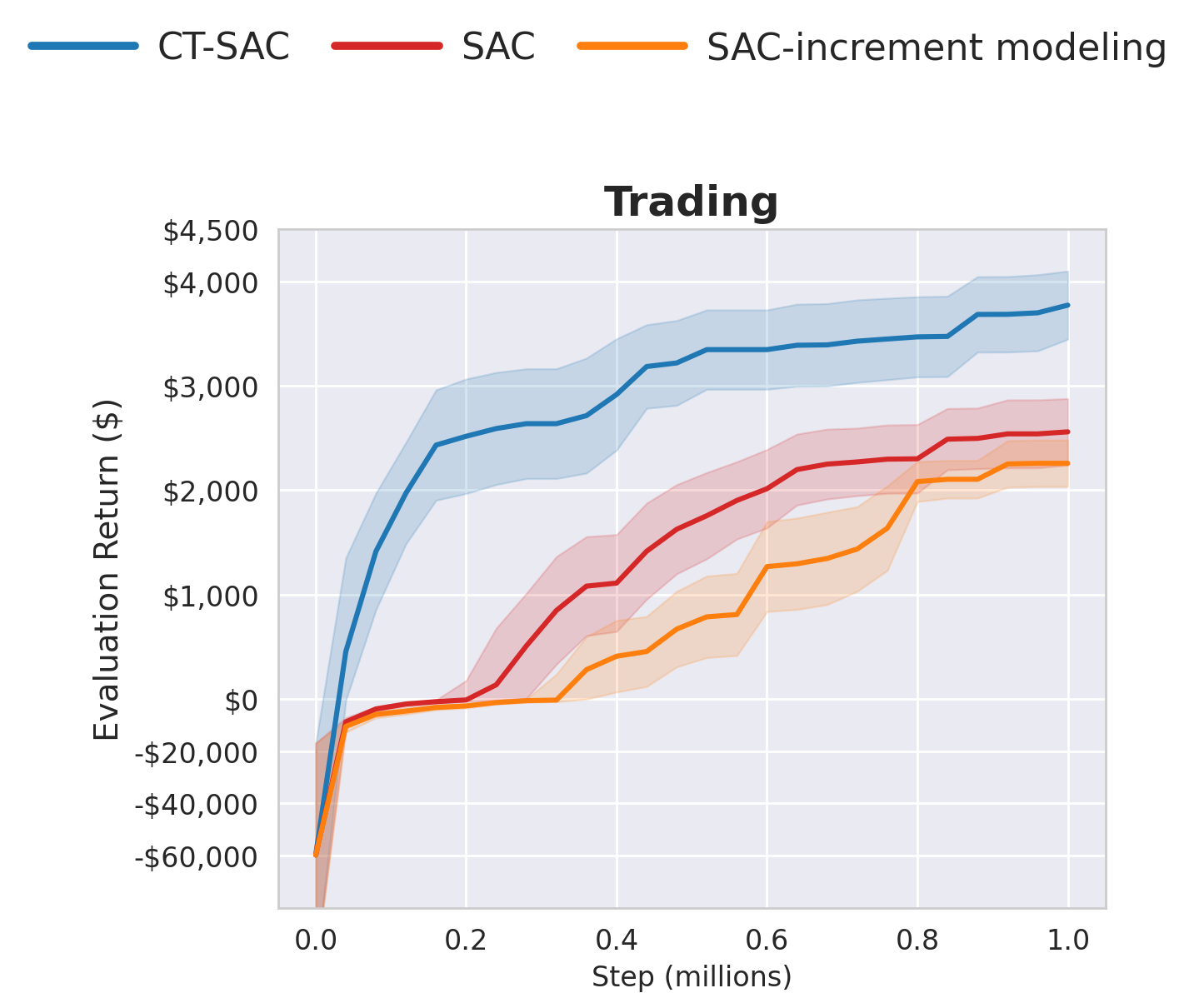}
    \caption{Evaluation returns for CT-SAC against SAC and its reward-shaping version on trading task over 12 seeds.}
    \label{fig:increment_modeling_trading_task_plot}
\end{figure*}

\subsection{Performance of all algorithms}
To facilitate comparison across all eight algorithms, \cref{fig:all_algorithms_control_tasks} reports the \textbf{mean evaluation return only} (without variance bands). This view highlights the overall ranking and learning dynamics without clutter; variance plots and full per-seed statistics are already provided in the main paper and in the appendix.

\begin{figure*}[ht]
    \centering
    \includegraphics[scale=0.45]{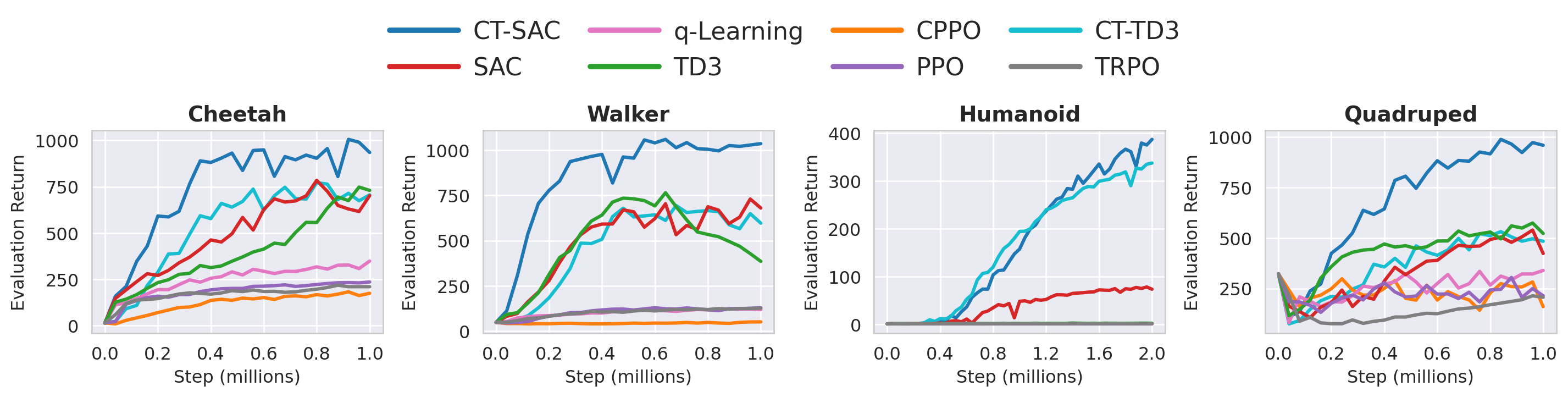}
    \caption{Evaluation returns for all algorithms on the control task over 12 seeds.}
    \label{fig:all_algorithms_control_tasks}
\end{figure*}

\subsection{Runtime and time complexity}
\textbf{Runtime.} We summarize the runtime in \cref{table:runtime}, reported as mean $\pm$ std over 12 seeds for the \textbf{top} configuration of each algorithm. Overall, \textbf{CT-SAC has runtime comparable to SAC}, and \textbf{CT-TD3 is comparable to TD3}, indicating that our continuous-time modifications do not materially increase training cost relative to their discrete-time counterparts.

\textbf{Time complexity.} CT-SAC follows the same training structure as SAC: sampling rollouts and performing batched critic/actor updates at a fixed update schedule. Let $N$ denote the number of environment interaction steps (or sampled decision times), $f$ the update frequency (updates every $f$ steps), and $d$ the number of gradient updates per update event. Then the total number of update steps scales as $\mathcal{O}(N d / f)$. Per update, CT-SAC and SAC share the same dominant costs (forward/backward passes through the actor/critic networks). CT-SAC additionally includes time operations (e.g., dividing by $\Delta_t$ or using $\Delta_t$ in targets), which may increase constants slightly but does not change asymptotic complexity.

\begin{table*}[ht]
\centering
\caption{Training runtime (hours) for each algorithm on each task, reported as mean$\pm$std over $n{=}12$ seeds.}
\label{table:runtime}
\small
\setlength{\tabcolsep}{4pt}
\renewcommand{\arraystretch}{1.12}
\resizebox{\textwidth}{!}{%
\begin{tabular}{l c c c c c c c c}
\toprule
\textbf{Task} & \textbf{CT-SAC} & \textbf{SAC} & \textbf{TD3} & \textbf{TRPO} & \textbf{PPO} & \textbf{CPPO} & \textbf{q-Learning} & \textbf{CT-TD3} \\
\midrule
\textbf{Cheetah}   & $10.62 \pm 0.53$ & $6.29 \pm 0.04$ & $3.98 \pm 0.05$ & $0.44 \pm 0.07$ & $\mathbf{0.29 \pm 0.07}$ & $0.57 \pm 0.06$ & $9.12 \pm 0.17$ & $4.60 \pm 0.04$ \\
\textbf{Walker}    & $9.69 \pm 0.83$  & $8.90 \pm 0.54$ & $4.57 \pm 0.24$ & $\mathbf{0.40 \pm 0.00}$ & $0.41 \pm 0.07$ & $0.61 \pm 0.04$ & $5.59 \pm 0.08$ & $4.68 \pm 0.09$ \\
\textbf{Humanoid} & $8.30 \pm 0.50$  & $6.02 \pm 0.09$ & $3.87 \pm 0.06$ & $0.82 \pm 0.01$ & $\mathbf{0.48 \pm 0.02}$ & $1.01 \pm 0.02$ & $5.07 \pm 0.14$ & $5.32 \pm 0.16$ \\
\textbf{Quadruped} & $8.21 \pm 0.41$  & $6.49 \pm 0.04$ & $4.66 \pm 0.04$ & $0.40 \pm 0.02$ & $0.41 \pm 0.01$ & $\mathbf{0.36 \pm 0.01}$ & $6.87 \pm 0.32$ & $2.65 \pm 1.91$ \\
\textbf{Trading}       & $8.42 \pm 0.11$  & $7.42 \pm 0.07$ & $4.90 \pm 0.08$ & $0.81 \pm 0.00$ & $\mathbf{0.52 \pm 0.01}$ & $0.89 \pm 0.17$ & $6.38 \pm 0.10$ & $5.44 \pm 0.08$ \\
\bottomrule
\end{tabular}}
\end{table*}

\subsection{Hyperparameters tuning}
We perform hyperparameter search to select the best configuration for each algorithm--task pair. Because the full sweep spans 8 algorithms $\times$ 5 tasks with distinct search spaces (in total roughly 40 algorithm--task search families), we place the complete search ranges and the selected \textbf{top/second/third} settings in the codebase under the \texttt{out/final\_reports/hyperparam\_spaces} folder.

\subsection{Statistical testing}
We report statistical comparisons across 12 independent seeds. We use the same seeds across all algorithms and tasks. We run two complementary tests using \texttt{scipy.stats}: Welch's t-test and a paired test. Across tasks, these tests support that \textbf{CT-SAC outperforms the baselines} with statistically significant differences in the majority of settings. The full set of $p$-values and other details are included in \texttt{out/final\_reports/significant\_testing.csv} (available in the codebase repository).

\subsection{Codebase and reproducibility}
Our codebase is available at the GitHub link: \texttt{https://github.com/mpnguyen2/ct-rl}. Additional plots/media/tables are under \texttt{out/final\_reports} folder. 

Because training logs and saved checkpoints are large ($\sim$6\,GB), we provide them via the SwissTransfer links, given in the codebase: \texttt{https://github.com/mpnguyen2/ct-rl}. A step-by-step reproduction guide (exact commands, dependencies, seeds, and evaluation protocol) is documented in \texttt{README.md}, including how to regenerate the tables and figures from raw logs.

\subsection{Generalization from irregular-time training to regular-time evaluation}
We further evaluate whether policies trained under \emph{irregular} decision times transfer to the standard \emph{regular}-time evaluation setting. This test checks whether the learned policies capture the underlying control structure rather than overfitting to a particular irregular sampling pattern. \cref{table:mean_return_summary_control} reports mean evaluation return over 12 seeds on the control tasks. CT-SAC achieve the best performance on all tasks, followed closely by CT-TD3. This indicates strong generalization from complex, noisy, intertwined irregular-time trajectories to the regular-time setting.

\begin{table}[h]
\centering
\caption{Regular-time evaluation of policies trained under irregular time steps. We report mean return over 12 seeds on control tasks.}
\label{table:mean_return_summary_control}
\small
\setlength{\tabcolsep}{5pt}
\renewcommand{\arraystretch}{1.12}
\begin{tabular}{l c c c c c c c c}
\toprule
\textbf{Task} & \textbf{CT-SAC} & \textbf{SAC} & \textbf{TD3} & \textbf{PPO} & \textbf{TRPO} & \textbf{q-Learning} & \textbf{CPPO} & \textbf{CT-TD3} \\
\midrule
\textbf{Cheetah}   &
\textcolor{red}{\textbf{605.05}} &
\textcolor{green!50!black}{\textbf{455.74}} &
404.18 & 124.45 & 104.46 & 208.04 & 102.03 &
\textcolor{blue}{\textbf{513.39}} \\
\textbf{Walker}    &
\textcolor{red}{\textbf{610.03}} &
403.44 &
\textcolor{green!50!black}{\textbf{509.23}} &
73.06 & 64.55 & 70.94 & 25.04 &
\textcolor{blue}{\textbf{512.30}} \\
\textbf{Humanoid}  &
\textcolor{red}{\textbf{404.64}} &
\textcolor{green!50!black}{\textbf{77.14}} &
2.27 & 1.34 & 1.35 & 1.90 & 1.42 &
\textcolor{blue}{\textbf{356.90}} \\
\textbf{Quadruped} &
\textcolor{red}{\textbf{530.13}} &
\textcolor{blue}{\textbf{328.30}} &
\textcolor{green!50!black}{\textbf{325.87}} &
159.52 & 121.84 & 192.86 & 125.30 & 284.23 \\
\bottomrule
\end{tabular}
\end{table}

Finally, we provide additional visualization for the trading task (\cref{fig:irregular_trading}) and for control tasks under both the irregular setting (Figures~\ref{fig:irregular_ctsac_vs_continuous_last} and~\ref{fig:irregular_ctsac_vs_discrete_last}) and the regular setting (Figures~\ref{fig:regular_ctsac_vs_continuous_last} and~\ref{fig:regular_ctsac_vs_discrete_last}). Further implementation and evaluation details are provided in \cref{sec:appendix:further_experiment_details}.

\section{Preliminaries for theoretical analysis}\label{sec:appendix:preliminaries}
\paragraph{State, action, and function space.}
Let $\mathcal{S} \subseteq \mathbb{R}^d$ be the state space and let $\mathcal{A}$ be a compact metric action space.
For value functions, we work on the Banach space $\mathcal{D}:=C_b(\mathbb{R}^d)$ of bounded continuous functions equipped with the sup-norm
$\|V\|_\infty := \sup_{x\in\mathbb{R}^d} |V(x)|$.

For convenience, we recall all notations and definitions introduced in \cref{sec:preliminaries}, \cref{sec:formulation}, and \cref{sec:theory} in the main paper.

\subsection{Notation}
\label{sec:appendix:notation}

\setlength{\LTpre}{6pt}
\setlength{\LTpost}{6pt}
\small
\begin{longtable}{@{\hspace{0.05\textwidth}}>{\raggedright\arraybackslash}m{0.15\textwidth}p{0.66\textwidth}@{}}

\caption{Notations used throughout the paper.}
\label{table:notation}\\
\toprule
\textbf{Symbol} & \textbf{Meaning} \\
\midrule
\endfirsthead

\toprule
\textbf{Symbol} & \textbf{Meaning} \\
\midrule
\endhead

\bottomrule
\endfoot

$V^{(\alpha)}(x)$ & Optimal value function with temperature $\alpha$. \\
$V(x)$ & Generic value function (not necessarily optimal). \\
$q_V(x,a)$ & Advantage-rate $q$-function associated with $V$ at state--action $(x,a)$. \\
$q_V^{u}(x,a)$ & Finite-time approximation of $q_V(x,a)$ using time increment $u$. \\
$\tilde q_V^{u}(x,a)$ & Richardson-interpolated version of $q_V^{u}(x,a)$. \\
$L^{a}$ & Infinitesimal generator under fixed action $a$. \\
$H_a(V)$ & Action-wise Hamiltonian at $a$, defined via $L^{a}V$. \\
$H^{(0)}(V)$ & Hard Hamiltonian (standard max aggregation over actions). \\
$H^{(\alpha)}(V)$ & Soft Hamiltonian with temperature $\alpha$. \\
$H^{(\alpha)}_{u}(V)$ & Soft Hamiltonian using the finite-time approximation $q_V^{u}$. \\
$\tilde H^{(\alpha)}_{u}(V)$ & Soft Hamiltonian using the Richardson approximation $\tilde q_V^{u}$. \\
$\mathcal{Q}^{(\alpha)}$ & Soft aggregation operator acting on $q$-functions (temperature $\alpha$). \\
$Q(x,a)$ & Numerical $Q$-function defined by $Q := V + q_V$. \\
$T_{\tau}^{(\alpha)}(V)$ & Picard--Hamiltonian operator: $T_{\tau}^{(\alpha)}(V):= V+\tau H^{(\alpha)}(V)$. \\
$\Phi_{\tau}^{(\alpha)}$ & Probabilistic dynamic-programming semigroup (step size $\tau$, temperature $\alpha$). \\
$X_t$ & State process at time $t$. \\
$a_t$ & Action at time $t$. \\
$W_t$ & Brownian motion at time $t$. \\
$b,\sigma$ & Drift and diffusion coefficients of the controlled SDE. \\
$\tilde b,\tilde\sigma,\tilde X_t$ & Policy-averaged drift/diffusion and the induced averaged state process. \\
$L^{\pi}$ & Policy-averaged generator: $L^{\pi}=\mathbb{E}_{a\sim\pi(\cdot|x)}[L^{a}]$. \\
$P_t^{a}$ & Markov semigroup of the diffusion under fixed action $a$. \\
$\mathcal{F}_{\tau,u}$ & Single-critic update operator for the $Q$-function (step size $\tau$, increment $u$). \\
$\mathcal{F}^{\mathrm{Rich}}_{\tau,u}$ & Richardson version of the single-critic update operator $\mathcal{F}_{\tau,u}$. \\
$\bar H_{U}^{(\alpha)}(V)$ & Random-increment Hamiltonian: $\bar H_{U}^{(\alpha)}(V)=\mathbb{E}[H_{U}^{(\alpha)}(V)]$ for random $U$. \\
$\tilde T_{\tau}^{(\alpha)}(V)$ & Random-time Picard--Hamiltonian operator corresponding to $\bar H_{U}^{(\alpha)}(V)$. \\
\end{longtable}

\subsection{Controlled diffusion dynamics}\label{sec:appendix-dynamics}
Given a progressively measurable control process $(a_t)_{t\ge 0}$ taking values in $\mathcal{A}$,
we consider the controlled stochastic differential equation (SDE)
\begin{equation}\label{eq:controlled-sde}
  dX_t = b(X_t,a_t)\,dt + \sigma(X_t,a_t)\,dW_t,
  \qquad X_0=x\in\mathbb{R}^d,
\end{equation}
where $W_t$ is a $d$-dimensional Brownian motion and
$b:\mathbb{R}^d\times\mathcal{A}\to\mathbb{R}^d$ and
$\sigma:\mathbb{R}^d\times\mathcal{A}\to\mathbb{R}^{d\times d}$ are measurable.

\begin{assumption}[Dynamics regularity]\label{assumption:dynamics}
The drift $b$ and diffusion $\sigma$ satisfy:
\begin{enumerate}
  \item \textbf{Global Lipschitz and linear growth.}
  There exists $C>0$ such that for all $x,y\in\mathbb{R}^d$ and $a\in\mathcal{A}$,
  \[
    \|b(x,a)-b(y,a)\| + \|\sigma(x,a)-\sigma(y,a)\|
    \le C\|x-y\|,
    \qquad
    \|b(x,a)\| + \|\sigma(x,a)\| \le C\,(1+\|x\|).
  \]
  \item \textbf{Uniform nondegeneracy and boundedness.}
  There exist $0<\underline{\sigma}\le\overline{\sigma}<\infty$ such that
  \[
    \underline{\sigma}^2 I
    \;\preceq\;
    \sigma(x,a)\sigma(x,a)^\top
    \;\preceq\;
    \overline{\sigma}^2 I
    \qquad \forall x\in\mathbb{R}^d,\ \forall a\in\mathcal{A}.
  \]
\end{enumerate}
\end{assumption}

\begin{assumption}[Reward and policy regularity]\label{assumption:reward}
The running reward $r:\mathbb{R}^d\times\mathcal{A}\to\mathbb{R}$ satisfies:
\begin{enumerate}
  \item $r(\cdot,a)$ is bounded and is globally Lipschitz uniformly in $a$, i.e., there exist $R_{\max}<\infty$  and $L_r<\infty$ such that for all $x,y$ and all $a$,
  \[
    |r(x,a)|\le R_{\max}, \qquad |r(x,a)-r(y,a)| \le L_r\|x-y\|
  \]
  \item For the entropy term, admissible Markov policies $\pi(\cdot\mid x)$ satisfy
  that $\log\pi(a\mid x)$ is well-defined and uniformly integrable under the
  induced state-action distribution.
\end{enumerate}
\end{assumption}

\begin{assumption}[Regularity of value functions]\label{assumption:test-functions}
There exists a class $\mathcal{D}\subset C_b^2(\mathbb{R}^d)$ of value functions such that for every $V\in\mathcal{D}$:
\begin{enumerate}
  \item For each fixed $a\in\mathcal{A}$, the generator $L^a V$ is globally Lipschitz in $x$,
  uniformly in $a$.
  \item For $\alpha>0$, we restrict to policies $\pi$ for which
  $x\mapsto \log\pi(a\mid x)$ is globally Lipschitz uniformly in $a$.
\end{enumerate}
\end{assumption}

Throughout this work, $\mathcal{D}$ refers to the domain of all value functions. Under Assumptions~\ref{assumption:dynamics}--\ref{assumption:test-functions}, the SDE \eqref{eq:controlled-sde} admits a unique strong solution for any admissible control,
and the entropy-regularized HJB equation admits a unique bounded viscosity solution
$V^{(\alpha)}\in\mathcal{D}$ \cite{Fleming2006-wa, wang2020}.

\subsection{Policies and value functions}\label{sec:appendix-values}
Fix a discount $\beta>0$ and an entropy coefficient $\alpha\ge 0$.
A Markov policy $\pi$ maps each state $x$ to a probability distribution $\pi(\cdot\mid x)$ on $\mathcal{A}$,
and we write $\E_x^\pi[\cdot]$ for expectation under \eqref{eq:controlled-sde} when
$a_t\sim \pi(\cdot\mid X_t)$.

Recall that the entropy-regularized objective of policy $\pi$ is:
\begin{equation}\label{eq:entropy-regularized-value}
  V^{(\alpha)}(x;\pi)
  :=
  \E_x^\pi\!\left[
    \int_0^\infty e^{-\beta t}
      \Big(r(X_t,a_t)-\alpha\log\pi(a_t\mid X_t)\Big)\,dt
  \right],
\end{equation}
and the optimal value function is $V^{(\alpha)}(x):=\sup_\pi V^{(\alpha)}(x;\pi)$. The case $\alpha=0$ recovers the standard (non-regularized) value function. Additionally, for notational convenience, if $\pi$ and $x$ is clear from the contex, we simply use $\E[.]$.

\subsection{Infinitesimal generator and $q$-functions}\label{sec:appendix-generator}
Recall the following notations from \cref{sec:preliminaries} and \cref{sec:formulation}. For a fixed action $a\in\mathcal{A}$, the controlled infinitesimal generator $L^a$ acting on $\varphi\in C_b^2(\mathbb{R}^d)$ is:
\begin{equation}\label{eq:generator-fixed-action}
  (L^a \varphi)(x)
  :=
  b(x,a)\cdot\nabla \varphi(x)
  + \frac{1}{2}\,\mathrm{Tr}\!\Big(
      \sigma(x,a)\sigma(x,a)^\top \nabla^2\varphi(x)
    \Big).
\end{equation}
Given $V\in\mathcal{D}$, the $q$-function has the form:
\begin{equation}\label{eq:def-qV}
  q_V(x,a)
  := r(x,a) + (L^a V)(x) - \beta V(x).
\end{equation}

For a short horizon $u>0$, the finite time estimation for $q$ under constant action $a$ is:
\begin{equation}\label{eq:def-qVu}
  q_V^u(x,a):=\frac{e^{-\beta u}\,\E_x^a[V(X_u)]- V(x)}{u} + r(x, a),
\end{equation}
where $\E_x^a[\cdot]$ denotes expectation for the SDE \eqref{eq:controlled-sde} with constant control $a_t\equiv a$.

\subsection{Dynkin's formula and small-time moments}\label{sec:appendix-dynkin-moments}

\begin{lemma}[Dynkin's formula for controlled diffusions]\label{lemma:dynkin-formula}
Let $(X_t)_{t\ge 0}$ solve \eqref{eq:controlled-sde} under a progressively measurable control $(a_t)_{t\ge 0}$
and let $\tau$ be a bounded stopping time. Then for any $f\in C_b^2(\mathbb{R}^d)$,
\begin{equation}\label{eq:dynkin-formula}
  \E\big[f(X_\tau)\big]
  =
  f(x)
  + \E\!\left[\int_0^\tau (L^{a_t}f)(X_t)\,dt\right],
\end{equation}
where $(L^{a_t}f)(X_t)$ is understood pointwise using \eqref{eq:generator-fixed-action}. Additionally, under the diffusion without control: $dX_t = b(X_t)\,dt + \sigma(X_t)\,dW_t$, \cref{eq:dynkin-formula} also holds with the (uncontrolled) infinitesimal generator $L\varphi(x):=b(x)\cdot\nabla \varphi(x) + \frac{1}{2}\,\mathrm{Tr}\!\Big( \sigma(x)\sigma(x)^\top \nabla^2\varphi(x) \Big).$ This can easily be seen by treating drift $b(x)$ as $b(x, a_0)$, and diffusion $\sigma(x)$ as $\sigma(x, a_0)$ for a fixed constant action $a_0$.
\end{lemma}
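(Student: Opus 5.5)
The plan is to apply It\^o's formula to $f(X_t)$ and then take expectations, after showing that the stochastic integral term has mean zero. Fix $f\in C_b^2(\mathbb{R}^d)$. Under Assumption~\ref{assumption:dynamics}, the SDE \eqref{eq:controlled-sde} has a unique strong solution for any progressively measurable control. Since $f$ is $C^2$, It\^o's formula for the continuous semimartingale $X$ gives, pathwise,
\begin{equation*}
f(X_t) = f(x) + \int_0^t (L^{a_s}f)(X_s)\,ds + \int_0^t \nabla f(X_s)^\top \sigma(X_s,a_s)\,dW_s .
\end{equation*}
The $ds$ integrand is exactly \eqref{eq:generator-fixed-action} evaluated at the current action $a_s$. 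This works because the quadratic covariation of $X$ is $\sigma\sigma^\top(X_s,a_s)\,ds$, and the action enters only through the coefficients, not through $f$. We then evaluate this identity at $t=\tau\le T$, the deterministic bound on the stopping time.

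The main obstacle is justifying that $\E\bigl[\int_0^\tau \nabla f(X_s)^\top\sigma(X_s,a_s)\,dW_s\bigr]=0$. The difficulty is that $\nabla f$ is bounded while $\sigma$ has only linear growth. I would first take the stopping times $\tau_n:=\tau\wedge\inf\{t:\|X_t\|\ge n\}$. On $[0,\tau_n]$ the integrand is bounded, so the stopped stochastic integral is a true martingale, and optional stopping at the bounded time $\tau_n$ gives zero expectation. This yields Dynkin's formula with $\tau$ replaced by $\tau_n$.

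To pass to the limit $n\to\infty$, note that $\tau_n\uparrow\tau$ a.s., since $X$ has continuous paths and does not explode on $[0,T]$. The left side $\E[f(X_{\tau_n})]$ converges to $\E[f(X_\tau)]$ by bounded convergence, using continuity and boundedness of $f$. For the right side, $|L^{a}f(y)|\le C(1+\|y\|)\|\nabla f\|_\infty + C'(1+\|y\|)^2\|\nabla^2 f\|_\infty$ by the linear growth bounds, and in fact the nondegeneracy upper bound $\overline{\sigma}$ gives a uniform bound on the second-order term. So $\int_0^{\tau}|L^{a_s}f(X_s)|ds$ is dominated by $C\int_0^T(1+\|X_s\|)^2ds$. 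This dominating function is integrable by the standard moment estimate $\E\sup_{s\le T}\|X_s\|^2\le C_T(1+\|x\|^2)$, which follows from Lipschitz and linear growth via the BDG and Gronwall inequalities. Dominated convergence then completes the controlled case.

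Finally, the uncontrolled statement is the special case in which the control process is the constant $a_t\equiv a_0$, with $b(x):=b(x,a_0)$ and $\sigma(x):=\sigma(x,a_0)$. The same argument applies verbatim under the same Lipschitz and growth hypotheses, with $L^{a_0}=L$.
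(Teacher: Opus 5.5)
Your proposal is correct and follows the same route as the paper, which simply applies It\^o's formula to $f(X_t)$ and takes expectations; your localization and dominated-convergence steps supply the justification, left implicit in the paper, that the stochastic-integral term has zero mean. The localization is not strictly needed, since the upper bound $\sigma\sigma^\top\preceq\overline{\sigma}^2 I$ in Assumption~\ref{assumption:dynamics} makes $\nabla f^\top\sigma$ bounded, so the stochastic integral is already a square-integrable martingale and optional stopping at the bounded time $\tau$ applies directly; your argument has the merit of also working under linear growth of $\sigma$ alone.
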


\begin{proof}
This is a direct consequence of It\^o's formula applied to $f(X_t)$ and taking expectations \cite{Fleming2006-wa}.
\end{proof}

We now recall a standard result in stochastic processes.
\begin{lemma}[Small-time moment bounds]\label{lemma:moment-bound}
Under Assumption~\ref{assumption:dynamics}, for $X_t$ in \cref{eq:controlled-sde}, there exists a constant $C>0$ such that
for all $t\in(0,1]$ and for any admissible control process $(a_s)_{0\le s\le t}$,
\begin{equation}\label{eq:moment-bound-mean}
  \E_x\big[\|X_t-x\|\big] \le C\sqrt{t},
\end{equation}
and moreover
\begin{equation}\label{eq:moment-bound-sup}
  \E_x\!\left[\sup_{0\le s\le t}\|X_s-x\|^2\right] \le C t,
  \qquad
  \E_x\!\left[\sup_{0\le s\le t}\|X_s-x\|\right] \le C\sqrt{t}.
\end{equation}
The constant $C$ depends only on the constants in Assumption~\ref{assumption:dynamics} and $d$,
and is independent of the choice of control.
\end{lemma}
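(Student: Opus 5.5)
The statement to prove is \cref{lemma:moment-bound}, the small-time moment bounds. I would prove the squared supremum bound first; the two first-moment bounds then follow from Jensen/Cauchy--Schwarz, since $\E_x[\|X_t-x\|]\le \E_x[\sup_{s\le t}\|X_s-x\|]\le (\E_x[\sup_{s\le t}\|X_s-x\|^2])^{1/2}\le \sqrt{Ct}$.

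For the squared supremum, write
\[
X_s-x=\int_0^s b(X_r,a_r)\,dr+\int_0^s\sigma(X_r,a_r)\,dW_r .
\]
Using $\|u+v\|^2\le 2\|u\|^2+2\|v\|^2$ splits the supremum into a drift part and a martingale part.

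\textbf{Drift part.} By Cauchy--Schwarz, $\sup_{s\le t}\|\int_0^s b\,dr\|^2\le t\int_0^t\|b(X_r,a_r)\|^2dr$.

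\textbf{Martingale part.} By Doob's $L^2$ (or Burkholder--Davis--Gundy) inequality,
\[
\E\Big[\sup_{s\le t}\Big\|\int_0^s\sigma\,dW\Big\|^2\Big]\le 4\,\E\int_0^t\|\sigma(X_r,a_r)\|_F^2\,dr .
\]
Assumption~\ref{assumption:dynamics}(2) gives $\|\sigma\|_F^2=\mathrm{Tr}(\sigma\sigma^\top)\le d\,\overline{\sigma}^2$ uniformly in the state and the control. So this term is at most $4d\overline{\sigma}^2 t$, with no dependence on $x$ or on the control.

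The drift term is the main obstacle. Assumption~\ref{assumption:dynamics} only gives linear growth, $\|b(y,a)\|\le C(1+\|y\|)$, so a constant independent of $x$ is not directly available. I would write $\|b(X_r,a_r)\|\le C(1+\|x\|+\|X_r-x\|)$. Setting $\phi(t):=\E[\sup_{s\le t}\|X_s-x\|^2]$, this yields, for $t\le1$,
\[
\phi(t)\le 6C^2t^2(1+\|x\|)^2+6C^2t\int_0^t\phi(r)\,dr+8d\overline{\sigma}^2t .
\]
Finiteness of $\phi$ comes from standard strong-solution estimates; if needed, localize with stopping times $\tau_n=\inf\{s:\|X_s-x\|\ge n\}$ and pass to the limit by monotone convergence. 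Gronwall's inequality on $[0,1]$ then gives $\phi(t)\le C'(1+\|x\|)^2t$.

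The resulting constant is uniform in the control, but depends on $x$ through $(1+\|x\|)^2$. To obtain the $x$-independent constant claimed in the lemma, I would state that it is uniform over $x$ in bounded sets. Alternatively, I would note that in the intended applications $b$ enters only through $L^aV$, where bounded-drift regularity is implicitly used, so one may additionally assume $\sup_{y,a}\|b(y,a)\|\le C$. Under that assumption the drift term is simply $\le C^2t^2\le C^2t$ for $t\le1$, and the constant $C=2C^2+8d\overline{\sigma}^2$ depends only on the assumption constants and $d$, as stated.
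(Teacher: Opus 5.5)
Your argument is essentially the paper's: you split $X_s-x$ into the drift integral and the stochastic integral, apply Cauchy--Schwarz to the first and It\^o isometry/BDG (your Doob $L^2$ inequality) to the second, close with Gronwall, and pass to first moments by Cauchy--Schwarz; the only differences are that you bound the supremum directly and use the uniform bound $\|\sigma\|_F^2\le d\,\overline{\sigma}^2$ where the paper uses linear growth of $\sigma$. Your caveat about $x$-dependence is correct, not a weakness of your proof: the paper's Gronwall bound on $\E\|X_s\|^2$ carries the same $(1+\|x\|^2)$ factor without saying so, and uniformity over all $x\in\mathbb{R}^d$ genuinely fails under linear growth alone (for $b(x,a)=-x$, $\sigma=I$ one gets $\E_x\|X_t-x\|\ge(1-e^{-t})\|x\|$), so the bounded-drift assumption you propose is the right fix for the later $\sup_x$ remainder estimates.
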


\begin{proof}[Proof sketch]
We have $ X_t - x = \int_0^t b(X_s,a_t)\,du + \int_0^s \sigma(X_t,a_t)\,dW_u$. Because of the linear growth, we get  $\|b(x,a)\|^2 \le K(1+\|x\|^2)$ and $\|\sigma(x,a)\|_F^2 \le K(1+\|x\|^2)$ for some constant $K$. From here, we bound the second moment through It\^o isometry and Cauchy-Schwarz inequalities:
\[
\begin{aligned}
\mathbb{E}\|X_t - x\|^2
&\le 2\mathbb{E}\Big\|\int_0^t b(X_s,a_s)\,ds\Big\|^2
   + 2\mathbb{E}\Big\|\int_0^t \sigma(X_s,a_s)\,dW_s\Big\|^2 \\
&\le 2t \int_0^t \mathbb{E}\|b(X_s,a_s)\|^2 ds
   + 2 \int_0^t \mathbb{E}\|\sigma(X_s,a_s)\|_F^2 ds \\
&\le 2t \int_0^t K(1 + \norm{X_s}^2) ds + 2 \int_0^t K(1 + \norm{X_s}^2) ds \le C_1 t,
\end{aligned}
\]
as we can bound $\norm{X_s}^2$ easily through an argument using Gronwall's lemma. Hence, by Cauchy-Schwarz again, we obtain the desired bound. For the supremum, note that with
$A_s:=\int_0^s b(X_u,a_u)\,du$ and $M_s:=\int_0^s \sigma(X_u,a_u)\,dW_u$, we have
\[
  \sup_{0\le s\le t}\|X_s-x\|^2
  \le 2\sup_{s\le t}\|A_s\|^2 + 2\sup_{s\le t}\|M_s\|^2,
\]
We can then apply the Burkholder--Davis--Gundy inequality to the martingale part $M_t$ and use similar estimates to obtain desired bound.
\end{proof}

\subsection{Stochastic control under relaxed (policy-averaged) dynamics}\label{sec:appendix-relaxed-dynamics}
When the action is sampled from a Markov policy $\pi(\cdot\mid X_t)$, it is convenient to use an equivalent diffusion with drift and covariance averaged under $\pi$. For a fixed Markov policy $\pi$, define:
\begin{equation}\label{eq:relaxed-drift}
  \tilde b(x;\pi) := \int_{\mathcal{A}} b(x,a)\,\pi(da\mid x),
\end{equation}

Here $\pi(da|x)$ is the measured induced by the policy $\pi(.|x)$ on $\mathcal{A}$. We will use this notation rather than $\int_{a \in \pi(.|x)} b(x, a) da$ from now on, as it is more aligned with the measure-theoretic nature of the theoretical details. We also define $\tilde\sigma(x;\pi)$ via the covariance identity
\begin{equation}\label{eq:relaxed-diffusion}
  \tilde\sigma(x;\pi)\tilde\sigma(x;\pi)^\top
  :=
  \int_{\mathcal{A}} \sigma(x,a)\sigma(x,a)^\top\,\pi(da\mid x).
\end{equation}
Under Assumption~\ref{assumption:dynamics} and the mild Lipschitz regularity of $\pi$ in
Assumption~\ref{assumption:test-functions}, there exists a filtered probability space supporting a Brownian motion
$\tilde W$ and a process $(\tilde X_t)_{t\ge 0}$ solving
\begin{equation}\label{eq:sde-relaxed}
  d\tilde X_t = \tilde b(\tilde X_t;\pi)\,dt + \tilde\sigma(\tilde X_t;\pi)\,d\tilde W_t,
  \qquad \tilde X_0=x,
\end{equation}
such that, for each $t\ge 0$, the law of $\tilde X_t$ coincides with the law of $X_t$ under the original controlled system \eqref{eq:controlled-sde} with $a_t\sim \pi(\cdot\mid X_t)$ \cite{wang2020,tang2021}. Hence distributional quantities (value functions, occupancy measures, etc.) can be computed using \eqref{eq:sde-relaxed}.

For $\varphi\in C_b^2(\mathbb{R}^d)$, we also define the relaxed infinitesimal generator with two equivalent representations:
\begin{equation}\label{eq:generator-relaxed}
  (L^\pi \varphi)(x)
  :=
  \int_{\mathcal{A}} (L^a\varphi)(x)\,\pi(da\mid x)
  =
  \tilde b(x;\pi)\cdot\nabla\varphi(x)
  + \frac12\,\mathrm{Tr}\!\Big(
      \tilde\sigma(x;\pi)\tilde\sigma(x;\pi)^\top \nabla^2\varphi(x)
    \Big).
\end{equation}

Finally, we may write the policy value \eqref{eq:entropy-regularized-value} equivalently as
\begin{equation}\label{eq:value-relaxed}
  V^{(\alpha)}(x;\pi)
  =
  \E\!\left[
    \int_0^\infty e^{-\beta t}\,
      \tilde r^{(\alpha)}(\tilde X_t;\pi)\,dt
    \,\Big|\, \tilde X_0=x
  \right],
\end{equation}
where the relaxed running reward is
\begin{equation}\label{eq:relaxed-reward}
  \tilde r^{(\alpha)}(x;\pi)
  :=
  \int_{\mathcal{A}}
    \Big(r(x,a)-\alpha\log\pi(a\mid x)\Big)\,\pi(da\mid x).
\end{equation}

Note that the drift function $\tilde b$ defined by averaging over policy randomness is similar to the result obtained by \cite{munos06b} that we mentioned in \cref{sec:prelim:mdp-to-sde}. Here we also have the diffusion (Brownian motion) part. See \cite{wang2020,tang2021,Jia2022-yg} for more details on the policy-averaged SDE.

Lastly, we would like to state an useful lemma that will be used throughout Appendices~\ref{sec:appendix:value_convergence}, \ref{sec:appendix:q_convergence}, and \ref{sec:appendix:algorithm_convergence}.
\begin{lemma}[Entropy-regularized expectation via a KL shift]\label{lemma:entropy-KL-identity}
Fix $\alpha>0$ and let $q:\mathcal{X}\times\mathcal{A}\to\mathbb{R}$ be any measurable function. For each $x\in\mathcal{X}$ recall the functional $\mathcal{Q}$ on $q$ (see \cref{sec:formulation:decoupled}) that:
\[
  \mathcal{Q}^{(\alpha)}(q)(x)=\alpha\log\left(\int_{\mathcal{A}}\exp\!\Big(q(x,a)/\alpha\Big)\,da\right) = \alpha \log Z_q(x),
\]
with partition function $Z_q$, and the Boltzmann distribution $\pi_q$:
\[
  \pi_q(a\mid x):=\frac{\exp(q(x,a)/\alpha)}{Z_q(x)}, \quad Z_q(x):= \int_{\mathcal{A}}\exp\!\Big(q(x,a)/\alpha\Big)\,da.
\]
Then for any policy $\pi(\cdot\mid x)$,
\begin{equation}\label{eq:entropy-KL-identity}
\alpha\log Z_q(x) - \alpha\,\mathrm{KL}\!\big(\pi(\cdot\mid x)\,\|\,\pi_q(\cdot\mid x)\big)
=
\E_{a\sim\pi(\cdot\mid x)}\!\big[q(x,a)-\alpha\log\pi(a\mid x)\big].
\end{equation}
In particular,
\begin{equation}\label{eq:gibbs-variational}
\mathcal{Q}^{(\alpha)}(q)(x) = \alpha\log Z_q(x) = \sup_{\pi(\cdot\mid x)}
\E_{a\sim\pi(\cdot\mid x)}\!\big[q(x,a)-\alpha\log\pi(a\mid x)\big],
\end{equation}
and the supremum is achieved uniquely by $\pi=\pi_q$.
\end{lemma}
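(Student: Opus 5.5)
The identity is a Gibbs variational principle. I would prove it pointwise in $x$ by expanding the KL divergence against the Boltzmann density $\pi_q$; nothing about the dynamics enters, so $x$ stays fixed throughout.

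\textbf{Step 1: the identity \eqref{eq:entropy-KL-identity}.}
Assume $Z_q(x)\in(0,\infty)$. This holds, for instance, when $q(x,\cdot)$ is bounded and $\mathcal{A}$ is compact with finite positive reference measure; I will state this as the standing interpretation. Then $\pi_q(\cdot\mid x)$ is a well-defined probability density.

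Take any $\pi(\cdot\mid x)$. If $\pi$ has no density, or $\E_{a\sim\pi}[\log\pi]$ is undefined, both sides equal $-\infty$ under the usual conventions, so assume $\pi$ has a density with integrable $\log\pi$. Since
\[
\log\pi_q(a\mid x)=q(x,a)/\alpha-\log Z_q(x),
\]
we get
\[
\mathrm{KL}(\pi\|\pi_q)=\E_{a\sim\pi}\big[\log\pi(a\mid x)-q(x,a)/\alpha\big]+\log Z_q(x).
\]
Multiplying by $\alpha$ and rearranging gives \eqref{eq:entropy-KL-identity}. The only subtlety is splitting the expectation of a difference into a difference of expectations. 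This is legitimate when $q(x,\cdot)$ is $\pi$-integrable, for example when it is bounded, which I would assume.

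\textbf{Step 2: the variational formula \eqref{eq:gibbs-variational}.}
By Gibbs' inequality (Jensen applied to $-\log$), $\mathrm{KL}(\pi\|\pi_q)\ge 0$. Hence the right-hand side of \eqref{eq:entropy-KL-identity} is at most $\alpha\log Z_q(x)$ for every $\pi$. Since $\alpha>0$, equality holds exactly when $\mathrm{KL}=0$, that is, when $\pi=\pi_q$ almost everywhere. Taking $\pi=\pi_q$ therefore attains the supremum, and this maximizer is unique up to null sets. Combined with the definition of $\mathcal{Q}^{(\alpha)}$, this gives \eqref{eq:gibbs-variational}.

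\textbf{Main obstacle.}
The argument has no real difficulty; the only care needed is the measure-theoretic hygiene. Concretely, one must ensure $Z_q$ is finite and positive, which requires $q(x,\cdot)$ to be integrable in exponential form over the compact set $\mathcal{A}$. One must also handle policies without densities or with infinite entropy, where the conventions give $-\infty$ and the supremum is unaffected.
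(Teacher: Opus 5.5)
Your proof is correct and follows the same route as the paper. You expand $\mathrm{KL}(\pi\|\pi_q)$ using $\log\pi_q = q/\alpha - \log Z_q$ and rearrange to get the identity, then use $\mathrm{KL}\ge 0$ with equality iff $\pi=\pi_q$ to obtain the variational formula and its maximizer; your extra conditions (finite positive $Z_q$, $\pi$-integrability of $q$, the $-\infty$ conventions, and uniqueness up to null sets) only make explicit what the paper leaves implicit.
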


\begin{proof}
By definition,
\[
\mathrm{KL}(\pi\|\pi_q)
=
\int_{\mathcal{A}}\pi(a\mid x)\log\frac{\pi(a\mid x)}{\pi_q(a\mid x)}\,da
=
\int_{\mathcal{A}}\pi(a\mid x)\log\pi(a\mid x)\,da
-
\int_{\mathcal{A}}\pi(a\mid x)\log\pi_q(a\mid x)\,da.
\]
Since $\log\pi_q(a\mid x)=\tfrac{1}{\alpha}q(x,a)-\log Z_q(x)$, we obtain
\[
\mathrm{KL}(\pi\|\pi_q)
=
\int_{\mathcal{A}}\pi(a\mid x)\log\pi(a\mid x)\,da
-\frac{1}{\alpha}\E_{\pi}[q(x,a)]
+\log Z_q(x).
\]
Rearranging terms gives us the equation \eqref{eq:entropy-KL-identity}.
Finally, \eqref{eq:gibbs-variational} follows because $\mathrm{KL}(\pi\|\pi_q)\ge 0$ with equality iff
$\pi=\pi_q$.
\end{proof}
\section{Proof of value function convergence (\cref{thm:value_convergence})}\label{sec:appendix:value_convergence}

This appendix proves the convergence result for the \textbf{ideal} Picard--Hamiltonian value iteration $V_{k+1}=T_\tau^{(\alpha)}(V_k)$ in \cref{thm:value_convergence}. The key idea is to compare the Picard update to the short-horizon dynamic-programming operator $\Phi_\tau^{(\alpha)}$, which is a strict contraction in the sup-norm.

\subsection{Dynamic-programming operator and contraction}\label{sec:appendix-dp-contraction}

For $\tau>0$ and $\alpha\ge 0$, recall the entropy-regularized dynamic-programming operator $\Phi_\tau^{(\alpha)}:\mathcal{D}\to \mathcal{D}$ defined earlier:
\begin{equation}\label{eq:appendix-Phi-definition}
  (\Phi_\tau^{(\alpha)}V)(x)
  :=
  \sup_{\pi}\,
  \E_x^\pi\!\left[
    \int_0^\tau e^{-\beta t}
      \Big(r(X_t,a_t)-\alpha\log\pi(a_t\mid X_t)\Big)\,dt
    + e^{-\beta\tau}V(X_\tau)
  \right],
\end{equation}
where $X$ follows the controlled SDE \eqref{eq:controlled-sde} and $a_t\sim\pi(\cdot\mid X_t)$.

\begin{lemma}[Contraction of $\Phi_\tau^{(\alpha)}$]\label{lemma:Phi-contraction}
Assume Assumption~\ref{assumption:dynamics}. Then for all $\tau>0$ and $\alpha\ge 0$,
\begin{equation}\label{eq:appendix-Phi-contraction}
  \|\Phi_\tau^{(\alpha)}(V)-\Phi_\tau^{(\alpha)}(W)\|_\infty
  \le e^{-\beta\tau}\,\|V-W\|_\infty,
  \qquad \forall V,W\in X.
\end{equation}
Consequently, $\Phi_\tau^{(\alpha)}$ admits a unique fixed point $V^{(\alpha)}\in X$ and
\begin{equation}\label{eq:appendix-Phi-iterate-bound}
  \|(\Phi_\tau^{(\alpha)})^{(k)}(V_0)-V^{(\alpha)}\|_\infty
  \le e^{-\beta\tau k}\,\|V_0-V^{(\alpha)}\|_\infty,
  \qquad \forall k\ge 0.
\end{equation}
\end{lemma}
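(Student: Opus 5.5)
The plan is to prove the contraction bound pointwise by comparing the two suprema inside \eqref{eq:appendix-Phi-definition} policy by policy, and then to deduce the fixed-point statements from the Banach fixed-point theorem together with the HJB/dynamic-programming characterization of $V^{(\alpha)}$.

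\textbf{Step 1: a single policy.} Fix $x$, $V,W\in\mathcal{D}$, and an admissible policy $\pi$. Let $J_\pi(V)(x)$ denote the expectation inside the supremum. The running-reward and entropy term $\int_0^\tau e^{-\beta t}(r-\alpha\log\pi)\,dt$ is the same for $V$ and $W$, because it depends only on $\pi$ and the law of $(X_t,a_t)$ and not on the terminal function. Assumption~\ref{assumption:reward} (integrability of $\log\pi$) guarantees that this term is finite, so it cancels exactly. This gives
\[
  |J_\pi(V)(x)-J_\pi(W)(x)| = e^{-\beta\tau}\bigl|\E_x^\pi[V(X_\tau)-W(X_\tau)]\bigr| \le e^{-\beta\tau}\|V-W\|_\infty .
\]
Well-posedness of $X_\tau$ for every admissible control comes from Assumption~\ref{assumption:dynamics}.

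\textbf{Step 2: pass to the suprema.} I will use the elementary inequality $|\sup_\pi f_\pi-\sup_\pi g_\pi|\le\sup_\pi|f_\pi-g_\pi|$. It is valid whenever both suprema are finite, and finiteness holds because each $J_\pi$ is bounded above by $R_{\max}/\beta$ plus the entropy bonus, which is bounded for compact $\mathcal{A}$, plus $\|V\|_\infty$. The inequality can be checked with $\varepsilon$-optimal policies: choose $\pi_\varepsilon$ nearly optimal for $V$, so that $\Phi(V)(x)-\Phi(W)(x)\le J_{\pi_\varepsilon}(V)(x)+\varepsilon-J_{\pi_\varepsilon}(W)(x)$, and argue symmetrically. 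Taking the supremum over $x$ then yields \eqref{eq:appendix-Phi-contraction}.

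\textbf{Step 3: fixed point.} Since $e^{-\beta\tau}<1$ and $(C_b,\|\cdot\|_\infty)$ is complete, Banach's theorem gives a unique fixed point, provided $\Phi_\tau^{(\alpha)}$ maps the space into itself. The fixed point must then be identified with $V^{(\alpha)}$. I would use the dynamic programming principle: by the Markov property and concatenation of policies on $[0,\tau]$ and $[\tau,\infty)$, $\Phi_\tau^{(\alpha)}V^{(\alpha)}=V^{(\alpha)}$. This is standard, citing \cite{Fleming2006-wa, wang2020}, and uniqueness then identifies the fixed point. Finally, iterating $\|\Phi^k V_0-V^{(\alpha)}\|=\|\Phi^kV_0-\Phi^kV^{(\alpha)}\|\le e^{-\beta\tau k}\|V_0-V^{(\alpha)}\|$ gives \eqref{eq:appendix-Phi-iterate-bound}.

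\textbf{Main obstacle.} The contraction estimate itself is routine. The delicate part is Step 3: showing that $\Phi_\tau^{(\alpha)}$ preserves continuity and boundedness, and justifying the dynamic programming principle, which involves measurable selection of $\varepsilon$-optimal Markov policies. I would cite these results rather than reprove them, and if continuity is troublesome I would work in the space of bounded measurable functions, where the contraction argument is unchanged.
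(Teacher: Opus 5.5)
Your proposal is correct and follows essentially the same route as the paper. In both, the running-reward and entropy terms cancel policy by policy to give the factor $e^{-\beta\tau}$, and Banach's fixed-point theorem together with the dynamic programming principle identifies the unique fixed point as $V^{(\alpha)}$. Your $\varepsilon$-optimal-policy argument for passing to the suprema, and your remarks on finiteness and on $\Phi_\tau^{(\alpha)}$ mapping the space into itself, make explicit steps the paper's proof leaves implicit.
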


\begin{proof}
Fix $x\in\mathbb{R}^d$ and any policy $\pi$. Since the running reward terms cancel,
\[
\begin{aligned}
  &\left|
  \E_x^\pi\!\left[\int_0^\tau e^{-\beta t}(r-\alpha\log\pi)\,dt+e^{-\beta\tau}V(X_\tau)\right]
  -
  \E_x^\pi\!\left[\int_0^\tau e^{-\beta t}(r-\alpha\log\pi)\,dt+e^{-\beta\tau}W(X_\tau)\right]
  \right| \\
  &= e^{-\beta\tau}\,\Big|\E_x^\pi\big[V(X_\tau)-W(X_\tau)\big]\Big|
  \le e^{-\beta\tau}\,\|V-W\|_\infty.
\end{aligned}
\]
so $\Phi_\tau^{(\alpha)}$ is a contraction on bounded functions with modulus $e^{-\beta\tau}<1$. By the Banach fixed-point theorem, it admits a unique fixed point $V^\star$, and $(\Phi_\tau^{(\alpha)})^{(k)} (V_0)$ converge to $V^\star$ at a geometric rate. Finally, the entropy-regularized value function $V^{(\alpha)}$ satisfies the dynamic programming principle, hence $V^{(\alpha)}$ is a fixed point of $\Phi_\tau^{(\alpha)}$. By uniqueness of the fixed point, $V^\star = V^{(\alpha)}$, and \cref{eq:appendix-Phi-iterate-bound} follows trivially.
\end{proof}

\subsection{Picard--Hamiltonian operator}\label{sec:appendix-picard-operator}
Recall the infinitesimal rate $q_V$ from \eqref{eq:def-qV}:
\[
  q_V(x,a)=r(x,a)+(L^aV)(x)-\beta V(x),
\]
and the (soft/hard) Hamiltonian $H^{(\alpha)}(V):\mathbb{R}^d\to\mathbb{R}$:
\begin{equation}\label{eq:appendix-Hamiltonian-definition}
  H^{(\alpha)}(V)(x)
  :=
  \begin{cases}
  \alpha \log\displaystyle\int_{\mathcal{A}}
    \exp\!\Big(\frac{q_V(x,a)}{\alpha}\Big)\,da,
  & \alpha>0,\\[2.0ex]
  \displaystyle\sup_{a\in\mathcal{A}} q_V(x,a),
  & \alpha=0,
  \end{cases}
\end{equation}
The \textbf{ideal} Picard--Hamiltonian operator has the form:
\begin{equation}\label{eq:appendix-Ttau-definition}
  (T_\tau^{(\alpha)}V)(x) := V(x) + \tau\,H^{(\alpha)}(V)(x).
\end{equation}

\subsection{Small-time expansions}\label{sec:appendix-small-time-expansions}
We first give a small-time expansion for a \emph{fixed action} $a$ and then for a \emph{fixed policy} $\pi$. The policy expansion is the missing ingredient needed to justify the per-step mismatch bound for $\alpha>0$.

\begin{lemma}[Small-time expansion, fixed action]\label{lemma:fixed-action-expansion}
Assume Assumptions~\ref{assumption:dynamics}, \ref{assumption:reward}, and \ref{assumption:test-functions}.
Fix $a\in\mathcal{A}$ and $V\in\mathcal{D}$. Define
\begin{equation}\label{eq:appendix-Phi-fixed-action}
  (\Phi_\tau^{a}V)(x)
  :=
  \E_x^a\!\left[
    \int_0^\tau e^{-\beta t} r(X_t,a)\,dt
    + e^{-\beta\tau}V(X_\tau)
  \right],
\end{equation}
where $X_t$ follows \eqref{eq:controlled-sde} with constant control $a_t\equiv a$.
Then for all sufficiently small $\tau\in(0,1]$,
\begin{equation}\label{eq:appendix-fixed-action-expansion}
  (\Phi_\tau^{a}V)(x)
  =
  V(x)
  + \tau\,q_V(x,a)
  + R_\tau^{a}(V)(x),
\end{equation}
and the remainder satisfies
\begin{equation}\label{eq:appendix-fixed-action-remainder}
  \sup_{x\in\mathbb{R}^d,\,a\in\mathcal{A}}|R_\tau^{a}(V)(x)|
  \le C\,\tau^{3/2},
\end{equation}
where $C$ depends on bounds/Lipschitz constants of $b,\sigma,r,L^aV$ and $\|V\|_{C^2}$
but is independent of $\tau$.
\end{lemma}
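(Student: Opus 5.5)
The plan is to apply It\^o's formula directly to the discounted process $t\mapsto e^{-\beta t}V(X_t)$ under the constant control $a_t\equiv a$. This turns $\Phi_\tau^a V$ into $V(x)$ plus a time integral of $q_V$ along the path. The remainder then becomes a small-time continuity estimate for $q_V(\cdot,a)$, which we control with the moment bound of Lemma~\ref{lemma:moment-bound}.

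\textbf{Step 1 (exact representation).} By It\^o's product rule,
\[
d\big(e^{-\beta t}V(X_t)\big)=e^{-\beta t}\big((L^aV)(X_t)-\beta V(X_t)\big)\,dt+e^{-\beta t}\nabla V(X_t)^\top\sigma(X_t,a)\,dW_t .
\]
This is the time-dependent analogue of Lemma~\ref{lemma:dynkin-formula}. The stochastic integral is a true martingale on $[0,\tau]$, because $\nabla V$ is bounded ($V\in C_b^2$) and $\sigma$ has linear growth, while $\E_x\sup_{s\le\tau}\|X_s\|^2<\infty$ under Assumption~\ref{assumption:dynamics}. Taking expectations and adding the reward integral gives
\[
(\Phi_\tau^aV)(x)=V(x)+\E_x^a\!\int_0^\tau e^{-\beta t}q_V(X_t,a)\,dt .
\]
Hence the remainder has the explicit form
\[
R_\tau^a(V)(x)=\E_x^a\!\int_0^\tau\big(e^{-\beta t}q_V(X_t,a)-q_V(x,a)\big)\,dt .
\]

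\textbf{Step 2 (bounding the integrand).} Split the integrand as
\[
e^{-\beta t}\big(q_V(X_t,a)-q_V(x,a)\big)-(1-e^{-\beta t})\,q_V(x,a).
\]
For the first term, $q_V(\cdot,a)=r(\cdot,a)+L^aV-\beta V$ is globally Lipschitz uniformly in $a$, with some constant $L_q$. Indeed, $r$ is Lipschitz by Assumption~\ref{assumption:reward}, $L^aV$ is Lipschitz by Assumption~\ref{assumption:test-functions}, and $V$ is Lipschitz because $\nabla V$ is bounded. Lemma~\ref{lemma:moment-bound} then gives
\[
\E_x^a|q_V(X_t,a)-q_V(x,a)|\le L_q C\sqrt t ,
\]
uniformly in $x$ and $a$. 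For the second term we use $1-e^{-\beta t}\le\beta t$ together with a uniform bound $M:=\sup_{x,a}|q_V(x,a)|$. Integrating over $[0,\tau]$ yields
\[
|R_\tau^a(V)(x)|\le \tfrac23 L_qC\,\tau^{3/2}+\tfrac12\beta M\tau^2\le C'\tau^{3/2}\qquad\text{for }\tau\le1 ,
\]
which is \eqref{eq:appendix-fixed-action-remainder}.

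\textbf{Main obstacle.} The delicate point is the uniformity of $M=\sup_{x,a}|q_V|$. The terms $r$ and $\beta V$ are bounded, but $L^aV$ contains $b(x,a)\cdot\nabla V(x)$, and $b$ only has linear growth, so $L^aV$ need not be bounded a priori. The statement says $C$ may depend on ``bounds of $L^aV$'', so I will take $\sup_{x,a}|L^aV|<\infty$ as part of the regularity class $\mathcal D$ (for instance, bounded $\nabla V$ decaying fast enough, or $b$ bounded). If that is not available, the fallback is to bound $(1-e^{-\beta t})|q_V(X_t,a)|$ along the path instead, at the cost of a constant depending on $(1+\|x\|)$. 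A secondary technical point is justifying the martingale property of the stochastic integral in Step 1, which the moment bounds handle.
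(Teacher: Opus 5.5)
Your proof is correct and is essentially the paper's argument: both use Dynkin/It\^o, Lipschitz continuity of $r$ and $L^aV$, and the $\sqrt{t}$ bound of Lemma~\ref{lemma:moment-bound}. The only difference is that you apply It\^o to $e^{-\beta t}V(X_t)$ and obtain the exact identity $(\Phi_\tau^aV)(x)=V(x)+\E_x^a\int_0^\tau e^{-\beta t}q_V(X_t,a)\,dt$, whereas the paper treats the terminal and running-reward terms separately and expands $e^{-\beta\tau}$ afterwards.

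You do not need to add boundedness of $L^aV$ to $\mathcal{D}$. With the uniform-in-$x$ moment bound you are already using, Dynkin's formula at $t=1$ gives $|(L^aV)(x)|\le 2\|V\|_\infty+\tfrac23\,\mathrm{Lip}(L^aV)\,C$ for all $(x,a)$. The paper relies on this same fact implicitly when it absorbs the cross term $\beta\tau^2(L^aV)(x)$ into $\mathcal{O}(\tau^{3/2})$.
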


\begin{proof}
We treat the terminal and running reward parts separately.

\textbf{Terminal term.} By Dynkin's formula (Lemma~\ref{lemma:dynkin-formula}) under constant action $a$,
\[
  \E_x^a[V(X_\tau)] - V(x)
  =
  \int_0^\tau \E_x^a\big[(L^aV)(X_t)\big]\,dt.
\]
Add and then subtract $(L^aV)(x)$ inside the integral gives:
\[
\begin{aligned}
  \E_x^a[V(X_\tau)]
  &= V(x) + \tau(L^aV)(x)
     + \int_0^\tau \E_x^a\!\Big[(L^aV)(X_t)-(L^aV)(x)\Big]\,dt.
\end{aligned}
\]
Since $L^aV$ is globally Lipschitz in $x$ (Assumption~\ref{assumption:test-functions}), 
Lemma~\ref{lemma:moment-bound} then implies that:
\[
  \Big|\int_0^\tau \E_x^a\!\big[(L^aV)(X_t)-(L^aV)(x)\big]\,dt\Big|
  \le C\int_0^\tau \sqrt{t}\,dt
  = \mathcal{O}(\tau^{3/2}).
\]
Therefore,
\[
  \E_x^a[e^{-\beta\tau}V(X_\tau)]
  = V(x) + \tau\big((L^aV)(x)-\beta V(x)\big) + \mathcal{O}(\tau^{3/2}),
\]
using $e^{-\beta\tau}=1-\beta\tau+\mathcal{O}(\tau^2)$ and absorbing $\mathcal{O}(\tau^2)$ into $\mathcal{O}(\tau^{3/2})$.

\textbf{Running reward term.}
Using Lipschitz continuity of $r(\cdot,a)$ and Lemma~\ref{lemma:moment-bound},
\[
\begin{aligned}
  \Big|
  \E_x^a\Big[\int_0^\tau e^{-\beta t}r(X_t,a)\,dt\Big]
  - r(x,a)\int_0^\tau e^{-\beta t}\,dt
  \Big|
  &\le \int_0^\tau e^{-\beta t}\,L_r\,\E_x^a\|X_t-x\|\,dt\\
  &\le C\int_0^\tau \sqrt{t}\,dt
  = \mathcal{O}(\tau^{3/2}).
\end{aligned}
\]
Since $\int_0^\tau e^{-\beta t}dt=\tau+\mathcal{O}(\tau^2)$, we conclude
\[
  \E_x^a\Big[\int_0^\tau e^{-\beta t}r(X_t,a)\,dt\Big]
  = \tau r(x,a) + \mathcal{O}(\tau^{3/2}).
\]
Combining the two expansions and the definition of $q$-function ($q_V(x,a)=r(x,a)+(L^aV)(x)-\beta V(x)$) yields \eqref{eq:appendix-fixed-action-expansion} and the remainder bound \eqref{eq:appendix-fixed-action-remainder}.
\end{proof}

\begin{lemma}[Small-time expansion, fixed policy via relaxed dynamics]\label{lemma:fixed-policy-expansion}
Assume Assumptions~\ref{assumption:dynamics}, \ref{assumption:reward}, and \ref{assumption:test-functions}.
Fix a Markov policy $\pi$ and $V\in\mathcal{D}$.
Let $(\tilde X_t)_{t\ge 0}$ follow the relaxed SDE \eqref{eq:sde-relaxed} and define the relaxed reward
$\tilde r^{(\alpha)}(\cdot;\pi)$ as in \eqref{eq:relaxed-reward}.
Define the fixed-policy short-horizon operator
\begin{equation}\label{eq:appendix-Phi-fixed-policy}
  (\Phi_\tau^{\pi,(\alpha)}V)(x)
  :=
  \E\!\left[
    \int_0^\tau e^{-\beta t}\,\tilde r^{(\alpha)}(\tilde X_t;\pi)\,dt
    + e^{-\beta\tau}V(\tilde X_\tau)
    \,\Big|\,\tilde X_0=x
  \right].
\end{equation}
Then for all sufficiently small $\tau\in(0,1]$,
\begin{equation}\label{eq:appendix-fixed-policy-expansion}
  (\Phi_\tau^{\pi,(\alpha)}V)(x)
  =
  V(x)
  + \tau\Big(
      \tilde r^{(\alpha)}(x;\pi)
      + (L^\pi V)(x)
      - \beta V(x)
    \Big)
  + R_\tau^{\pi,(\alpha)}(V)(x),
\end{equation}
where $L^\pi$ is the relaxed generator in \cref{eq:generator-relaxed}, and
\begin{equation}\label{eq:appendix-fixed-policy-remainder}
  \sup_{x\in\mathbb{R}^d}\,|R_\tau^{\pi,(\alpha)}(V)(x)| \le C_\alpha\,\tau^{3/2},
\end{equation}
with $C_\alpha$ independent of $\tau$ and uniform over $\pi$ in the admissible class.
\end{lemma}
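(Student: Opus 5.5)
The plan is to mirror the proof of Lemma~\ref{lemma:fixed-action-expansion}, now working on the relaxed process $\tilde X$ of \eqref{eq:sde-relaxed}, whose time marginals agree with those of the controlled system under $\pi$. We treat the terminal term $e^{-\beta\tau}\E[V(\tilde X_\tau)]$ and the running term $\E[\int_0^\tau e^{-\beta t}\tilde r^{(\alpha)}(\tilde X_t;\pi)\,dt]$ separately. In each case we first extract the leading $O(\tau)$ contribution evaluated at the initial point $x$, and then bound the fluctuation by a Lipschitz constant times the small-time moment $\E\|\tilde X_t-x\|\le C\sqrt t$. Integrating that moment bound over $[0,\tau]$ produces the $\tau^{3/2}$ remainder.

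\textbf{Terminal term.} Apply Dynkin's formula (Lemma~\ref{lemma:dynkin-formula}, uncontrolled version) to the relaxed diffusion with generator $L^\pi$. This gives
\[
\E[V(\tilde X_\tau)]=V(x)+\tau (L^\pi V)(x)+\int_0^\tau \E\big[(L^\pi V)(\tilde X_t)-(L^\pi V)(x)\big]\,dt .
\]
We need $L^\pi V$ to be globally Lipschitz with a constant uniform over admissible $\pi$. Write
\[
(L^\pi V)(y)-(L^\pi V)(x)=\int\big[(L^aV)(y)-(L^aV)(x)\big]\pi(da\mid y)+\int (L^aV)(x)\big[\pi(da\mid y)-\pi(da\mid x)\big].
\]
The first piece is controlled by Assumption~\ref{assumption:test-functions}(1). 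For the second, the integrand is bounded, since $|L^aV(x)|\le C(1+\|x\|)\|V\|_{C^2}$ locally. The difference of the measures is controlled in total variation through the Lipschitz property of $\log\pi(a\mid\cdot)$ from Assumption~\ref{assumption:test-functions}(2), using $|\pi(a\mid y)-\pi(a\mid x)|\le \pi(a\mid x)(e^{L\|x-y\|}-1)$. Lemma~\ref{lemma:moment-bound} applies to $\tilde X$ because $\tilde b,\tilde\sigma$ inherit the linear-growth and Lipschitz bounds; it gives the $O(\tau^{3/2})$ bound, and $e^{-\beta\tau}=1-\beta\tau+O(\tau^2)$ then yields the $-\beta V(x)$ term.

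\textbf{Running term.} Show that $x\mapsto\tilde r^{(\alpha)}(x;\pi)$ is Lipschitz uniformly in $\pi$, using the same splitting: the reward part is handled by Assumption~\ref{assumption:reward} and the entropy part $-\alpha\int\log\pi\,d\pi$ by Assumption~\ref{assumption:test-functions}(2) together with the change-of-measure bound above. Then $|\E\tilde r^{(\alpha)}(\tilde X_t)-\tilde r^{(\alpha)}(x)|\le C\sqrt t$ and $\int_0^\tau e^{-\beta t}dt=\tau+O(\tau^2)$. This requires $\tilde r^{(\alpha)}$ to be bounded, which needs a uniform bound on the entropy of admissible policies (compact $\mathcal{A}$ and the uniform integrability in Assumption~\ref{assumption:reward}).

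\textbf{Main obstacle.} The hard part is the uniformity over $\pi$, namely Lipschitz control of $L^\pi V$ and $\tilde r^{(\alpha)}$, since the policy measure itself moves with $x$. We handle it through the log-Lipschitz assumption as above. A cruder fallback is to note that the bound $C\sqrt t$ only requires a uniform Hölder-$\tfrac12$-in-expectation estimate, and the restriction $\tau\le 1$ localizes $\|x-y\|$. With either route, $C_\alpha$ depends only on the constants in the assumptions, on $\|V\|_{C^2}$, and on $\alpha$, and not on $\tau$ or $\pi$.
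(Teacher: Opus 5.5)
Your skeleton is the paper's own proof. It uses Dynkin's formula for $L^\pi$ on $\tilde X$, adds and subtracts $(L^\pi V)(x)$, bounds the fluctuation by a Lipschitz constant times $\E\|\tilde X_t-x\|\le C\sqrt t$, expands $e^{-\beta\tau}$, and treats the running term the same way. The one difference is that the paper simply asserts that $L^\pi V$ and $\tilde r^{(\alpha)}(\cdot;\pi)$ are globally Lipschitz uniformly over admissible $\pi$ (``by Assumption~\ref{assumption:test-functions}''), whereas you try to derive this. That derivation does not give what you claim.

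In your splitting, the cross term is bounded by $\sup_a|(L^aV)(x)|\cdot\|\pi(\cdot\mid y)-\pi(\cdot\mid x)\|_{\mathrm{TV}}$. Under linear-growth drift, Assumption~\ref{assumption:test-functions} makes $L^aV$ Lipschitz but not bounded, and your own ``locally'' concedes $\sup_a|(L^aV)(x)|\lesssim 1+\|x\|$. So you only get $|(L^\pi V)(y)-(L^\pi V)(x)|\lesssim(1+\|x\|)\|x-y\|$, and the remainder you actually obtain is $O((1+\|x\|)\tau^{3/2})$, not the $\sup_{x\in\mathbb{R}^d}$ bound in the statement. Your fallback of localizing $\|x-y\|$ via $\tau\le 1$ does not help, because the growth sits in the base point $x$, not in the displacement. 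The same unboundedness breaks your step ``multiply by $e^{-\beta\tau}=1-\beta\tau+O(\tau^2)$''. That step leaves a cross term of order $\beta\tau^2(L^\pi V)(x)$, which is uniformly $O(\tau^{3/2})$ only if $L^\pi V$ is bounded. The paper glosses over this point too.

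The missing ingredient is $\sup_{x,a}|(L^aV)(x)|<\infty$. Bounded drift would give this, and bounded drift is also what the $x$-uniform constant in Lemma~\ref{lemma:moment-bound} tacitly requires. With it, your change-of-measure estimate combined with $\mathrm{TV}\le 2$ gives a cross term bounded by $\sup_{x,a}|(L^aV)(x)|\,\min\{2,e^{L\|x-y\|}-1\}\lesssim\|x-y\|$ globally, and the $e^{-\beta\tau}$ cross term becomes a genuine $O(\tau^2)$. The entropy part of $\tilde r^{(\alpha)}$ has a parallel requirement. There the cross term is bounded by $(e^{L\|x-y\|}-1)\int|\log\pi(a\mid x)|\,\pi(da\mid x)$, so you need $\sup_{x,\pi}\int|\log\pi(a\mid x)|\,\pi(da\mid x)<\infty$. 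That is an extra hypothesis, not a consequence of the uniform-integrability clause in Assumption~\ref{assumption:reward}, which is stated under the induced distribution rather than pointwise in $x$; you flag this correctly. To close the argument, either add these bounds as explicit hypotheses, or, as the paper effectively does, build uniform Lipschitz continuity (and boundedness) of $L^\pi V$ and $\tilde r^{(\alpha)}$ over the admissible class into the standing assumptions.
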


\begin{proof}
We again treat the terminal and running terms separately.

\textbf{Terminal term.} Applying Dynkin's formula (Lemma~\ref{lemma:dynkin-formula}) to $V(\tilde X_t)$ under the relaxed generator $L^\pi$,
\[
  \E[V(\tilde X_\tau)] - V(x)
  =
  \int_0^\tau \E\big[(L^\pi V)(\tilde X_t)\big]\,dt.
\]
Add and subtract $(L^\pi V)(x)$:
\[
  \E[V(\tilde X_\tau)]
  = V(x) + \tau(L^\pi V)(x)
    + \int_0^\tau \E\big[(L^\pi V)(\tilde X_t)-(L^\pi V)(x)\big]\,dt.
\]
By Assumption~\ref{assumption:test-functions}, $L^\pi V$ is globally Lipschitz in $x$ (uniformly over admissible $\pi$). In addition, Lemma~\ref{lemma:moment-bound} applies to $\tilde X$ as well, and hence the last integral is $\mathcal{O}(\tau^{3/2})$ uniformly in $\pi$. Multiplying by $e^{-\beta\tau}=1-\beta\tau+\mathcal{O}(\tau^2)$ yields:
\[
  \E\big[e^{-\beta\tau}V(\tilde X_\tau)\big]
  = V(x) + \tau\big((L^\pi V)(x)-\beta V(x)\big) + \mathcal{O}(\tau^{3/2}).
\]

\textbf{Running term.}
Using Lipschitz continuity of $\tilde r^{(\alpha)}(\cdot;\pi)$ in $x$ over the admissible class,
together with Lemma~\ref{lemma:moment-bound},
\[
\begin{aligned}
  \left|
  \E\Big[\int_0^\tau e^{-\beta t}\tilde r^{(\alpha)}(\tilde X_t;\pi)\,dt\Big]
  - \tilde r^{(\alpha)}(x;\pi)\int_0^\tau e^{-\beta t}\,dt
  \right|
  &\le \int_0^\tau e^{-\beta t}\,C\,\E\|\tilde X_t-x\|\,dt \\
  &\le C\int_0^\tau \sqrt{t}\,dt
  = \mathcal{O}(\tau^{3/2}),
\end{aligned}
\]
uniformly in $\pi$. Again, because $\int_0^\tau e^{-\beta t}dt=\tau+\mathcal{O}(\tau^2)$, we obtain:
\[
  \E\Big[\int_0^\tau e^{-\beta t}\tilde r^{(\alpha)}(\tilde X_t;\pi)\,dt\Big]
  = \tau\,\tilde r^{(\alpha)}(x;\pi) + \mathcal{O}(\tau^{3/2}).
\]
Combining both parts gives \eqref{eq:appendix-fixed-policy-expansion} and \eqref{eq:appendix-fixed-policy-remainder}.
\end{proof}

\subsection{Per-step mismatch: $\Phi_\tau^{(\alpha)}$ vs.\ $T_\tau^{(\alpha)}$}\label{sec:appendix-per-step-mismatch}

\begin{lemma}[Per-step mismatch bound]\label{lemma:per-step-mismatch}
Assume Assumptions~\ref{assumption:dynamics}, \ref{assumption:reward}, and \ref{assumption:test-functions}.
Let $V\in\mathcal{D}$ and $\alpha\ge 0$.
Then for all sufficiently small $\tau\in(0,1]$,
\begin{equation}\label{eq:appendix-Phi-expansion}
  \Phi_\tau^{(\alpha)}(V)(x)
  = V(x) + \tau\,H^{(\alpha)}(V)(x) + R_\tau^{(\alpha)}(V)(x),
\end{equation}
with
\begin{equation}\label{eq:appendix-Phi-remainder-bound}
  \|R_\tau^{(\alpha)}(V)\|_\infty \le C_\alpha\,\tau^{3/2},
\end{equation}
where $C_\alpha$ is independent of $\tau$.
Consequently,
\begin{equation}\label{eq:appendix-Phi-T-mismatch}
  \|T_\tau^{(\alpha)}(V) - \Phi_\tau^{(\alpha)}(V)\|_\infty
  \le C_\alpha\,\tau^{3/2}.
\end{equation}
\end{lemma}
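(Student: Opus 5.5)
The plan is to write $\Phi_\tau^{(\alpha)}(V)(x)=\sup_\pi(\Phi_\tau^{\pi,(\alpha)}V)(x)$, expand each fixed-policy term with Lemma~\ref{lemma:fixed-policy-expansion}, and then pass the supremum through the expansion. The cost of that exchange is controlled by the uniform remainder bound. First, identify the leading coefficient in \eqref{eq:appendix-fixed-policy-expansion}. By \eqref{eq:relaxed-reward} and \eqref{eq:generator-relaxed},
\[
\tilde r^{(\alpha)}(x;\pi)+(L^\pi V)(x)-\beta V(x)
=\int_{\mathcal{A}}\big(q_V(x,a)-\alpha\log\pi(a\mid x)\big)\,\pi(da\mid x)
=:G_\pi(V)(x),
\]
using $\int\pi(da\mid x)=1$ to move $\beta V(x)$ inside the integral. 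Hence for every admissible $\pi$ we have $(\Phi_\tau^{\pi,(\alpha)}V)(x)=V(x)+\tau G_\pi(V)(x)+R_\tau^{\pi,(\alpha)}(V)(x)$, with $|R_\tau^{\pi,(\alpha)}(V)(x)|\le C_\alpha\tau^{3/2}$ uniformly in $\pi$ and $x$.

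Second, use the elementary inequality $|\sup_\pi(A_\pi+B_\pi)-\sup_\pi A_\pi|\le\sup_\pi|B_\pi|$, applied pointwise in $x$. This gives
\[
\big|\Phi_\tau^{(\alpha)}(V)(x)-V(x)-\tau\sup_\pi G_\pi(V)(x)\big|\le C_\alpha\tau^{3/2}.
\]
It then remains to show $\sup_\pi G_\pi(V)(x)=H^{(\alpha)}(V)(x)$, where the supremum is over the admissible class.

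For $\alpha>0$, the upper bound $G_\pi(V)(x)\le\alpha\log Z_{q_V}(x)=H^{(\alpha)}(V)(x)$ follows from Lemma~\ref{lemma:entropy-KL-identity}, since the KL term is nonnegative. The same lemma shows equality is attained at the Boltzmann policy $\pi_{q_V}\propto\exp(q_V/\alpha)$, provided this policy is admissible. Admissibility requires $x\mapsto\log\pi_{q_V}(a\mid x)=q_V(x,a)/\alpha-\log Z_{q_V}(x)$ to be Lipschitz uniformly in $a$. This holds because $q_V=r+L^aV-\beta V$ is Lipschitz uniformly in $a$ by Assumptions~\ref{assumption:reward} and \ref{assumption:test-functions} (with $V\in C_b^2$), and $\log Z_{q_V}$ inherits the same Lipschitz constant.

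For $\alpha=0$, the upper bound $G_\pi(V)(x)=\int q_V(x,a)\,\pi(da\mid x)\le\sup_a q_V(x,a)$ is immediate. For the lower bound, take the constant-action policy $a_t\equiv a$, or equivalently invoke Lemma~\ref{lemma:fixed-action-expansion} directly. This gives $\Phi_\tau^{(0)}(V)(x)\ge V(x)+\tau q_V(x,a)-C\tau^{3/2}$ for each $a$, and taking the supremum over $a$ yields the lower bound. Combining the two directions gives \eqref{eq:appendix-Phi-expansion} and \eqref{eq:appendix-Phi-remainder-bound}. Then \eqref{eq:appendix-Phi-T-mismatch} is simply \eqref{eq:appendix-Phi-expansion} rewritten via \eqref{eq:appendix-Ttau-definition}.

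I expect the main obstacle to be the supremum exchange, specifically two points: that the remainder constant of Lemma~\ref{lemma:fixed-policy-expansion} is genuinely uniform over the admissible policy class, and that the maximizer (Boltzmann or near-greedy) lies inside that class. The approach is to take uniformity as stated in that lemma and to verify admissibility only for the specific policies used in the lower bounds, as described above. For the upper bounds no admissibility question arises, because they hold for every $\pi$.
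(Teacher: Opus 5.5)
Your proposal is correct and is essentially the paper's proof. Like the paper, you write $\Phi_\tau^{(\alpha)}V=\sup_\pi\Phi_\tau^{\pi,(\alpha)}V$, expand each term with Lemma~\ref{lemma:fixed-policy-expansion}, pass the supremum through the uniform $\mathcal{O}(\tau^{3/2})$ remainder, and identify the leading supremum as $H^{(\alpha)}(V)$ via Lemma~\ref{lemma:entropy-KL-identity}, relying on the same uniformity over policies that the paper takes for granted. Your two additions improve on the paper: you check that the Boltzmann maximizer is admissible, which the paper omits, and you handle $\alpha=0$ with a separate upper and lower bound, whereas the paper asserts $\Phi_\tau^{(0)}V=\sup_a\Phi_\tau^aV$, which in general holds only with $\ge$ because feedback policies can outperform constant actions.
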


\begin{proof}
We treat $\alpha=0$ and $\alpha>0$ separately.

\emph{Case 1: $\alpha=0$ (hard-max).}
For fixed $a$, Lemma~\ref{lemma:fixed-action-expansion} yields
\[
  \Phi_\tau^{a}V(x)
  = V(x) + \tau q_V(x,a) + R_\tau^{a}(V)(x),
  \qquad \sup_{x,a}|R_\tau^{a}(V)(x)|\le C\tau^{3/2}.
\]
Since $\Phi_\tau^{(0)}V(x)=\sup_{a\in\mathcal{A}}\Phi_\tau^{a}V(x)$,
\[
\begin{aligned}
  \Phi_\tau^{(0)}V(x)
  &= V(x) + \tau\sup_{a\in\mathcal{A}} q_V(x,a) + \sup_{a\in\mathcal{A}}R_\tau^{a}(V)(x) \\
  &= V(x) + \tau H^{(0)}(V)(x) + R_\tau^{(0)}(V)(x),
\end{aligned}
\]
and $\|R_\tau^{(0)}(V)\|_\infty\le C\tau^{3/2}$.

\emph{Case 2: $\alpha>0$ (soft-max).}
For any fixed Markov policy $\pi$, Lemma~\ref{lemma:fixed-policy-expansion} gives
\[
  \Phi_\tau^{\pi,(\alpha)}V(x)
  =
  V(x) + \tau\Big(\tilde r^{(\alpha)}(x;\pi)+(L^\pi V)(x)-\beta V(x)\Big)
  + R_\tau^{\pi,(\alpha)}(V)(x),
\]
with $\sup_x|R_\tau^{\pi,(\alpha)}(V)(x)|\le C_\alpha\tau^{3/2}$ uniformly in $\pi$.
Taking supremum over $\pi$ and using $\Phi_\tau^{(\alpha)}V=\sup_\pi \Phi_\tau^{\pi,(\alpha)}V$ yields
\[
  \Phi_\tau^{(\alpha)}V(x)
  =
  V(x)
  + \tau\,\sup_{\pi}\Big(\tilde r^{(\alpha)}(x;\pi)+(L^\pi V)(x)-\beta V(x)\Big)
  + R_\tau^{(\alpha)}(V)(x),
\]
where $\|R_\tau^{(\alpha)}(V)\|_\infty\le C_\alpha\tau^{3/2}$.

It remains to identify the second supremum term as $H^{(\alpha)}(V)(x)$. Using the definitions of $\tilde r^{(\alpha)}$ and $L^\pi$,
\[
\begin{aligned}
  \tilde r^{(\alpha)}(x;\pi)+(L^\pi V)(x)-\beta V(x)
  &= \int_{\mathcal{A}}\Big(r(x,a)+(L^aV)(x)-\beta V(x)\Big)\,\pi(da\mid x)
     - \alpha\int_{\mathcal{A}}\log\pi(a\mid x)\,\pi(da\mid x) \\
  &= \int_{\mathcal{A}} q_V(x,a)\,\pi(da\mid x)
     - \alpha\int_{\mathcal{A}}\log\pi(a\mid x)\,\pi(da\mid x).
\end{aligned}
\]
Through \cref{lemma:entropy-KL-identity}, for each fixed $x$,
\[
\begin{aligned}
  \sup_{\pi(\cdot\mid x)}
  \left\{
    \int_{\mathcal{A}} q_V(x,a)\,\pi(da\mid x)
    - \alpha\int_{\mathcal{A}}\log\pi(a\mid x)\,\pi(da\mid x)
  \right\}
  &= \sup_{\pi(\cdot\mid x)} \E_{a\sim\pi(\cdot\mid x)}\!\big[q_V(x,a)-\alpha\log\pi(a\mid x)\big]\\
  &= \mathcal{Q}^{(\alpha)}(q_V)(x)
  = \alpha\log\int_{\mathcal{A}}\exp\!\Big(\frac{q_V(x,a)}{\alpha}\Big)\,da
\end{aligned}
\]
which is exactly $H^{(\alpha)}(V)(x)$ by \eqref{eq:appendix-Hamiltonian-definition}. Substituting gives \eqref{eq:appendix-Phi-expansion}--\eqref{eq:appendix-Phi-remainder-bound}. Finally, \eqref{eq:appendix-Phi-T-mismatch} follows immediately from the definition
$T_\tau^{(\alpha)}(V)=V+\tau H^{(\alpha)}(V)$.
\end{proof}

\subsection{Proof of \cref{thm:value_convergence}}\label{sec:appendix-proof-value-theorem}

\ValueFunctionConvergence*
\begin{proof}
Let $W_{k+1}:=\Phi_\tau^{(\alpha)}(W_k)$ be the exact dynamic-programming iterations initialized at the same $W_0:=V_0$. Define $\Delta_k := V_k - W_k$. Then
\[
\begin{aligned}
  \Delta_{k+1}
  &= T_\tau^{(\alpha)}(V_k) - \Phi_\tau^{(\alpha)}(W_k) \\
  &= \Big(T_\tau^{(\alpha)}(V_k) - \Phi_\tau^{(\alpha)}(V_k)\Big)
     + \Big(\Phi_\tau^{(\alpha)}(V_k) - \Phi_\tau^{(\alpha)}(W_k)\Big).
\end{aligned}
\]
Taking sup norms and using Lemma~\ref{lemma:per-step-mismatch} and
Lemma~\ref{lemma:Phi-contraction}, we get:
\[
  \|\Delta_{k+1}\|_\infty
  \le C_\alpha\tau^{3/2} + e^{-\beta\tau}\|\Delta_k\|_\infty.
\]
Since $\Delta_0=0$, unrolling the recursion gives
\[
  \|\Delta_k\|_\infty
  \le \frac{C_\alpha\tau^{3/2}}{1-e^{-\beta\tau}}
  \qquad \forall k\ge 0.
\]
Finally, by the triangle inequality and Lemma~\ref{lemma:Phi-contraction},
\[
\begin{aligned}
  \|V_k - V^{(\alpha)}\|_\infty
  &\le \|V_k - W_k\|_\infty + \|W_k - V^{(\alpha)}\|_\infty \\
  &\le \frac{C_\alpha\tau^{3/2}}{1-e^{-\beta\tau}}
     + e^{-\beta\tau k}\,\|W_0 - V^{(\alpha)}\|_\infty,
\end{aligned}
\]
which is the claimed bound.
\end{proof}

Note that, since $1 - e^{-\beta \tau} = \beta\tau + \mathcal{O}(\tau^2)$, the error estimate implies that $\norm{V_k - V^{(\alpha)}} \leq C_0 e^{-\beta\tau k} + C_1 \tau^{1/2}$, and as $k, \tau \to 0$, this error also goes to $0$.

\section{Proof of $q$-convergence (\cref{thm:q_convergence})}\label{sec:appendix:q_convergence}
This appendix proves the model-free convergence guarantee in \cref{thm:q_convergence}. The key challenge is that the infinitesimal generator term $L^aV$ is unavailable in model-free learning, so we replace the infinitesimal advantage-rate $q_V$ by a short-horizon estimator $q_V^u$ computed from rollouts of length $u$ (See \cref{sec:formulation:decoupled}). This introduces a \emph{modelling bias} $q_V^u-q_V$ and an additional stability issue because $q_V^u$ is $\mathcal{O}(1/u)$-Lipschitz in $V$. Richardson extrapolation removes the leading $\mathcal{O}(u)$ bias term and improves the approximation accuracy of the learning signal while preserving the same stability structure.

\subsection{Baseline modelling bias under standard assumptions}\label{sec:appendix-q-bias-sqrtu}
Before imposing higher-order smoothness (which yields $\mathcal{O}(u)$ and Richardson $\mathcal{O}(u^2)$), we record the baseline modelling-bias rate available under the standard diffusion assumptions. In this setting, one only obtains $\|q_V^u-q_V\|_\infty=\mathcal{O}(\sqrt{u})$.

\begin{lemma}[Baseline bias for $q_V^u$: $\|q_V^u-q_V\|_\infty=\mathcal{O}(\sqrt{u})$]\label{lemma:qV-u-sqrtu-bias}
Assume Assumptions~\ref{assumption:dynamics} and \ref{assumption:reward}.
Suppose $V\in C_b^2(\mathbb{R}^d)$ and $L^aV$ is globally Lipschitz in $x$ uniformly over $a\in\mathcal{A}$.
Then there exist $u_0>0$ and $C<\infty$ such that for all $u\in(0,u_0]$,
\begin{equation}\label{eq:appendix-qV-u-sqrtu-bias}
  \|q_V^u-q_V\|_\infty \le C\sqrt{u}.
\end{equation}
Consequently, for any $\alpha>0$,
\begin{equation}\label{eq:appendix-Hu-H-sqrtu-bias}
  \|H_u^{(\alpha)}(V)-H^{(\alpha)}(V)\|_\infty \le C\sqrt{u}.
\end{equation}
\end{lemma}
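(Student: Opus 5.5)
The plan is to reuse the Dynkin-based small-time argument of Lemma~\ref{lemma:fixed-action-expansion} with $u$ in place of $\tau$, keeping track of the extra division by $u$. Fix $a\in\mathcal{A}$ and $x\in\mathbb{R}^d$. By Dynkin's formula (Lemma~\ref{lemma:dynkin-formula}) under the constant control $a$,
\[
  \E_x^a[V(X_u)] - V(x) = \int_0^u \E_x^a\big[(L^aV)(X_t)\big]\,dt ,
\]
so that
\[
  \E_x^a[V(X_u)] = V(x) + u\,(L^aV)(x) + \int_0^u \E_x^a\big[(L^aV)(X_t)-(L^aV)(x)\big]\,dt .
\]
Using the uniform Lipschitz constant of $L^aV$ together with the first-moment bound $\E_x^a\|X_t-x\|\le C\sqrt t$ of Lemma~\ref{lemma:moment-bound}, the last integral is bounded by $C\int_0^u\sqrt t\,dt = \tfrac{2C}{3}u^{3/2}$, uniformly in $(x,a)$.

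Next I would expand the discount factor. Write $e^{-\beta u} = 1-\beta u + \rho(u)$ with $|\rho(u)|\le \beta^2u^2/2$. Substituting this and the expansion above into \eqref{eq:def-qVu} gives
\[
  q_V^u(x,a) - q_V(x,a)
  = \frac{e^{-\beta u}}{u}\int_0^u \E_x^a\big[(L^aV)(X_t)-(L^aV)(x)\big]\,dt
  + \big(e^{-\beta u}-1\big)(L^aV)(x)
  + \frac{\rho(u)}{u}\,V(x).
\]
The first term is $O(\sqrt u)$. The third is at most $\tfrac{\beta^2 u}{2}\|V\|_\infty$.

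The second term needs $\sup_{x,a}|(L^aV)(x)|<\infty$. This is the one place where some care is required, since $b$ only has linear growth, so $L^aV$ need not be bounded a priori. I would obtain boundedness from $V\in C_b^2$ combined with the boundedness of $\sigma\sigma^\top$ (Assumption~\ref{assumption:dynamics}). If that is insufficient for the drift part, I would instead keep the term as $(e^{-\beta u}-1)\,\E_x^a[V(X_u)]/u$, which is bounded by $\beta\|V\|_\infty$ times $O(u)$ without ever touching $L^aV$. Concretely, I would regroup before expanding: split
\[
  \frac{e^{-\beta u}\E_x^a[V(X_u)]-V(x)}{u}
  = \frac{\E_x^a[V(X_u)]-V(x)}{u} + \frac{e^{-\beta u}-1}{u}\,\E_x^a[V(X_u)].
\]
The second piece equals $-\beta V(x) + O(u)\|V\|_\infty + O(\beta\,\E_x^a|V(X_u)-V(x)|)$. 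The last of these is $O(\sqrt u)$ because $\|\nabla V\|_\infty<\infty$ and Lemma~\ref{lemma:moment-bound} applies again. This regrouped route avoids the boundedness issue entirely and is the one I would write. Collecting the three pieces gives $\|q_V^u-q_V\|_\infty\le C\sqrt u$ for $u\le u_0:=1$, which is \eqref{eq:appendix-qV-u-sqrtu-bias}.

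For \eqref{eq:appendix-Hu-H-sqrtu-bias}, I would use that $\mathcal{Q}^{(\alpha)}$ is monotone and commutes with constant shifts, $\mathcal{Q}^{(\alpha)}(q+c)=\mathcal{Q}^{(\alpha)}(q)+c$. Hence it is $1$-Lipschitz in sup norm: at each $x$,
\[
  q_V(x,\cdot)-\epsilon \le q_V^u(x,\cdot) \le q_V(x,\cdot)+\epsilon
  \quad\text{with } \epsilon=\|q_V^u-q_V\|_\infty ,
\]
and applying $\mathcal{Q}^{(\alpha)}$ preserves these inequalities. Taking the supremum over $x$ gives $\|H_u^{(\alpha)}(V)-H^{(\alpha)}(V)\|_\infty\le C\sqrt u$.

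The main obstacle is the uniform-in-$(x,a)$ control of the remainder under only linear-growth drift. The regrouping above handles it, since only $\|V\|_{C^1}$, the Lipschitz constant of $L^aV$, and the control-independent moment bound are used.
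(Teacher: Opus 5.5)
Your argument follows the paper's route: Dynkin's formula, the Lipschitz bound on $L^aV$ combined with Lemma~\ref{lemma:moment-bound} to get an $O(u^{3/2})$ remainder, the expansion of $e^{-\beta u}$, and the $1$-Lipschitz property of the log-sum-exp aggregation (Lemma~\ref{lemma:logsumexp-and-H-bias}). Your regrouping is a small but real improvement on the paper's sketch. There, applying $e^{-\beta u}=1-\beta u+O(u^2)$ to $V(x)+u(L^aV)(x)+O(u^{3/2})$ leaves the cross term $(e^{-\beta u}-1)(L^aV)(x)$, which tacitly requires $\sup_{x,a}|(L^aV)(x)|<\infty$. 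Your split replaces that term by $\beta\|\nabla V\|_\infty\,\E_x^a\|X_u-x\|$.

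What needs correcting is your final diagnosis, because the regrouping does not remove the linear-growth obstacle. Linear growth of $b$ does not bite through the boundedness of $L^aV$; it bites through the moment bound you use as a black box. For constant control, $\E_x^a[X_1-x]=b(x,a)+O\bigl(\sup_{s\le 1}\E_x^a\|X_s-x\|\bigr)$. Hence a bound $\E_x\|X_t-x\|\le C\sqrt t$ that is uniform in $x$ forces $\sup_{x,a}\|b(x,a)\|<\infty$, and then $L^aV$ is automatically bounded for $V\in C_b^2$. So the regrouping only matters when $b$ is unbounded, which is precisely when the uniform moment bound is unavailable.

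With genuinely linear drift the statement itself is false. Take $d=1$, $b(x,a)=x$, $\sigma\equiv 1$, $r\equiv 0$ and $V(x)=\int_0^x\frac{\sin s}{s}\,ds$. Then $V\in C_b^2$, $(LV)(x)=\sin x+\tfrac12 V''(x)$ is globally Lipschitz, and $X_t\sim\mathcal{N}\bigl(e^tx,(e^{2t}-1)/2\bigr)$. At $x=k\pi$ with $k=\lceil 1/u\rceil$ one gets $q_V^u(x)-q_V(x)=\frac1u\int_0^u\sin(e^tx)\,dt+O(u)=(-1)^k\frac{1-\cos(ux)}{ux}+O(u)$, whose modulus tends to $2/\pi$, not $0$.

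So your proof, like the paper's, is valid only under an extra hypothesis that makes Lemma~\ref{lemma:moment-bound} hold uniformly in $x$ (bounded drift), or with the supremum restricted to compact sets of states. You should state that hypothesis explicitly rather than claim the regrouping disposes of the obstacle.
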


\begin{proof}[Proof sketch]
By Dynkin's formula,
\[
  \E_x^a[V(X_u^a)]-V(x)=\int_0^u \E_x^a[(L^aV)(X_t^a)]\,dt.
\]
Subtract $u(L^aV)(x)$ and use the Lipschitz property of $L^aV$ gives:
\[
  \Big|\E_x^a[V(X_u^a)]-V(x)-u(L^aV)(x)\Big|
  \le \int_0^u \E_x^a\!\big[\,|(L^aV)(X_t^a)-(L^aV)(x)|\,\big]dt
  \le C\int_0^u \E_x^a[\|X_t^a-x\|]dt.
\]
\cref{lemma:moment-bound} give $\E_x^a[\|X_t^a-x\|]\le C\sqrt{t}$ uniformly in $(x,a)$, and hence the right-hand side is $\mathcal{O}(u^{3/2})$. Through definitions of $q_V^u$ and $q_V$, dividing by $u$ and accounting for the discount factor $e^{-\beta u}=1 - \beta u + \mathcal{O}(u^2)$ yields $q_V^u(x,a)=q_V(x,a)+\mathcal{O}(\sqrt{u})$ uniformly in $(x,a)$, proving \eqref{eq:appendix-qV-u-sqrtu-bias}. The Hamiltonian bound \eqref{eq:appendix-Hu-H-sqrtu-bias} follows from a similar Lipschitz argument as in Lemma~\ref{lemma:logsumexp-and-H-bias}.
\end{proof}

\subsection{Smoothness assumptions for quantitative $u$-rates}\label{sec:appendix-q-assumptions}

\begin{assumption}[Smooth $C^2$ assumptions]\label{assumption:dynamics-smooth}
In addition to Assumption~\ref{assumption:dynamics}, assume:
\begin{enumerate}
  \item For each $a\in\mathcal{A}$, the drift and diffusion maps
  $x\mapsto b(x,a)$ and $x\mapsto \sigma(x,a)$ are $C^2$ with bounded derivatives
  up to order $2$, uniformly in $a$, and globally Lipschitz in $x$ uniformly in $a$.
  \item For each $a\in\mathcal{A}$, the reward map $x\mapsto r(x,a)$ is $C^2$
  with bounded derivatives up to order $2$, uniformly in $a$, and globally Lipschitz in $x$.
\end{enumerate}
\end{assumption}

\begin{assumption}[Value function smoothness]\label{assumption:value-C4}
In addition to standard assumptions~\ref{assumption:reward} and \ref{assumption:test-functions}, assume that the value function domain $\mathcal{D} \subseteq C_b^4(\mathbb{R}^d)$ so that for value function $V$, and for each $a\in\mathcal{A}$, both $L^aV$ and $(L^a)^2V$ are bounded and continuous.
\end{assumption}

\textbf{Notation.} Fix $a\in\mathcal{A}$. Let $(X_t^a)_{t\ge0}$ solve the controlled SDE
\[
  dX_t^a = b(X_t^a,a)\,dt + \sigma(X_t^a,a)\,dW_t,\qquad X_0^a=x,
\]
Recall the fixed action controlled generator $L^a$ is:
\[
  (L^a f)(x)
  :=
  b(x,a)\cdot\nabla f(x)
  +\frac12\mathrm{Tr}\!\big(\sigma(x,a)\sigma(x,a)^\top \nabla^2 f(x)\big).
\]
We additionally define the Markov semigroup
\begin{equation}\label{eq:appendix-semigroup-Pta}
  (P_t^a f)(x) := \E_x^a[f(X_t^a)].
\end{equation}

For $V\in C_b^2(\mathbb{R}^d)$, recall the $q$-function:
\begin{equation}\label{eq:appendix-qV-definition}
  q_V(x,a) := r(x,a) + (L^aV)(x) - \beta V(x).
\end{equation}
Together with its finite-step estimator $q_V^u$ with time step $u > 0$:
\begin{equation}\label{eq:appendix-qV-u-definition}
  q_V^u(x,a)
  :=
  \frac{e^{-\beta u}\,\E_x^a\!\big[V(X_u^a)\big] - V(x)}{u}
  \;+\; r(x,a).
\end{equation}
And the Richardson version $\tilde q_V^u$:
\begin{equation}\label{eq:appendix-qV-Richardson-definition}
  \tilde q_V^u(x,a) := 2\,q_V^{u/2}(x,a) - q_V^u(x,a).
\end{equation}

We now start with a series of lemmas that serve as the main ingredients for the proof of \cref{thm:q_convergence}.

\subsection{Bias expansions for $q_V^u$ and $\tilde q_V^u$}\label{sec:appendix-q-bias}

We begin with a quantitative small-time expansion of the semigroup $P_t^a$.

\begin{lemma}[Second-order semigroup expansion]\label{lemma:semigroup-second-order-expansion}
Under assumptions~\ref{assumption:dynamics-smooth} and \ref{assumption:value-C4}, for $a\in\mathcal{A}$, $V \in C_b^4(\mathbb{R}^d)$, and for all $t \in [0,1]$, we have
\begin{equation}\label{eq:appendix-semigroup-second-order-expansion}
  (P_t^aV)(x)
  = V(x) + t(L^aV)(x) + R_V(t;x,a),
\end{equation}
where the remainder satisfies
\begin{equation}\label{eq:appendix-semigroup-second-order-remainder}
  \sup_{x\in\mathbb{R}^d,\,a\in\mathcal{A}}
  |R_V(t;x,a)|
  \le \frac{t^2}{2}\sup_{a\in\mathcal{A}}\|(L^a)^2V\|_\infty
  \le C_V\,t^2,
\end{equation}
for a constant $C_V<\infty$ depending only on $V$ and the coefficients.
\end{lemma}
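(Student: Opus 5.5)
The plan is to apply Dynkin's formula (Lemma~\ref{lemma:dynkin-formula}) twice, once to $V$ and once to $L^aV$, giving a second-order Taylor-type expansion of the semigroup in $t$ with an integral remainder. Fix $a\in\mathcal{A}$ and use the constant-action diffusion $X^a$. Since $V\in C_b^2$, Dynkin's formula applied to $V$ on $[0,t]$ gives
\[
  (P_t^aV)(x) = V(x) + \int_0^t (P_s^a L^aV)(x)\,ds .
\]
Under Assumption~\ref{assumption:value-C4}, $V\in C_b^4$ and the coefficients are $C^2$ with bounded derivatives (Assumption~\ref{assumption:dynamics-smooth}). Hence $g:=L^aV$ is $C^2$, and we are told that $L^ag=(L^a)^2V$ is bounded and continuous. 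Applying Dynkin again to $g$ on $[0,s]$ gives
\[
  (P_s^a g)(x) = g(x) + \int_0^s (P_r^a (L^a)^2V)(x)\,dr .
\]

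Substituting the second identity into the first yields
\[
  (P_t^aV)(x) = V(x) + t(L^aV)(x) + R_V(t;x,a),
  \qquad
  R_V(t;x,a) := \int_0^t\!\!\int_0^s (P_r^a (L^a)^2V)(x)\,dr\,ds .
\]
The semigroup is a positive contraction on bounded functions, so $|P_r^a (L^a)^2V|\le \|(L^a)^2V\|_\infty$. Integrating over the triangle $0\le r\le s\le t$, whose area is $t^2/2$, gives
\[
  |R_V(t;x,a)|\le \tfrac{t^2}{2}\,\|(L^a)^2V\|_\infty .
\]
Taking the supremum over $x$ and $a$ then gives \eqref{eq:appendix-semigroup-second-order-remainder}, with $C_V:=\tfrac12\sup_a\|(L^a)^2V\|_\infty$.

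The main obstacle is justifying the second application of Dynkin's formula. Lemma~\ref{lemma:dynkin-formula} is stated for $f\in C_b^2$, but $g=L^aV$ involves $b(x,a)$, which grows linearly, so $g$ need not be bounded. Assumption~\ref{assumption:value-C4} asserts that $L^aV$ is bounded and continuous. I would combine this with the $C^4$ regularity of $V$ and the $C^2$ coefficients, which make $\nabla g$ and $\nabla^2 g$ continuous. I would then run Itô's formula with a localization: stop at the exit time $\tau_n$ from the ball of radius $n$, so the stochastic integral is a true martingale, and pass $n\to\infty$ by dominated convergence, using boundedness of $g$ and $(L^a)^2V$. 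This yields the identity without needing bounded derivatives of $g$.

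The uniformity in $a$ requires no extra work, since the only constant used is $\sup_a\|(L^a)^2V\|_\infty$. This is finite under the assumption, interpreted as uniform in $a$, together with the uniform bounds on the coefficients.
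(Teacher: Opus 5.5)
Your proposal is correct and follows the paper's proof: Dynkin's formula is applied to $V$ and then to $L^aV$, and the iterated integral is bounded by $\tfrac{t^2}{2}\|(L^a)^2V\|_\infty$; the paper rewrites that integral via Fubini as $\int_0^t (t-r)(P_r^a(L^a)^2V)(x)\,dr$, which is equivalent. Two details in your version are ones the paper passes over silently: the localization argument justifying the second Dynkin step for $L^aV$, which need not lie in $C_b^2$, and the remark that finiteness of $\sup_a\|(L^a)^2V\|_\infty$ has to be read into the assumption.
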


\begin{proof}
Fix $a$ and write $P_t:=P_t^a$ and $L:=L^a$ for notation convenience.
Dynkin's formula (\cref{lemma:dynkin-formula}) gives
\[
  (P_tV)(x)=V(x)+\int_0^t(P_sLV)(x)\,ds.
\]
Applying the same identity to $LV$ one more time yields:
\[
  (P_sLV)(x)=(LV)(x)+\int_0^s(P_rL^2V)(x)\,dr.
\]
Substitute into the first identity and use Fubini theorem, we have:
\[
  (P_tV)(x)
  =V(x)+t(LV)(x)+\int_0^t(t-r)(P_rL^2V)(x)\,dr.
\]
Define $R_V(t;x,a):=\int_0^t(t-r)(P_r^a(L^a)^2V)(x)\,dr$.
Then
\[
  |R_V(t;x,a)|\le \|(L^a)^2V\|_\infty\int_0^t(t-r)\,dr=\frac{t^2}{2}\|(L^a)^2V\|_\infty,
\]
which gives \eqref{eq:appendix-semigroup-second-order-remainder}.
\end{proof}

\begin{lemma}[First-order bias for $q_V^u$: $\|q_V^u-q_V\|_\infty=\mathcal{O}(u)$]\label{lemma:qV-u-first-order-bias}
Under Assumptions~\ref{assumption:dynamics-smooth} and \ref{assumption:value-C4}, there exist $u_0>0$ and $C_q<\infty$ such that for all $u\in(0,u_0]$,
\begin{equation}\label{eq:appendix-qV-u-first-order-bias}
  \|q_V^u-q_V\|_\infty \le C_q\,u.
\end{equation}
\end{lemma}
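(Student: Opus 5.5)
The plan is to substitute the second-order semigroup expansion of \cref{lemma:semigroup-second-order-expansion} directly into the definition \eqref{eq:appendix-qV-u-definition} of $q_V^u$. The discount factor will be handled by an elementary Taylor bound. Fix $a\in\mathcal{A}$ and $x\in\mathbb{R}^d$. Since $\E_x^a[V(X_u^a)]=(P_u^aV)(x)$, the lemma (valid for $u\in(0,1]$) gives
\[
  \E_x^a[V(X_u^a)] = V(x) + u(L^aV)(x) + R_V(u;x,a),
  \qquad |R_V(u;x,a)|\le C_V u^2,
\]
with $C_V$ uniform in $(x,a)$. For the discount we write $e^{-\beta u} = 1-\beta u + \rho(u)$, where $0\le \rho(u)\le \beta^2u^2/2$ for $u\ge 0$ by Taylor's theorem with Lagrange remainder.

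Multiplying these two expansions and subtracting $V(x)$, we get
\[
  e^{-\beta u}(P_u^aV)(x) - V(x)
  = u\big((L^aV)(x)-\beta V(x)\big) + E(u;x,a),
\]
where the error collects three kinds of terms. The first is $-\beta u\cdot u(L^aV)(x)$, bounded by $\beta u^2\|L^aV\|_\infty$. The second is $\rho(u)\big(V(x)+u(L^aV)(x)\big)$, bounded by $\tfrac{\beta^2u^2}{2}(\|V\|_\infty+\|L^aV\|_\infty)$ for $u\le 1$. The third is $e^{-\beta u}R_V(u;x,a)$, bounded by $C_Vu^2$. Assumption~\ref{assumption:value-C4} gives $\sup_a\|L^aV\|_\infty<\infty$ and $\sup_a\|(L^a)^2V\|_\infty<\infty$ (uniformity in $a$ is needed; this is implicit in the constant $C_V$ of the previous lemma), and $V$ is bounded. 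Hence $\sup_{x,a}|E(u;x,a)|\le C u^2$ with $C$ independent of $u$.

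Dividing by $u$ and adding $r(x,a)$, the definitions \eqref{eq:appendix-qV-u-definition} and \eqref{eq:appendix-qV-definition} give $q_V^u(x,a)-q_V(x,a)=E(u;x,a)/u$. Therefore $\|q_V^u-q_V\|_\infty\le Cu$ for all $u\in(0,u_0]$ with $u_0=1$, which proves \eqref{eq:appendix-qV-u-first-order-bias} with $C_q=C$.

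The only delicate point is the uniformity of constants over $a\in\mathcal{A}$ and $x\in\mathbb{R}^d$, in particular a uniform bound on $\sup_a\|L^aV\|_\infty$. I would justify this from $V\in C_b^4$ together with bounded $\sigma$ from Assumption~\ref{assumption:dynamics}. For the drift term, I would either use Assumption~\ref{assumption:value-C4} directly, since it asserts boundedness of $L^aV$, or read its uniformity in $a$ as part of the assumption. Everything else is routine algebra.
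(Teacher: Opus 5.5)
Your proposal is correct and follows the paper's proof: substitute the second-order semigroup expansion of \cref{lemma:semigroup-second-order-expansion} and $e^{-\beta u}=1-\beta u+\mathcal{O}(u^2)$ into the definition of $q_V^u$, collect the $\mathcal{O}(u^2)$ terms uniformly in $(x,a)$, and divide by $u$. Your version is more explicit about the individual error terms, and it flags the uniformity of $\sup_a\|L^aV\|_\infty$ over $a$, which the paper also needs but uses without comment when it asserts the bound holds uniformly in $(x,a)$.
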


\begin{proof}
Fix $(x,a)$. By Lemma~\ref{lemma:semigroup-second-order-expansion},
\[
  \E_x^a[V(X_u^a)] = V(x) + u(L^aV)(x) + \mathcal{O}(u^2),
\]
uniformly in $(x,a)$. Also $e^{-\beta u}=1-\beta u+\mathcal{O}(u^2)$, hence
\[
\begin{aligned}
  e^{-\beta u}\E_x^a[V(X_u^a)] - V(x)
  &= \big(1-\beta u+\mathcal{O}(u^2)\big)\big(V(x)+u(L^aV)(x)+\mathcal{O}(u^2)\big)-V(x) \\
  &= u\big((L^aV)(x)-\beta V(x)\big) + \mathcal{O}(u^2),
\end{aligned}
\]
uniformly in $(x,a)$. Divide by $u$ and add $r(x,a)$, we obtain
\[
  q_V^u(x,a) = r(x,a) + (L^aV)(x) - \beta V(x) + \mathcal{O}(u)
  = q_V(x,a) + \mathcal{O}(u),
\]
uniformly in $(x,a)$, proving \eqref{eq:appendix-qV-u-first-order-bias}.
\end{proof}

\begin{lemma}[Richardson bias for $\tilde q_V^u$: $\|\tilde q_V^u-q_V\|_\infty=\mathcal{O}(u^2)$]\label{lemma:qV-u-Richardson-bias}
Under Assumptions~\ref{assumption:dynamics-smooth} and \ref{assumption:value-C4}, there exist $u_0>0$ and $\tilde C_q<\infty$ such that for all $u\in(0,u_0]$,
\begin{equation}\label{eq:appendix-qV-u-Richardson-bias}
  \|\tilde q_V^u-q_V\|_\infty \le \tilde C_q\,u^2.
\end{equation}
\end{lemma}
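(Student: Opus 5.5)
The plan is to push the expansion behind Lemma~\ref{lemma:qV-u-first-order-bias} one order further, so that the $\mathcal{O}(u)$ bias of $q_V^u$ is identified explicitly. Richardson extrapolation is linear, so it cancels any bias of the form $c(x,a)\,u$ exactly: $2\cdot c\,(u/2)-c\,u=0$. The task therefore reduces to showing
\[
q_V^u(x,a)=q_V(x,a)+c_V(x,a)\,u+\mathcal{O}(u^2)
\]
uniformly in $(x,a)$, with $c_V$ independent of $u$.

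\textbf{Step 1 (third-order semigroup expansion).} I would iterate Dynkin's formula (Lemma~\ref{lemma:dynkin-formula}) once more than in the proof of Lemma~\ref{lemma:semigroup-second-order-expansion}. This gives
\[
P_t^aV=V+tL^aV+\tfrac{t^2}{2}(L^a)^2V+\int_0^t\tfrac{(t-r)^2}{2}P_r^a(L^a)^3V\,dr .
\]
To make the remainder $\mathcal{O}(t^3)$ uniformly, I need $\sup_a\|(L^a)^3V\|_\infty<\infty$.

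\textbf{Main obstacle: regularity.} Assumption~\ref{assumption:value-C4} only guarantees that $(L^a)^2V$ is bounded. A uniform bound on $(L^a)^3V$ would require $V\in C_b^6$ and $C^4$ coefficients. I would first try to avoid this. Use the exact remainder $\int_0^t(t-r)P_r^a(L^a)^2V\,dr$ from Lemma~\ref{lemma:semigroup-second-order-expansion}, and write $P_r^a g=g+\mathcal{O}(\sqrt r)$ for $g=(L^a)^2V$. This step needs $g$ to be Lipschitz, which I would take from Assumptions~\ref{assumption:dynamics-smooth} and \ref{assumption:value-C4}, or, if necessary, add as a mild strengthening. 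It yields
\[
R_V(t)=\tfrac{t^2}{2}(L^a)^2V+\mathcal{O}(t^{5/2}),
\]
and hence a Richardson bias of order $u^{3/2}$. To reach the stated $u^2$ rate, I would explicitly strengthen the assumption so that $(L^a)^3V$ is bounded uniformly in $a$, and state this in the proof.

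\textbf{Step 2 (expansion of $q_V^u$).} With $e^{-\beta u}=1-\beta u+\tfrac{\beta^2u^2}{2}+\mathcal{O}(u^3)$, I would multiply out $e^{-\beta u}P_u^aV-V$, divide by $u$, and add $r$. This produces the expansion above with
\[
c_V=\tfrac12\big((L^a)^2V-2\beta L^aV+\beta^2V\big)=\tfrac12(L^a-\beta)^2V,
\]
which is bounded uniformly in $(x,a)$.

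\textbf{Step 3 (Richardson cancellation).} Form $\tilde q_V^u=2q_V^{u/2}-q_V^u$. The $q_V$ terms combine to $q_V$, and the linear terms cancel because $2c_V(u/2)-c_Vu=0$. The remainders add to at most $(2/4+1)Cu^2$. This gives $\|\tilde q_V^u-q_V\|_\infty\le\tilde C_q u^2$ for all $u\le u_0:=1$.
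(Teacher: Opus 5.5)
Your route is the paper's: write $q_V^u=q_V+c_1u+\mathcal{O}(u^2)$ with $c_1$ independent of $u$, and let $2q_V^{u/2}-q_V^u$ cancel the linear term. Your coefficient $c_1=\tfrac12(L^a-\beta)^2V$ and the bookkeeping in Steps~2--3 are correct.

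Where you differ is that you flag a regularity issue the paper's proof passes over. The paper asserts that the proof of Lemma~\ref{lemma:qV-u-first-order-bias} yields $q_V^u=q_V+uc_1+u^2c_2(u;x,a)$ with $c_2$ uniformly bounded. That proof only bounds the remainder $\int_0^u(u-r)P_r^a(L^a)^2V\,dr$ by $\mathcal{O}(u^2)$. Splitting it as $\tfrac{u^2}{2}(L^a)^2V+\mathcal{O}(u^3)$ requires $\sup_a\|P_r^a(L^a)^2V-(L^a)^2V\|_\infty=\mathcal{O}(r)$, which is essentially your extra hypothesis.

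This strengthening is genuinely needed. Take $d=1$, $b\equiv0$, $\sigma\equiv1$, $r\equiv0$, and $V(x)=\chi(x)|x|^{4+\gamma}$, where $\gamma\in(0,1)$ and $\chi$ is a smooth compactly supported cutoff equal to $1$ near $0$. Every hypothesis used in the lemma holds, and $q_V(0)=0$. Moreover $\mathbb{E}\,V(W_t)=m\,t^{2+\gamma/2}$ up to an exponentially small error, with $m:=\mathbb{E}|Z|^{4+\gamma}$ and $Z\sim\mathcal{N}(0,1)$. Since $V(0)=0$,
\[
\tilde q_V^u(0)-q_V(0)=m\,u^{1+\gamma/2}\bigl(2^{-\gamma/2}e^{-\beta u/2}-e^{-\beta u}\bigr)+o(u^2).
\]
This is of exact order $u^{1+\gamma/2}\gg u^2$. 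The case $\gamma=1$, where $(L^a)^2V$ is Lipschitz, shows that your fallback rate $u^{3/2}$ is sharp. So the lemma as stated can fail, and your version, with the hypothesis made explicit, is the right one.

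One precision: the third Dynkin iteration needs Dynkin's formula applied to $(L^a)^2V$. So state the assumption as $(L^a)^2V\in C_b^2$ with $(L^a)^3V$ bounded uniformly in $a$. Alternatively, assume the $\mathcal{O}(r)$ semigroup bound above directly and plug it into the exact second-order remainder, which avoids the third expansion altogether.
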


\begin{proof}
Proof of Lemma~\ref{lemma:qV-u-first-order-bias} implies the expansion of $q_V^u(x,a)$ contains only $u$ and $u^2$ terms with no $u^h,\,h \in (1, 2)$ in between:
\[
  q_V^u(x,a)=q_V(x,a)+u\,c_1(x,a)+u^2\,c_2(u;x,a),
\]
where $c_1$ is bounded and $c_2$ is uniformly bounded for $u$ small.
Evaluating at $u/2$ and forming $\tilde q_V^u=2q_V^{u/2}-q_V^u$ cancels the linear term:
\[
  \tilde q_V^u(x,a)
  = q_V(x,a)+u^2\Big(\tfrac12c_2(u/2;x,a)-c_2(u;x,a)\Big),
\]
which is $\mathcal{O}(u^2)$ uniformly in $(x,a)$.
\end{proof}

\subsection{Hamiltonians and Lipschitz lemmas}\label{sec:appendix-Hamiltonians}

Recall the (soft) Hamiltonian for $\alpha>0$,
\begin{equation}\label{eq:appendix-HV-definition}
  H^{(\alpha)}(V)(x)
  := \alpha \log \int_{\mathcal{A}} \exp\!\Big(\frac{q_V(x,a)}{\alpha}\Big)\,da,
\end{equation}
and its finite-horizon analogues on approximations $q_V^u$ and $\tilde q_V^u$ of $q$-function:
\begin{align}
  H_u^{(\alpha)}(V)(x)
  &:= \alpha \log \int_{\mathcal{A}} \exp\!\Big(\frac{q_V^u(x,a)}{\alpha}\Big)\,da,
  \label{eq:appendix-Hu-definition}\\
  \tilde H_u^{(\alpha)}(V)(x)
  &:= \alpha \log \int_{\mathcal{A}} \exp\!\Big(\frac{\tilde q_V^u(x,a)}{\alpha}\Big)\,da.
  \label{eq:appendix-Htilde-definition}
\end{align}

\begin{lemma}[Lipschitz Hamiltonian bias]\label{lemma:logsumexp-and-H-bias}
Fix $\alpha>0$ and let $z,\bar z:\mathcal{A}\to\mathbb{R}$ be bounded. Then
\begin{equation}\label{eq:appendix-logsumexp-Lipschitz}
  \Big|
  \alpha\log\!\int_{\mathcal{A}} e^{z(a)/\alpha}\,da
  -
  \alpha\log\!\int_{\mathcal{A}} e^{\bar z(a)/\alpha}\,da
  \Big|
  \le \|z-\bar z\|_\infty.
\end{equation}
Consequently, for any bounded $V$,
\begin{align}
  \|H_u^{(\alpha)}(V)-H^{(\alpha)}(V)\|_\infty
  &\le \|q_V^u-q_V\|_\infty,
  \label{eq:appendix-Hu-H-bound}\\
  \|\tilde H_u^{(\alpha)}(V)-H^{(\alpha)}(V)\|_\infty
  &\le \|\tilde q_V^u-q_V\|_\infty.
  \label{eq:appendix-Htilde-H-bound}
\end{align}
\end{lemma}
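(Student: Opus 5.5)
The proof of the log-sum-exp Lipschitz bound \eqref{eq:appendix-logsumexp-Lipschitz} rests on monotonicity and translation equivariance of the soft aggregation. Set $\delta:=\|z-\bar z\|_\infty<\infty$, which is finite because $z$ and $\bar z$ are bounded. Then pointwise on $\mathcal{A}$ we have $\bar z(a)-\delta\le z(a)\le \bar z(a)+\delta$. Since $t\mapsto e^{t/\alpha}$ is increasing and the integral against $da$ is monotone,
\[
e^{-\delta/\alpha}\int_{\mathcal{A}} e^{\bar z(a)/\alpha}\,da\;\le\;\int_{\mathcal{A}} e^{z(a)/\alpha}\,da\;\le\; e^{\delta/\alpha}\int_{\mathcal{A}} e^{\bar z(a)/\alpha}\,da .
\]
All integrals are finite and strictly positive, because $\mathcal{A}$ is compact (so of finite, positive reference measure) and the exponents are bounded. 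We may therefore take $\alpha\log$ throughout, which is increasing. This gives
\[
\alpha\log\int_{\mathcal{A}} e^{\bar z/\alpha}\,da-\delta\le \alpha\log\int_{\mathcal{A}} e^{z/\alpha}\,da\le \alpha\log\int_{\mathcal{A}} e^{\bar z/\alpha}\,da+\delta,
\]
which is exactly \eqref{eq:appendix-logsumexp-Lipschitz}.

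An alternative route, useful as a sanity check, goes through \cref{lemma:entropy-KL-identity}. The soft aggregation $\mathcal{Q}^{(\alpha)}$ is a supremum over $\pi$ of the affine functionals $\E_\pi[z-\alpha\log\pi]$. Each of these is $1$-Lipschitz in $z$ with respect to the sup-norm, and a supremum of $1$-Lipschitz functionals is again $1$-Lipschitz. The direct sandwich argument above is cleaner, however, and avoids attainment issues.

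For the consequences, fix $x\in\mathbb{R}^d$ and apply the inequality with $z(a):=q_V^u(x,a)$ and $\bar z(a):=q_V(x,a)$. Both are bounded in $a$ for bounded $V\in\mathcal{D}$: $r$ is bounded, $L^aV$ is bounded under Assumption~\ref{assumption:value-C4}, and $q_V^u$ is bounded by $2\|V\|_\infty/u+R_{\max}$. We obtain
\[
|H_u^{(\alpha)}(V)(x)-H^{(\alpha)}(V)(x)|\le \sup_a|q_V^u(x,a)-q_V(x,a)|\le \|q_V^u-q_V\|_\infty .
\]
Taking the supremum over $x$ yields \eqref{eq:appendix-Hu-H-bound}. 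The same argument with $z=\tilde q_V^u(x,\cdot)$ gives \eqref{eq:appendix-Htilde-H-bound}.

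There is no real obstacle here. The only point needing care is ensuring that the partition integrals are finite and positive so that the logarithms are well defined. This holds because $\mathcal{A}$ is compact with finite reference measure and the functions $q$ are bounded.
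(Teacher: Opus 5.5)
Your proposal is correct and follows the paper's proof: both sandwich $z$ between $\bar z\pm\|z-\bar z\|_\infty$, exponentiate, integrate, take $\alpha\log$, and then apply the bound pointwise in $x$ with $z=q_V^u(x,\cdot)$ (or $\tilde q_V^u(x,\cdot)$) and $\bar z=q_V(x,\cdot)$ before taking the supremum over $x$. One small correction to your well-definedness remark: that the reference measure of $\mathcal{A}$ is positive is a standing assumption needed for $H^{(\alpha)}$ to be defined at all, not a consequence of compactness (a single point is compact but has Lebesgue measure zero).
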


\begin{proof}
For all $a$, $z(a)\le \bar z(a)+\|z-\bar z\|_\infty$, hence
$e^{z(a)/\alpha}\le e^{\|z-\bar z\|_\infty/\alpha}e^{\bar z(a)/\alpha}$.
Integrate and take logs to get
\[
  \log\!\int e^{z/\alpha}
  \le \frac{\|z-\bar z\|_\infty}{\alpha}+\log\!\int e^{\bar z/\alpha}.
\]
Reverse the roles of $z,\bar z$ to obtain the other inequality, we can deduce \eqref{eq:appendix-logsumexp-Lipschitz}. Apply this pointwise in $x$ with $(z,\bar z)=(q_V^u(x,\cdot),q_V(x,\cdot))$ and $(z,\bar z)=(\tilde q_V^u(x,\cdot),q_V(x,\cdot))$, and then take the supremum over $x$, we obtain \eqref{eq:appendix-Hu-H-bound}--\eqref{eq:appendix-Htilde-H-bound}.
\end{proof}

Now we show that $q_V^u$ is $\mathcal{O}(1/u)$-Lipschitz in $V$
\begin{lemma}[Lipschitz dependence of $q_V^u$ on $V$]\label{lemma:qV-u-Lipschitz-in-V}
For any bounded functions $V,W$ and any $u>0$,
\begin{equation}\label{eq:appendix-qV-u-Lipschitz-in-V}
  \|q_V^u-q_W^u\|_\infty \le \frac{2}{u}\,\|V-W\|_\infty.
\end{equation}
Consequently, for the Richardson rate,
\begin{equation}\label{eq:appendix-qV-Richardson-Lipschitz-in-V}
  \|\tilde q_V^u-\tilde q_W^u\|_\infty
  \le \frac{6}{u}\,\|V-W\|_\infty.
\end{equation}
\end{lemma}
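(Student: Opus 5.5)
The plan is to reduce both inequalities to operator bounds on the discounted transition operator, using only that it is a positivity-preserving sup-norm contraction. No smoothness of $V$ or $W$ is needed, since all derivative terms cancel in the difference.

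\textbf{First bound.} Set $D:=V-W$ and, for fixed $a$, $G_s:=e^{-\beta s}P^a_s$, where $P^a_s$ is the Markov semigroup \eqref{eq:appendix-semigroup-Pta}. The reward $r(x,a)$ is common to $q_V^u$ and $q_W^u$ and cancels, so by \eqref{eq:appendix-qV-u-definition}
\[
  q_V^u(x,a)-q_W^u(x,a)=\frac{(G_uD)(x)-D(x)}{u}.
\]
Because $P^a_u$ is an expectation, $\|G_uD\|_\infty\le e^{-\beta u}\|D\|_\infty\le\|D\|_\infty$. The triangle inequality then gives $|q_V^u-q_W^u|\le (1+e^{-\beta u})\|D\|_\infty/u\le 2\|D\|_\infty/u$, uniformly in $(x,a)$. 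This is \eqref{eq:appendix-qV-u-Lipschitz-in-V}.

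\textbf{Richardson bound.} Here I would not apply the first bound to each term separately, because $q^{u/2}$ carries the worse constant $2/(u/2)=4/u$. Instead I would combine the terms before estimating. Write $G:=G_{u/2}$ and use the semigroup property $G_u=G^2$. A direct computation gives
\[
  \tilde q_V^u-\tilde q_W^u
  =\frac{4GD-4D}{u}-\frac{G^2D-D}{u}
  =-\frac{(3I-4G+G^2)D}{u}
  =-\frac{(I-G)(3I-G)D}{u}.
\]
So it suffices to bound the operator norm of $3I-4G+G^2$ on $(C_b,\|\cdot\|_\infty)$ by $6$.

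I would try to get $6$ from the decomposition $3I-4G+G^2=(I-G)^2+2(I-G)$, using positivity of $G$ to control $\|I-G\|$ better than the trivial bound $2$. One route is to exploit $G\mathbf 1=e^{-\beta u/2}\mathbf 1$ together with a sign argument: for $D$ with $\|D\|_\infty\le1$, the quantity $GD(x)$ is an average of values in $[-1,1]$. If that fails, I would instead aim at the cruder grouping $|4GD-3D|+|G^2D|$, and try to show that $4GD-3D$ is at most $5$ in norm after using $\beta>0$ to absorb slack.

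\textbf{Main obstacle.} The main obstacle is exactly this constant, and I expect the naive triangle inequality to give only $(4+1+3)/u=8/u$. Worse, $\|3I-4G+G^2\|$ can approach $8$ when $G$ nearly reverses the sign of a rapidly oscillating $D$. So $6/u$ may only be reachable under additional structure, for instance restricting $D$ to a class where $\|(I-G)D\|_\infty\le\|D\|_\infty$. If that step fails, the argument above still yields $\|\tilde q_V^u-\tilde q_W^u\|_\infty\le\frac{8}{u}\|V-W\|_\infty$. That weaker bound is all that Theorem~\ref{thm:q_convergence} uses, since there the Lipschitz constant only enters as some constant $L/u$.
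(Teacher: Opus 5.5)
Your first bound is exactly the paper's argument. For the Richardson bound, you are right not to claim $6/u$. The paper's proof applies the first bound termwise and writes $2\cdot\frac{2}{u/2}+\frac{2}{u}=\frac{6}{u}$. But $2\cdot\frac{2}{u/2}=\frac{8}{u}$, so that route only yields $\frac{10}{u}$, which is what the paper itself gets for the identical computation in Lemma~\ref{lemma:random-time-q-Lipschitz}. Your grouping $u(\tilde q_V^u-\tilde q_W^u)=4G_{u/2}D-G_uD-3D$ gives the sharper bound $\frac{3+4c+c^2}{u}\le\frac{8}{u}$, with $c:=e^{-\beta u/2}$; the factorization via $G_u=G_{u/2}^2$ is not even needed for this. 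Since Theorem~\ref{thm:q_convergence} only uses some constant $L/u$, this is all that is needed downstream.

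There is no missing idea on your side, because the constant $6$ is false in the stated generality. Take $d=1$, $b\equiv\mu$, $\sigma\equiv 1$ (admissible under Assumptions~\ref{assumption:dynamics} and~\ref{assumption:dynamics-smooth}), $x=0$, $W=0$. Let $V=D$ be continuous with $|D|\le 1$, $D=-1$ on $[\mu u/4,3\mu u/4]$ and $D=1$ outside $[\mu u/8,7\mu u/8]$. Since $X_t^a=\mu t+W_t$, letting $\mu\sqrt{u}\to\infty$ gives $D(0)=1$, $G_{u/2}D(0)\to -c$ and $G_uD(0)\to c^2$. Hence $u\,|\tilde q_V^u-\tilde q_W^u|(0,a)\to 3+4c+c^2$, which exceeds $6$ as soon as $c>\sqrt{7}-2$, i.e.\ roughly $\beta u<0.87$. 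Even for standard one-dimensional Brownian motion, the exact supremum over $\|D\|_\infty\le 1$ is $3+\int|4c\,p_{u/2}(x,\cdot)-c^2p_u(x,\cdot)|$, and it tends to about $6.06$ as $\beta u\to 0$.

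Two small corrections to your heuristics. First, the extremal $D$ is not rapidly oscillating, since the nondegenerate kernel averages such $D$ to $G_{u/2}D\approx 0$. It instead equals $+1$ at $x$, $-1$ on the bulk of the law of $X_{u/2}^a$, and $+1$ on the bulk of the law of $X_u^a$. Second, the route through $\|I-G_{u/2}\|<2$ cannot work. Taking $D=1$ near $x$ and $-1$ elsewhere shows $\|I-G_{u/2}\|=1+c$, and then $(1+c)^2+2(1+c)=3+4c+c^2$ just reproduces the sharp constant. The correct form of the second inequality is the $8/u$ bound you already have.
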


\begin{proof}
From \eqref{eq:appendix-qV-u-definition}, the $r(x,a)$ term cancels, so
\[
  q_V^u(x,a)-q_W^u(x,a)
  =
  \frac{e^{-\beta u}\E_x^a[(V-W)(X_u^a)]-(V-W)(x)}{u}.
\]
Taking absolute values and using
$|\E[(V-W)(X_u^a)]|\le \|V-W\|_\infty$ and $|(V-W)(x)|\le\|V-W\|_\infty$
gives \eqref{eq:appendix-qV-u-Lipschitz-in-V}.
For Richardson, use $\tilde q_V^u=2q_V^{u/2}-q_V^u$ and apply the first bound twice:
\[
  \|\tilde q_V^u-\tilde q_W^u\|_\infty
  \le 2\|q_V^{u/2}-q_W^{u/2}\|_\infty+\|q_V^u-q_W^u\|_\infty
  \le 2\cdot\frac{2}{u/2}\|V-W\|_\infty+\frac{2}{u}\|V-W\|_\infty
  =\frac{6}{u}\|V-W\|_\infty.
\]
\end{proof}

\subsection{One-step mismatch under stricter assumptions} \label{sec:appendix:one-step-mismatch-smooth}
Compared to \cref{lemma:per-step-mismatch} (which gives an $\mathcal{O}(\tau^{3/2})$ remainder under standard assumptions), with stricter smooth assumptions, we can upgrade the local mismatch to $\mathcal{O}(\tau^2)$. This allows the $\tau^2$ term in \cref{thm:q_convergence}.

\begin{lemma}[Smooth local expansion of $\Phi_\tau^{(\alpha)}$]\label{lemma:Phi-smooth-local-expansion}
Under Assumptions~\ref{assumption:dynamics-smooth}, and \ref{assumption:value-C4}, fix $\alpha\ge0$ and $V\in C_b^4(\mathbb{R}^d)$. Then there exist $\tau_0>0$ and $C_\Phi<\infty$ such that for all $\tau\in(0,\tau_0]$,
\begin{equation}\label{eq:appendix-Phi-smooth-local-expansion}
  \Phi_\tau^{(\alpha)}(V)
  =
  V+\tau H^{(\alpha)}(V)+R_\tau^{(\alpha)}(V),
  \qquad
  \|R_\tau^{(\alpha)}(V)\|_\infty \le C_\Phi\,\tau^2.
\end{equation}
Equivalently,
\begin{equation}\label{eq:appendix-Phi-vs-Ttau-smooth}
  \|\Phi_\tau^{(\alpha)}(V)-T_\tau^{(\alpha)}(V)\|_\infty \le C_\Phi\,\tau^2.
\end{equation}
\end{lemma}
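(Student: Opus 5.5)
The plan is to rerun the argument of Lemma~\ref{lemma:per-step-mismatch}, replacing the first-order Lipschitz estimates by the second-order semigroup expansion of Lemma~\ref{lemma:semigroup-second-order-expansion}. The order of work is: fixed action, then fixed policy (via relaxed dynamics), and finally the supremum over policies. Only the remainders change, from $\mathcal{O}(\tau^{3/2})$ to $\mathcal{O}(\tau^2)$. The identification of the leading term with $H^{(\alpha)}(V)$ is exactly as before, through Lemma~\ref{lemma:entropy-KL-identity} for $\alpha>0$ and through $\sup_a$ for $\alpha=0$.

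\textbf{Fixed action.} For constant action $a$, Lemma~\ref{lemma:semigroup-second-order-expansion} gives
\[
P_\tau^aV=V+\tau L^aV+\mathcal{O}(\tau^2)
\]
uniformly in $(x,a)$. Multiplying by $e^{-\beta\tau}=1-\beta\tau+\mathcal{O}(\tau^2)$ gives
\[
e^{-\beta\tau}P^a_\tau V=V+\tau(L^aV-\beta V)+\mathcal{O}(\tau^2).
\]
For the running reward I use Dynkin's formula on $r(\cdot,a)$, which is $C_b^2$ by Assumption~\ref{assumption:dynamics-smooth}:
\[
P_t^a r(\cdot,a)=r(\cdot,a)+\int_0^t P_s^aL^ar\,ds,
\]
with $\|L^ar\|_\infty$ bounded. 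Note that $b$ is unbounded, so I would need $b\cdot\nabla r$ to be bounded. I would either assume it or argue via the Lipschitz bound combined with the estimate $\E\|X_t-x\|=\mathcal{O}(t)$ for the drift part, while the martingale part has zero mean. Granting this, $|P_t^ar-r|\le Ct$, hence
\[
\int_0^\tau e^{-\beta t}P_t^a r\,dt=\tau r+\mathcal{O}(\tau^2).
\]
Together these give $\Phi^a_\tau V=V+\tau q_V(\cdot,a)+\mathcal{O}(\tau^2)$ uniformly in $a$. Taking $\sup_a$ handles $\alpha=0$, since the remainder bound is uniform in $a$.

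\textbf{Fixed policy ($\alpha>0$).} I apply the same two Dynkin iterations to the relaxed diffusion \eqref{eq:sde-relaxed} with generator $L^\pi$. The terminal remainder is
\[
\int_0^\tau(\tau-r)\,\E\bigl[(L^\pi)^2V(\tilde X_r)\bigr]\,dr,
\]
and this requires $\sup_\pi\|(L^\pi)^2V\|_\infty<\infty$. Applying $L^\pi$ to $L^\pi V$ differentiates $\tilde b(\cdot;\pi)$ and $\tilde\sigma\tilde\sigma^\top(\cdot;\pi)$ in $x$, and hence differentiates $\pi(da\mid x)$ in $x$.

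\textbf{Main obstacle.} This uniform bound on $(L^\pi)^2V$ over the admissible policy class is the step I expect to be hardest. It needs $x$-regularity of $\pi$ up to second order, which is more than Assumption~\ref{assumption:test-functions}(2) provides. For the relaxed reward I likewise need $\|L^\pi\tilde r^{(\alpha)}(\cdot;\pi)\|_\infty$ bounded, including the entropy term $-\alpha\int\log\pi\,d\pi$.

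My first attempt would avoid differentiating $\pi$ at all. I would restrict the supremum to a policy class whose densities are $C^2$ in $x$ with uniformly bounded derivatives, and argue that this loses nothing. The reason is that the near-optimizer is the Boltzmann policy $\pi_{q_V}\propto e^{q_V/\alpha}$. Since $V\in C_b^4$, this policy inherits $C^2$ regularity in $x$ from $q_V=r+L^aV-\beta V$. For the upper bound, any policy is dominated pointwise through Lemma~\ref{lemma:entropy-KL-identity}, and $\Phi_\tau^{(\alpha)}\le\Phi^{\pi_{q_V}}_\tau$ is not needed: one can instead compare with the Hamiltonian along the path. Concretely, I would use the submartingale-type bound
\[
\Phi^{\pi}_\tau V\le V+\int_0^\tau\E\bigl[e^{-\beta t}H^{(\alpha)}(V)(\tilde X_t)\bigr]dt.
\]
This follows from Dynkin's formula and pointwise maximization of $q_V-\alpha\log\pi$. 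Combined with $H^{(\alpha)}(V)\in C_b^2$ and the fact that $P_t$ acting on a $C_b^2$ function is that function plus $\mathcal{O}(t)$, this yields the $\tau^2$ upper remainder. The matching lower bound comes from plugging in $\pi_{q_V}$.

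If that route fails, I would fall back on adding the required uniform second-order regularity of admissible policies to the assumptions.
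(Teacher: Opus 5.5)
For $\alpha>0$ your sandwich argument is correct, and it takes a genuinely different route from the paper's.

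The paper expands $\Phi_\tau^{\pi,(\alpha)}V$ for each fixed $\pi$ by iterating Dynkin's formula with $L^\pi$. It simply asserts that $(L^\pi)^2V$ is bounded and that $\tilde r^{(\alpha)}(\cdot;\pi)$ is $C^2$ with bounded derivatives, uniformly over the admissible class. It then passes $\sup_\pi$ through the $\mathcal{O}(\tau^2)$ remainder. Those uniform bounds are exactly the obstacle you identified. They do not follow from \cref{assumption:test-functions}(2), which gives only Lipschitz $\log\pi$, with no constant uniform over the class.

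You instead use three facts. First, the exact identity $\Phi_\tau^{\pi,(\alpha)}V(x)=V(x)+\E\int_0^\tau e^{-\beta t}G_\alpha(V)(\tilde X_t,\pi)\,dt$, where $G_\alpha(V)(y,\pi):=\int q_V(y,a)\,\pi(da\mid y)-\alpha\int\log\pi(a\mid y)\,\pi(da\mid y)$. Second, the pointwise inequality $G_\alpha(V)(y,\pi)\le H^{(\alpha)}(V)(y)$ for every policy. Third, equality for $\pi_{q_V}$ (\cref{lemma:entropy-KL-identity}). This needs only $H^{(\alpha)}(V)\in C^2$ with $\sup_a\|L^aH^{(\alpha)}(V)\|_\infty<\infty$, that is, regularity of one function and admissibility of one policy. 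The entropy term is never differentiated. The dependence of $C_\Phi$ on $\alpha$ also becomes visible, through the term $\alpha^{-1}\mathrm{Cov}_{\pi_{q_V}}(\nabla_x q_V)$ in $\nabla^2H^{(\alpha)}(V)$.

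The unbounded-drift issue you raise is shared with the paper and must simply be assumed away, for instance by bounded $b$ or by $\sup_a\|L^ah\|_\infty<\infty$ for the relevant $h$. Your alternative fix does not work: $\E\|X_t-x\|$ is only $\mathcal{O}(\sqrt t)$, and even $\nabla h(x)\cdot\E[X_t-x]$ is $\mathcal{O}(t(1+\|x\|))$, which is not uniform in $x$.

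The genuine gap is at $\alpha=0$. ``Taking $\sup_a$'' identifies $\Phi^{(0)}_\tau$ with $\sup_a\Phi^a_\tau$. But $\Phi^{(0)}_\tau$ is a supremum over feedback policies, so constant actions give only the lower bound. The sandwich cannot supply the upper bound either, because $H^{(0)}(V)=\sup_a q_V(\cdot,a)$ has kinks.

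In fact the claimed $\tau^2$ rate is false for $\alpha=0$. Take $d=1$, $\mathcal{A}=[-1,1]$, $b(x,a)=a$, $\sigma\equiv1$, $r\equiv0$, and $V\in C_b^4$ with $V(x)=x^2/2$ near $0$. Then $q_V(y,a)=ay+\tfrac12-\tfrac{\beta}{2}y^2$ near $0$, and $T^{(0)}_\tau V(0)=\tau/2$. The feedback $a_t=\mathrm{sgn}(X_t)$, or a Lipschitz smoothing of it at scale $\tau$, gives $q_V(X_t,a_t)\ge H^{(0)}(V)(X_t)-\mathcal{O}(\tau)$. Together with $\E|X_t|\ge\sqrt{2t/\pi}-t$, this yields $\Phi^{(0)}_\tau V(0)-T^{(0)}_\tau V(0)\ge c\,\tau^{3/2}$ for small $\tau$.

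The paper's proof breaks at the same point, because its remainder cannot be uniform over arbitrarily sharp feedbacks. At $\alpha=0$ your method, with $H^{(0)}(V)$ merely Lipschitz in the upper bound, recovers only the $\tau^{3/2}$ rate of \cref{lemma:per-step-mismatch}. A $\tau^2$ rate would require a uniform Lipschitz bound on admissible feedbacks. So the lemma should be restricted to $\alpha>0$, with $C_\Phi$ depending on $\alpha$.
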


\begin{proof}
We follow similar steps as in the proof of \cref{lemma:per-step-mismatch}. First we consider a fixed policy $\pi$, and recall from \cref{lemma:per-step-mismatch} the policy-wise operator $\Phi_{\tau}^{\pi,(\alpha)}$:
\begin{equation}\label{eq:appendix-Phi-policywise}
\begin{aligned}
  (\Phi_{\tau}^{\pi,(\alpha)}V)(x)
  :=
  \E_x^\pi\Bigg[
    \int_0^\tau e^{-\beta t}\Big(r(X_t,a_t)-\alpha\log\pi(a_t\mid X_t)\Big)\,dt
    + e^{-\beta\tau}V(X_\tau)
  \Bigg].
\end{aligned}
\end{equation}
We will show that, uniformly in $x$ and uniformly over admissible $\pi$,
\begin{equation}\label{eq:appendix-policywise-expansion}
  \Phi_{\tau}^{\pi,(\alpha)}V(x)
  =
  V(x) + \tau\,G_\alpha(V)(x,\pi) + \mathcal{O}(\tau^2),
\end{equation}
where
\begin{equation}\label{eq:appendix-Galpha-definition}
  G_\alpha(V)(x,\pi)
  :=
  \int_{\mathcal{A}} q_V(x,a)\,\pi(da\mid x)
  -
  \alpha\int_{\mathcal{A}}\log\pi(a\mid x)\,\pi(da\mid x).
\end{equation}

To see this, we again treat separately the terminal term and the running term.

\textbf{Terminal term.}
Under Assumptions~\ref{assumption:dynamics-smooth} and \ref{assumption:reward},
the relaxed coefficients $(\tilde b(\cdot;\pi),\tilde\sigma(\cdot;\pi))$ are Lipschitz,
and the relaxed generator $L^\pi$ is well-defined and has bounded coefficients.
Moreover $V\in C_b^4$ implies $L^\pi V$ and $(L^\pi)^2V$ are bounded.
Hence the same Dynkin-iteration argument as in
Lemma~\ref{lemma:semigroup-second-order-expansion} (with $L^a$ replaced by $L^\pi$)
yields the second-order expansion
\begin{equation}\label{eq:appendix-terminal-expansion-policy}
  \E_x^\pi[V(X_\tau)]
  =
  V(x) + \tau(L^\pi V)(x) + \mathcal{O}(\tau^2),
\end{equation}
uniformly in $x$ and uniformly over admissible $\pi$.
Multiplying by $e^{-\beta\tau}=1-\beta\tau+\mathcal{O}(\tau^2)$ gives
\begin{equation}\label{eq:appendix-terminal-discounted-expansion-policy}
  \E_x^\pi[e^{-\beta\tau}V(X_\tau)]
  =
  V(x) + \tau\big((L^\pi V)(x)-\beta V(x)\big) + \mathcal{O}(\tau^2).
\end{equation}

\textbf{Running term.}
Define the relaxed one-step reward-with-entropy
\[
  \tilde r^{(\alpha)}(x;\pi)
  :=
  \int_{\mathcal{A}}\Big(r(x,a)-\alpha\log\pi(a\mid x)\Big)\,\pi(da\mid x).
\]
Under Assumptions~\ref{assumption:dynamics-smooth} and \ref{assumption:reward}
and the policy regularity in Assumption~\ref{assumption:reward},
the function $x\mapsto \tilde r^{(\alpha)}(x;\pi)$ is bounded and $C^2$ with bounded derivatives,
uniformly over admissible $\pi$.
Applying the same second-order semigroup expansion (again with generator $L^\pi$) to
$\tilde r^{(\alpha)}(\cdot;\pi)$ implies
\begin{equation}\label{eq:appendix-running-reward-expansion-policy}
  \E_x^\pi[\tilde r^{(\alpha)}(X_t;\pi)]
  =
  \tilde r^{(\alpha)}(x;\pi) + \mathcal{O}(t),
  \qquad t\in[0,1],
\end{equation}
uniformly in $x$ and uniformly over $\pi$. Therefore,
\begin{equation}\label{eq:appendix-running-integral-expansion-policy}
\E_x^\pi\Big[\int_0^\tau e^{-\beta t}\tilde r^{(\alpha)}(X_t;\pi)\,dt\Big] 
=\int_0^\tau (1+\mathcal{O}(t))\big(\tilde r^{(\alpha)}(x;\pi)+\mathcal{O}(t)\big)\,dt
=\tau\,\tilde r^{(\alpha)}(x;\pi) + \mathcal{O}(\tau^2)
\end{equation}
uniformly in $x$ and uniformly over $\pi$.

Combining \eqref{eq:appendix-terminal-discounted-expansion-policy} and
\eqref{eq:appendix-running-integral-expansion-policy} in
\eqref{eq:appendix-Phi-policywise} gives \eqref{eq:appendix-policywise-expansion}.
Finally, note that the bracketed term in \eqref{eq:appendix-policywise-expansion}
equals
\[
  \tilde r^{(\alpha)}(x;\pi) + (L^\pi V)(x) - \beta V(x)
  =
  \int_{\mathcal{A}} q_V(x,a)\,\pi(da\mid x)
  - \alpha\int_{\mathcal{A}}\log\pi(a\mid x)\,\pi(da\mid x),
\]
which is exactly $G_\alpha(V)(x,\pi)$ in \eqref{eq:appendix-Galpha-definition}.

Now we optimize over policies $\pi$ to obtain the term $H^{(\alpha)}(V)$. By definition, $\Phi_\tau^{(\alpha)}(V)(x) = \sup_\pi \Phi_{\tau}^{\pi,(\alpha)}(V)(x)$. Hence, taking the supremum in \eqref{eq:appendix-policywise-expansion} yields
\[
  \Phi_\tau^{(\alpha)}(V)(x)
  =
  V(x) + \tau\sup_\pi G_\alpha(V)(x,\pi) + \mathcal{O}(\tau^2).
\]
Again, through a similar argument as in \cref{lemma:per-step-mismatch} that invokes \cref{lemma:entropy-KL-identity}, we have:
\[
  \sup_{\pi(\cdot\mid x)}
  \Big\{
    \int q_V(x,a)\,\pi(da\mid x)
    -\alpha\int \log\pi(a\mid x)\,\pi(da\mid x)
  \Big\}
  =
  \alpha\log\int_{\mathcal{A}} \exp\!\Big(\frac{q_V(x,a)}{\alpha}\Big)\,da
  =
  H^{(\alpha)}(V)(x).
\]
For $\alpha=0$ the same optimization reduces to $\sup_a q_V(x,a)=H^{(0)}(V)(x)$.
Thus
\[
  \Phi_\tau^{(\alpha)}(V)(x) = V(x) + \tau H^{(\alpha)}(V)(x) + \mathcal{O}(\tau^2),
\]
uniformly in $x$, which proves \eqref{eq:appendix-Phi-smooth-local-expansion}.
The equivalent form \eqref{eq:appendix-Phi-vs-Ttau-smooth} follows from the definition
$T_\tau^{(\alpha)}(V)=V+\tau H^{(\alpha)}(V)$.
\end{proof}

Now define the finite-horizon Picard operator and its Richardson variant:
\begin{align}
  (T_{\tau,u}^{(\alpha)}V)(x)
  &:= V(x)+\tau\,H_u^{(\alpha)}(V)(x),
  \label{eq:appendix-Ttau-u-definition}\\
  (T_{\tau,u}^{R,(\alpha)}V)(x)
  &:= V(x)+\tau\,\tilde H_u^{(\alpha)}(V)(x).
  \label{eq:appendix-Ttau-R-definition}
\end{align}

Here we have similar one-step mismatch as in \cref{lemma:per-step-mismatch}, but with better errors.

\begin{lemma}[One-step mismatch: finite-horizon and Richardson]\label{lemma:one-step-mismatch-u-and-Richardson}
Assume Assumptions~\ref{assumption:dynamics-smooth} and \ref{assumption:value-C4},
and fix $\alpha>0$.
Then there exist $\tau_0,u_0>0$ and $C<\infty$ such that for all
$\tau\in(0,\tau_0]$, $u\in(0,u_0]$, and all $V\in C_b^4(\mathbb{R}^d)$,
\begin{align}
  \|\Phi_\tau^{(\alpha)}(V)-T_{\tau,u}^{(\alpha)}(V)\|_\infty
  &\le C\big(\tau^2+\tau u\big),
  \label{eq:appendix-one-step-mismatch-finite-horizon}\\
  \|\Phi_\tau^{(\alpha)}(V)-T_{\tau,u}^{R,(\alpha)}(V)\|_\infty
  &\le C\big(\tau^2+\tau u^2\big).
  \label{eq:appendix-one-step-mismatch-Richardson}
\end{align}
\end{lemma}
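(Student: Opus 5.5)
The plan is a triangle inequality through the ideal Picard operator $T_\tau^{(\alpha)}(V)=V+\tau H^{(\alpha)}(V)$. For the finite-horizon variant we write
\[
  \Phi_\tau^{(\alpha)}(V)-T_{\tau,u}^{(\alpha)}(V)
  =\bigl(\Phi_\tau^{(\alpha)}(V)-T_\tau^{(\alpha)}(V)\bigr)
  +\tau\bigl(H^{(\alpha)}(V)-H_u^{(\alpha)}(V)\bigr),
\]
and analogously for the Richardson variant with $\tilde H_u^{(\alpha)}$ in place of $H_u^{(\alpha)}$. The two pieces are then controlled by results already established above.

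The first piece is handled directly by Lemma~\ref{lemma:Phi-smooth-local-expansion}. Under Assumptions~\ref{assumption:dynamics-smooth} and \ref{assumption:value-C4}, it gives $\|\Phi_\tau^{(\alpha)}(V)-T_\tau^{(\alpha)}(V)\|_\infty\le C_\Phi\tau^2$ for $\tau\le\tau_0$. This term is common to both inequalities.

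For the second piece we use the Lipschitz property of the log-partition functional. Lemma~\ref{lemma:logsumexp-and-H-bias} (valid since $\alpha>0$) gives $\|H_u^{(\alpha)}(V)-H^{(\alpha)}(V)\|_\infty\le\|q_V^u-q_V\|_\infty$ and the same with tildes. Lemma~\ref{lemma:qV-u-first-order-bias} then bounds the right side by $C_q u$, while Lemma~\ref{lemma:qV-u-Richardson-bias} bounds the Richardson version by $\tilde C_q u^2$, both for $u\le u_0$. Multiplying by $\tau$ yields $\tau C_q u$ and $\tau\tilde C_q u^2$ respectively. Combining with the first piece and taking $C:=\max(C_\Phi,C_q,\tilde C_q)$ gives both \eqref{eq:appendix-one-step-mismatch-finite-horizon} and \eqref{eq:appendix-one-step-mismatch-Richardson}.

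The main obstacle is making the constant $C$ uniform over all $V\in C_b^4$, as the statement claims. The constants $C_\Phi$, $C_q$, and $\tilde C_q$ depend on $V$ through $\sup_a\|(L^a)^2V\|_\infty$, on the coefficient bounds, and, in the Richardson case, on the boundedness of $c_2$, which comes from the explicit remainder $\int_0^u(u-r)P_r^a(L^a)^2V\,dr$ together with the Taylor remainder of $e^{-\beta u}$. I would therefore read the statement as holding uniformly over the admissible domain $\mathcal{D}$ of Assumption~\ref{assumption:value-C4}, with $\sup_{V\in\mathcal{D}}\sup_a(\|V\|_\infty+\|L^aV\|_\infty+\|(L^a)^2V\|_\infty)<\infty$. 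I would state this explicitly and check that each cited lemma's constant depends only on these quantities, which is how the earlier proofs track them.
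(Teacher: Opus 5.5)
Your argument is correct and is essentially the paper's own proof: you split through $T_\tau^{(\alpha)}(V)$, bound $\|\Phi_\tau^{(\alpha)}(V)-T_\tau^{(\alpha)}(V)\|_\infty\le C_\Phi\tau^2$ by Lemma~\ref{lemma:Phi-smooth-local-expansion}, and bound the Hamiltonian gap by $\tau\|q_V^u-q_V\|_\infty\le\tau C_q u$ (resp.\ $\tau\tilde C_q u^2$) via Lemma~\ref{lemma:logsumexp-and-H-bias} together with Lemmas~\ref{lemma:qV-u-first-order-bias} and~\ref{lemma:qV-u-Richardson-bias}. Your uniformity caveat is a legitimate correction that the paper's proof passes over: already for action-independent coefficients the mismatch is affine in $V$, so scaling $V$ rules out a single $C$ on all of $C_b^4$. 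Note, however, that for the Richardson constant $\tilde C_q$ the class you propose (bounds on $V$, $L^aV$, $(L^a)^2V$ only) is not quite enough, since boundedness of $c_2$ needs $P_r^a(L^a)^2V-(L^a)^2V=O(r)$, i.e.\ essentially a bound on $(L^a)^3V$.
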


\begin{proof}
By Lemma~\ref{lemma:Phi-smooth-local-expansion},
\[
  \|\Phi_\tau^{(\alpha)}(V)-T_\tau^{(\alpha)}(V)\|_\infty \le C_\Phi\tau^2.
\]
For the finite-horizon Picard operator,
\[
  \|T_\tau^{(\alpha)}(V)-T_{\tau,u}^{(\alpha)}(V)\|_\infty
  = \tau\,\|H^{(\alpha)}(V)-H_u^{(\alpha)}(V)\|_\infty
  \le \tau\,\|q_V-q_V^u\|_\infty,
\]
using Lemma~\ref{lemma:logsumexp-and-H-bias}. Then Lemma~\ref{lemma:qV-u-first-order-bias}
yields $\|q_V-q_V^u\|_\infty\le C_qu$, hence
\[
  \|\Phi_\tau^{(\alpha)}(V)-T_{\tau,u}^{(\alpha)}(V)\|_\infty
  \le C_\Phi\tau^2 + \tau C_qu
  \le C(\tau^2+\tau u).
\]
The Richardson bound's proof is identical through using the following inequality from Lemmas~\ref{lemma:qV-u-Richardson-bias} and ~\ref{lemma:logsumexp-and-H-bias}:
\[
  \|H^{(\alpha)}(V)-\tilde H_u^{(\alpha)}(V)\|_\infty
  \le \|\tilde q_V^u-q_V\|_\infty
  \le \tilde C_q u^2
\]
\end{proof}

\subsection{Proof of \cref{thm:q_convergence}}\label{sec:appendix-proof-q-convergence}

\qFunctionConvergence*
\begin{proof}
We present the Richardson-based statement; the non-Richardson bound follows by replacing $u^2$ by $u$.

\paragraph{Step 1: Perturbed contraction recursion.}
Let $V^*:=V^{(\alpha)}$ denote the unique fixed point of $\Phi_\tau^{(\alpha)}$.
By the contraction property in \cref{lemma:Phi-contraction},
\[
  \|\Phi_\tau^{(\alpha)}(V)-\Phi_\tau^{(\alpha)}(V^*)\|_\infty
  \le e^{-\beta\tau}\,\|V-V^*\|_\infty.
\]
For the Richardson Picard-Hamiltonian iteration $V_{k+1}:=T_{\tau,u}^{R,(\alpha)}(V_k)$,
\[
\begin{aligned}
  \|V_{k+1}-V^*\|_\infty
  &= \|T_{\tau,u}^{R,(\alpha)}(V_k)-V^*\|_\infty \\
  &\le \|\Phi_\tau^{(\alpha)}(V_k)-\Phi_\tau^{(\alpha)}(V^*)\|_\infty
      +\|\Phi_\tau^{(\alpha)}(V_k)-T_{\tau,u}^{R,(\alpha)}(V_k)\|_\infty \\
  &\le e^{-\beta\tau}\,\|V_k-V^*\|_\infty + C(\tau^2+\tau u^2),
\end{aligned}
\]
where the last step uses \eqref{eq:appendix-one-step-mismatch-Richardson}. Again, a simple inductive argument gives:
\begin{equation}\label{eq:appendix-V-bound-Richardson-final}
  \|V_k-V^*\|_\infty
  \le e^{-\beta\tau k}\|V_0-V^*\|_\infty
      +\frac{C(\tau^2+\tau u^2)}{1-e^{-\beta\tau}}.
\end{equation}

\paragraph{Step 2: $q$-convergence from value convergence.}
By the triangle inequality,
\[
  \|\tilde q_{V_k}^u-q_{V^*}\|_\infty
  \le \|\tilde q_{V_k}^u-\tilde q_{V^*}^u\|_\infty
     +\|\tilde q_{V^*}^u-q_{V^*}\|_\infty.
\]
The first term is controlled by Lemma~\ref{lemma:qV-u-Lipschitz-in-V}:
\[
  \|\tilde q_{V_k}^u-\tilde q_{V^*}^u\|_\infty
  \le \frac{6}{u}\,\|V_k-V^*\|_\infty.
\]
The second term is the modelling bias, bounded by Lemma~\ref{lemma:qV-u-Richardson-bias}:
\[
  \|\tilde q_{V^*}^u-q_{V^*}\|_\infty \le \tilde C_q u^2.
\]
Combining these yields
\begin{equation}\label{eq:appendix-q-bound-Richardson-final}
  \|\tilde q_{V_k}^u-q_{V^*}\|_\infty
  \le \frac{6}{u}\,\|V_k-V^*\|_\infty + \tilde C_q u^2.
\end{equation}
Substituting \eqref{eq:appendix-V-bound-Richardson-final} into
\eqref{eq:appendix-q-bound-Richardson-final} gives the stated $q$-bound.
\end{proof}

Finally, the bound contains the factor $1/u$ multiplying the value error. Thus, to guarantee $\tilde q_{V_k}^u\to q_{V^*}$ as $(\tau,u)\to(0,0)$, it suffices that $u \to 0$ and $\tau/u \to 0$ so that $\frac{1}{u}\cdot(\tau+u^2) \to 0$), and the errors in \cref{thm:q_convergence} go to $0$.
\section{Proof of convergence for CT-SAC (\cref{thm:algorithm-convergence} and \cref{corollary:algorithm_statistical-convergence})}
\label{sec:appendix:algorithm_convergence}
\subsection{From $(V_k,q_k)$ to a single $Q_k$}
\label{sec:appendix:population-equivalence}
We prove that the single-critic update is an algebraic re-expression of the decoupled $(V_k,q_k)$ iteration, hence inherits its convergence guarantees. Fix an algorithmic step size $\tau>0$ and a holding time $u>0$ with $\gamma:=e^{-\beta u}$. Recall that
\begin{equation}
  q_V^u(x,a)
  :=
  \frac{ \gamma\,\E_x^a[V(X_u)] - V(x)}{u} + r(x,a).
  \label{eq:appendix-qVu-definition}
\end{equation}
We will also use the shorthand
\begin{equation}
  \Delta_u V(x,a):=\gamma\,\E_x^a[V(X_u)] - V(x),
  \qquad
  q_V^u(x,a)=\frac{\Delta_u V(x,a)}{u}+r(x,a).
  \label{eq:appendix-Delta-u}
\end{equation}

\begin{lemma}[One-step $Q$-update induced by $(V,q)$]
\label{lemma:appendix-Q-from-Vq}
Assume that at iteration $k$ we have a pair $(V_k,q_k)$ and a policy $\pi_k$ such that
\begin{equation}
  V_{k+1}(x) = V_k(x)
  +\tau \mathcal{Q}^{(\alpha)}(q_k)(x) \quad \text{(Hamiltonian flow update)},
  \label{eq:appendix-V-update}
\end{equation}
and the finite-horizon approximation $q_{k+1}$:
\begin{equation}
  q_{k+1}(x,a)
  =
  \frac{\gamma\,\E_x^a[V_{k+1}(X_u)]-V_{k+1}(x)}{u}+r(x,a).
  \label{eq:appendix-q-update}
\end{equation}
Define the single critic $Q_k(x,a):=V_k(x)+q_k(x,a)$.
Then $Q_{k+1}:=V_{k+1}+q_{k+1}$ satisfies the closed-form update
\begin{equation}
\begin{aligned}
  Q_{k+1}(x,a)
  &=
  (1-\tau)Q_k(x,a)
  + \tau r(x,a)
  + \tau\,\E_{a\sim\pi_k(\cdot\mid x)}\big[\tilde Q_k(x,a)\big]  \\
  &\quad
  + \tau\,\frac{
    \gamma\,\E_{a'\sim\pi_k(\cdot\mid X_u)}\big[\tilde Q_k(X_u,a')\big]
    - \E_{a\sim\pi_k(\cdot\mid x)}\big[\tilde Q_k(x,a)\big]
  }{u},
\end{aligned}
  \label{eq:appendix-Q-update-closed}
\end{equation}
where $\tilde Q_k(x, a)$ is defined as $Q_k(x,a)-\alpha\log\pi_k(a\mid x)$ with $\pi_k(a\mid x) \sim \exp(q_k(x, a)/\alpha) \sim \exp(Q_k(x, a)/\alpha)$.

Note that this is the same equation as \cref{eq:main_Q_update}. We state the equation again here for convenience reference.
\end{lemma}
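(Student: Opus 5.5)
The identity is purely algebraic. I would prove it by computing $V_{k+1}$ and $q_{k+1}$ separately, each written in terms of $\tilde Q_k$, and then adding them. Throughout I use the consistency relation $q_k=q^u_{V_k}$, i.e.
\[
q_k(x,a)=\frac{\gamma\,\E_x^a[V_k(X_u)]-V_k(x)}{u}+r(x,a).
\]
For $k\ge 1$ this is exactly \eqref{eq:appendix-q-update} at the previous index. For $k=0$ it has to be imposed on the initialization. The lemma does not state this explicitly, and I expect it to be the one delicate point: without it the $(1-\tau)$ coefficient on $Q_k$ cannot be recovered, so I would state it as part of the hypotheses (equivalently, as an induction invariant).

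\textbf{Step 1: rewrite the value update.} Since $\pi_k\propto\exp(q_k/\alpha)$, \cref{lemma:entropy-KL-identity} applies with zero KL term, which gives
\[
\mathcal{Q}^{(\alpha)}(q_k)(x)=\E_{a\sim\pi_k(\cdot\mid x)}\big[q_k(x,a)-\alpha\log\pi_k(a\mid x)\big].
\]
Because $V_k(x)$ does not depend on $a$, this equals $\E_{\pi_k}[\tilde Q_k(x,\cdot)]-V_k(x)$. The same $a$-independence shows that the Boltzmann policies built from $q_k$ and from $Q_k$ coincide, so $\tilde Q_k$ is unambiguous. Substituting into \eqref{eq:appendix-V-update} yields
\[
V_{k+1}(y)=(1-\tau)V_k(y)+\tau\,\E_{a'\sim\pi_k(\cdot\mid y)}\big[\tilde Q_k(y,a')\big]\qquad\text{for every } y.
\]

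\textbf{Step 2: rewrite the $q$ update.} I insert this expression for $V_{k+1}$ into \eqref{eq:appendix-q-update}. By linearity of $\E_x^a$, the $(1-\tau)V_k$ part contributes
\[
(1-\tau)\,\frac{\gamma\,\E_x^a[V_k(X_u)]-V_k(x)}{u}=(1-\tau)\big(q_k(x,a)-r(x,a)\big),
\]
using the consistency relation. The $\tau\,\E_{\pi_k}[\tilde Q_k]$ part contributes
\[
\tau\,\frac{\gamma\,\E\big[\tilde Q_k(X_u,a')\big]-\E_{\pi_k}\big[\tilde Q_k(x,\cdot)\big]}{u},
\]
where $X_u$ is reached from $x$ under the held action $a$ and $a'\sim\pi_k(\cdot\mid X_u)$. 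Adding back $r(x,a)$ gives
\[
q_{k+1}=(1-\tau)q_k+\tau r+\tau\,\frac{\gamma\,\E\big[\tilde Q_k(X_u,a')\big]-\E_{\pi_k}\big[\tilde Q_k(x,\cdot)\big]}{u}.
\]

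\textbf{Step 3: combine.} Summing Steps 1 and 2, the terms $(1-\tau)V_k+(1-\tau)q_k$ collect into $(1-\tau)Q_k$. The remaining terms are exactly $\tau r$, $\tau\,\E_{\pi_k}[\tilde Q_k]$ and the difference quotient, which is \eqref{eq:appendix-Q-update-closed}.

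Finally, $q_{k+1}$ satisfies the consistency relation with $V_{k+1}$ by its definition \eqref{eq:appendix-q-update}. So the invariant propagates, and the identity holds at every iteration once it holds at $k=0$.
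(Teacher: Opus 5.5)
Your proposal is correct and follows the paper's proof step for step: rewrite $V_{k+1}=(1-\tau)V_k+\tau\,\E_{\pi_k}[\tilde Q_k]$ via \cref{lemma:entropy-KL-identity}, substitute into the finite-horizon $q$-update, identify the $(1-\tau)$-part as $(1-\tau)(q_k-r)$, and add the two pieces. You are also right that this identification silently requires $q_k=q^u_{V_k}$; the paper uses this without stating it in the lemma and imposes it at $k=0$ only in the subsequent induction lemma (\cref{lemma:appendix-induction-Q-decomp}).
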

\begin{proof}
From \cref{lemma:entropy-KL-identity}, we immediately have $V_{k+1}(x) = V_k(x) +\tau \,\E_{a\sim\pi_k(\cdot\mid x)}\Big[q_k(x,a)-\alpha\log\pi_k(a\mid x)\Big]$, for $\pi_k(.|x) \sim \exp(q_k(x, .)/\alpha)$. Using $q_k=Q_k-V_k$ gives:
\begin{equation}
\begin{aligned}
  V_{k+1}(x)
  &= V_k(x)
  +\tau\,\E_{a\sim\pi_k}\Big[Q_k(x,a)-V_k(x)-\alpha\log\pi_k(a\mid x)\Big] \\
  &=
  (1-\tau)V_k(x) + \tau\,\E_{a\sim\pi_k(\cdot\mid x)}\big[\tilde Q_k(x,a)\big].
\end{aligned}
  \label{eq:appendix-V-update-tildeQ}
\end{equation}
Plugging \Cref{eq:appendix-V-update-tildeQ} into \Cref{eq:appendix-q-update} gives
\begin{equation}
\begin{aligned}
  q_{k+1}(x,a)
  &=
  \frac{1}{u}\Big(
    \gamma\,\E_x^a[V_{k+1}(X_u)]-V_{k+1}(x)
  \Big)+r(x,a) \\
  &=
  \frac{1}{u}\Big(
    \gamma(1-\tau)\E_x^a[V_k(X_u)] + \gamma\tau\,\E_x^a\!\Big[\E_{a'}[\tilde Q_k(X_u,a')]\Big] \\
  &\qquad\qquad
    -(1-\tau)V_k(x) - \tau\,\E_a[\tilde Q_k(x,a)]
  \Big)+r(x,a).
\end{aligned}
  \label{eq:appendix-q-expand}
\end{equation}
Rearranging the terms, we get (we drop $\E_x^a$ for the last term for readability):
\begin{equation}
\begin{aligned}
  q_{k+1}(x,a)
  &=
  (1-\tau)\frac{\gamma\E_x^a[V_k(X_u)]-V_k(x)}{u} + r(x,a)
  + \frac{\tau}{u}
    \Big(
      \gamma\,\E_{a'}[\tilde Q_k(X_u,a')] - \E_a[\tilde Q_k(x,a)]
    \Big) \\
  &=
  (1-\tau)\big(q_k(x,a)-r(x,a)\big) + r(x,a)
  + \frac{\tau}{u}
    \Big(
      \gamma\,\E_{a'}[\tilde Q_k(X_u,a')] - \E_a[\tilde Q_k(x,a)]
    \Big).
\end{aligned}
  \label{eq:appendix-q-rearranged}
\end{equation}
Finally, add $V_{k+1}(x)$ from \cref{eq:appendix-q-rearranged} to obtain
\begin{equation}
\begin{aligned}
  Q_{k+1}(x,a)
  &=V_{k+1}(x)+q_{k+1}(x,a) \\
  &=
  (1-\tau)V_k(x)+\tau\E_a[\tilde Q_k(x,a)]
  +(1-\tau)q_k(x,a)+\tau r(x,a) \\
  &\quad
  +\frac{\tau}{u}\Big(
      \gamma\,\E_{a'}[\tilde Q_k(X_u,a')] - \E_a[\tilde Q_k(x,a)]
    \Big) \\
  &=
  (1-\tau)Q_k(x,a)
  +\tau r(x,a)
  +\tau\E_a[\tilde Q_k(x,a)]
  +\tau\frac{\gamma\E_{a'}[\tilde Q_k(X_u,a')] - \E_a[\tilde Q_k(x,a)]}{u}.
\end{aligned}
\end{equation}
This is exactly \cref{eq:appendix-Q-update-closed} (or \cref{eq:main_Q_update}).
\end{proof}

\textbf{Note.} For implementation, it is convenient to introduce a ``fast'' update
\begin{equation}
Q_{k+1}^{\text{fast}}(s, a) = \frac{Q_{k+1}(s, a) - (1-\tau)Q_k(s, a)}{\tau}
\end{equation}
The full update can then be written as:
\begin{align}
Q_{k+1}^{\text{fast}}(s, a) &= r(s, a) + \E_a\big[\tilde{Q}_k(s, a)\big] + \frac{\gamma \E_{a'}\big[\tilde{Q}_k(s', a')\big] - \E_a\big[\tilde{Q}_k(s, a)\big]}{u} \\
Q_{k+1}(s, a) &= \tau Q_{k+1}^{\text{fast}}(s, a) + (1-\tau) Q_k(s, a) \label{eq:polyak_similar_update}
\end{align}
When $u = 1$, this reduces to a $Q$-update closely aligned with the discrete-time counterpart. Below is a deterministic version (CT-TD3) of CT-SAC (See \cref{algo:ct_td3}).

\begin{algorithm}[t]
  \caption{Continuous-time TD3 (CT-TD3)}
  \label{algo:ct_td3}
  \begin{algorithmic}[1]
    \STATE \textbf{Input:} discount rate $\beta$, Euler step $\tau>0$, actor delay $d\in\mathbb{N}$,
    exploration std $\sigma_{\mathrm{expl}}$,
    target smoothing std $\sigma_{\mathrm{targ}}$, clip $c>0$.
    \STATE Initialize twin critics $Q_{\theta_1},Q_{\theta_2}$ and deterministic actor $\mu_{\phi}$.
    \STATE Initialize target networks $Q_{\bar\theta_1}\leftarrow Q_{\theta_1}$, $Q_{\bar\theta_2}\leftarrow Q_{\theta_2}$, $\mu_{\bar\phi}\leftarrow \mu_{\phi}$.
    \FOR{$k = 0,1,2,\dots$}
      \STATE Collect transitions $(x, a, r, x', u)$, where $x=X_t$, $x'=X_{t+u}$, and $u$ can vary across samples.
      Use behavior action $a=\mu_\phi(x)+\epsilon$ with $\epsilon\sim\mathcal{N}(0,\sigma_{\mathrm{expl}}^2 I)$ (clip to action bounds if needed). Store in an off-policy replay buffer.
      \STATE Sample a mini-batch $\mathcal{B}$ of $(x,a,r,x',u)$ from the replay buffer and set $\gamma = e^{-\beta u}$ per sample.

      \STATE \textbf{Target policy smoothing:}
      \[
        \tilde a' = \mu_{\bar\phi}(x') + \epsilon',\qquad
        \epsilon'\sim \mathrm{clip}\big(\mathcal{N}(0,\sigma_{\mathrm{targ}}^2 I),-c,c\big),
      \]

      \STATE Compute clipped double-$Q$ bootstrap:
      \[
        \bar Q_{\bar\theta}(x',\tilde a') := \min\{Q_{\bar\theta_1}(x',\tilde a'),\,Q_{\bar\theta_2}(x',\tilde a')\}.
      \]

      \STATE \textbf{Critic updates:} for $i\in\{1,2\}$, form the target
      \[
      y_i
      :=
      (1-\tau)Q_{\bar\theta_i}(x, a)
      + \tau\,\bar Q_{\bar\theta}\big(x,\mu_{\bar\phi}(x)\big)
      + \tau\,r
      + \tau\,\frac{\gamma\,\bar Q_{\bar\theta}(x',\tilde a')-\bar Q_{\bar\theta}\big(x,\mu_{\bar\phi}(x)\big)}{u},
      \]
      and update $\theta_i$ by regression on $\mathcal{B}$:
      \[
        \theta_i \leftarrow \arg\min_{\theta}\;
        \E_{(x,a,r,x',u)\sim\mathcal{B}}
        \big(Q_{\theta}(x,a)-y_i\big)^2.
      \]

      \IF{$k \bmod d = 0$}
        \STATE \textbf{Delayed policy update:} update actor using critic $Q_{\theta_1}$:
        \[
          \phi \leftarrow \phi + \eta_\pi\,
          \E_{x\sim\mathcal{B}}
          \Big[\nabla_\phi \mu_\phi(x)\,
          \nabla_a Q_{\theta_1}(x,a)\big|_{a=\mu_\phi(x)}\Big].
        \]
      \ENDIF
    \ENDFOR
  \end{algorithmic}
\end{algorithm}

\textbf{Richardson update.} We now state the Richardson version of $Q$-function update. Fix a holding time $u>0$ and define $\gamma:=e^{-\beta u}$ and $\gamma_{1/2}:=e^{-\beta u/2}$. Let $X_{u/2}$ denote the state reached after time $u/2$ when starting from $(x,a)$. Define the policy-averaged soft value functional
\begin{equation}
  S_k(x)
  :=
  \E_{a\sim\pi_k(\cdot\mid x)}\!\big[\tilde Q_k^{\mathrm{R}}(x,a)\big],
  \qquad
  \tilde Q_k^{\mathrm{R}}(x,a):=Q_k^{\mathrm{R}}(x,a)-\alpha\log\pi_k(a\mid x),
  \label{eq:appendix-richardson-Sk}
\end{equation}
and the Richardson critic decomposition
\begin{equation}
  Q_k^{\mathrm{R}}(x,a)
  :=
  V_k(x) + \tilde q_{V_k}^u(x,a),
  \label{eq:appendix-richardson-Q-decomp}
\end{equation}
Then the induced single-critic update of $Q_k^{\mathrm{R}}$ has the form:
\begin{equation}
\begin{aligned}
  Q_{k+1}^{\mathrm{R}}(x,a)
  &=
  (1-\tau)Q_k^{\mathrm{R}}(x,a)
  + \tau r(x,a)
  + \tau S_k(x) \\
  &\quad
  + \frac{\tau}{u}\Big(
      4\gamma_{1/2}\,\E_x^a\!\big[S_k(X_{u/2})\big]
      - \gamma\,\E_x^a\!\big[S_k(X_{u})\big]
      - 3 S_k(x)
  \Big).
\end{aligned}
  \label{eq:appendix-richardson-Q-update-closed}
\end{equation}
Detailed derivations are given in \cref{lemma:appendix-richardson-Q-algebra}. We now continue with the algorithm convergence.

\begin{lemma}[Inductive preservation of $Q_k=V_k+q_k$]
\label{lemma:appendix-induction-Q-decomp}
Assume $(V_0,q_0)$ satisfy the finite-horizon relation $q_0=q_{V_0}^u$ from
\Cref{eq:appendix-qVu-definition} and define $Q_0:=V_0+q_0$.
Let $(V_k,q_k)$ be generated by \Cref{eq:appendix-V-update}--\Cref{eq:appendix-q-update}
and let $Q_k$ be generated by \Cref{eq:appendix-Q-update-closed}.
Then for all $k\ge 0$,
\begin{equation}
  Q_k(x,a)=V_k(x)+q_k(x,a).
  \label{eq:appendix-Q-decomp-induction}
\end{equation}
\end{lemma}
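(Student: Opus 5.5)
This is a straightforward induction on $k$, with the inductive step supplied by \cref{lemma:appendix-Q-from-Vq}. The base case $k=0$ holds by the definition $Q_0:=V_0+q_0$. For the inductive step, assume $Q_k=V_k+q_k$ for some $k\ge 0$. We need $Q_{k+1}$, produced by the closed-form update \cref{eq:appendix-Q-update-closed} applied to $Q_k$, to coincide with $V_{k+1}+q_{k+1}$, where $(V_{k+1},q_{k+1})$ come from \cref{eq:appendix-V-update}--\cref{eq:appendix-q-update}.

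The first thing to check is that the two recursions use the same policy. The update \cref{eq:appendix-Q-update-closed} uses $\pi_k(a\mid x)\propto\exp(Q_k(x,a)/\alpha)$, while the $(V,q)$ recursion implicitly uses the Boltzmann policy $\pi_{q_k}\propto\exp(q_k(x,a)/\alpha)$ from \cref{lemma:entropy-KL-identity}. Under the inductive hypothesis, $Q_k-q_k=V_k(x)$ does not depend on $a$. This shift cancels in the normalization, so the two policies coincide. Consequently $\tilde Q_k=Q_k-\alpha\log\pi_k$ is the same object in both recursions.

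With the policies identified, the hypotheses of \cref{lemma:appendix-Q-from-Vq} hold at iteration $k$, and its conclusion is exactly that $V_{k+1}+q_{k+1}$ satisfies \cref{eq:appendix-Q-update-closed}. The right-hand side of \cref{eq:appendix-Q-update-closed} is a deterministic functional of $Q_k$ alone, since $\pi_k$, $\tilde Q_k$ and the expectations over $X_u$ are all determined by $Q_k$. That lemma evaluates this functional at $V_k+q_k$, which equals $Q_k$ by hypothesis. Hence $Q_{k+1}=V_{k+1}+q_{k+1}$, which closes the induction.

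The main subtlety is a hidden use of the relation $q_k=q^u_{V_k}$. Inside the proof of \cref{lemma:appendix-Q-from-Vq}, the step that rewrites $(\gamma\E_x^a[V_k(X_u)]-V_k(x))/u$ as $q_k(x,a)-r(x,a)$ relies on this identity. For $k=0$ it is assumed. For $k\ge1$ it follows directly from \cref{eq:appendix-q-update}, which defines $q_{k}$ as $q^u_{V_{k}}$. I would therefore strengthen the induction hypothesis to the pair of statements $Q_k=V_k+q_k$ and $q_k=q^u_{V_k}$, and verify both at each step.
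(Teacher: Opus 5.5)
Your proposal is correct and follows the paper's argument: induction on $k$, with base case $Q_0:=V_0+q_0$ and the inductive step supplied entirely by \cref{lemma:appendix-Q-from-Vq}. Your two extra checks are exactly what the paper leaves implicit, so stating them is a tightening rather than a change of route: first, that $\pi_k\propto\exp(Q_k/\alpha)$ and $\pi_{q_k}\propto\exp(q_k/\alpha)$ coincide because $Q_k-q_k=V_k$ does not depend on $a$; second, that the lemma's proof silently uses $q_k=q^u_{V_k}$, which is assumed at $k=0$ and holds by \cref{eq:appendix-q-update} for $k\ge1$.
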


\begin{proof}
By construction, \Cref{eq:appendix-Q-decomp-induction} holds for $k=0$.
Assume it holds for some $k$. By \Cref{lemma:appendix-Q-from-Vq},
the update \Cref{eq:appendix-Q-update-closed} produced from $(V_k,q_k)$ satisfies
$Q_{k+1}=V_{k+1}+q_{k+1}$. This completes the induction.
\end{proof}

We now show that the single critic converges by combining the value and rate convergence bounds.

\TheoreticalAlgorithmConvergence*
\begin{proof}
From \cref{lemma:appendix-induction-Q-decomp}, we get:
\begin{equation}
  Q_k - Q^{(\alpha)}
  =
  (V_k - V^{(\alpha)}) + (q_k - q_{V^{(\alpha)}}),
  \label{eq:appendix-Q-minus-opt}
\end{equation}
Therefore, by the triangle inequality,
\begin{equation}
  \|Q_k-Q^{(\alpha)}\|_\infty
  \le
  \|V_k-V^{(\alpha)}\|_\infty + \|q_k-q_{V^{(\alpha)}}\|_\infty.
  \label{eq:appendix-Q-triangle}
\end{equation}
The terms are controlled by \cref{thm:value_convergence} and \cref{thm:q_convergence} (with $q_k=q_{V_k}$ in the idealized setting, or $q_k=\tilde q_{V_k}^u$ in the model-free setting). Combining the two bounds yields convergence of $Q_k$ to $Q^{(\alpha)}=V^{(\alpha)}+q_{V^{(\alpha)}}$,
with the error rates given by \cref{thm:value_convergence} and \cref{thm:q_convergence}.
\end{proof}

We now sketch a standard finite-sample extension that converts the exact convergence into a statistical bound for the learned critic $\hat Q_k$ produced by neural network fitting. We first state standard i.i.d.\ generalization bounds that we use to control the statistical error of the critic regression step.
\subsection{Learning-theoretic ingredients (i.i.d.\ regression)}
\label{sec:appendix:learning-theory}
Throughout this section, $\mathcal{X}$ denotes an input space,
$\mathcal{Y}\subset \mathbb{R}$ a bounded label space, and $\mathcal{Z}=\mathcal{X}\times \mathcal{Y}$.
Let $\mathcal{H}$ be a hypothesis class of functions $h:\mathcal{X}\to\mathcal{Y}$ and let
$l(h,(x,y))=\phi(h(x),y)$ be a loss with $\phi(\cdot,y)$ $\gamma$-Lipschitz for all $y$.
For i.i.d.\ samples $Z_i=(X_i,Y_i)\sim \mathbb{P}$, define the population and empirical risks
\begin{equation}
  e(h):=\E[l(h,Z)],\qquad
  E_n(h):=\frac{1}{n}\sum_{i=1}^n l(h,Z_i).
  \label{eq:appendix-risk-def}
\end{equation}

\begin{definition}[Rademacher complexity]
For $\mathcal{T}\subseteq \mathbb{R}^n$, its Rademacher complexity is
\begin{equation}
  \mathrm{Rad}(\mathcal{T})
  :=
  \E\bigg[\sup_{t\in\mathcal{T}} \frac{1}{n}\sum_{i=1}^n B_i t_i\bigg],
  \label{eq:appendix-rad-def}
\end{equation}
where $B_i$ are i.i.d.\ Rademacher random variables.
\end{definition}

Besides Rademacher complexity, we also define the following sets: $\mathcal{H} \circ \set{Z_1, \cdots, Z_n}: = \set{(h(X_1), \cdots, h(X_n)),\ h \in \mathcal{H}} \subseteq \mathbb{R}^n$, and $\mathcal{L}\circ\set{Z_1, \cdots, Z_n}: = \set{(l(h, Z_1), \cdots, l(h, Z_n)),\ h \in \mathcal{H}} \subseteq \mathbb{R}^n$. We state the following standard PAC-type results \cite{lecture_note}:

\begin{lemma}[Lipschitz contraction for losses]
\label{lemma:appendix-loss-lipschitz}
If $\phi(\cdot,y)$ is $\gamma$-Lipschitz for all $y$, then
\begin{equation}
  \E\Big[\sup_{h\in\mathcal{H}}(e(h)-E_n(h))\Big]
  \le
  2\,\E\big[\mathrm{Rad}(\mathcal{L}\circ\{Z_1,\dots,Z_n\})\big]
  \le
  2\gamma\,\E\big[\mathrm{Rad}(\mathcal{H}\circ\{Z_1,\dots,Z_n\})\big].
  \label{eq:appendix-lipschitz-rad}
\end{equation}
\end{lemma}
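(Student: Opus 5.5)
The plan has two independent halves, matching the two inequalities in \eqref{eq:appendix-lipschitz-rad}: a symmetrization argument for the first and the Ledoux--Talagrand contraction principle for the second.

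\textbf{First inequality (symmetrization).} Introduce a ghost sample $Z_1',\dots,Z_n'$, i.i.d.\ from $\mathbb{P}$ and independent of $Z_1,\dots,Z_n$. Write the population risk as a conditional expectation,
\[
e(h)=\E\Big[\tfrac1n\textstyle\sum_i l(h,Z_i')\,\Big|\,Z_1,\dots,Z_n\Big].
\]
Pulling the supremum inside this conditional expectation (Jensen, since $\sup$ is convex) gives
\[
\E\sup_h\big(e(h)-E_n(h)\big)\le \E\sup_h \tfrac1n\textstyle\sum_i\big(l(h,Z_i')-l(h,Z_i)\big).
\]
Each pair $(Z_i,Z_i')$ is exchangeable. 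So multiplying the $i$-th summand by an independent Rademacher sign $B_i$ leaves the joint law unchanged, and the right-hand side equals
\[
\E\sup_h \tfrac1n\textstyle\sum_i B_i\big(l(h,Z_i')-l(h,Z_i)\big).
\]
Splitting the supremum of a sum into a sum of suprema, and using that $-B_i$ has the same law as $B_i$, bounds this by $2\,\E[\mathrm{Rad}(\mathcal{L}\circ\{Z_1,\dots,Z_n\})]$. Boundedness of $\mathcal{Y}$ and of the loss on the relevant range makes all these expectations finite, so the manipulations are legitimate.

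\textbf{Second inequality (contraction).} Fix the sample and set $\phi_i(t):=\phi(t,Y_i)$, which is $\gamma$-Lipschitz. The goal is
\[
\E_B\sup_h\textstyle\sum_i B_i\phi_i(h(X_i))\le\gamma\,\E_B\sup_h\sum_i B_i h(X_i),
\]
after which we take expectation over the sample. I would prove this by peeling off one coordinate at a time. Condition on $B_2,\dots,B_n$, write $A(h)$ for the remaining terms, and average over $B_1$:
\[
\tfrac12\Big[\sup_h\big(A(h)+\phi_1(h(X_1))\big)+\sup_{h'}\big(A(h')-\phi_1(h'(X_1))\big)\Big]
=\tfrac12\sup_{h,h'}\Big[A(h)+A(h')+\phi_1(h(X_1))-\phi_1(h'(X_1))\Big].
\]
The Lipschitz bound gives $\phi_1(h(X_1))-\phi_1(h'(X_1))\le\gamma|h(X_1)-h'(X_1)|$. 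The expression is symmetric under swapping $h$ and $h'$, so the absolute value can be dropped and replaced by $\gamma(h(X_1)-h'(X_1))$. Undoing the averaging then shows that $\phi_1$ has been replaced by $\gamma\,\mathrm{id}$ in the first coordinate. Iterating over $i=2,\dots,n$ yields the claim.

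\textbf{Main obstacle.} The delicate point is the symmetric-swap step in the contraction: removing the absolute value requires the supremum over the pair $(h,h')$ to be genuinely symmetric. If suprema are not attained, I would work with $\varepsilon$-maximizers and let $\varepsilon\to0$. Since this is a standard PAC result, it is also reasonable simply to cite \cite{lecture_note} for the contraction lemma.
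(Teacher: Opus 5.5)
Your proof is correct. The paper itself gives no proof of this lemma: it lists it among ``standard PAC-type results'' and defers to the cited lecture notes. Your proposal supplies the self-contained textbook argument that the citation stands in for: ghost-sample symmetrization for the first inequality, then the Ledoux--Talagrand contraction, peeling off one Rademacher coordinate at a time, for the second.

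Two small remarks.

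\textbf{The swap step is not delicate.} The step you flag as the main obstacle works pair by pair. For any $(h,h')$ with $h(X_1)<h'(X_1)$, the swapped pair $(h',h)$ gives $A(h')+A(h)+\gamma\bigl(h'(X_1)-h(X_1)\bigr)$. This is exactly the absolute-value expression evaluated at $(h,h')$. Hence the two suprema over $\mathcal{H}\times\mathcal{H}$ coincide exactly, with no need for attained maxima or $\varepsilon$-maximizers.

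\textbf{Why no centering is needed.} Your contraction argument works without assuming $\phi(0,y)=0$ precisely because the paper's $\mathrm{Rad}$ is defined without an absolute value. With the absolute-value variant one would need centering and an extra factor of $2$.

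The only remaining caveat is the usual measurability of $\sup_{h\in\mathcal{H}}$ over an uncountable class. The statement implicitly assumes this as well.
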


\begin{lemma}[High-probability uniform bound]
\label{lemma:appendix-generalization-iid}
Assume $l(h,Z)\in[0,c]$ almost surely. Then with probability at least $1-\delta$,
\begin{equation}
  \sup_{h\in\mathcal{H}} \big(e(h)-E_n(h)\big)
  \le
  4\,\mathrm{Rad}(\mathcal{L}\circ\{Z_1,\dots,Z_n\})
  + c\sqrt{\frac{2\log(1/\delta)}{n}}.
  \label{eq:appendix-generalization}
\end{equation}
\end{lemma}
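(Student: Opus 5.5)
The plan is the standard three-step route: bounded-differences concentration, then symmetrization, then concentration of the empirical Rademacher complexity. Set
\[
  \Psi(Z_1,\dots,Z_n):=\sup_{h\in\mathcal{H}}\big(e(h)-E_n(h)\big),
  \qquad
  \widehat{R}(Z_1,\dots,Z_n):=\mathrm{Rad}(\mathcal{L}\circ\{Z_1,\dots,Z_n\}).
\]
Since $l(h,Z)\in[0,c]$, replacing a single $Z_i$ by an independent copy $Z_i'$ changes $E_n(h)$ by at most $c/n$ for every $h$. Hence $\Psi$ has bounded differences $c/n$. The same holds for $\widehat R$, because only the $i$-th coordinate $t_i=l(h,Z_i)$ inside the supremum changes, and it moves by at most $c$ before the factor $1/n$.

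\textbf{Step 1 (expectation).} Introduce a ghost sample $Z_1',\dots,Z_n'$. Writing $e(h)=\E'[E_n'(h)]$ and pulling the supremum inside gives $\E\Psi\le\E\sup_h\frac1n\sum_i\big(l(h,Z_i')-l(h,Z_i)\big)$. Inserting Rademacher signs $B_i$, which is legitimate since swapping $Z_i\leftrightarrow Z_i'$ preserves the law, yields $\E\Psi\le 2\,\E\widehat R$. This is the first inequality of Lemma~\ref{lemma:appendix-loss-lipschitz}, which I will simply invoke.

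\textbf{Step 2 (concentration).} I will first try a \emph{single} application of McDiarmid's inequality to a combined statistic, rather than two applications joined by a union bound. A union bound would produce $\log(2/\delta)$ and so cannot match the stated $\log(1/\delta)$. The candidate is $\Phi:=\Psi-2\widehat R$, which satisfies $\E\Phi\le 0$ by Step 1. McDiarmid then gives, with probability at least $1-\delta$,
\[
  \Psi\le 2\widehat R+\kappa\, c\sqrt{\log(1/\delta)/(2n)},
\]
where $\kappa c/n$ is the bounded-difference constant of $\Phi$. The crude bound is $\kappa=3$.

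\textbf{Step 3 (matching the constant).} The statement allows the weaker factor $4\widehat R$ in exchange for the tail $c\sqrt{2\log(1/\delta)/n}=2c\sqrt{\log(1/\delta)/(2n)}$. I would use this extra $2\widehat R$ of slack to absorb the fluctuation of $\widehat R$ by a self-bounding argument rather than by McDiarmid. The function $n\widehat R/c$ is weakly self-bounding: each coordinate's contribution is nonnegative and bounded by $1$, and removing coordinates sums to at most the whole. Its lower tail is therefore sub-Gaussian with variance proxy $\E[n\widehat R/c]$. This gives
\[
  \E\widehat R\le\widehat R+\sqrt{2c\,\E\widehat R\,\log(1/\delta)/n}.
\]
Combining this with $\E\Psi\le2\E\widehat R$, the McDiarmid upper tail for $\Psi$ (constant $c\sqrt{\log(1/\delta)/(2n)}$), and AM--GM on the square root should convert $2\E\widehat R$ into $4\widehat R$ plus a residual deviation term. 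That residual must then be packaged together with the McDiarmid term so that a single confidence level $\delta$ suffices.

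\textbf{Main obstacle.} The hard part is Step 3: producing exactly $4\widehat R+c\sqrt{2\log(1/\delta)/n}$ at confidence $1-\delta$, with no $\log(2/\delta)$ and no extra additive $c\log(1/\delta)/n$ term. If the self-bounding route leaves such a lower-order term, my fallback is to follow the lecture-note derivation \cite{lecture_note} exactly, applying McDiarmid jointly to $\Psi-2\widehat R$ as in Step 2 and tracking the one-sided differences. The point is that increasing $Z_i$'s loss raises $\Psi$ only when it also raises $\widehat R$, which should lower $\kappa$ to the required value.
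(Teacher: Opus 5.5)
Your Step 1 and the choice to apply McDiarmid once to $\Psi-2\widehat R$ are right. The paper gives no proof of this lemma; it only cites lecture notes. As written, however, your proposal does not reach the stated inequality.

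\textbf{Where it falls short.} With $\kappa=3$ you get $\Psi\le 2\widehat R+3c\sqrt{\log(1/\delta)/(2n)}$. This implies the claim only on the event $2\widehat R\ge c\sqrt{\log(1/\delta)/(2n)}$.

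Step 3 cannot close the gap. It rests on two separate tail events: McDiarmid for $\Psi$, and the self-bounding lower tail for $\widehat R$. That forces a union bound $\delta_1+\delta_2\le\delta$. Moreover, any AM--GM conversion of $\sqrt{2c\,\E\widehat R\log(1/\delta_2)/n}$ into a multiple of $\E\widehat R$ leaves a residual of order $c\log(1/\delta_2)/n$. For fixed $n$ and $\delta\to1$, the target tail $c\sqrt{2\log(1/\delta)/n}$ tends to $0$. But one of $\log(1/\delta_1),\log(1/\delta_2)$ stays at least $\log 2$, so the bound cannot be recovered.

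Your fallback heuristic is also false. Take $n=1$, $\mathcal H=\{h_1,h_2\}$, and let the distribution put mass $1-\eta$ on a point $z$ with losses $(c,0)$ and mass $\eta$ on a point $z'$ with losses $(0,0)$. Replacing $z$ by $z'$ raises $\Psi$ from $0$ to $c(1-\eta)$ while lowering $\widehat R$ from $c/2$ to $0$. So the two quantities move in opposite directions.

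\textbf{The missing idea.} $\widehat R$ has bounded differences $c/(2n)$, not $c/n$. Fix $B_{-i}$, set $A_h:=\sum_{j\ne i}B_j\,l(h,Z_j)$, and write $l(h,Z_i)=c/2+m_h$ with $|m_h|\le c/2$. Since $\E_{B_i}[B_i c/2]=0$,
\[
  \E_{B_i}\sup_h\bigl(A_h+B_i\,l(h,Z_i)\bigr)
  =\E_{B_i}\sup_h\bigl(A_h+B_i m_h\bigr)
  \in\Bigl[\sup_h A_h,\ \sup_h A_h+\tfrac{c}{2}\Bigr].
\]
The lower bound follows from Jensen ($\E\sup\ge\sup\E$) and the upper bound is trivial. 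The interval does not depend on $Z_i$. Hence replacing $Z_i$ moves this conditional expectation, and after averaging over $B_{-i}$ also $n\widehat R$, by at most $c/2$.

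It follows that $\Psi-2\widehat R$ has bounded differences $c/n+2\cdot c/(2n)=2c/n$. So $\kappa=2$, which is tight by the example above. McDiarmid then gives, with probability at least $1-\delta$,
\[
  \Psi\le 2\widehat R+c\sqrt{2\log(1/\delta)/n}.
\]
This implies the stated bound because $\widehat R\ge 0$, again by Jensen. The factor $4$ is pure slack, and no self-bounding argument is needed.
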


\begin{lemma}[Rademacher complexity for bounded neural nets]
\label{lemma:appendix-rad-nn}
For a hypothesis class $\mathcal{H}$ consisting of (regularized) neural networks with bounded
weights, there exist constants $C_1,C_2>0$ such that for $n$ i.i.d.\ samples,
\begin{equation}
  \mathrm{Rad}(\mathcal{H}\circ\{Z_1,\dots,Z_n\})
  \le
  \frac{1}{\sqrt{n}}\big(C_1 + C_2\sqrt{\log d}\big),
  \label{eq:appendix-rad-nn}
\end{equation}
where $d$ is an ambient dimension parameter of the network class.
\end{lemma}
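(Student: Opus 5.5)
We would prove the bound by a layer-peeling argument on the empirical Rademacher average, following the ``exponential moment'' approach of Golowich, Rakhlin and Shamir. This avoids the $2^L$ blow-up of naive peeling.

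\textbf{Setting.} First we fix what ``bounded weights'' means, since the statement leaves it implicit. We consider networks
\[
h(x)=w_L^\top\sigma(W_{L-1}\sigma(\cdots\sigma(W_1x)))
\]
with depth $L$ and a $1$-Lipschitz, positively homogeneous activation $\sigma$ (e.g.\ ReLU). We impose the norm bounds $\|W_j\|_{1,\infty}\le M_j$, meaning each row has $\ell_1$-norm at most $M_j$. We also assume inputs satisfy $\|X_i\|_\infty\le B$ with $X_i\in\mathbb{R}^d$. The constants $C_1,C_2$ will depend only on $B$, $L$ and $\prod_j M_j$, and $d$ enters only through the input layer. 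This is exactly the dependence $\sqrt{\log d}$ claimed in \eqref{eq:appendix-rad-nn}.

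\textbf{Step 1: exponential moment.} For any $\lambda>0$, Jensen's inequality gives
\[
n\,\mathrm{Rad}(\mathcal{H}\circ\{Z_i\})\le \tfrac1\lambda\log \E\sup_h \exp\Big(\lambda\sum_i B_i h(X_i)\Big).
\]

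\textbf{Step 2: peeling.} Write $h$ as the last layer applied to the previous layer's output $g$. Using the $\ell_1$ bound on rows together with duality, $\sup\sum_i B_i h(X_i)\le M_L\sup_g\|\sum_iB_i\sigma(g(X_i))\|_\infty$. The supremum of a maximum over coordinates is a supremum over a single real-valued network of depth one less. The operations $\sup$ and $\exp$ are monotone, and $\exp(|t|)\le e^{t}+e^{-t}$. Applying the Ledoux--Talagrand contraction inequality to the convex increasing map $t\mapsto e^{\lambda t}$ then removes $\sigma$ at the cost of a factor $2$ inside the expectation. Iterating over the $L-1$ hidden layers yields
\[
\E\sup_h e^{\lambda\sum_iB_ih(X_i)}\le 2^{L}\,\E\exp\Big(\lambda\textstyle\prod_j M_j\,\big\|\sum_i B_iX_i\big\|_\infty\Big).
\]
Because the factor $2^L$ sits inside the logarithm, it contributes only an additive $L\log 2/\lambda$.

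\textbf{Step 3: input layer and optimisation.} Let $Z:=\|\sum_iB_iX_i\|_\infty$. By Massart's finite-class lemma over the $2d$ signed coordinates, $\E Z\le B\sqrt{2n\log(2d)}$. The map $(B_1,\dots,B_n)\mapsto Z$ has bounded differences $2B$, so McDiarmid's inequality gives the sub-Gaussian bound $\log\E e^{\lambda' (Z-\E Z)}\le \lambda'^2 nB^2/2$. With $M:=\prod_jM_j$, combining the steps gives
\[
n\,\mathrm{Rad}\le \frac{L\log 2}{\lambda}+\lambda M^2nB^2/2+MB\sqrt{2n\log(2d)}.
\]
Optimising $\lambda=\sqrt{2L\log2/(nM^2B^2)}$ produces
\[
\mathrm{Rad}\le \frac{MB}{\sqrt n}\Big(\sqrt{2L\log 2}+\sqrt{2\log 2}+\sqrt{2\log d}\Big).
\]
This has the form \eqref{eq:appendix-rad-nn}.

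\textbf{Main obstacle.} The delicate step is the peeling in Step 2. The contraction inequality must be applied to the exponential moment, and $\sigma$ must be removed while the absolute value and $\ell_\infty$ structure are handled. Positive homogeneity is what lets the per-layer norm bound be pulled out multiplicatively. A naive peeling directly on $\mathrm{Rad}$ would only give a $2^L$ multiplicative factor. That would still fit the statement with worse constants, so we would fall back on it if the exponential version caused technical trouble.
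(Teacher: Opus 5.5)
The paper does not prove this lemma. It lists it among ``standard PAC-type results'' with a pointer to lecture notes, and leaves unspecified both the norm constraint behind ``bounded weights'' and the meaning of $d$. Your proposal is a correct, self-contained proof for a concrete instantiation, in the style of Golowich--Rakhlin--Shamir. Each step checks out:
\begin{itemize}
\item the Jensen/exponential-moment step;
\item the peeling via $e^{|t|}\le e^{t}+e^{-t}$, sign symmetry of $(B_i)$, and the non-absolute-value Ledoux--Talagrand comparison for the convex increasing map $t\mapsto e^{\lambda t}$ (which does not even need $\sigma(0)=0$);
\item Massart over the $2d$ signed coordinates;
\item McDiarmid with bounded differences $2B$;
\item the optimisation in $\lambda$.
\end{itemize}
This gives $C_1=B\prod_j M_j\bigl(\sqrt{2L\log 2}+\sqrt{2\log 2}\bigr)$ and $C_2=\sqrt{2}\,B\prod_j M_j$. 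The bound holds for every realisation with $\|X_i\|_\infty\le B$, so the i.i.d.\ hypothesis is not actually used.

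What your route buys over the bare citation is that it pins down a class where the stated form is actually true. The $\sqrt{\log d}$ term is specific to the pairing of $\ell_1$-row constraints with $\ell_\infty$-bounded inputs. Under Frobenius-norm constraints with $\ell_2$-bounded inputs, the same technique gives a bound with no $d$ at all. Your proof also exposes that the ``constants'' grow like $\sqrt{L}$ and with the product of the layer norm budgets, which the paper's statement hides.

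Two cosmetic corrections:
\begin{itemize}
\item Only the $L-1$ activation layers are peeled, so the factor is $2^{L-1}$ rather than $2^{L}$.
\item Positive homogeneity plays no role in your setting. H\"older's inequality $w^\top v\le\|w\|_1\|v\|_\infty$ already pulls out $M_j$ and reduces the vector-valued layer to a single row. Homogeneity is what is needed in the Frobenius case, so your ``main obstacle'' remark misattributes its role.
\end{itemize}
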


\subsection{Proof of \cref{corollary:algorithm_statistical-convergence}}
\label{sec:appendix:population-Q-convergence}

\begin{definition} With $\tilde Q(x,a):=Q(x,a)-\alpha\log\pi(a\mid x)$, and $\pi(a|x) \sim \exp(Q(x, a)/\alpha)$, we define the single-critic update operator arising from \cref{eq:main_Q_update}:
\begin{equation}
\label{eq:appendix-F-tau-u-nonrich}
\begin{aligned}
  \big(\mathcal{F}_{\tau,u}(Q)\big)(x,a)
  :=
  (1-\tau)Q(x,a)
  &+ \tau r(x,a)
  + \tau \E_{a\sim\pi(\cdot\mid x)}[\tilde Q(x,a)] \\
  & + \tau\frac{\gamma\,\E_{a'\sim\pi(\cdot\mid x')}[\tilde Q(x',a')] -\E_{a\sim\pi(\cdot\mid x)}[\tilde Q(x,a)]}{u}
\end{aligned}
\end{equation}
\end{definition}
\begin{definition}
With the Richardson q-function $\tilde q_V^u := 2q_V^{u/2}-q_V^u$, the single-critic update operator is:
\begin{equation}
\label{eq:appendix-F-tau-u-rich}
\begin{aligned}
  \big(\mathcal{F}^{\mathrm{Rich}}_{\tau,u}(Q)\big)(x,a)
  &:=
  (1-\tau)Q(x,a)
  +\tau r(x,a)
  +\tau \E_{a\sim\pi(\cdot\mid x)}[\tilde Q(x,a)] \\
  &\quad
  +\frac{\tau}{u}\Big(
      4e^{-\beta u/2}\E_{a'\sim\pi(\cdot\mid x_{u/2}')}\big[\tilde Q(x_{u/2}',a')\big]
      -e^{-\beta u}\E_{a''\sim\pi(\cdot\mid x_{u}')}\big[\tilde Q(x_{u}',a'')\big]
      -3\E_{a\sim\pi(\cdot\mid x)}[\tilde Q(x,a)]
    \Big)
\end{aligned}
\end{equation}
where $x_{u/2}'=X_{t+u/2}$ and $x_u'=X_{t+u}$.
\end{definition}

\textbf{Approximation procedure.}
At each iteration $k$, we draw $n_k$ i.i.d.\ transition samples
\begin{equation}
  (X_i,A_i,R_i,X_i',U_i)_{i=1}^{n_k},
  \label{eq:appendix-samples}
\end{equation}
from a replay distribution (treated as i.i.d.\ for analysis), and form a bounded regression target
$Y_k(x,a,r,x',u)$ corresponding to the right-hand side of the population update \cref{eq:main_Q_update}. We then fit $\hat Q_{k+1}\in\mathcal{H}_Q$ by (approximately) minimizing the empirical loss $\frac{1}{n_k}\sum_{i=1}^{n_k}\phi(\hat Q_{k+1}(X_i,A_i),Y_k(Z_i))$. Assume the output range of $\mathcal{H}_Q$ and the targets are uniformly bounded. Let $\nu$ denote the state--action sampling distribution induced by the data collection scheme
(e.g.\ replay buffer sampling), and define the $L^2(\nu)$ norm by:
\[
  \|f\|_{L^2(\nu)}:=\Big(\E_{(X,A)\sim\nu}[\,|f(X,A)|^2\,]\Big)^{1/2}.
\]
Let $Q_k$ denote the exact iterate and $\hat Q_{k+1}$ the learned iterate obtained by regression at iteration $k+1$. At each step $k$, the target $Y_k$ is exactly the value $\big(\mathcal{F}_{\tau,u}(Q)\big)$ on realized samples. As a result, thanks to \cref{lemma:appendix-loss-lipschitz,lemma:appendix-generalization-iid,lemma:appendix-rad-nn}, we can estimate the one-step statistical error (which is a random variable):
\[
  \mathrm{stat}_k:=\|\hat Q_{k+1}-\mathcal{F}_{\tau,u}(\hat Q_k)\|_{L^2(\nu)}.
\]
In this case, we have:
\begin{equation}
  \mathrm{stat}_k
  \;\lesssim\;
  \frac{1}{\sqrt{n_k}}
  \bigg(1+\sqrt{\log d}+\sqrt{\log(1/\delta)}\bigg)
  \qquad\text{with probability\ $\ge 1-\delta$},
  \label{eq:appendix-stat-rate}
\end{equation}
Note that the replay buffer sampling distribution $\nu = \nu_k$ can change at each iterations. However, for large enough $n_k$, we can easily transfer from $L^2(\nu)$ to $L^2(\nu_k)$ given that $\nu_k$ is regular enough. Hence, for simplicity, we proceed with a fixed replay buffer sampling $\nu$.

\begin{lemma}[Lispchitz of critic mapping]\label{lemma:appendix-Q-map-stability-L2}
Assume the policy $\pi_k(\cdot\mid x)$ used inside $\tilde Q_k(x,a):=Q_k(x,a)-\alpha\log\pi_k(a\mid x)$
is treated as fixed when comparing two critics (policy-evaluation step).
Then the mapping $\mathcal{F}_{\tau,u}$ is Lipschitz in $L^2(\nu)$:
\begin{equation}\label{eq:appendix-Q-map-stability-L2}
  \|\mathcal{F}_{\tau,u}(Q)-\mathcal{F}_{\tau,u}(Q')\|_{L^2(\nu)}
  \le \Big(1+\frac{\tau}{u}\Big)\|Q-Q'\|_{L^2(\nu)}.
\end{equation}
\end{lemma}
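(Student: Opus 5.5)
Write $D:=Q-Q'$ and, since the policy $\pi=\pi_k$ inside $\tilde Q$ is frozen, note that the entropy terms $-\alpha\log\pi(a\mid x)$ cancel in every difference. So $\tilde Q-\tilde Q'=D$. Introduce the policy-averaged difference $m(x):=\E_{a\sim\pi(\cdot\mid x)}[D(x,a)]$ and the one-step transition average $(Pm)(x,a):=\E_x^a[m(X_u)]$, which is the population meaning of the sampled term $\E_{a'\sim\pi(\cdot\mid x')}[D(x',a')]$. Reading off \cref{eq:appendix-F-tau-u-nonrich} gives the exact identity
\[
\mathcal{F}_{\tau,u}(Q)-\mathcal{F}_{\tau,u}(Q')
=(1-\tau)D+\tau\Big(1-\tfrac1u\Big)m+\tfrac{\tau\gamma}{u}\,Pm .
\]
The plan is to bound the first two terms together by $\|D\|_{L^2(\nu)}$ and the last term by $\tfrac{\tau}{u}\|D\|_{L^2(\nu)}$. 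This yields the constant $1+\tau/u$ without paying $2\tau/u$.

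\textbf{Step 1 (grouping via an orthogonal decomposition).} Split $D=m+(D-m)$, regarding $m$ as a function of $(x,a)$ constant in $a$. Then
\[
(1-\tau)D+\tau(1-\tfrac1u)m=(1-\tau)(D-m)+\big(1-\tfrac{\tau}{u}\big)m .
\]
I will use the (implicit, policy-evaluation) convention that the action conditional of $\nu$ is $\pi(\cdot\mid x)$, so that $\E_\nu[(D-m)(X,A)\,g(X)]=0$ for every $g$. The two pieces are then orthogonal in $L^2(\nu)$, and Pythagoras gives
\[
\big\|(1-\tau)(D-m)+(1-\tfrac{\tau}{u})m\big\|^2\le \max\{(1-\tau)^2,(1-\tfrac{\tau}{u})^2\}\,\|D\|^2 .
\]
The maximum is at most $1$ whenever $\tau\le 1$ and $\tau\le 2u$, the regime $\tau/u\to0$ of \cref{thm:q_convergence}. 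This is the key point: the naive triangle inequality would charge $\tau/u$ twice, once for $-\tau m/u$ and once for $\tau\gamma Pm/u$, whereas here $-\tfrac{\tau}{u}m$ is absorbed into the contractive part.

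\textbf{Step 2 (the transported term).} Bound $\|Pm\|_{L^2(\nu)}\le\|m\|_{L^2(\nu_X)}\le\|D\|_{L^2(\nu)}$. The first inequality uses conditional Jensen, $|\E_x^a[m(X_u)]|^2\le\E_x^a[m(X_u)^2]$, together with the replay-buffer convention that the next-state marginal of $x'$ under $\nu$ coincides with the state marginal of $\nu$, i.e.\ $\nu$ is invariant for the sampled transitions. The second inequality is Jensen in $a$ under $\pi$. Then $\gamma\le1$ gives $\big\|\tfrac{\tau\gamma}{u}Pm\big\|\le\tfrac{\tau}{u}\|D\|$.

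\textbf{Step 3.} Combine Steps 1 and 2 by the triangle inequality to get \cref{eq:appendix-Q-map-stability-L2}.

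The main obstacle is Step 2 together with the orthogonality in Step 1. Both require that $\nu$ be compatible with $\pi$ and with the transition kernel. The paper already adopts the simplification of a fixed, regular replay distribution $\nu$ just above the lemma, so I would state these two compatibility conditions explicitly as the meaning of that simplification. If exact invariance fails, I would fall back on a concentrability constant $c_\nu$ multiplying $\tau/u$.
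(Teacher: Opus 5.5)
Your proof is correct under the two compatibility conditions you state. It takes a genuinely different and sharper route than the paper's.

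\textbf{The paper's route.} The paper views $\mathcal{F}_{\tau,u}$ as affine in the three pieces $Q(x,a)$, $\E_{a\sim\pi}[Q(x,a)]$ and $\E_{a'\sim\pi}[Q(x',a')]$. It bounds each piece by $\|Q-Q'\|_{L^2(\nu)}$ via conditional Jensen and adds the coefficients by the triangle inequality. Written out, as the paper itself does just before \cref{theorem:appendix-regret_bound}, that constant is $|1-\tau|+|\tau-\tfrac{\tau}{u}|+\tfrac{\gamma\tau}{u}$. For $u\ge 1$ it equals $1-(1-\gamma)\tfrac{\tau}{u}$. For $u<1$ and $\tau\le 1$ it is $1-2\tau+(1+\gamma)\tfrac{\tau}{u}$, which exceeds $1+\tfrac{\tau}{u}$ as soon as $\gamma>2u$. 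That is exactly the small-$u$ regime used in \cref{thm:q_convergence}, and it is the double charge of $\tau/u$ you identified. So the paper's one-line argument really delivers the stated constant only for large $u$.

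\textbf{What your route buys.} Your orthogonal split $D=(D-m)+m$ absorbs $-\tfrac{\tau}{u}m$ into the contractive part. It gives the constant $\max\{|1-\tau|,|1-\tfrac{\tau}{u}|\}+\tfrac{\gamma\tau}{u}$. This reproduces the paper's $1-(1-\gamma)\tfrac{\tau}{u}$ when $u\ge 1$, and is at most $1+\tfrac{\tau}{u}$ for all $\tau\le 1$, $\tau\le 2u$.

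\textbf{What it costs.} Pythagoras requires the action-conditional of $\nu$ to be exactly $\pi(\cdot\mid x)$. The paper's Jensen step tacitly uses the same identification, and the same next-state invariance for the $x'$ term, so you are only surfacing hidden hypotheses. If these are weakened to a density-ratio (concentrability) bound, the triangle-inequality route merely picks up constants. In your route the cross term $\langle D-m,m\rangle_{L^2(\nu)}$ no longer vanishes, so the gain is not automatic.

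\textbf{Your restriction on $\tau$.} This is not a defect relative to the literal statement, because some restriction is unavoidable. For $D$ with zero $\pi$-mean in $a$ one has $m=0$, hence $\mathcal{F}_{\tau,u}(Q)-\mathcal{F}_{\tau,u}(Q')=(1-\tau)D$. Then $|1-\tau|>1+\tfrac{\tau}{u}$ already for, e.g., $\tau=3$, $u=100$.
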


\begin{proof}
The update $\mathcal{F}_{\tau,u}$ is affine in $Q$ through terms of the form
$Q(x,a)$, $\E_{a\sim\pi(\cdot\mid x)}[Q(x,a)]$, and $\E_{a'\sim\pi(\cdot\mid x')}[Q(x',a')]$,
all multiplied by coefficients bounded by $1+\tau/u$.
Using Jensen's inequality for conditional expectations $\big\|\E[Q(X,\cdot)\mid X]\big\|_{L^2}\le \|Q(X,\cdot)\|_{L^2}$ yields the stated Lipschitz bound.
\end{proof}

\AlgorithmConvergence*
\begin{proof}
Given an $\epsilon > 0$, from \cref{thm:algorithm-convergence}, given the algorithmic error go to $0$, we can choose an iteration $L$ so that $\|Q_L-Q^{(\alpha)}\|_{L^2(\nu)} \leq \|Q_L-Q^{(\alpha)}\|_{\infty} < \epsilon/2$. Additionally, choose $n_j$ large enough, so that $\mathrm{stat^{(2)}_j} < (1/L_0^{L-1-j})\dfrac{\epsilon}{2L}$ for probability of at least $1 - \delta/k$ for $L_0 = 1 + \tau/u$.

Now let $Q_k$ be the exact iteration results and $\hat Q_k$ the learned one. Define the statistical deviation in $L^2(\nu)$:
\[
  \epsilon_k^{(2)}:=\|\hat Q_k-Q_k\|_{L^2(\nu)}.
\]
Then we have the decomposition,
\begin{equation}
\begin{aligned}
  \epsilon_{k+1}^{(2)}
  &=
  \|\hat Q_{k+1}-Q_{k+1}\|_{L^2(\nu)} \\
  &\le
  \|\hat Q_{k+1}-\mathcal{F}_{\tau,u}(\hat Q_k)\|_{L^2(\nu)}
  +
  \|\mathcal{F}_{\tau,u}(\hat Q_k)-\mathcal{F}_{\tau,u}(Q_k)\|_{L^2(\nu)}.
\end{aligned}
  \label{eq:appendix-error-split-L2}
\end{equation}
The first term is the one-step regression error $\mathrm{stat}_k$. The second term is bounded by \cref{lemma:appendix-Q-map-stability-L2}:
\[
  \|\mathcal{F}_{\tau,u}(\hat Q_k)-\mathcal{F}_{\tau,u}(Q_k)\|_{L^2(\nu)}
  \le
  \Big(1+\frac{\tau}{u}\Big)\epsilon_k^{(2)}.
\]
Hence,
\begin{equation}\label{eq:appendix-error-recursion-L2}
  \epsilon_{k+1}^{(2)}
  \le
  \mathrm{stat}_k
  +
  \Big(1+\frac{\tau}{u}\Big)\epsilon_k^{(2)}.
\end{equation}
Given the same initial point with $\epsilon_0^{(2)} = 0$, an inductive argument gives:
\begin{equation}\label{eq:appendix-error-unroll-L2}
  \epsilon_{L}^{(2)}
  \le
  \sum_{j=0}^{L-1}\Big(1+\frac{\tau}{u}\Big)^{L-1-j}\mathrm{stat}_j = \sum_{j=0}^{L-1}L_0^{L-1-j}\mathrm{stat}_j
\end{equation}
Based on the choice of $n_k$, the RHS can further be bounded by $\sum_{j=0}^{L-1} \epsilon/(2L) = \epsilon/2$ with probability of at least $1 - k(\delta/k) = 1 - \delta$. Finally, with probability of at least $1-\delta$, we have the decomposition:
\begin{equation}\label{eq:appendix-total-error-L2}
  \|\hat Q_L-Q^{(\alpha)}\|_{L^2(\nu)}
  \le
  \underbrace{\|Q_L-Q^{(\alpha)}\|_{L^2(\nu)}}_{\text{algorithmic error}}
  +
  \underbrace{\|\hat Q_L-Q_L\|_{L^2(\nu)}}_{=\epsilon_L^{(2)}\ \text{statistical error}} < \epsilon/2 + \epsilon/2 = \epsilon
\end{equation}
\end{proof}

We defer the theoretical analysis with more refined control of sample complexity and regret bound to \cref{sec:appendix:regret_bound}
\section{Random time step $U$}\label{sec:appendix:random_time}
Many environments produce irregular times $U$, and here we provide similar convergence results for (random time) $U$. The deterministic finite-horizon approximation results extend directly by conditioning on $U$ and averaging.
The key observation is that any deterministic Hamiltonian bias bound in $u$ transfers to random $U$
by replacing $u^\gamma$ with $\E[U^\gamma]$.

\begin{lemma}[Hamiltonian bias]\label{lemma:random-time-H-transfer}
Fix $\alpha\ge 0$ and a bounded test function $V$.
Let $U$ be a random variable taking values in $(0,1]$, independent of the state and control.
Assume that for some $\gamma>0$ and constant $C_H<\infty$,
\begin{equation}\label{eq:deterministic-H-bias-general}
  \|H_u^{(\alpha)}(V)-H^{(\alpha)}(V)\|_\infty \le C_H\,u^\gamma,
  \qquad \forall u\in(0,1].
\end{equation}
Define the random Hamiltonian $H_U^{(\alpha)}(V)$ by evaluation at $u=U$,
and its averaged version $\bar H_U^{(\alpha)}(V):=\E[H_U^{(\alpha)}(V)]$ (expectation over $U$ only).
Then,
\begin{equation}\label{eq:random-time-H-bias-general}
  \|\bar H_U^{(\alpha)}(V)-H^{(\alpha)}(V)\|_\infty
  \le C_H\,\E[U^\gamma].
\end{equation}
The same statement holds if $H_u^{(\alpha)}$ is replaced by the Richardson Hamiltonian $\tilde H_u^{(\alpha)}$.
\end{lemma}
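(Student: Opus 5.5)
The plan is to fix $x\in\mathbb{R}^d$ and write the averaged Hamiltonian as an expectation over $U$ alone:
\[
\bar H_U^{(\alpha)}(V)(x)-H^{(\alpha)}(V)(x)=\E\big[H_U^{(\alpha)}(V)(x)-H^{(\alpha)}(V)(x)\big].
\]
Since $H^{(\alpha)}(V)(x)$ does not depend on $U$, it can be moved inside the expectation. The expectation is taken over $U$ only, and $U$ is independent of the state and control. So for each realization $U=u$, the random quantity $H_U^{(\alpha)}(V)(x)$ is exactly the deterministic finite-horizon Hamiltonian $H_u^{(\alpha)}(V)(x)$. In other words, conditioning on $U=u$ reduces everything to the deterministic case.

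The key steps, in order, are as follows.

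First, I will check measurability and integrability of $u\mapsto H_u^{(\alpha)}(V)(x)$ on $(0,1]$, so that the expectation is well defined.
\begin{itemize}
\item Continuity in $u$ follows from the definition of $q_V^u$ (continuity of $u\mapsto \E_x^a[V(X_u)]$ for bounded continuous $V$, together with dominated convergence through the log-integral or the sup).
\item Integrability follows because the assumed bound gives $|H_u^{(\alpha)}(V)(x)|\le |H^{(\alpha)}(V)(x)|+C_H u^\gamma\le \|H^{(\alpha)}(V)\|_\infty+C_H$, which is bounded uniformly in $u\in(0,1]$.
\end{itemize}

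Second, I will apply Jensen's inequality (the triangle inequality for expectations) and then the deterministic bound \eqref{eq:deterministic-H-bias-general} pointwise in $u$:
\[
|\bar H_U^{(\alpha)}(V)(x)-H^{(\alpha)}(V)(x)|\le \E\big[|H_U^{(\alpha)}(V)(x)-H^{(\alpha)}(V)(x)|\big]\le \E\big[C_H U^\gamma\big]=C_H\E[U^\gamma].
\]

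Third, the right-hand side is independent of $x$, so taking the supremum over $x$ gives \eqref{eq:random-time-H-bias-general}.

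For the Richardson case, the argument is verbatim with $\tilde H_u^{(\alpha)}$ in place of $H_u^{(\alpha)}$. The only ingredient used is the deterministic bias bound in $u$. In the paper's setting that bound is supplied by Lemma~\ref{lemma:qV-u-Richardson-bias} combined with Lemma~\ref{lemma:logsumexp-and-H-bias}, with $\gamma=2$. Here $U\in(0,1]$ should lie in the range where those lemmas hold; this is absorbed into the hypothesis that \eqref{eq:deterministic-H-bias-general} holds for all $u\in(0,1]$.

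The main obstacle is the first step: justifying that the random Hamiltonian is a measurable, integrable function of $U$, so that the interchange "condition on $U$, then average" is legitimate. I expect continuity of $u\mapsto q_V^u(x,a)$ uniformly in $a$, which follows from Lemma~\ref{lemma:moment-bound} and the regularity of $V$, to settle this. The hard-max case $\alpha=0$ is handled the same way, since a supremum of functions continuous uniformly in $a$ is continuous.
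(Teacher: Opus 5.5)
Your proposal is correct and matches the paper's proof, which also applies Jensen's inequality over $U$ and then the deterministic bound $C_H U^\gamma$ realization by realization. The only cosmetic difference is that the paper moves the sup-norm inside the expectation, $\|\E[H_U^{(\alpha)}(V)-H^{(\alpha)}(V)]\|_\infty\le \E[\|H_U^{(\alpha)}(V)-H^{(\alpha)}(V)\|_\infty]$, while you work pointwise in $x$ and take the supremum at the end. Your measurability and integrability check is extra care the paper omits (the definition of $\bar H_U^{(\alpha)}(V)$ already presupposes the expectation exists), and for integrability you only need the pointwise bound $|H^{(\alpha)}(V)(x)|+C_H$, since $\|H^{(\alpha)}(V)\|_\infty$ need not be finite under linear-growth drift.
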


\begin{proof}
By Jensen and the deterministic bound \eqref{eq:deterministic-H-bias-general},
\[
  \|\bar H_U^{(\alpha)}(V)-H^{(\alpha)}(V)\|_\infty
  = \Big\|\E\big[H_U^{(\alpha)}(V)-H^{(\alpha)}(V)\big]\Big\|_\infty
  \le \E\big[\|H_U^{(\alpha)}(V)-H^{(\alpha)}(V)\|_\infty\big]
  \le C_H\,\E[U^\gamma].
\]
The Richardson case is identical.
\end{proof}

\begin{theorem}[Picard iteration with random time step]\label{theorem:random-U-Picard}
Fix $\alpha\ge0$ and $\tau>0$.
Let $\Phi_\tau^{(\alpha)}$ be the dynamic-programming semigroup from
\eqref{eq:appendix-Phi-definition}. Consider the averaged Picard operator
\begin{equation}\label{eq:random-time-averaged-Picard-operator}
  (\widetilde T_{\tau}^{(\alpha)} V)(x)
  := V(x) + \tau\,\bar H_U^{(\alpha)}(V)(x)
  = V(x) + \tau\,\E\!\big[H_U^{(\alpha)}(V)(x)\big],
\end{equation}
and the iterates $\widetilde V_{k+1}:=\widetilde T_\tau^{(\alpha)}(\widetilde V_k)$.

Assume further that the local expansion error satisfies, for all $V \in \mathcal{D}$,
\begin{equation}\label{eq:random-time-local-DP-mismatch}
  \|\Phi_\tau^{(\alpha)}(V) - (V+\tau H^{(\alpha)}(V))\|_\infty \le C_{\mathrm{DP}}\,\tau^{p},
\end{equation}
for some $p>1$ and constant $C_{\mathrm{DP}}<\infty$.

In addition if the deterministic Hamiltonian bias bound \eqref{eq:deterministic-H-bias-general} holds
for some $\gamma>0$, then for all $k\ge0$,
\begin{equation}\label{eq:random-time-Picard-bound-general}
  \|\widetilde V_k - V^{(\alpha)}\|_\infty
  \le e^{-\rho\tau k}\,\|\widetilde V_0 - V^{(\alpha)}\|_\infty
  + \frac{C_{\mathrm{DP}}\tau^{p} + \tau C_H\,\E[U^\gamma]}{1-e^{-\rho\tau}}.
\end{equation}
\end{theorem}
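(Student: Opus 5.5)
The argument reuses the perturbed-contraction scheme from the proofs of \cref{thm:value_convergence} and \cref{thm:q_convergence}, with the averaged Hamiltonian $\bar H_U^{(\alpha)}$ in place of $H^{(\alpha)}$. Set $V^*:=V^{(\alpha)}$, the unique fixed point of $\Phi_\tau^{(\alpha)}$ by \cref{lemma:Phi-contraction}. We compare $\widetilde V_{k+1}$ to $\Phi_\tau^{(\alpha)}(V^*) = V^*$ through the splitting
\[
  \widetilde V_{k+1}-V^*
  = \bigl(\widetilde T_\tau^{(\alpha)}(\widetilde V_k)-\Phi_\tau^{(\alpha)}(\widetilde V_k)\bigr)
  + \bigl(\Phi_\tau^{(\alpha)}(\widetilde V_k)-\Phi_\tau^{(\alpha)}(V^*)\bigr).
\]
The second bracket is bounded in sup-norm by $e^{-\beta\tau}\|\widetilde V_k-V^*\|_\infty$ by \cref{lemma:Phi-contraction}. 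We therefore take $\rho=\beta$; any $\rho\le\beta$ also works, since then $e^{-\beta\tau}\le e^{-\rho\tau}$.

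For the first bracket (the one-step mismatch), insert the ideal Picard step $T_\tau^{(\alpha)}(\widetilde V_k)=\widetilde V_k+\tau H^{(\alpha)}(\widetilde V_k)$:
\[
  \|\widetilde T_\tau^{(\alpha)}(V)-\Phi_\tau^{(\alpha)}(V)\|_\infty
  \le \|\Phi_\tau^{(\alpha)}(V)-T_\tau^{(\alpha)}(V)\|_\infty
  + \tau\,\|\bar H_U^{(\alpha)}(V)-H^{(\alpha)}(V)\|_\infty .
\]
The first term is at most $C_{\mathrm{DP}}\tau^p$ by the assumed local mismatch \eqref{eq:random-time-local-DP-mismatch}; this is exactly the input that \cref{lemma:per-step-mismatch} (with $p=3/2$) and \cref{lemma:Phi-smooth-local-expansion} (with $p=2$) supply. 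The second term is at most $\tau C_H\E[U^\gamma]$ by \cref{lemma:random-time-H-transfer}.

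Combining the two brackets gives the scalar recursion
\[
  e_{k+1}\le e^{-\rho\tau}e_k + C_{\mathrm{DP}}\tau^p+\tau C_H\E[U^\gamma],
  \qquad e_k:=\|\widetilde V_k-V^*\|_\infty .
\]
Unrolling it by induction and summing the geometric series $\sum_{j<k}e^{-\rho\tau j}\le (1-e^{-\rho\tau})^{-1}$ gives \eqref{eq:random-time-Picard-bound-general}.

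The main obstacle is a uniformity issue. Both \eqref{eq:random-time-local-DP-mismatch} and \eqref{eq:deterministic-H-bias-general} must hold with constants that do not depend on the particular iterate $\widetilde V_k$. In the earlier lemmas those constants depend on $\|V\|_{C^2}$ or $\|V\|_{C^4}$ and on the Lipschitz constant of $L^aV$. I would handle this as the earlier proofs implicitly do: read the hypotheses as uniform over the domain $\mathcal{D}$ ("for all $V\in\mathcal{D}$") and assume the iterates stay in $\mathcal{D}$. If the Hamiltonian bias bound is only stated for a fixed $V$, I would state explicitly that $C_H$ is taken uniform over $\mathcal{D}$.

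A smaller point concerns the measurability of $u\mapsto H_u^{(\alpha)}(V)(x)$, which is needed for $\bar H_U^{(\alpha)}$ to be well defined. It follows from the continuity of $u\mapsto \E_x^a[V(X_u)]$ together with the independence of $U$ from the state and control.
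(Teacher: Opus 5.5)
Your proposal is correct and follows the paper's proof essentially step for step: the same one-step mismatch bound obtained from the assumed local DP expansion plus \cref{lemma:random-time-H-transfer}, the same contraction-plus-fixed-point recursion, and the same geometric unrolling. Your explicit choice $\rho\le\beta$, and your remark that $C_{\mathrm{DP}}$ and $C_H$ must be uniform over the iterates, which must stay in $\mathcal{D}$, spell out assumptions the paper's proof leaves implicit.
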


\begin{proof}
For any $V$,
\[
  \|\widetilde T_\tau^{(\alpha)}(V) - (V+\tau H^{(\alpha)}(V))\|_\infty
  = \tau\|\bar H_U^{(\alpha)}(V)-H^{(\alpha)}(V)\|_\infty
  \le \tau C_H\,\E[U^\gamma]
\]
by Lemma~\ref{lemma:random-time-H-transfer}. Combining with the local semigroup mismatch \eqref{eq:random-time-local-DP-mismatch} gives
\[
  \|\widetilde T_\tau^{(\alpha)}(V) - \Phi_\tau^{(\alpha)}(V)\|_\infty
  \le C_{\mathrm{DP}}\tau^{p} + \tau C_H\,\E[U^\gamma].
\]
Again, using contraction of $\Phi_\tau^{(\alpha)}$ and the fixed-point identity
$\Phi_\tau^{(\alpha)}(V^{(\alpha)})=V^{(\alpha)}$ yields the recursion
\[
  \|\widetilde V_{k+1}-V^{(\alpha)}\|_\infty
  \le e^{-\rho\tau}\|\widetilde V_k-V^{(\alpha)}\|_\infty
     + C_{\mathrm{DP}}\tau^{p} + \tau C_H\,\E[U^\gamma],
\]
and a similar inductive argument proves \eqref{eq:random-time-Picard-bound-general}.
\end{proof}

\begin{corollary} Based on the error estimates from \cref{sec:appendix:q_convergence}, we can naturally extends to random time settings under the following 3 scenarios:
 
\textbf{(1) Finite-horizon Picard-Hamiltonian iterations with $q^u_V$ under standard assumption}. If $\|H_u^{(\alpha)}(V)-H^{(\alpha)}(V)\|_\infty \le C u^{1/2}$ and $\|\Phi_\tau^{(\alpha)}(V)-(V+\tau H^{(\alpha)}(V))\|_\infty \le C\tau^{3/2}$, then \eqref{eq:random-time-Picard-bound-general} holds with $(p,\gamma)=(3/2,1/2)$ and the random bias term becomes $\tau\E[U^{1/2}]$.

\textbf{(2) Finite-horizon Picard-Hamiltonian iterations with $q^u_V$ under smoother assumption}. If $\|H_u^{(\alpha)}(V)-H^{(\alpha)}(V)\|_\infty \le C u$ and the semigroup mismatch is $\mathcal{O}(\tau^2)$, then $(p,\gamma)=(2,1)$ and the random bias term becomes $\tau\E[U]$.

\textbf{(3) Richardson Picard-Hamiltonian iterations}. If $\|\tilde H_u^{(\alpha)}(V)-H^{(\alpha)}(V)\|_\infty \le C u^2$ and the semigroup mismatch is $\mathcal{O}(\tau^2)$, then $(p,\gamma)=(2,2)$ and the random bias term becomes $\tau\E[U^2]$.
\end{corollary}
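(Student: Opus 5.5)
The corollary follows by instantiating Theorem~\ref{theorem:random-U-Picard} three times. The theorem needs only two inputs: a local dynamic-programming mismatch of order $\tau^p$ as in \eqref{eq:random-time-local-DP-mismatch}, and a deterministic Hamiltonian bias of order $u^\gamma$ as in \eqref{eq:deterministic-H-bias-general}. In each scenario I would identify the pair $(p,\gamma)$ from results already proved in Appendices~\ref{sec:appendix:value_convergence} and~\ref{sec:appendix:q_convergence}. I would then invoke Lemma~\ref{lemma:random-time-H-transfer} to turn $u^\gamma$ into $\E[U^\gamma]$, and read off the bound \eqref{eq:random-time-Picard-bound-general}. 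Throughout, the contraction rate $\rho$ is taken to be $\beta$, using Lemma~\ref{lemma:Phi-contraction}.

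For scenario (1), the mismatch $\|\Phi_\tau^{(\alpha)}(V)-(V+\tau H^{(\alpha)}(V))\|_\infty\le C\tau^{3/2}$ is exactly Lemma~\ref{lemma:per-step-mismatch}, which gives $p=3/2$. The bias $\|H_u^{(\alpha)}(V)-H^{(\alpha)}(V)\|_\infty\le Cu^{1/2}$ is \eqref{eq:appendix-Hu-H-sqrtu-bias} in Lemma~\ref{lemma:qV-u-sqrtu-bias}, which gives $\gamma=1/2$. That lemma states the bias only for $u\in(0,u_0]$, whereas Lemma~\ref{lemma:random-time-H-transfer} needs it on all of $(0,1]$. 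I would close this gap by enlarging the constant: on $[u_0,1]$, Lemma~\ref{lemma:qV-u-Lipschitz-in-V}-type crude bounds give $\|q_V^u\|_\infty\le 2\|V\|_\infty/u_0+R_{\max}$, so the bias is bounded there by a constant, and that constant is at most $C'u^{1/2}$ for a suitable $C'$.

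For scenario (2), the $\mathcal{O}(\tau^2)$ mismatch is Lemma~\ref{lemma:Phi-smooth-local-expansion}, so $p=2$. Combining Lemma~\ref{lemma:qV-u-first-order-bias} with \eqref{eq:appendix-Hu-H-bound} of Lemma~\ref{lemma:logsumexp-and-H-bias} gives $\|H_u^{(\alpha)}(V)-H^{(\alpha)}(V)\|_\infty\le C_q u$, so $\gamma=1$.

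For scenario (3), the mismatch is again Lemma~\ref{lemma:Phi-smooth-local-expansion}, so $p=2$. Combining Lemma~\ref{lemma:qV-u-Richardson-bias} with \eqref{eq:appendix-Htilde-H-bound} gives a bias of $\tilde C_q u^2$, so $\gamma=2$. Here I would use the last sentence of Lemma~\ref{lemma:random-time-H-transfer}, which covers the Richardson Hamiltonian, and run the proof of Theorem~\ref{theorem:random-U-Picard} with $\tilde H$ in place of $H_u$. Nothing in that proof changes, because the operator enters only through the averaged bias bound.

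The main obstacle is uniformity rather than any new estimate. Theorem~\ref{theorem:random-U-Picard} applies the mismatch and bias bounds at every iterate $\widetilde V_k$, so the constants $C_{\mathrm{DP}}$ and $C_H$ must hold uniformly over the class $\mathcal{D}$. The lemmas above give constants that depend on $V$ through $\|V\|_{C^2}$, $\|(L^a)^2V\|_\infty$, and similar norms. I would handle this the way the earlier proofs implicitly do: take $\mathcal{D}$ under Assumptions~\ref{assumption:test-functions} and~\ref{assumption:value-C4} to carry uniform bounds on these quantities, and assume the iterates remain in $\mathcal{D}$. Under that standing convention, each scenario becomes a direct substitution into \eqref{eq:random-time-Picard-bound-general}, producing the random bias terms $\tau\E[U^{1/2}]$, $\tau\E[U]$ and $\tau\E[U^2]$ respectively.
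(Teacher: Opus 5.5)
Your proposal is correct and matches the paper's implicit argument. The paper gives the corollary no separate proof; it is obtained by instantiating \cref{theorem:random-U-Picard}, with $\rho=\beta$ from \cref{lemma:Phi-contraction}, at $(p,\gamma)=(3/2,1/2)$, $(2,1)$, $(2,2)$, using the mismatch and bias estimates of \cref{sec:appendix:value_convergence,sec:appendix:q_convergence} together with \cref{lemma:random-time-H-transfer}, exactly as you do.

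Your detour extending the scenario-(1) bias from $(0,u_0]$ to $(0,1]$ is unnecessary, because the corollary assumes these bounds outright. As sketched, that crude-bound step would also tacitly need $\|q_V\|_\infty<\infty$, which linear-growth drift does not guarantee under the standard assumptions.
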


\subsection{$q$-convergence}\label{sec:random-holding-time-q}

The Hamiltonian transfer in Lemma~\ref{lemma:random-time-H-transfer} is immediate because the bias bound
depends only on positive moments $\E[U^\gamma]$.
For the advantage-rate $q_V^u$, an additional stability condition is needed because the finite-horizon
definition contains a division by $u$, making the map $V\mapsto q_V^u$ Lipschitz with constant of order $1/u$.

\begin{lemma}[Random-time Lipschitz stability of $q_V^U$]\label{lemma:random-time-q-Lipschitz}
Fix $\beta>0$ and for $u \in (0,1]$. Let $U$ be a random variable taking values in $(0,1]$, independent of the state and control.
Then for any bounded $V,W$,
\begin{equation}\label{eq:random-time-q-Lipschitz}
  \E\big[\|q_V^U-q_W^U\|_\infty\big]
  \le 2\,\E\!\Big[\frac{1}{U}\Big]\;\|V-W\|_\infty,
\end{equation}
and
\begin{equation}\label{eq:random-time-qtilde-Lipschitz}
  \E\big[\|\tilde q_V^U-\tilde q_W^U\|_\infty\big]
  \le 10\,\E\!\Big[\frac{1}{U}\Big]\;\|V-W\|_\infty.
\end{equation}
In particular, if $U\ge u_{\min}>0$ almost surely, then the above bounds hold with
$\E[1/U]\le 1/u_{\min}$.
\end{lemma}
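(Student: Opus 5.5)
The plan is to condition on $U$ and apply the deterministic Lipschitz bound of Lemma~\ref{lemma:qV-u-Lipschitz-in-V} pathwise, then take expectation. Since $U$ is independent of the state and control, for every realization $U=u\in(0,1]$ the random objects $q_V^U,q_W^U$ reduce to the deterministic finite-horizon estimators $q_V^u,q_W^u$ of \eqref{eq:appendix-qV-u-definition}. Here the inner expectation $\E_x^a[\cdot]$ is over the diffusion only, and $u$ is frozen.

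For \eqref{eq:random-time-q-Lipschitz}, I will repeat the cancellation argument from Lemma~\ref{lemma:qV-u-Lipschitz-in-V} for fixed $u$. The reward cancels, so
\[
\big|q_V^u(x,a)-q_W^u(x,a)\big| = \frac{1}{u}\Big|e^{-\beta u}\E_x^a[(V-W)(X_u)]-(V-W)(x)\Big| \le \frac{1+e^{-\beta u}}{u}\|V-W\|_\infty \le \frac{2}{u}\|V-W\|_\infty .
\]
Taking the supremum over $(x,a)$ gives the pointwise-in-$u$ bound $\|q_V^u-q_W^u\|_\infty\le \frac{2}{u}\|V-W\|_\infty$. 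Before integrating over the law of $U$, I will check that $u\mapsto\|q_V^u-q_W^u\|_\infty$ is measurable. It is a supremum of functions that are continuous in $u$, since $t\mapsto \E_x^a[f(X_t)]$ is continuous for bounded continuous $f$, so it is lower semicontinuous and hence measurable. Integrating the bound over $U$ then gives the first claim. If $\E[1/U]=\infty$ the claim is trivial.

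For the Richardson bound \eqref{eq:random-time-qtilde-Lipschitz}, I will use $\tilde q_V^u=2q_V^{u/2}-q_V^u$ and the triangle inequality:
\[
\|\tilde q_V^u-\tilde q_W^u\|_\infty \le 2\cdot\frac{2}{u/2}\|V-W\|_\infty+\frac{2}{u}\|V-W\|_\infty=\frac{10}{u}\|V-W\|_\infty .
\]
This is the honest constant. The sharper $6/u$ in Lemma~\ref{lemma:qV-u-Lipschitz-in-V} does not follow from this count, because $2\cdot 2/(u/2)=8/u$. The statement's constant $10$ is therefore exactly what this argument yields. Here $u/2\in(0,1/2]$, so no range issue arises. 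Taking expectation over $U$ gives the second claim.

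Finally, if $U\ge u_{\min}$ almost surely, then $1/U\le 1/u_{\min}$ almost surely, so $\E[1/U]\le 1/u_{\min}$ and both bounds hold with this constant.

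There is no real obstacle. The only points needing care are measurability in $u$, which is needed to integrate, and making explicit that independence of $U$ lets us treat $u$ as deterministic inside $\E_x^a$.
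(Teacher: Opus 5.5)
Your proof is correct and follows the paper's argument: fix $u$, cancel the reward term, bound by $\frac{2}{u}\|V-W\|_\infty$, apply $\tilde q_V^u=2q_V^{u/2}-q_V^u$ with the triangle inequality to get $\frac{10}{u}$, then evaluate at $u=U$ and take expectations. Your measurability check goes slightly beyond the paper and is valid for $V,W\in C_b$, which is the paper's function space; you are also right that the $6/u$ in Lemma~\ref{lemma:qV-u-Lipschitz-in-V} is an arithmetic slip, since the same count gives $10/u$.
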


\begin{proof}
Fix $u\in(0,1]$. Using definitions of $q^u$ and cancelling $r(x,a)$, we have:
\[
  |q_V^u(x,a)-q_W^u(x,a)|
  \le \frac{1}{u}\Big(e^{-\beta u}\big|\E_x^a[V(X_u^a)-W(X_u^a)]\big| + |V(x)-W(x)|\Big)
  \le \frac{2}{u}\|V-W\|_\infty.
\]
Taking the supremum over $(x,a)$ yields $\|q_V^u-q_W^u\|_\infty\le \frac{2}{u}\|V-W\|_\infty$.
Evaluating at $u=U$ and taking expectation gives \eqref{eq:random-time-q-Lipschitz}.
For Richardson, apply the same bound to $u$ and $u/2$:
\[
  \|\tilde q_V^u-\tilde q_W^u\|_\infty
  \le 2\|q_V^{u/2}-q_W^{u/2}\|_\infty + \|q_V^{u}-q_W^{u}\|_\infty
  \le \Big(2\cdot\frac{4}{u}+\frac{2}{u}\Big)\|V-W\|_\infty
  = \frac{10}{u}\|V-W\|_\infty,
\]
and again evaluate at $u=U$ and take expectation to obtain \eqref{eq:random-time-qtilde-Lipschitz}.
\end{proof}

\begin{theorem}[Random holding times: averaged $q$-convergence]\label{theorem:random-time-q-convergence}
Let $V^*:=V^{(\alpha)}$ be the optimal value function.
Assume the random-time averaged Picard iteration $(\widetilde V_k)_{k\ge0}$ satisfy the value error bound
\begin{equation}\label{eq:random-time-value-bound-recall}
  \|\widetilde V_k - V^*\|_\infty
  \le e^{-\rho\tau k}\|\widetilde V_0 - V^*\|_\infty
  + \frac{C_{\mathrm{DP}}\tau^{p} + \tau C_H\,\E[U^\gamma]}{1-e^{-\rho\tau}}.
\end{equation}
If $\E[1/U]<\infty$, then the corresponding random-time rates obey
\begin{equation}\label{eq:random-time-q-conv}
  \E\big[\|q_{\widetilde V_k}^U - q(V^*)\|_\infty\big]
  \le 2\,\E\!\Big[\frac{1}{U}\Big]\;\|\widetilde V_k - V^*\|_\infty
  + C_q\,\E[U^\gamma],
\end{equation}
and, if Richardson is used,
\begin{equation}\label{eq:random-time-qtilde-conv}
  \E\big[\|\tilde q_{\widetilde V_k}^U - q(V^*)\|_\infty\big]
  \le 10\,\E\!\Big[\frac{1}{U}\Big]\;\|\widetilde V_k - V^*\|_\infty
  + C_q'\,\E[U^2].
\end{equation}
\end{theorem}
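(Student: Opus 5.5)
The plan mirrors Step 2 of the proof of \cref{thm:q_convergence}, carried out pointwise in the holding time $u$ and then averaged over $U$. For each fixed realization $u\in(0,1]$ I will use the triangle inequality
\[
  \|q_{\widetilde V_k}^u - q_{V^*}\|_\infty
  \le \|q_{\widetilde V_k}^u - q_{V^*}^u\|_\infty + \|q_{V^*}^u - q_{V^*}\|_\infty .
\]
The first term is a stability term and the second a modelling-bias term, both evaluated at the deterministic horizon $u$. Since $\widetilde V_k$ is produced by the averaged operator $\widetilde T_\tau^{(\alpha)}$, it is a deterministic function independent of the draw of $U$ used to form $q^U_{\widetilde V_k}$. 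Hence substituting $u=U$ and taking expectations over $U$ alone is legitimate.

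\textbf{Stability term.} \cref{lemma:random-time-q-Lipschitz}, applied with $V=\widetilde V_k$ and $W=V^*$, gives
\[
  \E\big[\|q_{\widetilde V_k}^U - q_{V^*}^U\|_\infty\big] \le 2\,\E[1/U]\,\|\widetilde V_k - V^*\|_\infty .
\]
This is finite by the hypothesis $\E[1/U]<\infty$. In the Richardson case the same lemma supplies the constant $10$ in place of $2$.

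\textbf{Bias term.} I will invoke the deterministic bias bounds at $V=V^*$. Under the standing assumptions, \cref{lemma:qV-u-sqrtu-bias} and \cref{lemma:qV-u-first-order-bias} give $\|q^u_{V^*}-q_{V^*}\|_\infty\le C_q u^\gamma$, with $\gamma=1/2$ or $\gamma=1$ respectively. \cref{lemma:qV-u-Richardson-bias} gives $\|\tilde q^u_{V^*}-q_{V^*}\|_\infty\le C_q' u^2$. Evaluating at $u=U$ and averaging, exactly as in \cref{lemma:random-time-H-transfer}, yields $C_q\E[U^\gamma]$, respectively $C_q'\E[U^2]$. Adding the two averaged terms gives \eqref{eq:random-time-q-conv} and \eqref{eq:random-time-qtilde-conv}. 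One may then substitute the assumed value bound \eqref{eq:random-time-value-bound-recall} to get an explicit rate.

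\textbf{Main obstacle.} The deterministic bias lemmas are stated only for $u\in(0,u_0]$, whereas $U$ ranges over $(0,1]$. I would handle this by enlarging the constant. For $u\in[u_0,1]$, the quantity $\|q^u_{V^*}-q_{V^*}\|_\infty$ is bounded by $2\|V^*\|_\infty/u_0 + \|q_{V^*}\|_\infty + R_{\max}$, which in turn is at most $C u^\gamma/u_0^\gamma$. This makes the bound valid on all of $(0,1]$. A secondary point is measurability of $u\mapsto\|q^u_{V}-q_V\|_\infty$, which is needed so that the expectation makes sense. I would obtain it from continuity in $u$ of $\E^a_x[V(X_u)]$, which follows from \cref{lemma:moment-bound} and the boundedness of the derivatives of $V$.
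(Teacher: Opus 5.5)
Your proposal is correct and follows the paper's own argument: you use the same triangle-inequality split into a stability term, controlled by \cref{lemma:random-time-q-Lipschitz} (constants $2$ and $10$), and a modelling-bias term, obtained by averaging the deterministic bound $C_q u^\gamma$ (resp.\ $C_q' u^2$) at $V^*$ over $U$. Your two additions are valid refinements of points the paper passes over without comment: extending the bias bound from $(0,u_0]$ to $(0,1]$ by enlarging the constant, and checking measurability of $u\mapsto\|q_{V^*}^u-q_{V^*}\|_\infty$ (this map is a supremum of functions continuous in $u$, hence lower semicontinuous, which suffices).
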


\begin{proof}
Decompose and apply triangle inequality:
\[
  \E\big[\|q_{\widetilde V_k}^U - q(V^*)\|_\infty\big]
  \le \E\big[\|q_{\widetilde V_k}^U - q_{V^*}^U\|_\infty\big]
      + \E\big[\|q_{V^*}^U - q(V^*)\|_\infty\big].
\]
The first term is controlled by Lemma~\ref{lemma:random-time-q-Lipschitz}.
The second term is the modelling bias, which follows from the deterministic bias bound
$\|q_{V^*}^u-q(V^*)\|_\infty\le C_q u^\gamma$ and averaging over $U$:
$\E[\|q_{V^*}^U-q(V^*)\|_\infty]\le C_q\E[U^\gamma]$.
The Richardson case is identical, using the Richardson modelling bias
$\|\tilde q_{V^*}^u-q(V^*)\|_\infty\le C_q' u^2$ and the Lipschitz bound
\eqref{eq:random-time-qtilde-Lipschitz}.
\end{proof}

\textbf{Note.} If $U\in[u_{\min},1]$ almost surely, then $\E[1/U]\le 1/u_{\min}$ and the $q$-bounds \eqref{eq:random-time-q-conv}--\eqref{eq:random-time-qtilde-conv} hold with explicit constants. This matches the typical irregular-timestep setting where the environment enforces a minimum timestep.

\section{Regret bound and sample complexity}
\label{sec:appendix:regret_bound}
This section provides a more refined statistical analysis of the practical single-critic learning algorithm in \cref{algo:ct_sac}. Recall that $Q_k$ denotes the analytic iteration results without statistical error, and the $\hat Q_k$ is the function produced by sample-based estimates (with statistical error). The goal is to control the cumulative gap $\sum_{k=0}^{L-1}\|\hat Q_k-Q^{(\alpha)}\|_\infty$. The main technique that allows a more refined error analysis is a geometric argument that upgrades $L^2(\nu)$ control to uniform $\|\cdot\|_\infty$ control for Lipschitz functions on a compact domain.

At iteration $k$, recall the update operator $\mathcal{F}_{\tau,u}$ \cref{eq:appendix-F-tau-u-nonrich} that defines the regression target on $\mathcal{Z}$:
\begin{equation*}
\begin{aligned}
  \big(\mathcal{F}_{\tau,u}(Q)\big)(x,a)
  :=
  (1-\tau)Q(x,a)
  &+ \tau r(x,a)
  + \tau \E_{a\sim\pi(\cdot\mid x)}[\tilde Q(x,a)] \\
  & + \tau\frac{\gamma\,\E_{a'\sim\pi(\cdot\mid x')}[\tilde Q(x',a')] -\E_{a\sim\pi(\cdot\mid x)}[\tilde Q(x,a)]}{u}
\end{aligned}
\end{equation*}

Recall the statistical error $\mathrm{stat}_k:=\|\hat Q_{k+1}-\mathcal{F}_{\tau,u}(\hat Q_k)\|_{L^2(\nu)}$. Also recall that thanks to \cref{lemma:appendix-loss-lipschitz,lemma:appendix-generalization-iid,lemma:appendix-rad-nn}, we have:
\begin{equation}
  \mathrm{stat}_k
  \;\lesssim\;
  \frac{1}{\sqrt{n_k}}
  \bigg(1+\sqrt{\log d}+\sqrt{\log(1/\delta)}\bigg)
  \qquad\text{with probability\ $\ge 1-\delta$},
\end{equation}

\textbf{Setup and assumptions}
Let $\mathcal{X}\subset \mathbb{R}^d$ ($d = d_{\mathcal{X}}$) be the state space and $\mathcal{A}\subset \mathbb{R}^{d_{\mathcal{A}}}$ the action space. Define the state--action domain with dimension $m$:
\[
  \mathcal{Z}:=\mathcal{X}\times\mathcal{A},\qquad m:=d_{\mathcal{X}}+d_{\mathcal{A}}.
\]
We write $z=(x,a)\in\mathcal{Z}$, equip $\mathcal{Z}$ with the Euclidean metric $d(z,z'):=\|z-z'\|_2$, and denote the sup norm on $\mathcal{Z}$ by $\|f\|_\infty=\sup_{z\in\mathcal{Z}}|f(z)|$.
Throughout, we fix an algorithmic step size $\tau>0$ and a holding time $u>0$, with $\gamma:=e^{-\beta u}$.

\begin{assumption}[Compact domain and boundedness]
\label{assumption:compact-domain}
We have the following assumptions. First, we assume state-action space $\mathcal{Z}=\mathcal{X}\times\mathcal{A}$ is compact and has diameter
$\mathrm{diam}(\mathcal{Z})\le R$ for some $R>0$. Additionally, assume the rewards are bounded: $|r(x,a)|\le r_{\max}$
\end{assumption}

\begin{assumption}[Coverage density lower bound]
\label{assumption:coverage-density-lower}
At each critic regression step $k$, training samples $Z_i=(X_i,A_i)$ are drawn i.i.d.\ from a distribution $\nu_k$ supported on $\mathcal{Z}$ whose density is bounded below by $\nu_{\min}>0$ with respect to Lebesgue measure on $\mathbb{R}^m$. Equivalently, for any measurable $E\subseteq \mathcal{Z}$,
\[
  \nu_k(E)\;\ge\;\nu_{\min}\,\mathrm{Vol}(E).
\]
\end{assumption}

\begin{assumption}[Lipschitz critic]
\label{assumption:lipschitz-critic-envelope}
All critics considered are uniformly bounded and Lipschitz on $\mathcal{Z}$:
there exist constants $B_Q,L_Q>0$ such that for every iterate $Q$ (population or learned),
\[
  \|Q\|_\infty\le B_Q,
  \qquad
  |Q(z)-Q(z')|\le L_Q\,\|z-z'\|_2,
  \qquad
  \forall z,z'\in\mathcal{Z}.
\]
Moreover, for each fixed $Q$, the population mapping $\mathcal{F}_{\tau,u}(Q)$ is also Lipschitz with
\[
  \|\mathcal{F}_{\tau,u}(Q)\|_\infty\le B_F,
  \qquad
  \mathrm{Lip}\big(\mathcal{F}_{\tau,u}(Q)\big)\le L_F,
\]
for constants $B_F,L_F$ that do not depend on $k$.
(The same holds for the Richardson mapping $\mathcal{F}_{\tau,u}^{\mathrm{Rich}}$.)
\end{assumption}

This \cref{assumption:lipschitz-critic-envelope} holds true by a similar argument to the proof of \cref{lemma:appendix-Q-map-stability-L2}

\subsection{A norm-transfer argument from $L^2$ to $\|\cdot\|_\infty$}\label{sec:appendix:covering}

We start with a few lemmas from a standard norm-transfer argument.

\begin{lemma}[Uniform lower bound on ball mass]
\label{lemma:appendix-ball-mass-lower}
Under \cref{assumption:coverage-density-lower}, for any $z\in\mathcal{Z}$ and any $r>0$ with
$B_r(z)\subseteq\mathcal{Z}$,
\begin{equation}
\label{eq:appendix-ball-mass-lower}
  \nu\big(B_r(z)\big)
  \;\ge\;
  \nu_{\min}\,\mathrm{Vol}\big(B_r(z)\big)
  \;=\;
  \nu_{\min}\,c_m\,r^m,
\end{equation}
where $c_m:=\mathrm{Vol}(B_1(0))$ is the volume of the unit ball in $\mathbb{R}^m$.
\end{lemma}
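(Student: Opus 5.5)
The plan is to apply the density lower bound of \cref{assumption:coverage-density-lower} to the measurable set $E=B_r(z)$, and then to compute the Lebesgue volume of a Euclidean ball by scaling.

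First, fix $z\in\mathcal{Z}$ and $r>0$ with $B_r(z)\subseteq\mathcal{Z}$. The ball is open, hence Borel measurable, and since it lies inside $\mathcal{Z}$ it is an admissible set $E\subseteq\mathcal{Z}$ in \cref{assumption:coverage-density-lower}. Because $\nu$ has a density $p\ge\nu_{\min}$ on $\mathcal{Z}$, we get
\[
  \nu(B_r(z))=\int_{B_r(z)}p(w)\,dw\ge \nu_{\min}\,\mathrm{Vol}(B_r(z)).
\]
Here we take $\nu=\nu_k$ for the relevant iteration; the bound is uniform in $k$.

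Second, we identify $\mathrm{Vol}(B_r(z))$. Lebesgue measure is invariant under the translation $w\mapsto w-z$, so $\mathrm{Vol}(B_r(z))=\mathrm{Vol}(B_r(0))$. Under the dilation $w\mapsto w/r$ on $\mathbb{R}^m$, whose Jacobian determinant is $r^{-m}$, we have $\mathrm{Vol}(B_r(0))=r^m\,\mathrm{Vol}(B_1(0))=c_m r^m$. Combining the two steps gives \eqref{eq:appendix-ball-mass-lower}.

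There is no real obstacle here. The only point needing care is the containment hypothesis $B_r(z)\subseteq\mathcal{Z}$. It guarantees that the entire ball lies in the region where the density lower bound holds. Without it, only $\nu(B_r(z)\cap\mathcal{Z})$ could be bounded, and that would require a boundary-regularity condition on $\mathcal{Z}$, such as a cone condition, which the lemma deliberately avoids.
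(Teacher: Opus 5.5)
Your argument is correct and matches the paper: the paper states this lemma without proof, treating it as the direct specialization of the ``equivalently'' clause of \cref{assumption:coverage-density-lower} to $E=B_r(z)$, combined with translation and dilation invariance giving $\mathrm{Vol}(B_r(z))=c_m r^m$. Your caveat about the containment hypothesis is well placed, since \cref{lemma:appendix-L2-to-sup} applies this bound at a maximizer $z^*$ with $r=M/(2L_g)$ without checking $B_r(z^*)\subseteq\mathcal{Z}$; that is exactly where the boundary-regularity condition you mention would be needed.
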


\begin{lemma}[$L^2(\nu)$ to sup-norm]
\label{lemma:appendix-L2-to-sup}
Let $g:\mathcal{Z}\to\mathbb{R}$ be $L_g$-Lipschitz and suppose $\|g\|_{L^2(\nu)}\le \varepsilon$.
Under \Cref{assumption:coverage-density-lower}, one has
\begin{equation}
\label{eq:appendix-L2-to-sup}
  \|g\|_\infty
  \;\le\;
  \Big(\frac{2^{m+2}L_g^m}{\nu_{\min}c_m}\Big)^{\!\frac{1}{m+2}}\,
  \varepsilon^{\frac{2}{m+2}}.
\end{equation}
\end{lemma}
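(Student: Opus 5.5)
The plan is the standard ``a Lipschitz function cannot be large at a point without being large on a whole ball'' argument, combined with the ball-mass lower bound of \cref{lemma:appendix-ball-mass-lower}. Set $M:=\|g\|_\infty$. Since $\mathcal{Z}$ is compact and $g$ is continuous, $M$ is attained at some $z_0\in\mathcal{Z}$. If $M=0$ there is nothing to prove, so assume $M>0$. By the Lipschitz property,
\[
  |g(z)|\ge M-L_g\|z-z_0\|_2\ge \tfrac{M}{2}\qquad\text{for all } z\in B_r(z_0)\cap\mathcal{Z},\quad r:=\tfrac{M}{2L_g}.
\]

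Next I integrate $|g|^2$ against $\nu$ over this ball. This gives
\[
  \varepsilon^2\ge\|g\|_{L^2(\nu)}^2\ge \tfrac{M^2}{4}\,\nu\big(B_r(z_0)\big).
\]
Then I invoke \cref{lemma:appendix-ball-mass-lower} to get $\nu(B_r(z_0))\ge \nu_{\min}c_m r^m=\nu_{\min}c_m M^m/(2^mL_g^m)$. Combining the two displays yields
\[
  \varepsilon^2\ge \frac{\nu_{\min}c_m}{2^{m+2}L_g^m}\,M^{m+2}.
\]
Solving for $M$ gives exactly $M\le \big(2^{m+2}L_g^m/(\nu_{\min}c_m)\big)^{1/(m+2)}\varepsilon^{2/(m+2)}$, which is \eqref{eq:appendix-L2-to-sup}.

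The main obstacle is the hypothesis $B_r(z_0)\subseteq\mathcal{Z}$ in \cref{lemma:appendix-ball-mass-lower}. The maximizer may lie on $\partial\mathcal{Z}$, or $r$ may exceed the room available inside $\mathcal{Z}$. I would handle this as follows.

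First, the density bound in \cref{assumption:coverage-density-lower} applies to any measurable $E\subseteq\mathcal{Z}$. So I can work with $E=B_r(z_0)\cap\mathcal{Z}$ directly and only need a lower bound on its volume.

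Second, for a product of convex bodies (or, more generally, a domain satisfying an interior cone/regularity condition), there is a constant $\kappa\in(0,1]$ with $\mathrm{Vol}(B_r(z)\cap\mathcal{Z})\ge\kappa c_m r^m$ for all $z\in\mathcal{Z}$ and $r\le R$. That constant would simply be absorbed into the prefactor. Without such a condition I would have to state this as an added regularity assumption.

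Third, the constraint $r\le R$ is handled by the bound $M\le B_Q$-type boundedness: if $M/(2L_g)>R$, one uses the whole domain instead, which only improves the estimate.

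To match the stated constant exactly, I would take the lemma's hypothesis at face value, either by reducing to interior maximizers or by assuming the ball condition. I would flag the boundary issue as a remark rather than a gap in the main computation.
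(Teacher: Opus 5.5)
\paragraph{Verdict.}
Your main computation is the paper's proof: take a maximizer $z^*$, set $r=M/(2L_g)$, use $|g|\ge M/2$ on $B_r(z^*)$, apply \cref{lemma:appendix-ball-mass-lower}, and solve for $M$. The weak point you flag is real, and the paper has it too: it applies that lemma without checking the hypothesis $B_r(z^*)\subseteq\mathcal{Z}$.

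\paragraph{The gap is fatal to the stated constant.}
The issue is not remark-level. With the constant as stated, the inequality is false, so no argument (including ``reducing to interior maximizers'') can recover it.
\begin{itemize}
\item \emph{Large $r$.} Take $\mathcal{Z}=[0,1]^m$ with $\nu$ uniform, so $\nu_{\min}=1$, and let $g\equiv 1$. This $g$ is $L_g$-Lipschitz for every $L_g>0$. Then $\varepsilon=1$ and the claim reads $1\le 2^{m+2}L_g^m/c_m$, which fails for small $L_g$.
\item \emph{Your third point is wrong.} When $r>R$, the whole domain has mass $\nu(\mathcal{Z})=1$. This is \emph{smaller} than the value $\nu_{\min}c_m r^m$ your computation needs once $r$ is large, so the estimate gets worse, not better. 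What that case actually yields is the different bound $M\le 2\varepsilon$.
\item \emph{Small $r$, boundary effects.} Take $\mathcal{Z}=[0,1]^2$ (a product $\mathcal{X}\times\mathcal{A}$ with $m=2$), $\nu$ uniform, and $g(z)=(M-L_g\|z\|_2)_+$ with $M\le L_g$. One computes $\|g\|_{L^2(\nu)}^2=\pi M^4/(24L_g^2)$. The stated bound then becomes $M^4\le \tfrac{16}{24}M^4=\tfrac23 M^4$, which is false. So the loss factor $\kappa$ from your second point is genuinely unavoidable.
\end{itemize}

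\paragraph{What your argument does prove.}
Assume an interior-volume condition: $\mathrm{Vol}(B_r(z)\cap\mathcal{Z})\ge\kappa c_m r^m$ for all $z\in\mathcal{Z}$ and $r\le R$. For example, a convex $\mathcal{Z}$ of inradius $\rho$ satisfies this with $\kappa=(\rho/R)^m$, since $z+\tfrac{r}{R}(\mathcal{Z}-z)\subseteq B_r(z)\cap\mathcal{Z}$. Under this condition your argument gives
\[
\|g\|_\infty\le\max\Big\{\Big(\frac{2^{m+2}L_g^m}{\kappa\,\nu_{\min}c_m}\Big)^{\frac{1}{m+2}}\varepsilon^{\frac{2}{m+2}},\ 2\varepsilon\Big\}.
\]
Split on the size of $r$:
\begin{itemize}
\item For $r\le R$, apply the density bound to $E=B_r(z^*)\cap\mathcal{Z}$.
\item For $r>R$, note that $\|z-z^*\|\le R<r$ on $\mathcal{Z}$. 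Hence $|g|\ge M/2$ everywhere and $\varepsilon^2\ge M^2/4$.
\end{itemize}
You should state the lemma in this corrected form, or add the containment hypothesis explicitly, rather than claim the original constant. In the paper's downstream use \eqref{eq:appendix-one-step-sup-rate}, $\varepsilon=\mathrm{stat}_k\to 0$. There the first branch dominates, so only the constant $C_\infty$ changes.
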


\begin{proof}
Let $z^*\in\arg\max_{z\in\mathcal{Z}}|g(z)|$ and set $M:=\|g\|_\infty=|g(z^*)|$.
Choose $r:=M/(2L_g)$. By Lipschitzness, for any $z\in B_r(z^*)$,
\[
  |g(z)|\ge |g(z^*)|-L_g\|z-z^*\|_2 \ge M - L_g r = M/2.
\]
Therefore,
\[
  \varepsilon^2
  \;\ge\;
  \int_{\mathcal{Z}} g(z)^2\,\nu(dz)
  \;\ge\;
  \int_{B_r(z^*)} g(z)^2\,\nu(dz)
  \;\ge\;
  (M/2)^2\,\nu\big(B_r(z^*)\big).
\]
By \cref{lemma:appendix-ball-mass-lower},
$\nu(B_r(z^*))\ge \nu_{\min}c_m r^m=\nu_{\min}c_m (M/(2L_g))^m$.
Substituting gives
\[
  \varepsilon^2
  \;\ge\;
  \frac{M^2}{4}\,\nu_{\min}c_m\Big(\frac{M}{2L_g}\Big)^m
  \;=\;
  \frac{\nu_{\min}c_m}{2^{m+2}L_g^m}\,M^{m+2}.
\]
Rearranging yields \Cref{eq:appendix-L2-to-sup}.
\end{proof}

\textbf{Aggregate bounds.} Now we summarize the inequality ingredients for the main bound. Because of \cref{lemma:appendix-L2-to-sup}, there exists a universal constant $C_{\infty}$ so that, with probability of at least $1 - \delta$:
\begin{equation}
\label{eq:appendix-one-step-sup-rate}
  \|\hat Q_{k+1}-\mathcal{F}_{\tau, u}(\hat{Q}_k)\|_\infty \le C_{\infty}\,\mathrm{stat}_k^{\frac{2}{m+2}}
  \le 
  C_{\infty}\,
  \bigg(
    \frac{1+\sqrt{\log d}+\sqrt{\log(1/\delta)}}{\sqrt{n_k}}
  \bigg)^{\!\frac{2}{m+2}}
\end{equation}

Additionally, recall that $Q^{(\alpha)}$ is the optimal soft critic $Q^{(\alpha)}(x,a) = V^{(\alpha)}(x)\,+ q_{V^{(\alpha)}}(x,a)$. By \cref{thm:q_convergence}, there exist $\rho>0$ and $C > 0$ such that:
\begin{equation}
\label{eq:appendix-population-bias}
  \|Q_k-Q^{(\alpha)}\|_\infty
  \le
  \frac{e^{-\rho\tau k}}{u}\,\|Q_0-Q^{(\alpha)}\|_\infty
  + C\big(\tau/u + u\big),
\end{equation}

\subsection{Main bound}
\label{sec:appendix:main_bound}

Define the one-step \emph{sup-norm statistical deviation}
\begin{equation}
\label{eq:appendix-def-zeta-k}
  \zeta_k
  :=
  \big\|\hat Q_{k+1}-\mathcal{F}_{\tau,u}(\hat Q_k)\big\|_\infty,
\end{equation}

\begin{lemma}[Cumulative sup-norm bound]
\label{lemma:appendix-cumulative-supnorm-gap}
Suppose the update operator $\mathcal{F}_{\tau, u}$ is $(1 - \alpha)$-Lipschitz for $\alpha < 1$ where $\alpha$ depends on $\tau$ and $u$. Under Assumption ~\ref{assumption:compact-domain} and ~\ref{assumption:coverage-density-lower} (\cref{assumption:lipschitz-critic-envelope} is satisfied), for any $L\ge 1$,
\begin{equation}
\label{eq:appendix-cumulative-gap-bound}
\begin{aligned}
  \sum_{k=0}^{L-1}\|\hat Q_k-Q^{(\alpha)}\|_\infty
  \le
  \frac{1}{(1-e^{-\rho\tau})u}
    \|\hat Q_0-Q^{(\alpha)}\|_\infty
    + C L\big(\tau/u + u\big)
    + \frac{1}{\alpha} \Bigg(\sum_{k=0}^{L-1}\zeta_k \Bigg)
\end{aligned}
\end{equation}
\end{lemma}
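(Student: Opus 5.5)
Split the error at each iteration into an algorithmic part and a statistical part, bound each separately, and then sum over $k$. For every $k$ the triangle inequality gives
\[
  \|\hat Q_k-Q^{(\alpha)}\|_\infty \le \|Q_k-Q^{(\alpha)}\|_\infty + \|\hat Q_k-Q_k\|_\infty ,
\]
where $Q_k$ is the exact iterate started from the same point, $Q_0=\hat Q_0$. The first term is handled by the population bound \eqref{eq:appendix-population-bias}. Summing it over $k=0,\dots,L-1$ and using the geometric series
\[
  \sum_k e^{-\rho\tau k}\le \frac{1}{1-e^{-\rho\tau}}
\]
gives exactly the first two terms of \eqref{eq:appendix-cumulative-gap-bound}: $\frac{1}{(1-e^{-\rho\tau})u}\|\hat Q_0-Q^{(\alpha)}\|_\infty$ plus $CL(\tau/u+u)$.

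For the statistical part, let $\epsilon_k:=\|\hat Q_k-Q_k\|_\infty$ and mirror the decomposition \eqref{eq:appendix-error-split-L2}, now in sup-norm:
\[
  \epsilon_{k+1}\le \|\hat Q_{k+1}-\mathcal{F}_{\tau,u}(\hat Q_k)\|_\infty + \|\mathcal{F}_{\tau,u}(\hat Q_k)-\mathcal{F}_{\tau,u}(Q_k)\|_\infty \le \zeta_k + (1-\alpha)\epsilon_k .
\]
The second inequality uses the hypothesis that $\mathcal{F}_{\tau,u}$ is $(1-\alpha)$-Lipschitz in sup-norm. This replaces the expansive constant $1+\tau/u$ from \cref{lemma:appendix-Q-map-stability-L2} and is what makes the accumulation summable. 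Since $\epsilon_0=0$, unrolling gives
\[
  \epsilon_k\le\sum_{j<k}(1-\alpha)^{k-1-j}\zeta_j .
\]
Summing over $k<L$ and exchanging the order of summation,
\[
  \sum_{k<L}\epsilon_k \le \sum_{j<L}\zeta_j\sum_{i\ge 0}(1-\alpha)^i = \frac{1}{\alpha}\sum_{j<L}\zeta_j ,
\]
which is the last term of \eqref{eq:appendix-cumulative-gap-bound}.

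The assumptions serve two purposes. \cref{assumption:lipschitz-critic-envelope} keeps every critic, and every $\mathcal{F}_{\tau,u}(Q)$, bounded and Lipschitz, so all sup-norms are finite. Together with \cref{assumption:compact-domain} and \cref{assumption:coverage-density-lower}, it also lets $\zeta_k$ later be controlled through \cref{lemma:appendix-L2-to-sup} and \eqref{eq:appendix-one-step-sup-rate}, although that step is not needed for the inequality itself.

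The main obstacle is notational. The symbol $\alpha$ is used for both the temperature and the contraction margin, and the $Q_0$ in \eqref{eq:appendix-population-bias} must be identified with $\hat Q_0$ when the exact and learned sequences start from the same point. Once the contraction hypothesis is granted, the argument is the routine unrolling above.
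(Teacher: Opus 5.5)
Your proposal is correct and follows the paper's proof essentially step for step. You split each term with the triangle inequality, sum the population bound \eqref{eq:appendix-population-bias} as a geometric series to get the first two terms, and unroll the sup-norm recursion $\epsilon_{k+1}\le\zeta_k+(1-\alpha)\epsilon_k$ from $\epsilon_0=0$ to get the $\frac{1}{\alpha}\sum_k\zeta_k$ term. Your write-up is slightly cleaner than the paper's, which mislabels $E_{k+1}$ as $\|\hat Q_{k+1}-\mathcal{F}_{\tau,u}(\hat Q_k)\|_\infty$ in its first equality. The one implicit requirement, which the paper shares, is $0<\alpha<1$, so that $\sum_{i\ge0}(1-\alpha)^i=1/\alpha$.
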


\begin{proof} The bias (algorithmic) error $\norm{Q_k - Q^{(\alpha)}}_{\infty}$ makes up the first 2 terms in \cref{eq:appendix-cumulative-gap-bound}. This is clear from \cref{eq:appendix-population-bias}. Now we bound the statistical error with $E_k:= \norm{\hat Q_k - Q_k}_\infty$. We have:
\[
\begin{aligned}
  E_{k+1} = \norm{\hat Q_{k+1} - \mathcal{F}_{\tau,u}(\hat Q_k)}_{\infty} &\le
  \norm{\hat Q_{k+1}-\mathcal{F}_{\tau,u}(\hat Q_k)}_{\infty}
  +
  \norm{\mathcal{F}_{\tau,u}(\hat Q_k)-\mathcal{F}_{\tau,u}( Q_k)}_\infty \\
  &\le \zeta_k + (1 - \alpha) \norm{\hat Q_k - Q_k} = \zeta_k + (1-\alpha)E_k
\end{aligned}
\]
Hence, through an inductive argument, the running gap $\sum_{k = 0}^{L-1} E_k$ is bounded by $\sum_k \zeta_k (1 + (1-\alpha) + \cdots + (1-\alpha)^{L-1})$, leading to the third term of \cref{eq:appendix-cumulative-gap-bound}
\end{proof}

Note that the terms in the update operators (see \cref{eq:appendix-F-tau-u-nonrich}) can be rearranged so that it consists of 3 components with Lipschitz coefficients $\big(1-\tau\big)$, $\dfrac{\gamma\tau}{u}$ and $\big(\tau - \dfrac{\tau}{u}\big)$. In case if $u \geq 1$, this yields a $(1-\alpha)$ Lipchitz operator with $\alpha = (1-\gamma)\dfrac{\tau}{u}$

\begin{theorem}[Sample complexity and regret bounds]
\label{theorem:appendix-regret_bound}
Given the probability confidence level $\delta \in (0,1)$ and maximum number of iteration $L_{\texttt{max}}$. Suppose the update operator is $(1-\alpha)$-Lipschitz with $\alpha \geq C_0 \dfrac{\tau}{u}$. Then there exists ($q$-function discretization) timestep $u$, the Hamiltonian flow discretization step $\tau$, a sequence of numbers of samples $\set{n_k}$, and an universal constants $C_1$ and $C_2$ (independent of $L$, $L_{\texttt{max}}$, $\tau$ or $u$), such that for every iteration $L_{\texttt{max}}/2 \leq L \leq L_{\texttt{max}}$, with probability at least $1-\delta$, the running gap satisfies:
\begin{equation}
\label{eq:appendix-cum-gap-rate}
  \sum_{k=0}^{L-1}\|\hat Q_k-Q^{(\alpha)}\|_\infty
  \leq C_1 L^{3/4}
\end{equation}

Additionally, in both cases, the total sample size in the first $L$ iteration satisfies
\begin{equation}
\label{eq:appendix-total-sample-complexity}
  K = K(L)
  :=
  \sum_{k=0}^{L-1} n_k
  = C_2 L^{m+3}.
\end{equation}
Consequently, for any given fixed $L$, ignore the log-terms, we get $\mathrm{Regret}(K) = \mathcal{O}(K^{\frac{4m + 11}{4m + 12}})$ (\textbf{sub-linear regret} rate).
\end{theorem}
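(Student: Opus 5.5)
The plan is to start from \cref{lemma:appendix-cumulative-supnorm-gap} and choose the parameters $u$, $\tau$, and $\{n_k\}$ so that each of its three terms is $\mathcal{O}(L^{3/4})$ for every $L\in[L_{\texttt{max}}/2,L_{\texttt{max}}]$. Since $\alpha\ge C_0\tau/u$, we have $1/\alpha\le u/(C_0\tau)$. Also, $1-e^{-\rho\tau}\gtrsim\rho\tau$ for small $\tau$. The bound therefore becomes
\[
\sum_{k=0}^{L-1}\|\hat Q_k-Q^{(\alpha)}\|_\infty \lesssim \frac{1}{\tau u} + L\Big(\frac{\tau}{u}+u\Big) + \frac{u}{\tau}\sum_{k=0}^{L-1}\zeta_k .
\]
The algorithmic terms are balanced by taking $u\asymp L_{\texttt{max}}^{-1/4}$ and $\tau\asymp L_{\texttt{max}}^{-1/2}$, so that $\tau/u\asymp u\asymp L_{\texttt{max}}^{-1/4}$. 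The middle term is then $\lesssim L\cdot L_{\texttt{max}}^{-1/4}\lesssim L^{3/4}$, using $L\ge L_{\texttt{max}}/2$ so that $L_{\texttt{max}}$ and $L$ are comparable. The first term is $1/(\tau u)\asymp L_{\texttt{max}}^{3/4}\lesssim L^{3/4}$. Both estimates use only $L\asymp L_{\texttt{max}}$; this is why the statement restricts $L$ to that range.

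For the statistical term, the prefactor is $u/\tau\asymp L^{1/4}$, so it suffices to force $\sum_{k<L}\zeta_k\lesssim L^{1/2}$. For instance, one can require $\zeta_k\le L_{\texttt{max}}^{-1/2}$ for every $k$. By \eqref{eq:appendix-one-step-sup-rate}, which combines \cref{lemma:appendix-L2-to-sup} with the $L^2$ rate \eqref{eq:appendix-stat-rate}, we get $\zeta_k\le C_\infty\big(\log\text{-terms}/\sqrt{n_k}\big)^{2/(m+2)}$. We apply this with confidence $\delta/L_{\texttt{max}}$ per iteration and take a union bound, so all events hold simultaneously with probability at least $1-\delta$. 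The requirement $\zeta_k\le L^{-1/2}$ then amounts to $n_k^{1/(m+2)}\gtrsim L^{1/2}$, i.e.\ $n_k\asymp L^{(m+2)/2}$ up to logarithms.

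This already yields the $L^{3/4}$ bound. The stated sample size $K=C_2L^{m+3}$ is larger, so I would either take the more conservative choice $n_k\asymp L^{m+2}$, summing to $\asymp L^{m+3}$, or simply note that the polynomial choice can be inflated to match; larger $n_k$ only decreases $\zeta_k$. I would fix $n_k:=\lceil c\,L_{\texttt{max}}^{m+2}\log(L_{\texttt{max}}/\delta)\rceil$. Since $L\asymp L_{\texttt{max}}$, this gives $K(L)=\sum_{k<L}n_k\asymp L^{m+3}$ with logs absorbed into $C_2$, and it makes the statistical term at most $L^{1/4}\cdot L\cdot L^{-1}\lesssim L^{3/4}$ as well.

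For the regret, set $\mathrm{Regret}(K):=\sum_{k<L}\|\hat Q_k-Q^{(\alpha)}\|_\infty$ and express it through $L\asymp K^{1/(m+3)}$. Under the literal per-iteration accounting this gives $\mathrm{Regret}\lesssim K^{3/(4(m+3))}$. The claimed exponent $(4m+11)/(4m+12)=1-1/(4(m+3))$ instead corresponds to counting regret per sample: each of the $n_k\asymp K/L$ samples in round $k$ incurs error $\|\hat Q_k-Q^{(\alpha)}\|_\infty$, so
\[
\mathrm{Regret}(K)\lesssim \frac{K}{L}\cdot L^{3/4} = K\,L^{-1/4} = K^{1-\frac{1}{4(m+3)}} .
\]
I would present the per-sample version, since that matches the stated rate.

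I expect the main obstacle to be the bookkeeping that keeps the constants uniform. The constants $C$ and $\rho$ in \eqref{eq:appendix-population-bias}, and $C_\infty$ (which depends on $L_g\le L_Q+L_F$ through \cref{assumption:lipschitz-critic-envelope}), must not depend on $\tau$ or $u$, even though $\tau$ and $u$ are chosen as functions of $L_{\texttt{max}}$. The assumption $\alpha\ge C_0\tau/u$ also has to remain compatible with the $(1-\alpha)$-Lipschitz hypothesis. I would treat all of these as given by the hypotheses of \cref{lemma:appendix-cumulative-supnorm-gap} and the theorem, and state this explicitly.
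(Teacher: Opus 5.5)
Your proposal is correct and follows the paper's proof. It uses the same choice $\tau = L_{\texttt{max}}^{-1/2}$, $u = L_{\texttt{max}}^{-1/4}$ substituted into \cref{lemma:appendix-cumulative-supnorm-gap} with $1/\alpha \le u/(C_0\tau)$, the same union bound over iterations for the $\zeta_k$, and the same per-sample regret accounting $\sum_k n_k \|\hat Q_k - Q^{(\alpha)}\|_\infty \le (\max_k n_k)\, C_1 L^{3/4}$ with $\max_k n_k \asymp K/L$.

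The differences are cosmetic. You take a constant $n_k \asymp L_{\texttt{max}}^{m+2}$ instead of the paper's $n_k = (k+1)^{m+2}$. You should also drop the $\log(L_{\texttt{max}}/\delta)$ factor from $n_k$. It is unnecessary, since that log only enters the bound on $\zeta_k$ raised to the power $1/(m+2)$. If you keep it, it cannot be absorbed into a constant $C_2$ independent of $L_{\texttt{max}}$.
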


\begin{proof}
Choose $\tau = L_{\texttt{max}}^{-1/2},\,u = L_{\texttt{max}}^{-1/4}$. Choose the number of samples $n_k = (k + 1)^{m+2}$ for all $k \geq 0$.

Now we choose per-iteration confidence $\delta_k:=\delta/L$ for $k \in \overline{0, L-1}$. Through the statistical bound derived above, with probability $\ge 1-\delta$ (via union bound over $k$ using $\delta_k=\delta/L$), we have for all $k \in \overline{0, L-1}$:
\[
  \zeta_k
  \le
  \Big(\mathrm{stat}_k\Big)^{\frac{2}{m+2}}
  \le
  C_{\infty}\Bigg(
    \frac{1+\sqrt{\log d}+\sqrt{\log(L/\delta)}}{\sqrt{n_k}}
  \Bigg)^{\!\frac{2}{m+2}}.
\]

Now we analyze each error term within the large bracket of \cref{eq:appendix-cumulative-gap-bound}. First, with $n_k = (k+1)^{m+2}$, the right-hand side for the inequality of $\zeta_k$ can be upper-bounded by $C_3\big(\log (L/\delta)^{\frac{1}{m+2}}/(k+1) \big)$. Hence $\sum_{k=0}^{L-1}\zeta_k \leq C_4\big(\log (L/\delta)^{\frac{1}{m+2}} \times \log L\big)$, and the statistical part is then only multiply by $1/\alpha = u/\tau = C_5L^{1/4}$.

Second, due to the choice of $\tau$ and $u$, the bias part \cref{eq:appendix-cumulative-gap-bound} from  can be bounded by $C_6 \dfrac{1}{\tau u} + CL (\tau/u + u) = C_7 L^{3/4}$, which dominates the statistical part and yields the overall error rate in \cref{eq:appendix-cum-gap-rate}. The sample complexity follows from $\sum_{k=1}^L k^{m+2} \leq C_1 L^{m+3}$ for some universal constant $C_1$. 

Finally, regarding the regret bound, which can be defined by sum of gaps $Q^{(\alpha)}(Z) - \hat{Q}_k(Z)$ over starting points $Z$ of all episodes. Thus, up to iteration $L$ (for any $L \leq L_{\texttt{max}}$), the regret can be over-estimated by:
\begin{align*}
\mathrm{Regret} &\leq \sum_{\text{all samples } Z} \E\norm{\hat{Q}_k(Z) - Q^{(\alpha)}(Z)} \leq \sum_{k=0}^{L-1} n_k \norm{\hat{Q}_k - Q^{(\alpha)}}_{\infty} \\
&\leq \left(\max_{k \in \overline{0, L-1}} {n_k} \right) \sum_{k=0}^{L-1} \|\hat Q_k-Q^{(\alpha)}\|_\infty
  \leq C_1 L^{m+2} L^{3/4} = \mathcal{O}(L^{m+11/4})
\end{align*}
As $K = \mathcal{O}(L^{m+3})$, $\mathcal{O}(L^{m + 11/4}) = \mathcal{O}(K^{\frac{m + 11/4}{m + 3}}) = \mathcal{O}(K^{\frac{4m + 11}{4m + 12}})$, yielding the desired regret bound. Here all $C_i$'s are universal constants independent of $L$, $L_{\texttt{max}}$, $\tau$ or $u$.
\end{proof}

\textbf{Notes.} For random time step $U\in(0,1]$, independent of $(x,a)$, the only change is the bias term, which becomes a moment of $U$: $\E[\mathrm{bias}(U)]$. In particular: $\mathrm{bias}(U)=\E[U^{1/2}]$ for standard case, and $\mathrm{bias}(U)=\E[U^2]$ for Richardson-version. Additionally, an improved variant with sharper regret and complexity guarantees will be studied in subsequent work. The proof techniques developed here extend naturally to adaptive step sizes $(\tau_k)_{k\ge 0}$; we also plan to derive explicit rates for such schedules in the future.

\section{Further Theoretical Details For Main Paper}\label{sec:appendix:further_theoretical_details}
\subsection{Further explanations for the main paper}
\textbf{Variational representation for Hamiltonian.} For the Hamiltonian function, by a standard variational argument (see \cref{lemma:entropy-KL-identity}), one can equivalently write:
\begin{equation}
H^{(\alpha)}(V)(x)
=\sup_{\pi(\cdot\mid x)} \int_{\mathcal{A}} \Big(r(x,a) + (L^a V)(x) - \beta V(x) - \alpha\log\pi(a\mid x) \Big)\,\pi(da\mid x)
\label{eq:soft-hamiltonian-variational}
\end{equation}
As $\alpha\to 0$, $H^{(\alpha)}$ recovers $H^{(0)}$ and, for $\alpha>0$, the policy $\pi$ that maximize the above variational form is exactly the Boltzmann policy
\begin{equation}
 \pi^{(\alpha)}_V(a\mid x) \propto
    \exp\Big(
     \tfrac{1}{\alpha}\big(
        r(x,a) + (L^a V)(x) - \beta V(x)
     \big)
    \Big)
 \label{eq:boltzmann-policy}
\end{equation}

\textbf{Distinction between $u$ and $\tau$.} We emphasize that our framework separates two distinct discretizations: (i) the environment produces transitions with potentially irregular durations $u$, and (ii) the algorithm updates $V$ using a flow step size $\tau$, which controls value-iteration stability and convergence and is not tied to $u$. This separation is one reason the method remains stable under small or irregular time steps. Additionally, in this work, we make two concrete modeling choices for simplicity and scalability. First, we discretize the value flow \cref{eq:V-flow-H} using a single forward-Euler (Picard) step. More accurate ODE solvers (e.g., higher-order Runge--Kutta schemes) could be substituted to improve numerical stability. Second, we implement the decoupled iteration with a single critic by using the algebraic representation $Q = V + 1\cdot q$ in \cref{eq:Q-decomposition}. This choice is viewed as selecting a nominal unit time scale for combining a value-like term and an instantaneous rate term. More generally, one may consider $Q = V + c\,q$ for other scalings $c>0$, which may trade off learning signal, conditioning, and empirical performance. Exploring such optimal discretization and scaling strategies is the scope for future work.

\subsection{Discrete MDP with noise converge to SDE}
Here we explain and rigorously prove the statement we mentioned in \cref{sec:prelim:mdp-to-sde} regarding the convergence of discrete-MDP to stochastic differential equation (SDE) when the time step converges to $0$.

\begin{lemma}[Convergence of MDP to SDE]
Fix $T>0$ and let the time step $\Delta_n\to 0$. Define grid times $t_k=k\Delta_n$ and $\bar t:=t_k$ be the left endpoint mapping for $t\in[t_k,t_{k+1})$. Assume $f(\cdot,u)$ and $\sigma(\cdot,u)$ are globally Lipschitz and of linear growth in $x$, and let $X_t$ solve the SDE:
\[
dX_t = f(X_t,U_t)\,dt + \sigma(X_t,U_t)\,dW_t,\qquad X_0=x.
\]
Define (discretized) MDP process $X_k^{(n)}$ with discretized time step $\Delta_n$:
\[
X_{k+1}^{(n)} = X_k^{(n)} + f(X_k^{(n)},U_{t_k})\Delta_n + \sigma(X_k^{(n)},U_{t_k})\big(W_{t_{k+1}}-W_{t_k}\big),
\qquad X_0^{(n)}=x,
\]
and let $X_t^{(n)}:=X_k^{(n)}$ for $t\in[t_k,t_{k+1})$ (piecewise-constant interpolation), and similarly $U_{\bar t}:=U_{t_k}$.
Then there exists a constant $C=C(T,L,K,x)$ such that
\[
\mathbb E\Big[\sup_{0\le t\le T}\|X_t^{(n)}-X_t\|^2\Big]\le C\,\Delta_n.
\]
In particular, $\sup_{t\le T}\|X_t^{(n)}-X_t\|\to 0$ in $L^2$ (and hence in probability).
\end{lemma}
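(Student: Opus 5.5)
This is the standard strong-convergence estimate for the Euler--Maruyama scheme, with a frozen control. The plan is to work with the continuous-time interpolation that shares the Brownian path with $X$, control its error by Itô isometry, BDG and Gronwall, and then pass to the piecewise-constant interpolation.

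\textbf{Step 1: the continuous Euler process.} Define
\[
\bar X_t := x + \int_0^t f(X^{(n)}_{\bar s},U_{\bar s})\,ds + \int_0^t \sigma(X^{(n)}_{\bar s},U_{\bar s})\,dW_s .
\]
It agrees with $X^{(n)}_k$ at grid times $t_k$, and $X^{(n)}_t=\bar X_{\bar t}$. Then
\[
X_t-\bar X_t=\int_0^t\big(f(X_s,U_s)-f(X^{(n)}_{\bar s},U_{\bar s})\big)ds+\int_0^t\big(\sigma(X_s,U_s)-\sigma(X^{(n)}_{\bar s},U_{\bar s})\big)dW_s .
\]

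\textbf{Step 2: splitting the integrands.} For each integrand difference I write, for instance,
\[
f(X_s,U_s)-f(X^{(n)}_{\bar s},U_{\bar s})=\big[f(X_s,U_s)-f(X_s,U_{\bar s})\big]+\big[f(X_s,U_{\bar s})-f(X_{\bar s},U_{\bar s})\big]+\big[f(X_{\bar s},U_{\bar s})-f(\bar X_{\bar s},U_{\bar s})\big].
\]
The same split applies to $\sigma$. The second and third pieces are handled by the Lipschitz property in $x$.
\begin{itemize}
\item The second piece is bounded by $L\|X_s-X_{\bar s}\|$. Lemma~\ref{lemma:moment-bound}, applied from the random start $X_{\bar s}$ together with the Gronwall bound $\E\sup_{t\le T}\|X_t\|^2\le C(1+\|x\|^2)$, gives $\E\|X_s-X_{\bar s}\|^2\le C\Delta_n$.
\item The third piece is bounded by $L\|X_{\bar s}-\bar X_{\bar s}\|$, which feeds the Gronwall argument in Step~3.
\end{itemize}

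\textbf{Step 3: Gronwall.} Let $e(t):=\E\sup_{r\le t}\|X_r-\bar X_r\|^2$. Apply Cauchy--Schwarz to the drift integral and BDG to the stochastic integral. This gives
\[
e(t)\le C\Delta_n + C\int_0^t e(s)\,ds + \text{(control-mismatch term)},
\]
and Gronwall then yields $e(T)\le C\Delta_n$.

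\textbf{Step 4: return to the piecewise-constant interpolation.} Since $X^{(n)}_t-X_t=(\bar X_{\bar t}-X_{\bar t})+(X_{\bar t}-X_t)$, I need $\E\sup_t\|X_{\bar t}-X_t\|^2$. Its martingale part is a maximum of Brownian-type increments over $T/\Delta_n$ intervals, which BDG or a union bound controls by $C\Delta_n\log(1/\Delta_n)$, not $C\Delta_n$. I would handle this in one of two ways:
\begin{itemize}
\item Note that the claimed rate holds exactly at grid times, and for the continuous interpolation $\bar X$.
\item Alternatively, accept a log factor for the piecewise-constant interpolation. This still gives the stated qualitative convergence $\sup_t\|X^{(n)}_t-X_t\|\to0$ in $L^2$.
\end{itemize}

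\textbf{Main obstacle: the control-mismatch term.} This is the first piece in Step~2, $f(X_s,U_s)-f(X_s,U_{\bar s})$ and its $\sigma$ analogue. Nothing in the hypotheses makes it small. I would first try to assume what the construction implicitly uses: $U$ is piecewise constant on the grid, as with MDP actions held over each step, so $U_s=U_{\bar s}$ and the term vanishes identically. If $U$ is genuinely continuous-time, I would instead impose Lipschitz dependence of $f,\sigma$ on $u$ together with $\E\int_0^T\|U_s-U_{\bar s}\|^2ds\le C\Delta_n$. Either route makes the mismatch term $O(\Delta_n)$ and closes the Gronwall argument.
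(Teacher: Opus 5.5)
Your Steps 1--3 follow the paper's argument. You subtract the integral representations, bound the integrand difference by Lipschitzness through $\|X_s-X_{\bar s}\|$ and the running error, apply Cauchy--Schwarz to the drift and BDG to the stochastic integral, insert $\mathbb{E}\|X_s-X_{\bar s}\|^2\le C\Delta_n$, and close with Gronwall. The two places where you hesitate are exactly where the paper's proof is defective. Your diagnosis is right in both cases, so these are not gaps in your proposal but corrections to the statement.

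\textbf{Interpolation.} The paper asserts that the piecewise-constant process satisfies $X^{(n)}_t=x+\int_0^t f(X^{(n)}_{\bar s},U_{\bar s})\,ds+\int_0^t\sigma(X^{(n)}_{\bar s},U_{\bar s})\,dW_s$. That expression is your continuous interpolation $\bar X_t$, not the step process. So the paper's Gronwall argument (once the control issue below is fixed) yields only $\mathbb{E}\sup_{t\le T}\|\bar X_t-X_t\|^2\le C\Delta_n$: this is your Step~3, and it gives the rate at grid times and no more.

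For the step interpolation the stated rate is false. Take $d=1$, $f\equiv0$, $\sigma\equiv1$; then $X^{(n)}_t-X_t=W_{\bar t}-W_t$. On each cell, $\sup_{t\in[t_k,t_{k+1})}|W_t-W_{t_k}|\ge|W_{t_{k+1}}-W_{t_k}|$, and these increments are i.i.d.\ $N(0,\Delta_n)$ over the $\lfloor T/\Delta_n\rfloor$ cells. Hence $\mathbb{E}\sup_{t\le T}|W_t-W_{\bar t}|^2\ge c\,\Delta_n\log(T/\Delta_n)$ for small $\Delta_n$, and no constant $C$ works.

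\textbf{Control mismatch.} The paper bounds $\|g_s\|\le L\|X^{(n)}_{\bar s}-X_s\|$. This silently discards your first piece $f(X_s,U_s)-f(X_s,U_{\bar s})$ and its $\sigma$ analogue. Without an extra hypothesis the lemma fails outright. Take $\sigma\equiv0$, $f(x,u)=u$, $U_t=\mathbf{1}_{\mathbb{Q}}(t)$ and $\Delta_n=1/n$; then $X_t\equiv x$ while $X^{(n)}_t=x+\bar t$. Holding the control constant on grid cells, as the MDP construction intends, is the natural fix. Your alternative, Lipschitz dependence on $u$ together with $\mathbb{E}\int_0^T\|U_s-U_{\bar s}\|^2ds\le C\Delta_n$, also works.

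Two small remarks on Step~4.
\begin{itemize}
\item If you want the $\Delta_n\log(1/\Delta_n)$ upper bound, second-moment BDG on each cell summed over the $T/\Delta_n$ cells gives only $O(1)$. You need Gaussian tails (e.g.\ bounded $\sigma$) with a union bound, or BDG with moments of order $p\asymp\log(1/\Delta_n)$.
\item For the qualitative $L^2$ convergence no rate is needed. $\sup_{t\le T}\|X_{\bar t}-X_t\|\to0$ pathwise by uniform continuity of $X$ on $[0,T]$, and it is dominated by $2\sup_{t\le T}\|X_t\|\in L^2$. Dominated convergence together with Step~3 then gives $\sup_{t\le T}\|X^{(n)}_t-X_t\|\to0$ in $L^2$.
\end{itemize}
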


\begin{proof}
First by the proof of \cref{lemma:moment-bound}, $\mathbb E\|X_t-X_s\|^2 \le C|t-s|$. Therefore, for any $u\in[0,T]$ with $\bar u$ its left gridpoint, $|u-\bar u|\le \Delta_n$ and $\mathbb E\|X_u-X_{\bar u}\|^2 \le C\,\Delta_n$. Now, the piecewise-constant Euler process can be written as
\[
X_t^{(n)} = x + \int_0^t f(X^{(n)}_{\bar s},U_{\bar s})\,ds
           + \int_0^t \sigma(X^{(n)}_{\bar s},U_{\bar s})\,dW_s.
\]
Subtracting the SDE gives the error process $e_t:=X_t^{(n)}-X_t$:
\[
e_t
= \int_0^t \big[f(X^{(n)}_{\bar s},U_{\bar s})-f(X_s,U_s)\big]\,ds
 +\int_0^t \big[\sigma(X^{(n)}_{\bar s},U_{\bar s})-\sigma(X_s,U_s)\big]\,dW_s.
\]
Let $E_t:=\sup_{0\le r\le t}\|e_r\|$.

\textbf{Drift term.} With $g_s=f(X^{(n)}_{\bar s},U_{\bar s})-f(X_s,U_s)$, using Lipschitzness, we get:
\[
\|g_s\| \le L\|X^{(n)}_{\bar s}-X_s\|
\le L\big(\|X^{(n)}_{\bar s}-X_{\bar s}\|+\|X_{\bar s}-X_s\|\big)
\le L\big(E_s+\|X_{\bar s}-X_s\|\big).
\]
As a result,
\begin{equation}\label{eq:drift-bound}
\mathbb E\Big[\sup_{0\le r\le t}\Big\|\int_0^r g_s\,ds\Big\|^2\Big]
\le C\int_0^t \mathbb E[E_s^2]\,ds + C\int_0^t \mathbb E\|X_s-X_{\bar s}\|^2\,ds.
\end{equation}

\textbf{Diffusion term.} By the Burkholder--Davis--Gundy inequality (for $p=2$),
\[
\mathbb E\Big[\sup_{0\le r\le t}\Big\|\int_0^r h_s\,dW_s\Big\|^2\Big]
\le C_{\mathrm{BDG}}\;\mathbb E\int_0^t \|h_s\|_F^2\,ds.
\]
With $h_s=\sigma(X^{(n)}_{\bar s},U_{\bar s})-\sigma(X_s,U_s)$ and Lipschitzness,
\[
\|h_s\|_F \le L\|X^{(n)}_{\bar s}-X_s\|
\le L\big(E_s+\|X_{\bar s}-X_s\|\big),
\]
so that
\begin{equation}\label{eq:diff-bound}
\mathbb E\Big[\sup_{0\le r\le t}\Big\|\int_0^r h_s\,dW_s\Big\|^2\Big]
\le C\int_0^t \mathbb E[E_s^2]\,ds + C\int_0^t \mathbb E\|X_s-X_{\bar s}\|^2\,ds.
\end{equation}

Now combining \eqref{eq:drift-bound}--\eqref{eq:diff-bound} together with triangle inequality,
\[
\mathbb E[E_t^2]
\le C\int_0^t \mathbb E[E_s^2]\,ds + C\int_0^t \mathbb E\|X_s-X_{\bar s}\|^2\,ds.
\]
As proved earlier, $\mathbb E\|X_s-X_{\bar s}\|^2\le C\Delta_n$ for all $s\le T$, so that
\[
\mathbb E[E_t^2]\le C\int_0^t \mathbb E[E_s^2]\,ds + C\,t\,\Delta_n.
\]
Finally, Gronwall's lemma gives $\mathbb E[E_T^2]\le C\,\Delta_n$, proving the claim with the error go to $0$ as $\Delta_n \to 0$.
\end{proof}

Note that instead of $\big(W_{t_{k+1}}-W_{t_k}\big)$, we can replace it by i.i.d normal random variable $\xi_{t_k}$, and all the inequalities derived in the proof still holds in expectation as they has the same distribution due to the property of the Brownian motion.

\subsection{Exact Richardson single-critic update equation.}
\begin{lemma}[Algebraic derivation of the Richardson single-critic update]
\label{lemma:appendix-richardson-Q-algebra}
Fix $u>0$ and define $\gamma:=e^{-\beta u}$ and $\gamma_{1/2}:=e^{-\beta u/2}$. Let $V_k$ be updated by the entropy-regularized expectation step
\begin{equation}
  V_{k+1}(x)
  =
  (1-\tau)V_k(x) + \tau S_k(x),
  \qquad
  S_k(x):=\E_{a\sim\pi_k(\cdot\mid x)}\!\big[\tilde Q_k^{\mathrm{R}}(x,a)\big],
  \label{eq:appendix-V-update-richardson}
\end{equation}
where $Q_k^{\mathrm{R}}(x,a):=V_k(x)+\tilde q_{V_k}^u(x,a)$ and $\tilde Q_k^{\mathrm{R}}(x,a):=Q_k^{\mathrm{R}}(x,a)-\alpha\log\pi_k(a\mid x)$. Then the induced single-critic iteration satisfies the closed-form update:
\begin{equation}
\begin{aligned}
  Q_{k+1}^{\mathrm{R}}(x,a)
  &=
  (1-\tau)Q_k^{\mathrm{R}}(x,a)
  + \tau r(x,a)
  + \tau S_k(x) \\
  &\quad
  + \frac{\tau}{u}\Big(
      4\gamma_{1/2}\,\E_x^a\!\big[S_k(X_{u/2})\big]
      - \gamma\,\E_x^a\!\big[S_k(X_{u})\big]
      - 3 S_k(x)
  \Big).
\end{aligned}
  \label{eq:appendix-richardson-Q-update-closed-reprove}
\end{equation}
\end{lemma}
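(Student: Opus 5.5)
The proof is a direct algebraic computation that mirrors Lemma~\ref{lemma:appendix-Q-from-Vq}, with $q_V^u$ replaced by the Richardson rate $\tilde q_V^u=2q_V^{u/2}-q_V^u$. The plan is to write $Q_{k+1}^{\mathrm{R}}=V_{k+1}+\tilde q^u_{V_{k+1}}$ and substitute the value update \eqref{eq:appendix-V-update-richardson} into both pieces. The $V$-piece is immediate: $V_{k+1}=(1-\tau)V_k+\tau S_k$.

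For the rate piece, start from the definition \eqref{eq:appendix-qV-u-definition}. Writing $h=u/2$ gives
\[
\tilde q_V^u(x,a)=\frac{4\gamma_{1/2}\E_x^a[V(X_{u/2})]-\gamma\E_x^a[V(X_u)]-3V(x)}{u}+r(x,a).
\]
Here the coefficient $4$ comes from $2\cdot\frac{1}{u/2}$, and $-3=-4+1$. The reward coefficient is $2-1=1$. The key observation is that $V\mapsto \tilde q_V^u-r$ is linear in $V$, because the expectations $\E_x^a[\cdot]$ under the constant action $a$ are linear and do not depend on $V$. Applying this linear map to $V_{k+1}=(1-\tau)V_k+\tau S_k$ gives
\[
\tilde q^u_{V_{k+1}}-r=(1-\tau)\big(\tilde q^u_{V_k}-r\big)+\frac{\tau}{u}\Big(4\gamma_{1/2}\E_x^a[S_k(X_{u/2})]-\gamma\E_x^a[S_k(X_u)]-3S_k(x)\Big).
\]

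Adding the two pieces completes the calculation. We get $(1-\tau)(V_k+\tilde q^u_{V_k})=(1-\tau)Q_k^{\mathrm{R}}$. The reward terms combine as $r-(1-\tau)r=\tau r$. The remaining terms $\tau S_k(x)$ and the bracket reproduce \eqref{eq:appendix-richardson-Q-update-closed-reprove} exactly.

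There is no real analytic obstacle; the only care needed is bookkeeping. One point is tracking the reward term through the affine rather than linear map: it must be separated before applying linearity, or the coefficient $\tau r$ will come out wrong. The other point is confirming that the Richardson combination yields the coefficients $4\gamma_{1/2}$, $-\gamma$, $-3$. If desired, one can also check consistency with the update rule \eqref{eq:appendix-V-update-richardson} via Lemma~\ref{lemma:entropy-KL-identity}, which identifies $S_k$ with $V_k+\mathcal{Q}^{(\alpha)}(\tilde q^u_{V_k})$ when $\pi_k\propto\exp(Q_k^{\mathrm{R}}/\alpha)$. This check is not needed for the identity itself, since the lemma takes the $V$-update as given.
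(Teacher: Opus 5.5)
Your proposal is correct and follows essentially the same route as the paper: expand $\tilde q_V^u=2q_V^{u/2}-q_V^u$ into the form with coefficients $4\gamma_{1/2}$, $-\gamma$, $-3$, substitute $V_{k+1}=(1-\tau)V_k+\tau S_k$ (the paper does this term by term; you phrase it as linearity of $V\mapsto\tilde q_V^u-r$), and add back $V_{k+1}$ to recover $(1-\tau)Q_k^{\mathrm{R}}+\tau r+\tau S_k$ plus the bracket. Your handling of the reward term and your aside identifying $S_k=V_k+\mathcal{Q}^{(\alpha)}(\tilde q^u_{V_k})$ are both accurate.
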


\begin{proof}
We start with definition of $q_V^{u/2}$:
\[
q_V^{u/2}(x,a)
=
\frac{\gamma_{1/2}\E_x^a[V(X_{u/2})]-V(x)}{u/2}+r(x,a)
=
\frac{2(\gamma_{1/2}\E_x^a[V(X_{u/2})]-V(x))}{u}+r(x,a).
\]
Hence
\begin{equation}
\begin{aligned}
  \tilde q_V^u(x,a)
  &=2q_V^{u/2}(x,a)-q_V^u(x,a)\\
  &=
  \frac{4\gamma_{1/2}\E_x^a[V(X_{u/2})]-\gamma\E_x^a[V(X_u)]-3V(x)}{u}
  + r(x,a).
\end{aligned}
  \label{eq:appendix-tildeq-expanded}
\end{equation}

Now apply \Cref{eq:appendix-tildeq-expanded} to $V_{k+1}$:
\begin{equation}
\begin{aligned}
  \tilde q_{V_{k+1}}^u(x,a)
  &=
  \frac{
    4\gamma_{1/2}\E_x^a[V_{k+1}(X_{u/2})]
    -\gamma \E_x^a[V_{k+1}(X_u)]
    -3V_{k+1}(x)
  }{u}
  + r(x,a).
\end{aligned}
  \label{eq:appendix-tildeqVk1-start}
\end{equation}
Substitute the value update \Cref{eq:appendix-V-update-richardson} inside each term.
Since the update is pointwise in the state, for any random state $Y$ we have
$V_{k+1}(Y)=(1-\tau)V_k(Y)+\tau S_k(Y)$, and therefore
\[
\E_x^a[V_{k+1}(X_h)]
=
(1-\tau)\E_x^a[V_k(X_h)]
+
\tau\E_x^a[S_k(X_h)],
\qquad h\in\{u/2,u\}.
\]
Also $V_{k+1}(x)=(1-\tau)V_k(x)+\tau S_k(x)$.
Plugging these into \cref{eq:appendix-tildeqVk1-start} and collecting the corresponding terms gives:
\begin{equation}
\begin{aligned}
  \tilde q_{V_{k+1}}^u(x,a)
  &=
  (1-\tau)
  \frac{
    4\gamma_{1/2}\E_x^a[V_k(X_{u/2})]
    -\gamma\E_x^a[V_k(X_u)]
    -3V_k(x)
  }{u}
  \\
  &\quad
  +
  \tau
  \frac{
    4\gamma_{1/2}\E_x^a[S_k(X_{u/2})]
    -\gamma\E_x^a[S_k(X_u)]
    -3S_k(x)
  }{u}
  + r(x,a).
\end{aligned}
  \label{eq:appendix-tildeqVk1-split}
\end{equation}
The first fraction is exactly $\tilde q_{V_k}^u(x,a)-r(x,a)$ by \cref{eq:appendix-tildeq-expanded}.
Thus
\begin{equation}
\begin{aligned}
  \tilde q_{V_{k+1}}^u(x,a)
  &=
  (1-\tau)\big(\tilde q_{V_k}^u(x,a)-r(x,a)\big)
  + r(x,a)\\
  &\quad
  + \frac{\tau}{u}\Big(
      4\gamma_{1/2}\E_x^a[S_k(X_{u/2})]
      -\gamma\E_x^a[S_k(X_u)]
      -3S_k(x)
  \Big)\\
  &=
  (1-\tau)\tilde q_{V_k}^u(x,a)
  + \tau r(x,a)
  + \frac{\tau}{u}\Big(
      4\gamma_{1/2}\E_x^a[S_k(X_{u/2})]
      -\gamma\E_x^a[S_k(X_u)]
      -3S_k(x)
  \Big).
\end{aligned}
  \label{eq:appendix-tildeqVk1-final}
\end{equation}

Finally, add the value update $V_{k+1}(x)=(1-\tau)V_k(x)+\tau S_k(x)$ to both sides, we get
\begin{align*}
  Q_{k+1}^{\mathrm{R}}(x,a)
  &=
  (1-\tau)V_k(x)+\tau S_k(x)
  + (1-\tau)\tilde q_{V_k}^u(x,a)
  + \tau r(x,a)
  + \frac{\tau}{u}\Big(\cdots\Big)\\
  &=
  (1-\tau)\big(V_k(x)+\tilde q_{V_k}^u(x,a)\big)
  + \tau r(x,a)
  + \tau S_k(x)
  + \frac{\tau}{u}\Big(\cdots\Big)\\
  &=
  (1-\tau)Q_k^{\mathrm{R}}(x,a)
  + \tau r(x,a)
  + \tau S_k(x)
  + \frac{\tau}{u}\Big(
      4\gamma_{1/2}\E_x^a[S_k(X_{u/2})]
      -\gamma\E_x^a[S_k(X_u)]
      -3S_k(x)
  \Big),
\end{align*}
which is exactly \Cref{eq:appendix-richardson-Q-update-closed-reprove}.
\end{proof}

\subsection{Decoupled continuous-time policy evaluation}
For a fixed policy $\pi$, standard stochastic control arguments imply that the value function $V^{(\alpha)}(\cdot;\pi)$ in \eqref{eq:value-relaxed} satisfies the linear elliptic PDE:
\begin{equation}
\beta V^{(\alpha)}(x;\pi) - (L^\pi V^{(\alpha)}(\cdot;\pi))(x) - \tilde r^{(\alpha)}(x,\pi) = 0 \label{eq:policy-eval-pde}
\end{equation}

Fix a Markov policy $\pi(a\mid x)$ and temperature $\alpha\ge 0$.
For a value function $V\in\mathcal{D}\subset C_b^2(\mathbb{R}^d)$, define the \emph{instantaneous $q$-rate} for $(\pi,\alpha)$ as
\begin{equation}
  q^{(\pi,\alpha)}_V(x,a) := r(x,a) - \alpha \log \pi(a\mid x) + (L^a V)(x) - \beta V(x).
  \label{eq:q-pi-alpha}
\end{equation}

This corresponds to the infinitesimal temporal-difference
\begin{equation}
  q^{(\pi,\alpha)}_V(x,a)
  = \lim_{\Delta t\to 0} \frac{e^{-\beta\Delta t}\mathbb{E}[V(X_{\Delta t})] - V(x)}{\Delta t}  + r(x,a) - \alpha\log\pi(a\mid x)
\end{equation}
This policy-dependent $q$-function $q^{(\pi,\alpha)}_V$ can be estimated from short rollouts by:
\begin{equation}
  q_V^{u, (\pi,\alpha)}(x,a) := \frac{e^{-\beta u}V(X_{t+u})
        - V(x)}{u} + r(x,a) - \alpha\log\pi(a\mid x)
  \label{eq:q-estimator}
\end{equation}

Given $q^{(\pi,\alpha)}_V$, the policy-evaluation PDE satisfies
\[
  (L^\pi V)(x) + \tilde r^{(\alpha)}(x,\pi) - \beta V(x) = 0
\]
And can be equivalently written as:
\begin{equation}
  \int_{\mathcal{A}} q^{(\pi,\alpha)}_V(x,a)\,\pi(da\mid x) \;=\; 0,
  \qquad \forall x.
  \label{eq:q-eval-pde}
\end{equation}
Following a similar idea as in \cref{sec:formulation}, we obtain the following flow update for $V$ under a fixed $\pi$:
\begin{equation}
  \frac{d}{d\tau} V_\tau(x)
  \;=\;
  \int_{\mathcal{A}} q^{(\pi,\alpha)}_{V_\tau}(x,a)\,\pi(da\mid x).
  \label{eq:q-flow-eval}
\end{equation}
A single explicit Euler step of~\eqref{eq:q-flow-eval} with step size $\tau>0$ is
\begin{equation}
  T_\tau^{(\pi,\alpha)}(V)(x)
  := V(x)
     + \tau \int_{\mathcal{A}} q^{(\pi,\alpha)}_V(x,a)\,\pi(da\mid x),
  \label{eq:Ttau-pi}
\end{equation}

This yields an analogous Picard–Hamiltonian iteration for policy evaluation. The corresponding theoretical guarantees follow by applying the same argument as in the original setting, with only notational changes.
\section{Further experiment details}\label{sec:appendix:further_experiment_details}

\subsection{Visualization on the trading task}
We provide additional (qualitative) visualizations for the trading task in \cref{fig:irregular_trading}. These plots complement the quantitative metrics by illustrating representative trajectories under the same evaluation market paths.

\begin{figure}[ht]
    \centering

    \begin{subfigure}{\textwidth}
        \centering
        \includegraphics[width=0.6\textwidth]{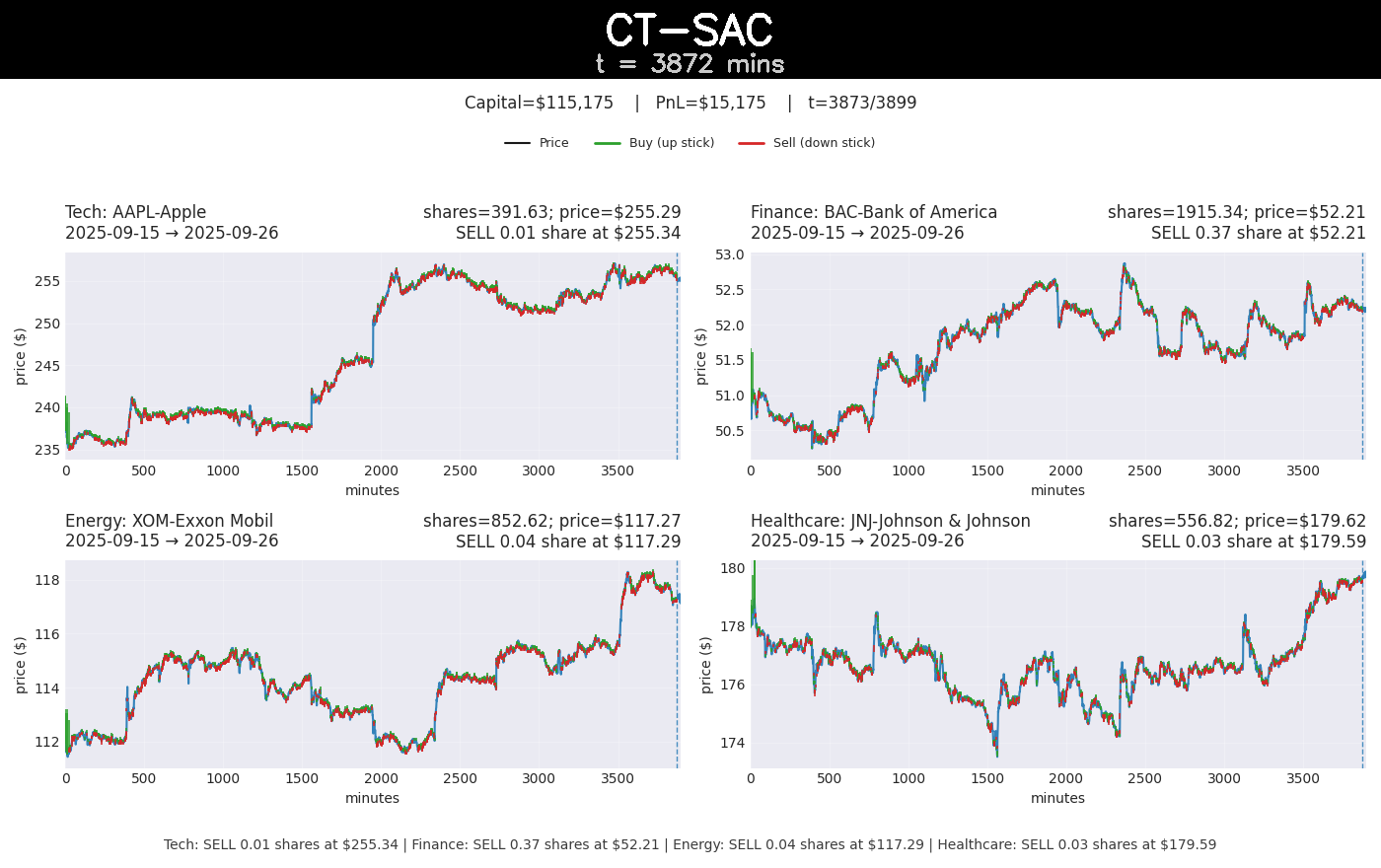}
        \caption{CT-SAC on trading task}
        \label{fig:trading_ct_sac}
    \end{subfigure}

    \vspace{6pt}

    \begin{subfigure}{\textwidth}
        \centering
        \includegraphics[width=0.6\textwidth]{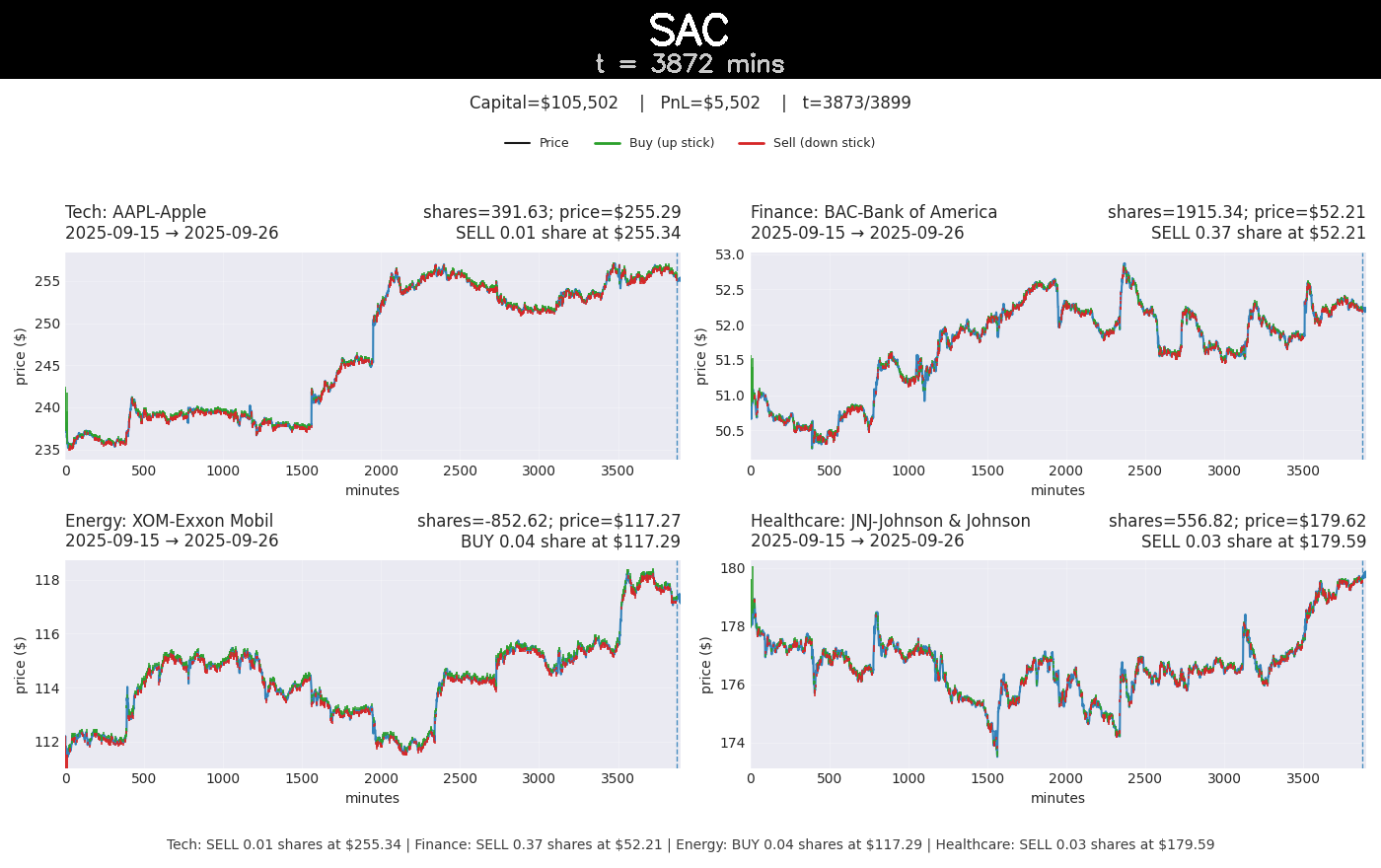}
        \caption{SAC on trading task}
        \label{fig:trading_sac}
    \end{subfigure}

    \vspace{6pt}

    \begin{subfigure}{\textwidth}
        \centering
        \includegraphics[width=0.6\textwidth]{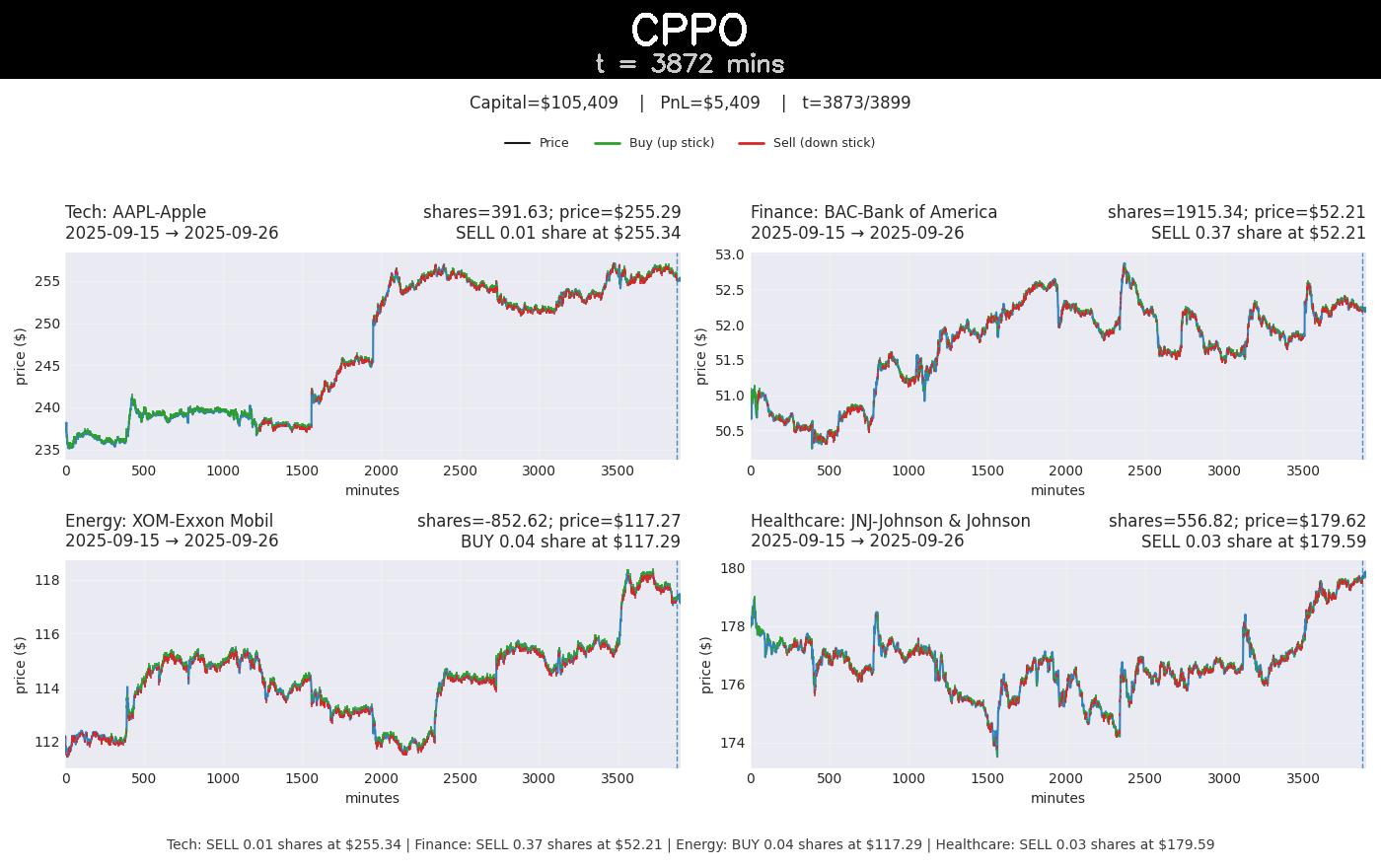}
        \caption{PPO on trading task}
        \label{fig:trading_cppo}
    \end{subfigure}

    \caption{CT-SAC vs.\ two competitive baselines CT-SAC and CPPO on trading task with two-week trading episode under irregular time steps. During these 2-weeks, CT-SAC made a profit of \$15,000 while other two methods only make about \$5,500.}
    \label{fig:irregular_trading}
\end{figure}

\subsection{Visualization on control tasks}
We provide additional qualitative results on the control tasks under two evaluation settings.

\textbf{Irregular-time evaluation.}
We first evaluate on a held-out irregular-time setting and compare CT-SAC against (i) continuous-time baselines and (ii) discrete-time baselines. See \cref{fig:irregular_ctsac_vs_continuous_last,fig:irregular_ctsac_vs_discrete_last} for the corresponding images.

\begin{figure*}[!t]
    \centering

    \begin{subfigure}{\textwidth}
        \centering
        \includegraphics[width=0.85\textwidth]{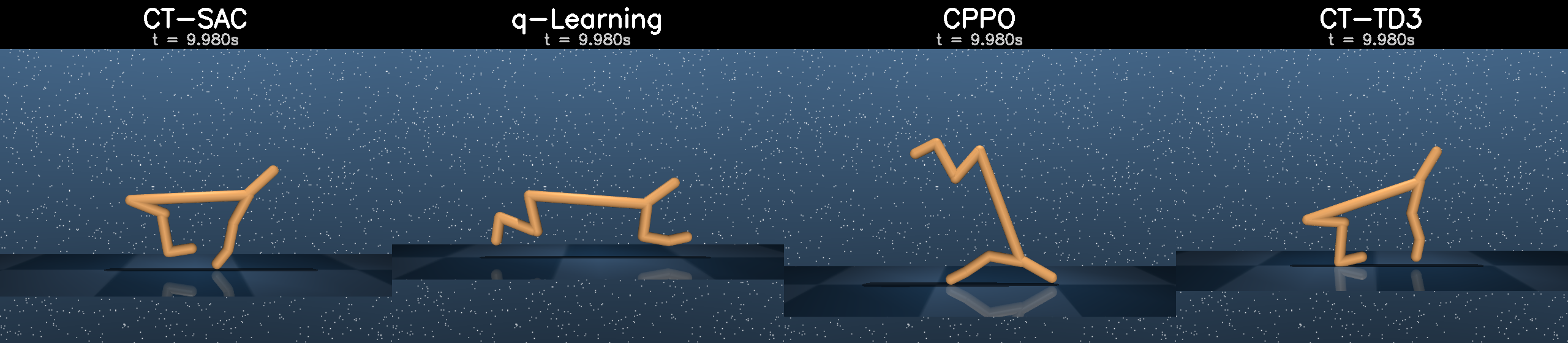}
        \caption{Cheetah}
        \label{fig:irregular_cont_cheetah}
    \end{subfigure}

    \vspace{6pt}

    \begin{subfigure}{\textwidth}
        \centering
        \includegraphics[width=0.85\textwidth]{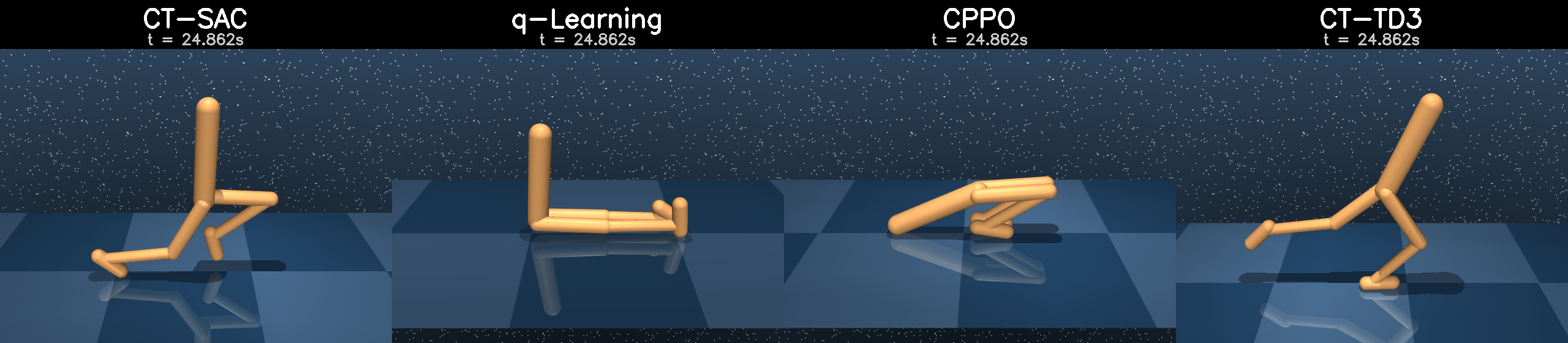}
        \caption{Walker}
        \label{fig:irregular_cont_walker}
    \end{subfigure}

    \vspace{6pt}

    \begin{subfigure}{\textwidth}
        \centering
        \includegraphics[width=0.85\textwidth]{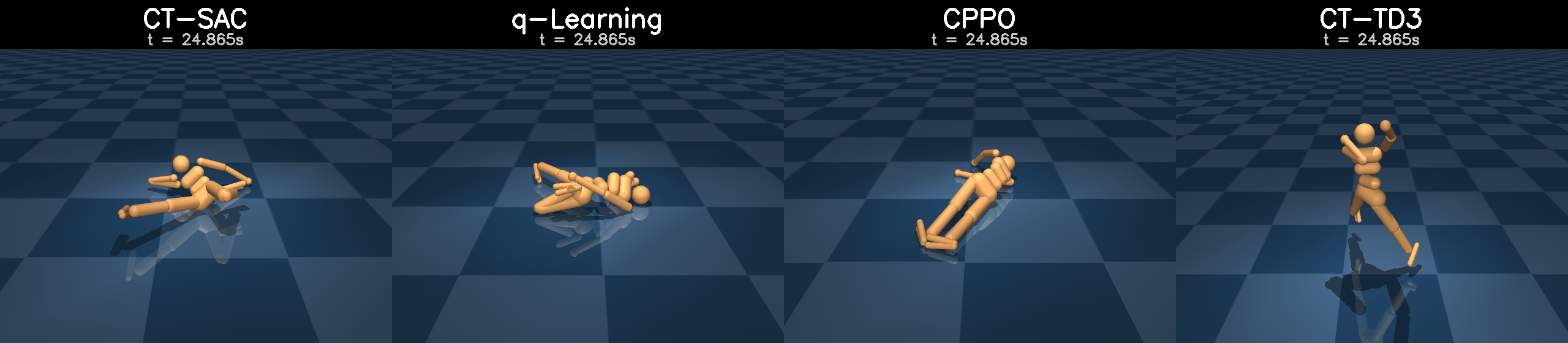}
        \caption{Humanoid}
        \label{fig:irregular_cont_humanoid}
    \end{subfigure}

    \vspace{6pt}

    \begin{subfigure}{\textwidth}
        \centering
        \includegraphics[width=0.85\textwidth]{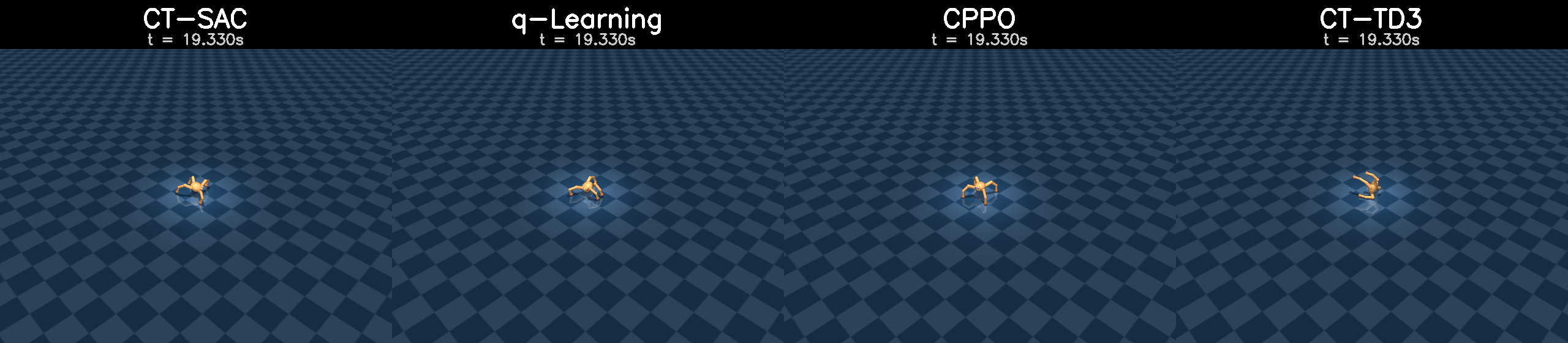}
        \caption{Quadruped}
        \label{fig:irregular_cont_quadruped}
    \end{subfigure}

    \caption{CT-SAC vs.\ continuous-time baselines under irregular time steps.}
    \label{fig:irregular_ctsac_vs_continuous_last}
\end{figure*}

\begin{figure*}[!t]
    \centering

    \begin{subfigure}{\textwidth}
        \centering
        \includegraphics[width=0.85\textwidth]{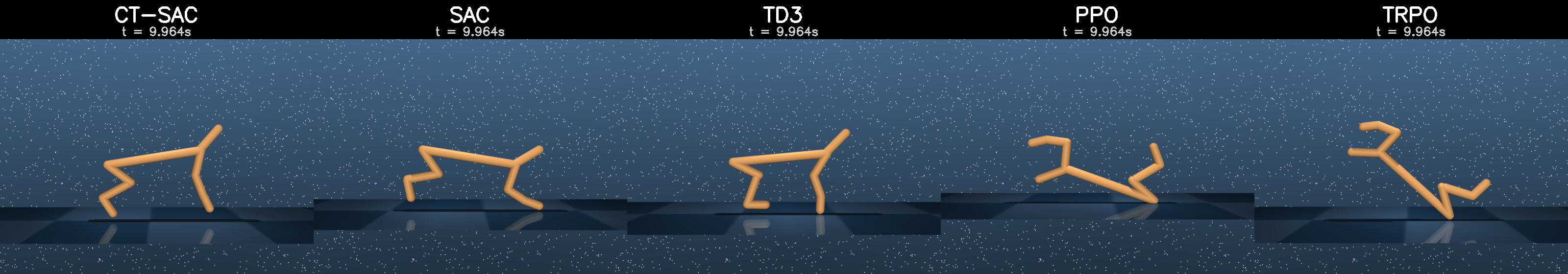}
        \caption{Cheetah}
        \label{fig:irregular_discrete_cheetah}
    \end{subfigure}

    \vspace{6pt}

    \begin{subfigure}{\textwidth}
        \centering
        \includegraphics[width=0.85\textwidth]{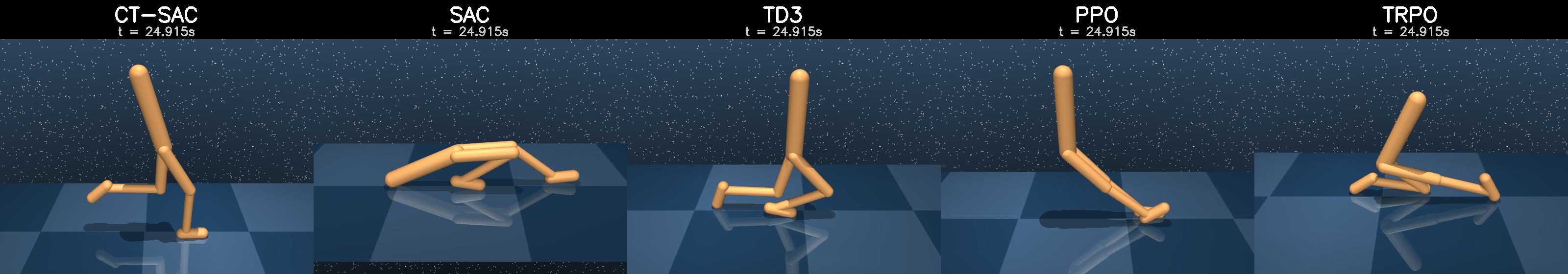}
        \caption{Walker}
        \label{fig:irregular_discrete_walker}
    \end{subfigure}

    \vspace{6pt}

    \begin{subfigure}{\textwidth}
        \centering
        \includegraphics[width=0.85\textwidth]{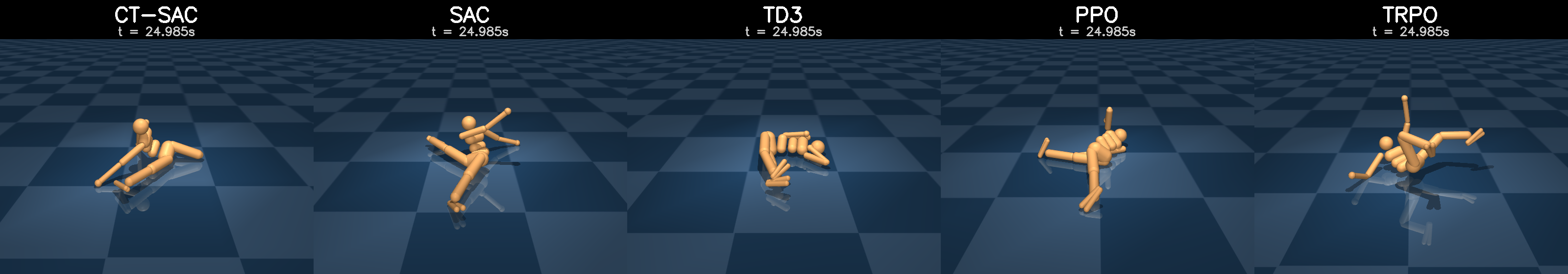}
        \caption{Humanoid}
        \label{fig:irregular_discrete_humanoid}
    \end{subfigure}

    \vspace{6pt}

    \begin{subfigure}{\textwidth}
        \centering
        \includegraphics[width=0.85\textwidth]{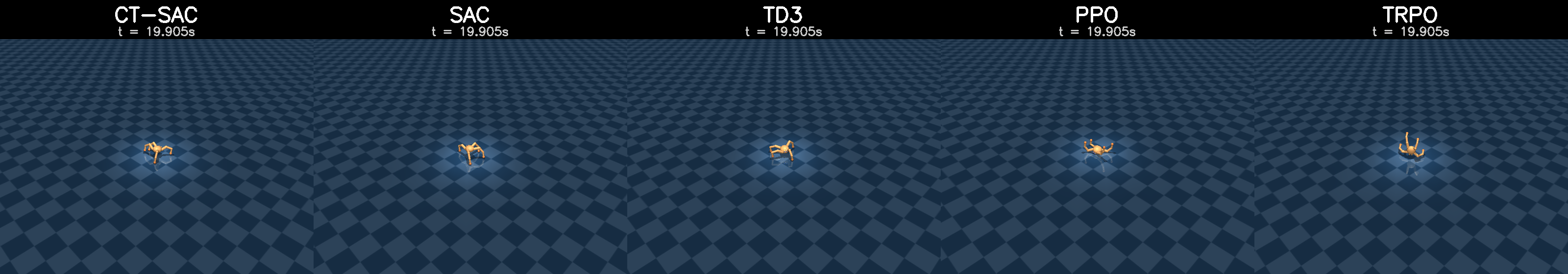}
        \caption{Quadruped}
        \label{fig:irregular_discrete_quadruped}
    \end{subfigure}

    \caption{CT-SAC vs.\ discrete-time baselines under irregular time steps.}
    \label{fig:irregular_ctsac_vs_discrete_last}
\end{figure*}

\textbf{Regular-time evaluation.}
We also evaluate the same policies under the standard regular-time setting (which differs from the irregular-time training distribution) and again compare against continuous-time and discrete-time baselines. See \cref{fig:regular_ctsac_vs_continuous_last,fig:regular_ctsac_vs_discrete_last} for qualitative results.

\begin{figure*}[!t]
    \centering

    \begin{subfigure}{\textwidth}
        \centering
        \includegraphics[width=0.85\textwidth]{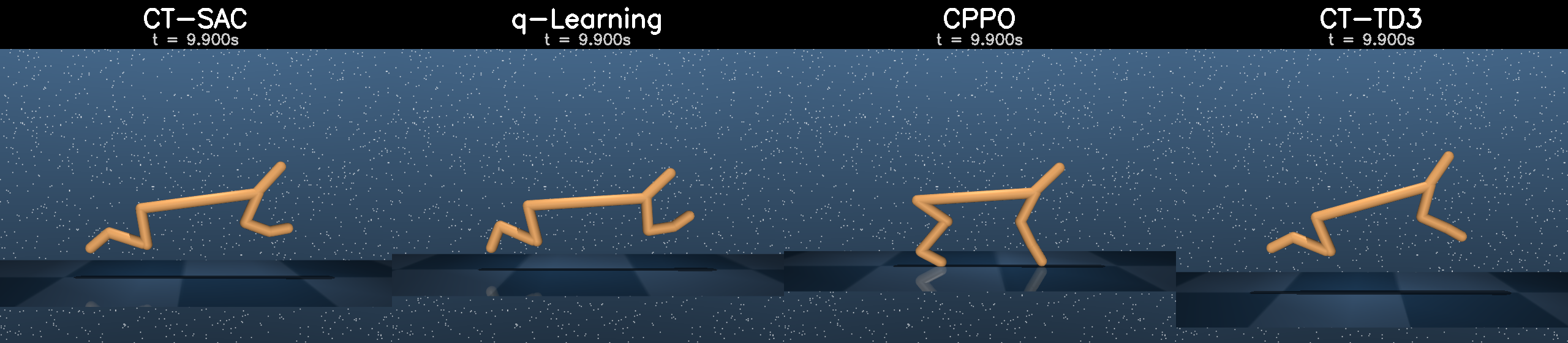}
        \caption{Cheetah}
        \label{fig:regular_cont_cheetah}
    \end{subfigure}

    \vspace{6pt}

    \begin{subfigure}{\textwidth}
        \centering
        \includegraphics[width=0.85\textwidth]{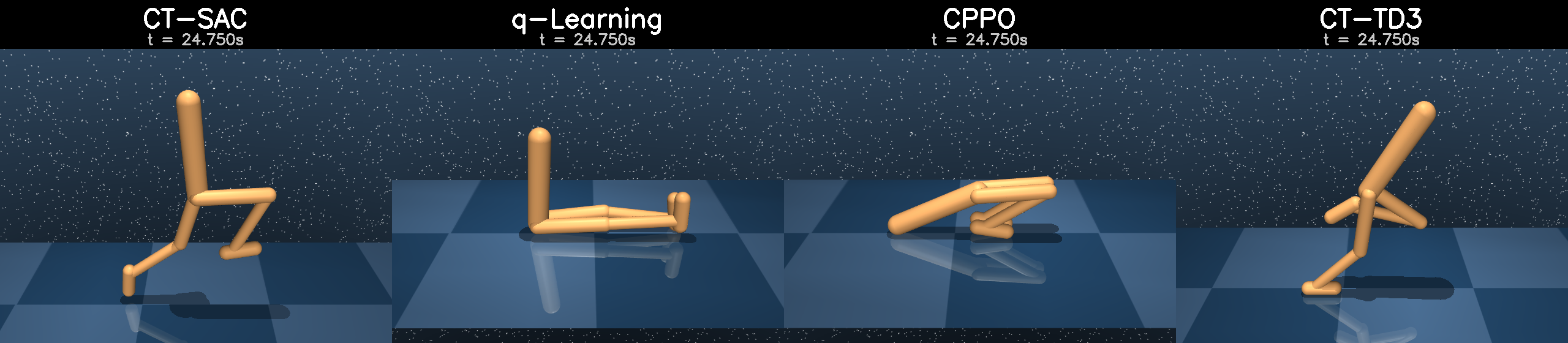}
        \caption{Walker}
        \label{fig:regular_cont_walker}
    \end{subfigure}

    \vspace{6pt}

    \begin{subfigure}{\textwidth}
        \centering
        \includegraphics[width=0.85\textwidth]{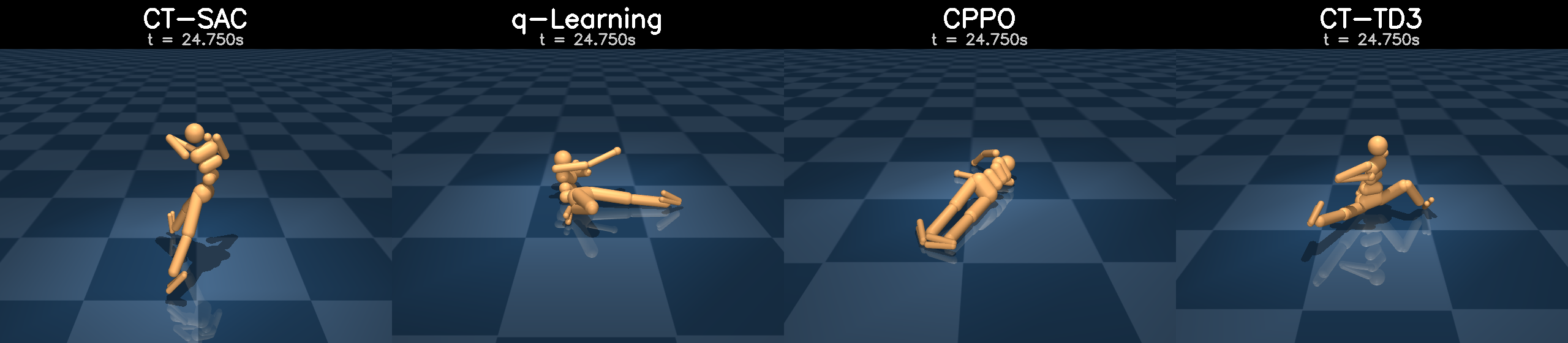}
        \caption{Humanoid}
        \label{fig:regular_cont_humanoid}
    \end{subfigure}

    \vspace{6pt}

    \begin{subfigure}{\textwidth}
        \centering
        \includegraphics[width=0.85\textwidth]{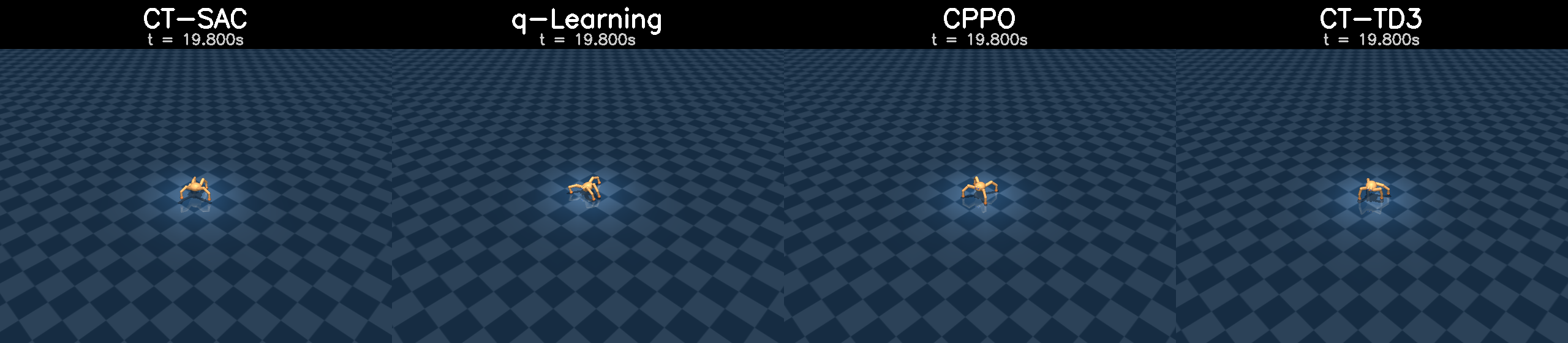}
        \caption{Quadruped}
        \label{fig:regular_cont_quadruped}
    \end{subfigure}

    \caption{CT-SAC vs.\ continuous-time baselines under \textbf{regular} time steps. Note that the model is trained under irregular time settings.}
    \label{fig:regular_ctsac_vs_continuous_last}
\end{figure*}

\begin{figure*}[!t]
    \centering

    \begin{subfigure}{\textwidth}
        \centering
        \includegraphics[width=0.85\textwidth]{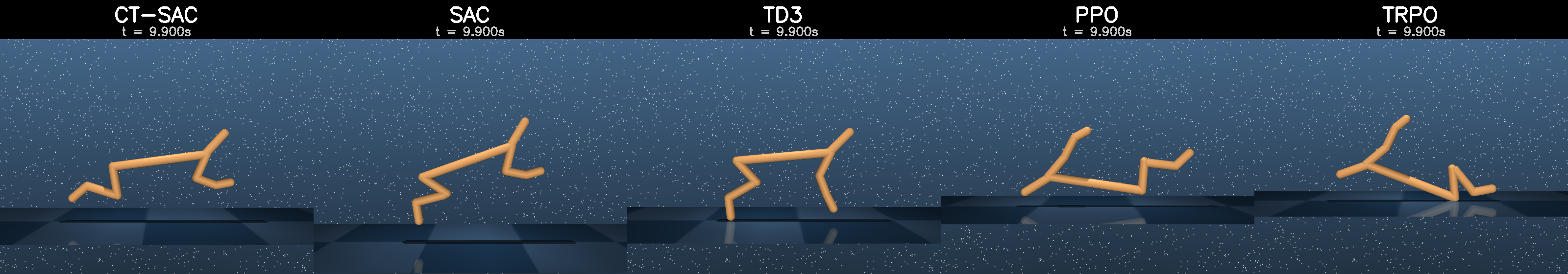}
        \caption{Cheetah}
        \label{fig:regular_discrete_cheetah}
    \end{subfigure}

    \vspace{6pt}

    \begin{subfigure}{\textwidth}
        \centering
        \includegraphics[width=0.85\textwidth]{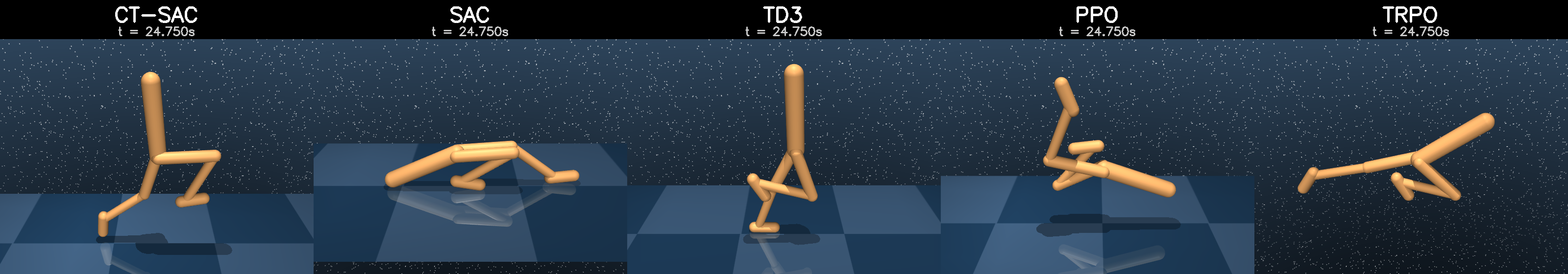}
        \caption{Walker}
        \label{fig:regular_discrete_walker}
    \end{subfigure}

    \vspace{6pt}

    \begin{subfigure}{\textwidth}
        \centering
        \includegraphics[width=0.85\textwidth]{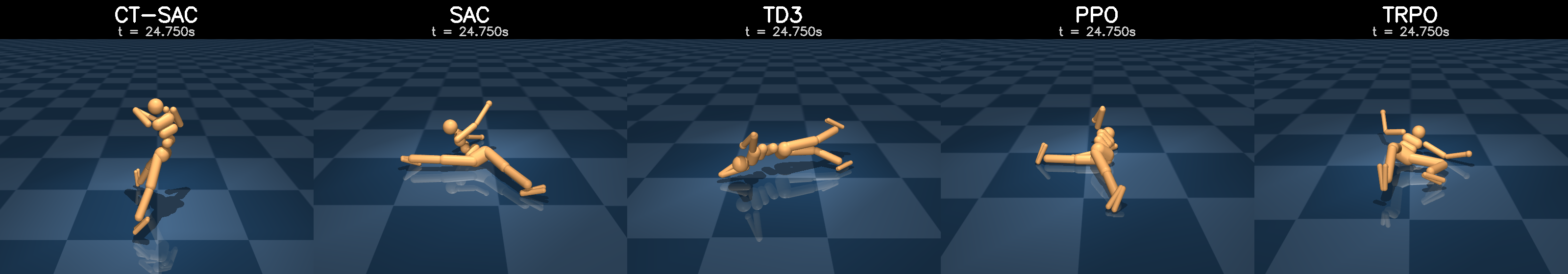}
        \caption{Humanoid}
        \label{fig:regular_discrete_humanoid}
    \end{subfigure}

    \vspace{6pt}

    \begin{subfigure}{\textwidth}
        \centering
        \includegraphics[width=0.85\textwidth]{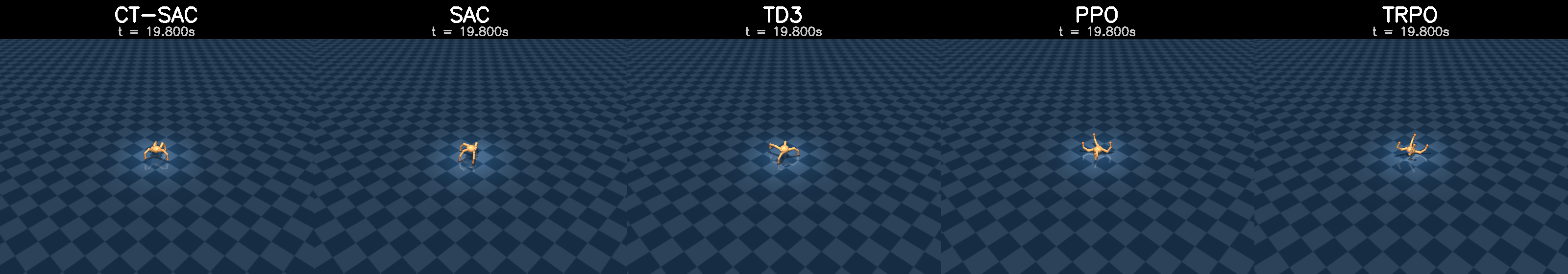}
        \caption{Quadruped}
        \label{fig:regular_discrete_quadruped}
    \end{subfigure}

    \caption{CT-SAC vs.\ discrete-time baselines under \textbf{regular} time steps. Note that the model is trained under irregular time settings.}
    \label{fig:regular_ctsac_vs_discrete_last}
\end{figure*}

\subsection{Compute resources}
\label{sec:appendix:compute_resources}

All experiments were executed on High-Performance Computing infrastructure. We used a \emph{mixture} of CPU-only and GPU-accelerated nodes with different hardware characteristics in order to support both large-scale sweeps and computationally intensive deep RL training. Each run used a single node. We report details in \cref{table:compute_resources}.

\begin{table}[t]
\centering
\caption{Compute resources used in our experiments. ``Cores'' and ``Memory'' are per node.}
\label{table:compute_resources}
\small
\setlength{\tabcolsep}{4pt}
\renewcommand{\arraystretch}{1.12}
\begin{tabular}{l l l l}
\toprule
\textbf{Node class} & \textbf{CPU vendor (cores)} & \textbf{GPU vendor (count $\times$ memory)} & \textbf{Host memory} \\
\midrule
CPU-only (high core) & NVIDIA (144) & -- & 237 GB \\
GPU node (single GPU) & NVIDIA (72)  & NVIDIA (1$\times$96 GB) & 116 GB \\
\addlinespace
CPU-only (general) & AMD (128) & -- & 256 GB \\
GPU node (multi-GPU) & AMD (128) & NVIDIA (3$\times$40 GB) & 256 GB \\
\addlinespace
CPU-only (baseline) & Intel (48) & -- & 192 GB \\
CPU-only (HBM) & Intel (112) & -- & 128 GB (HBM) \\
CPU-only (large memory) & Intel (80) & -- & 4 TB (NVDIMM) \\
\bottomrule
\end{tabular}
\end{table}

\subsection{Ablation-study hyperparameters}
For completeness, we list the hyperparameter configurations used in the ablation study.
Table~\ref{table:hp_dict_ctsac_sac}, \ref{table:hp_dict_cttd3_td3}, \ref{table:hp_dict_cppo_ppo}, and \ref{table:hp_dict_trpo_q} provide the configuration dictionary for each algorithm group, with \textbf{top/second/third} referring to the ranked settings used in the ablation tables. Due to space limit, we only list differing hyperparameters for these configurations. Please refer to the codebase to get a complete set of hyperparameters. 

% -------------------------
% CT-SAC + SAC
% -------------------------
\begin{table*}[t]
\centering
\caption{Hyperparameter dictionary (CT-SAC vs SAC). For each environment, \textbf{top/second/third} correspond to the ranked configurations used in Table~\ref{table:ablation_table}.}
\label{table:hp_dict_ctsac_sac}
\scriptsize
\setlength{\tabcolsep}{4pt}
\renewcommand{\arraystretch}{1.12}
\resizebox{\textwidth}{!}{%
\begin{tabular}{l p{0.15\textwidth} p{0.15\textwidth} p{0.15\textwidth} p{0.15\textwidth} p{0.15\textwidth} p{0.15\textwidth}}
\toprule
\textbf{Task} & \multicolumn{3}{c}{\textbf{CT-SAC}} & \multicolumn{3}{c}{\textbf{SAC}} \\
\cmidrule(lr){2-4}\cmidrule(lr){5-7}
& \textbf{top} & \textbf{second} & \textbf{third} & \textbf{top} & \textbf{second} & \textbf{third} \\
\midrule
\textbf{Cheetah} &
\texttt{lr=0.00073, tau=0.04} &
\texttt{lr=0.00073, tau=0.02} &
\texttt{lr=0.00146, tau=0.04} &
\texttt{lr=0.00073, train\_freq=4, gradient\_steps=4} &
\texttt{lr=0.00073, train\_freq=1, gradient\_steps=1} &
\texttt{lr=0.00146, train\_freq=4, gradient\_steps=4} \\
\addlinespace
\textbf{Walker} &
\texttt{lr=0.00073, train\_freq=1, gradient\_steps=1, tau=0.02} &
\texttt{lr=0.00073, train\_freq=4, gradient\_steps=4, tau=0.02} &
\texttt{lr=0.00146, train\_freq=4, gradient\_steps=4, tau=0.01} &
\texttt{lr=0.00146, train\_freq=1, gradient\_steps=1} &
\texttt{lr=0.00073, train\_freq=4, gradient\_steps=4} &
\texttt{lr=0.00292, train\_freq=1, gradient\_steps=1} \\
\addlinespace
\textbf{Humanoid} &
\texttt{train\_freq=1, gradient\_steps=1, tau=0.01} &
\texttt{train\_freq=4, gradient\_steps=4, tau=0.005} &
\texttt{train\_freq=1, gradient\_steps=1, tau=0.005} &
\texttt{log\_std\_init=-0.5, lr=0.0003, tau=0.0025} &
\texttt{log\_std\_init=0.5, lr=0.0003, tau=0.005} &
\texttt{log\_std\_init=-0.5, lr=0.0012, tau=0.005} \\
\addlinespace
\textbf{Quadruped} &
\texttt{train\_freq=4, gradient\_steps=4, tau=0.02} &
\texttt{train\_freq=1, gradient\_steps=1, tau=0.04} &
\texttt{train\_freq=1, gradient\_steps=1, tau=0.02} &
\texttt{lr=0.000182} &
\texttt{lr=0.00292} &
\texttt{lr=0.00146} \\
\addlinespace
\textbf{Trading} &
\texttt{lr=0.001, train\_freq=4, gradient\_steps=4, tau=0.02} &
\texttt{lr=0.0005, train\_freq=1, gradient\_steps=1, tau=0.02} &
\texttt{lr=0.001, train\_freq=1, gradient\_steps=1, tau=0.005} &
\texttt{log\_std\_init=-3.0, lr=0.002, train\_freq=1, gradient\_steps=1} &
\texttt{log\_std\_init=-1.0, lr=0.002, train\_freq=4, gradient\_steps=4} &
\texttt{log\_std\_init=-3.0, lr=0.001, train\_freq=1, gradient\_steps=1} \\
\bottomrule
\end{tabular}}

\vspace{8pt}
\footnotesize
\textit{Abbreviation:} lr=learning\_rate; tau=soft-update rate; train\_freq=update frequency; gradient\_steps=number of gradient steps per update; log\_std\_init=initial policy log-std.
\end{table*}

% -------------------------
% CT-TD3 + TD3
% -------------------------
\begin{table*}[t]
\centering
\caption{Hyperparameter dictionary (CT-TD3 vs TD3). For each environment, \textbf{top/second/third} correspond to the ranked configurations used in Table~\ref{table:ablation_table}.}
\label{table:hp_dict_cttd3_td3}
\scriptsize
\setlength{\tabcolsep}{4pt}
\renewcommand{\arraystretch}{1.12}
\resizebox{\textwidth}{!}{%
\begin{tabular}{l p{0.15\textwidth} p{0.15\textwidth} p{0.15\textwidth} p{0.15\textwidth} p{0.15\textwidth} p{0.15\textwidth}}
\toprule
\textbf{Task} & \multicolumn{3}{c}{\textbf{CT-TD3}} & \multicolumn{3}{c}{\textbf{TD3}} \\
\cmidrule(lr){2-4}\cmidrule(lr){5-7}
& \textbf{top} & \textbf{second} & \textbf{third} & \textbf{top} & \textbf{second} & \textbf{third} \\
\midrule
\textbf{Cheetah} &
\texttt{noise\_sigma=0.1, policy\_noise=0.4, noise\_clip=1.0} &
\texttt{noise\_sigma=0.2, policy\_noise=0.2, noise\_clip=0.5} &
\texttt{noise\_sigma=0.1, policy\_noise=0.2, noise\_clip=0.3} &
\texttt{lr=0.00073, policy\_noise=0.3, noise\_clip=0.6} &
\texttt{lr=0.00073, policy\_noise=0.1, noise\_clip=0.2} &
\texttt{lr=0.00146, policy\_noise=0.1, noise\_clip=0.2} \\
\addlinespace
\textbf{Walker} &
\texttt{lr=0.000365, policy\_noise=0.2, noise\_clip=0.5} &
\texttt{lr=0.00073, policy\_noise=0.2, noise\_clip=0.5} &
\texttt{lr=0.000365, policy\_noise=0.1, noise\_clip=0.2} &
\texttt{lr=0.001, policy\_noise=0.1, noise\_clip=0.2} &
\texttt{lr=0.0006, policy\_noise=0.1, noise\_clip=0.2} &
\texttt{lr=0.0003, policy\_noise=0.1, noise\_clip=0.2} \\
\addlinespace
\textbf{Humanoid} &
\texttt{policy\_noise=0.2, noise\_clip=1.0} &
\texttt{policy\_noise=0.2, noise\_clip=0.5} &
\texttt{policy\_noise=0.1, noise\_clip=0.2} &
\texttt{lr=0.0003, policy\_noise=0.1, noise\_clip=0.2} &
\texttt{lr=0.0003, policy\_noise=0.2, noise\_clip=0.5} &
\texttt{lr=0.0006, policy\_noise=0.2, noise\_clip=0.5} \\
\addlinespace
\textbf{Quadruped} &
\texttt{noise\_sigma=0.1, policy\_noise=0.4, noise\_clip=1.0} &
\texttt{noise\_sigma=0.1, policy\_noise=0.2, noise\_clip=0.5} &
\texttt{noise\_sigma=0.2, policy\_noise=0.2, noise\_clip=0.5} &
\texttt{lr=0.00073, policy\_noise=0.4, noise\_clip=1.0} &
\texttt{lr=0.0003, policy\_noise=0.2, noise\_clip=0.5} &
\texttt{lr=0.00073, policy\_noise=0.2, noise\_clip=0.5} \\
\addlinespace
\textbf{Trading} &
\texttt{noise\_sigma=0.1, policy\_noise=0.4, noise\_clip=1.0} &
\texttt{noise\_sigma=0.1, policy\_noise=0.1, noise\_clip=0.2} &
\texttt{noise\_sigma=0.2, policy\_noise=0.2, noise\_clip=0.5} &
\texttt{lr=0.001, policy\_noise=0.2, noise\_clip=0.3} &
\texttt{lr=0.002, policy\_noise=0.2, noise\_clip=0.5} &
\texttt{lr=0.001, policy\_noise=0.3, noise\_clip=0.6} \\
\bottomrule
\end{tabular}}

\vspace{8pt}
\footnotesize
\textit{Abbreviation:} lr=learning\_rate; policy\_noise=noise std for target action; noise\_clip=clip range for that noise; noise\_sigma=exploration/target noise scale.
\end{table*}

% -------------------------
% CPPO + PPO
% -------------------------
\begin{table*}[t]
\centering
\caption{Hyperparameter dictionary (CPPO vs PPO). For each environment, \textbf{top/second/third} correspond to the ranked configurations used in Table~\ref{table:ablation_table}.}
\label{table:hp_dict_cppo_ppo}
\scriptsize
\setlength{\tabcolsep}{4pt}
\renewcommand{\arraystretch}{1.12}
\resizebox{\textwidth}{!}{%
\begin{tabular}{l p{0.15\textwidth} p{0.15\textwidth} p{0.15\textwidth} p{0.15\textwidth} p{0.15\textwidth} p{0.15\textwidth}}
\toprule
\textbf{Task} & \multicolumn{3}{c}{\textbf{CPPO}} & \multicolumn{3}{c}{\textbf{PPO}} \\
\cmidrule(lr){2-4}\cmidrule(lr){5-7}
& \textbf{top} & \textbf{second} & \textbf{third} & \textbf{top} & \textbf{second} & \textbf{third} \\
\midrule
\textbf{Cheetah} &
\texttt{lr=0.00073, alpha=0.004, clip\_ratio=False, clip\_range=0.2, kl\_coef\_init=1.0} &
\texttt{lr=0.00073, alpha=0.0004, clip\_ratio=True, clip\_range=0.1, kl\_coef\_init=0.3} &
\texttt{lr=0.00146, alpha=0.0004, clip\_ratio=False, clip\_range=0.2, kl\_coef\_init=1.0} &
\texttt{lr=0.000365, clip\_range=0.2, vf\_coef=0.5, gae\_lambda=0.95} &
\texttt{lr=0.000365, clip\_range=0.1, vf\_coef=0.5, gae\_lambda=0.95} &
\texttt{lr=0.000365, clip\_range=0.2, vf\_coef=1.0, gae\_lambda=0.8} \\
\addlinespace
\textbf{Walker} &
\texttt{lr=0.00073, alpha=0.004, clip\_ratio=False, clip\_range=0.2, kl\_coef\_init=1.0} &
\texttt{lr=0.000365, alpha=0.004, clip\_ratio=False, clip\_range=0.2, kl\_coef\_init=1.0} &
\texttt{lr=0.00146, alpha=0.004, clip\_ratio=False, clip\_range=0.2, kl\_coef\_init=1.0} &
\texttt{lr=0.000365, clip\_range=0.2, vf\_coef=1.0, gae\_lambda=0.95} &
\texttt{lr=0.000365, clip\_range=0.2, vf\_coef=0.5, gae\_lambda=0.95} &
\texttt{lr=0.000365, clip\_range=0.2, vf\_coef=0.5, gae\_lambda=0.8} \\
\addlinespace
\textbf{Humanoid} &
\texttt{log\_std\_init=-0.5, alpha=0.004} &
\texttt{log\_std\_init=-1.0, alpha=0.004} &
\texttt{log\_std\_init=-1.0, alpha=0.0004} &
\texttt{lr=0.0003, clip\_range=0.2, vf\_coef=0.5} &
\texttt{lr=0.0003, clip\_range=0.3, vf\_coef=0.5} &
\texttt{lr=0.0006, clip\_range=0.2, vf\_coef=0.5} \\
\addlinespace
\textbf{Quadruped} &
\texttt{log\_std\_init=-3.0, clip\_ratio=False, clip\_range=0.3} &
\texttt{log\_std\_init=-3.0, clip\_ratio=False, clip\_range=0.2} &
\texttt{log\_std\_init=-1.0, clip\_ratio=False, clip\_range=0.2} &
\texttt{lr=0.000365, clip\_range=0.2, vf\_coef=0.5, gae\_lambda=0.95} &
\texttt{lr=0.000182, clip\_range=0.2, vf\_coef=0.5, gae\_lambda=0.95} &
\texttt{lr=0.000365, clip\_range=0.2, vf\_coef=0.5, gae\_lambda=0.8} \\
\addlinespace
\textbf{Trading} &
\texttt{alpha=0.0004, sqrt\_kl=0.02} &
\texttt{alpha=0.04, sqrt\_kl=0.02} &
\texttt{alpha=0.0004, sqrt\_kl=0.01} &
\texttt{lr=0.002, clip\_range=0.2, clip\_range\_vf=0.2, vf\_coef=0.5} &
\texttt{lr=0.001, clip\_range=0.3, clip\_range\_vf=0.3, vf\_coef=0.5} &
\texttt{lr=0.001, clip\_range=0.2, clip\_range\_vf=0.2, vf\_coef=1.0} \\
\bottomrule
\end{tabular}}

\vspace{8pt}
\footnotesize
\textit{Abbreviation:} lr=learning\_rate; alpha=entropy coefficient; clip\_range=PPO clipping range; clip\_range\_vf=value-function clipping; vf\_coef=value loss coefficient; gae\_lambda=GAE $\lambda$; log\_std\_init=initial policy log-std; sqrt\_kl=target $\sqrt{\mathrm{KL}}$ constraint (CPPO).
\end{table*}

% -------------------------
% TRPO + q-learning
% -------------------------
\begin{table*}[t]
\centering
\caption{Hyperparameter dictionary (TRPO vs q-learning). For each environment, \textbf{top/second/third} correspond to the ranked configurations used in Table~\ref{table:ablation_table}.}
\label{table:hp_dict_trpo_q}
\scriptsize
\setlength{\tabcolsep}{4pt}
\renewcommand{\arraystretch}{1.12}
\resizebox{\textwidth}{!}{%
\begin{tabular}{l p{0.15\textwidth} p{0.15\textwidth} p{0.15\textwidth} p{0.15\textwidth} p{0.15\textwidth} p{0.15\textwidth}}
\toprule
\textbf{Task} & \multicolumn{3}{c}{\textbf{TRPO}} & \multicolumn{3}{c}{\textbf{q-learning}} \\
\cmidrule(lr){2-4}\cmidrule(lr){5-7}
& \textbf{top} & \textbf{second} & \textbf{third} & \textbf{top} & \textbf{second} & \textbf{third} \\
\midrule
\textbf{Cheetah} &
\texttt{lr=0.000365, gae\_lambda=0.95, cg\_damping=0.1} &
\texttt{lr=0.000182, gae\_lambda=0.95, cg\_damping=0.1} &
\texttt{lr=0.000365, gae\_lambda=0.8, cg\_damping=0.1} &
\texttt{train\_freq=1, gradient\_steps=1, max\_grad\_norm=0.5} &
\texttt{train\_freq=4, gradient\_steps=4, max\_grad\_norm=0.5} &
\texttt{train\_freq=4, gradient\_steps=4, max\_grad\_norm=1.0} \\
\addlinespace
\textbf{Walker} &
\texttt{lr=0.000365, gae\_lambda=0.95, cg\_damping=0.1} &
\texttt{lr=0.000182, gae\_lambda=0.95, cg\_damping=0.1} &
\texttt{lr=0.000365, gae\_lambda=0.8, cg\_damping=0.1} &
\texttt{log\_std\_init=-3, lr=0.000365} &
\texttt{log\_std\_init=-1, lr=0.000365} &
\texttt{log\_std\_init=-3, lr=0.000182} \\
\addlinespace
\textbf{Humanoid} &
\texttt{lr=0.0006, target\_kl=0.01} &
\texttt{lr=0.0003, target\_kl=0.01} &
\texttt{lr=0.001, target\_kl=0.01} &
\texttt{log\_std\_init=-1, lr=0.000365} &
\texttt{log\_std\_init=-3, lr=0.000365} &
\texttt{log\_std\_init=-1, lr=0.000182} \\
\addlinespace
\textbf{Quadruped} &
\texttt{lr=0.000365, gae\_lambda=0.95, cg\_damping=0.1} &
\texttt{lr=0.000182, gae\_lambda=0.8, cg\_damping=0.1} &
\texttt{lr=0.000365, gae\_lambda=0.8, cg\_damping=0.1} &
\texttt{lr=0.000365, log\_std\_init=-3} &
\texttt{lr=0.00073, log\_std\_init=-3} &
\texttt{lr=0.00146, log\_std\_init=-3} \\
\addlinespace
\textbf{Trading} &
\texttt{lr=0.001, target\_kl=0.005} &
\texttt{lr=0.002, target\_kl=0.01} &
\texttt{lr=0.0005, target\_kl=0.01} &
\texttt{lr=0.004, max\_grad\_norm=1.0, log\_std\_init=-3} &
\texttt{lr=0.002, max\_grad\_norm=0.5, log\_std\_init=-3} &
\texttt{lr=0.001, max\_grad\_norm=1.0, log\_std\_init=-1} \\
\bottomrule
\end{tabular}}

\vspace{8pt}
\footnotesize
\textit{Abbreviation:} lr=learning\_rate; gae\_lambda=GAE $\lambda$; target\_kl=TRPO KL constraint; cg\_damping=conjugate-gradient damping; max\_grad\_norm=gradient clipping threshold; train\_freq=update frequency; gradient\_steps=number of gradient steps per update; log\_std\_init=initial policy log-std.
\end{table*}

\section{Benchmarking algorithms}
\label{sec:appendix:benchmarks}
\subsection{Coupled-style $q$-learning}
As the step size $u\to 0$, the drift contribution scales as $O(u)$ while the stochastic increment is $O(\sqrt{u})$, so a squared TD residual can be dominated by martingale variation rather than the Bellman drift. To address this mismatch, \citet{Jia2022-yg} characterize optimality via a martingale condition: for any start time $t_0$, the process:
\[
M_t
:= e^{-\beta t}V(X_t)
+ \int_{t_0}^t e^{-\beta s}\big(r(X_s,a_s)-q(X_s,a_s)\big)\,ds
\]
must be a martingale. Two practical relaxations follow from this characterization: (i) a \emph{martingale-loss} objective built from squared pathwise residuals (involving an outer integral over $t$ and an inner integral over future $s$), and (ii) \emph{martingale orthogonality} conditions of the form
$\E\!\int_0^T \omega_t\big(dV(X_t)+((r-q)-\beta V)\,dt\big)=0$
for suitable test processes $\omega_t$. To implement both criteria, we consider an (irregular) grid $t_0<t_1<\cdots<t_K=T$ with $\Delta_{t_i}:=t_{i+1}-t_i$, and samples $(x_{t_i},a_{t_i},r_{t_i})$. Define the discounted martingale increment over $[t_k,T]$:
\[
G_{t_k:T}(V,q)
:= e^{-\beta (T-t_k)}h(x_{t_K}) - V(x_{t_k})
   + \sum_{i=k}^{K-1} e^{-\beta(t_i-t_k)}\big[r_{t_i}-q(x_{t_i},a_{t_i})\big]\Delta_{t_i}.
\]
Two commonly used enforcement criteria are:
\begin{enumerate}
\item \textbf{Martingale loss (pathwise squared residual)} (See \cref{algo:coupled_q_martingale_loss}): 
\[
\mathcal L_{\mathrm{mg}}(V,q) := \sum_{k=0}^{K-1} G_{t_k:T}(V,q)^2\,\Delta_{t_k}.
\]
\item \textbf{Martingale orthogonality (test-function TD):} define the one-step residual
\[
\delta_k
:= V(x_{t_{k+1}})-V(x_{t_k})
 + \big(r_{t_k}-q(x_{t_k},a_{t_k})-\beta V(x_{t_k})\big)\Delta_{t_k},
\]
and enforce $\E[\omega(x_{t_k},a_{t_k})\,\delta_k]=0$ for a class of test functions $\omega$ (value gradient in this case), yielding simple SGD-style updates (See \cref{algo:coupled_q_orthogonality}).
\end{enumerate}

\begin{algorithm}[t]
  \caption{Coupled-style $q$-learning (Martingale loss)}
  \label{algo:coupled_q_martingale_loss}
  \begin{algorithmic}[1]
    \STATE \textbf{Input:} discount $\beta$, terminal time $T=t_K$, learning rates $\eta_V,\eta_q$.
    \STATE Initialize parameters $\theta$ for value $V_\theta$ and $\psi$ for advantage-rate $q_\psi$.
    \FOR{$k_{\mathrm{iter}} = 0,1,2,\dots$}
      \STATE Collect rollouts under a behavior policy, producing irregular grid samples
      $\{(x_{t_i},a_{t_i},r_{t_i},\Delta_{t_i})\}_{i=0}^{K-1}$ and terminal payoff $h(x_{t_K})$.
      \STATE Sample a mini-batch of trajectories (or segments) from the dataset/buffer.
      \FOR{each trajectory in the mini-batch}
        \FOR{$k=0,1,\dots,K-1$}
          \STATE Compute
          \[
          G_{t_k:T}(V_\theta,q_\psi)
          :=
          e^{-\beta (T-t_k)}h(x_{t_K}) - V_\theta(x_{t_k})
          + \sum_{i=k}^{K-1} e^{-\beta(t_i-t_k)}\big[r_{t_i}-q_\psi(x_{t_i},a_{t_i})\big]\Delta_{t_i}.
          \]
        \ENDFOR
        \STATE Form loss $\mathcal L_{\mathrm{mg}}(\theta,\psi) := \sum_{k=0}^{K-1} G_{t_k:T}(V_\theta,q_\psi)^2\,\Delta_{t_k}$.
        \STATE Update critics (SGD): $\theta \leftarrow \theta - \eta_V \nabla_\theta \mathcal L_{\mathrm{mg}}(\theta,\psi)$,
        $\psi \leftarrow \psi - \eta_q \nabla_\psi \mathcal L_{\mathrm{mg}}(\theta,\psi)$.
      \ENDFOR
    \ENDFOR
  \end{algorithmic}
\end{algorithm}

\begin{algorithm}[t]
  \caption{Coupled-style $q$-learning (Martingale orthogonality)}
  \label{algo:coupled_q_orthogonality}
  \begin{algorithmic}[1]
    \STATE \textbf{Input:} discount $\beta$, learning rates $\eta_V,\eta_q$.
    \STATE Initialize parameters $\theta$ for value $V_\theta$ and $\psi$ for advantage-rate $q_\psi$.
    \FOR{$k_{\mathrm{iter}} = 0,1,2,\dots$}
      \STATE Collect transitions with irregular increments $(x,a,r,x',\Delta)$ and store in a buffer.
      \STATE Sample a mini-batch $\mathcal B$ of transitions from the buffer.
      \FOR{each $(x,a,r,x',\Delta)$ in $\mathcal B$}
        \STATE Compute one-step residual
        \[
          \delta
          := V_\theta(x') - V_\theta(x)
             + \big(r - q_\psi(x,a) - \beta V_\theta(x)\big)\Delta.
        \]
        \STATE Update value (test function $\omega_1=\nabla_\theta V_\theta$):
        $\theta \leftarrow \theta + \eta_V\, \nabla_\theta V_\theta(x)\,\delta$.
        \STATE Update advantage-rate (test function $\omega_2=\nabla_\psi q_\psi$):
        $\psi \leftarrow \psi + \eta_q\, \nabla_\psi q_\psi(x,a)\,\delta$.
      \ENDFOR
    \ENDFOR
  \end{algorithmic}
\end{algorithm}

\subsection{Policy gradient}

We denote a regularizer by $p(x,a;\pi)$ (for entropy regularization, $p(x,a;\pi)=-\log\pi(a|x)$) and the performance
$\eta(\pi):=\int_{\mathcal S} V(x;\pi)\,\mu(dx)$. Below are two continuous-time policy gradient where CPPO (see \cref{algo:cppo}) is an upgraded version over the (vanilla) CPG (see \cref{algo:cpg}). For CPPO, due to its instability, we use $1.5$ instead of $2$ for penalty adaptation.

\begin{algorithm}[t]
  \caption{Continuous-time policy gradient (CPG)}
  \label{algo:cpg}
  \begin{algorithmic}[1]
    \STATE \textbf{Input:} discount $\beta$, temperature $\alpha$, learning rates $\eta_V,\eta_\pi$.
    \STATE \textbf{Models:} value $V_\phi$, policy $\pi_\theta$.
    \FOR{$k = 0,1,2,\dots$}
      \STATE Collect transitions $(x,a,r,x',\Delta)$ from rollouts under current policy $\pi_\theta$, where $\Delta=t'-t$ can vary.
      \STATE \textbf{Critic update (martingale orthogonality):}
      \[
      \delta_V := V_\phi(x')-V_\phi(x) + \Delta\big(r(x,a)+\alpha\,p(x,a;\pi_\theta)\big) - \beta V_\phi(x)\Delta,
      \]
      and perform SGD: $\phi \leftarrow \phi + \eta_V\,\nabla_\phi V_\phi(x)\,\delta_V$.
      \STATE \textbf{Advantage-rate estimate (one-step / GAE-style):}
      \[
      \hat q(x,a) := r + \frac{e^{-\beta \Delta}V_\phi(x') - V_\phi(x)}{\Delta}.
      \]
      \STATE \textbf{Policy gradient step:} update $\theta$ using a mini-batch estimator of
      \[
      \nabla_\theta \eta(\theta)
      \approx \frac{1}{\beta}\,\nabla_\theta \log \pi_\theta(a|x)\,\big(\hat q(x,a)+\alpha\,p(x,a;\pi_\theta)\big)
      \;+\;\frac{\alpha}{\beta}\,\nabla_\theta p(x,a;\pi_\theta).
      \]
      \STATE $\theta \leftarrow \theta + \eta_\pi\,\widehat{\nabla_\theta \eta(\theta)}$.
    \ENDFOR
  \end{algorithmic}
\end{algorithm}

\begin{algorithm}[t]
  \caption{Continuous-time proximal policy optimization (CPPO, KL-penalty)}
  \label{algo:cppo}
  \begin{algorithmic}[1]
    \STATE \textbf{Input:} discount $\beta$, temperature $\alpha$, target KL $\delta_{\mathrm{KL}}$, tolerance $\epsilon$, initial penalty $C_{\mathrm{pen}}$, actor epochs $E$.
    \STATE \textbf{Models:} value $V_\phi$, policy $\pi_\theta$.
    \FOR{$k = 0,1,2,\dots$}
      \STATE Collect a batch $\mathcal B=\{(x,a,r,x',\Delta)\}$ from rollouts under current policy $\pi_{\theta_k}$.
      \STATE Update critic $V_\phi$ using the CPG martingale-orthogonality step (\cref{algo:cpg}, line 5).
      \STATE Compute advantage-rate estimates $\hat q(x,a)$ on $\mathcal B$ (\cref{algo:cpg}, line 6) and (optionally) normalize.
      \STATE Set $\theta \leftarrow \theta_k$.
      \FOR{epoch $e=1,\dots,E$}
        \STATE For mini-batches from $\mathcal B$, compute importance ratio
        $w_\theta(x,a):=\pi_\theta(a|x)/\pi_{\theta_k}(a|x)$.
        \STATE Define penalized surrogate objective
        \[
        J(\theta)
        := \E_{(x,a)\sim \mathcal B}\big[w_\theta(x,a)\big(\hat q(x,a)+\alpha\,p(x,a;\pi_\theta)\big)\big]
           - C_{\mathrm{pen}}\,\bar D_{\mathrm{KL}}(\theta\|\theta_k),
        \]
        where $\bar D_{\mathrm{KL}}(\theta\|\theta_k):=\E_{x\sim\mathcal B}\!\left[\sqrt{D_{\mathrm{KL}}(\pi_{\theta_k}(\cdot|x)\,\|\,\pi_\theta(\cdot|x))}\right]$.
        \STATE Update actor by gradient ascent on $J(\theta)$.
      \ENDFOR
      \STATE Set $\theta_{k+1}\leftarrow \theta$.
      \STATE \textbf{Penalty adaptation:} if $\bar D_{\mathrm{KL}}(\theta_{k+1}\|\theta_k)\ge (1+\epsilon)\delta_{\mathrm{KL}}$, set $C_{\mathrm{pen}}\leftarrow 2C_{\mathrm{pen}}$;
      else if $\bar D_{\mathrm{KL}}(\theta_{k+1}\|\theta_k)\le \delta_{\mathrm{KL}}/(1+\epsilon)$, set $C_{\mathrm{pen}}\leftarrow \tfrac12 C_{\mathrm{pen}}$.
    \ENDFOR
  \end{algorithmic}
\end{algorithm}

\end{document}